\newtheorem{theorem}{Theorem}[section]
\newtheorem{proposition}[theorem]{Proposition}
\newtheorem{lemma}[theorem]{Lemma}
\newtheorem{corollary}[theorem]{Corollary}
\newtheorem{fact}{Fact}
\theoremstyle{definition}
\newtheorem{definition}[theorem]{Definition}
\theoremstyle{remark}
\newtheorem{remark}[theorem]{Remark}
\newtheorem{example}{Example}
\DeclareMathOperator{\im}{im}
\DeclareMathOperator{\diag}{diag}
\DeclareMathOperator{\SPE}{SPE}
\DeclareMathOperator{\row}{row}
\DeclareMathOperator{\col}{col}
\newcommand{\R}{\mathbb{R}}
\newcommand{\multiset}[1]{\{\!\!\{#1\}\!\!\}}
\newcommand{\wlcolor}[2]{\ensuremath{\chi_{{#1}}^{({#2})}}}
\newcommand\zhengchao[1]  {\textcolor{red}{ZW: #1}}
\newcommand{\mitchell}[1]  {\textcolor{cyan}{Mitchell: #1}}
\newcommand{\yusu}[1] {\textcolor{blue}{{\sf [Yusu]: #1}}}
\icmltitlerunning{Comparing Graph Transformers via Positional Encodings}
\begin{document}

\twocolumn[
\icmltitle{Comparing Graph Transformers via Positional Encodings}



\icmlsetsymbol{equal}{*}

\begin{icmlauthorlist}
\icmlauthor{Mitchell Black}{equal,osu}
\icmlauthor{Zhengchao Wan}{equal,ucsd}
\icmlauthor{Gal Mishne}{ucsd}
\icmlauthor{Amir Nayyeri}{osu}
\icmlauthor{Yusu Wang}{ucsd}
\end{icmlauthorlist}

\icmlaffiliation{osu}{School of Electrical Engineering and Computer Science, Oregon State University, Corvallis, Oregon, USA}
\icmlaffiliation{ucsd}{Hal\i{}c\i{}o\v{g}lu Data Science Institute, University of California San Diego, San Diego, California, USA}

\icmlcorrespondingauthor{Mitchell Black}{blackmit@oregonstate.edu}

\icmlkeywords{Machine Learning, ICML}

\vskip 0.3in
]



\printAffiliationsAndNotice{\icmlEqualContribution} 

\begin{abstract}
    The distinguishing power of graph transformers is tied to the choice of \textit{positional encoding}: features used to augment the base transformer with information about the graph. There are two primary types of positional encoding: \emph{absolute positional encodings (APEs)} and \emph{relative positional encodings (RPEs)}. APEs assign features to each node and are given as input to the transformer. RPEs instead assign a feature to each \emph{pair of nodes}, e.g., shortest-path distance, and are used to augment the attention block. A priori, it is unclear which method is better for maximizing the power of the resulting graph transformer. In this paper, we aim to understand the relationship between these different types of positional encodings. Interestingly, we show that graph transformers using APEs and RPEs are equivalent in their ability to distinguish non-isomorphic graphs. In particular, we demonstrate how to interchange APEs and RPEs while maintaining their distinguishing power in terms of graph transformers. However, in the case of graphs with node features, we show that RPEs may have an advantage over APEs. Based on our theoretical results, we provide a study of different APEs and RPEs---including the shortest-path and resistance distance and the recently introduced stable and expressive positional encoding (SPE)---and compare their distinguishing power in terms of transformers. We believe our work will help navigate the vast number of positional encoding choices and provide guidance on the future design of positional encodings for graph transformers.
\end{abstract}

\section{Introduction}
Graph transformers (GTs)~\citep{dwivedi2020generalization,ying2021transformers} have recently emerged as a competitor to message passing neural networks (MPNNs)~\citep{gilmer2017neural} for graph representation learning. While MPNNs only consider the immediate neighbors of a node when updating that node's feature, GTs consider all nodes in the graph. Accordingly, graph transformers can capture global information of graphs that can potentially be used to address the issue of oversquashing faced by MPNNs. Likewise, the attention mechanism of transformers can address the issue of oversmoothing faced by MPNNs. See~\cite{muller2023attending}.

However, one caveat in applying transformers to graph data is that transformers are \textit{\textbf{equivariant}}, meaning they treat nodes symmetrically and cannot distinguish two nodes at different positions in the graph. Accordingly, much of the research on graph transformers has been focused on the design of positional encodings that incorporate information about the graph structure into the input or architecture of transformers. There are currently two major types of positional encodings:

\textbf{Absolute positional encodings (APEs)} encode the graph structure as an embedding of the vertices $\phi:V\to\R^{d}$. APEs are either added to or concatenated with the initial vertex features that are the input to the transformer. Typical examples are the Laplacian eigenvectors~\cite{dwivedi2020generalization} or information about random walks~\citep{rampavsek2022recipe}. Recent works have also proposed learnable APEs~\citep{lim2022sign, huang2023stability}.

\textbf{Relative positional encodings (RPEs)} encode the graph structure as an embedding of pairs of vertices $\psi:V\times V\to\R^{d}$. RPEs are incorporated into a GT via a modified attention mechanism. Examples of RPEs include the shortest-path distance~\citep{ying2021transformers}, resistance distance~\cite{zhang2023rethinking}, (directed) random walk matrices~\cite{ma2023inductive,geisler2023transformers}, and heat kernels~\citep{choromanski22blocktoeplitz}. The adjacency matrix can also serve as an RPE, which will result in an RPE GT with the same distinguishing power as MPNNs (\Cref{cor:combinatorially-aware-rpe-stronger-than-WL}).



Although different kinds of APEs and RPEs have been proposed to encode graph structure and enhance the performance of GTs, there is a lack of understanding of how different positional encodings compare---e.g.,~shortest-path vs.~resistance distance---for distinguishing non-isomorphic graphs. Only a few works \cite{zhang2023rethinking} have attempted to compare different RPEs, let alone conduct a systematic study comparing APEs and RPEs. This results in a lack of guidance for the design of positional encodings for GTs: which type of positional encoding---absolute or relative---should be used in practice? Furthermore, this could lead to inefficiency in constructing positional encodings. In some recent works on designing APEs like  BasisNet~\citep{lim2022sign} and the stable and expressive positional encoding (SPE)~\citep{huang2023stability}, the authors first construct well-defined RPEs and then use some extra machinery to artificially transform RPEs into APEs. This raises the natural question of whether the transition is worthwhile given the expensive extra computation.



{\bf Our contributions.} We establish a theoretical framework for comparing positional encodings, not only for those in the same category, i.e., absolute or relative, but also across APEs and RPEs. Our main contributions are as follows:

(1) In \Cref{sec:equivalence between PEs}, we establish that specific types of APE and RPE GTs (cf. \Cref{def:graph_transformer}) are equivalent in their ability to distinguish non-isomorphic graphs. In particular, we show how to map APEs to RPEs and vice versa while maintaining their distinguishing power. We empirically validate our theoretical results on several graph classification datasets.

(2) We show that RPEs may have an advantage over APEs for distinguishing graphs with node features. In particular, we show that transforming an RPE to an APE results in a GT that is able to distinguish strictly fewer graphs with node features than a GT that uses the RPE directly.

(3) The techniques for proving the above results enable us to compare the distinguishing power of different positional encodings. In \Cref{sec:comparing_different_pes}, we provide a case study on several popular APEs and RPEs by comparing their distinguishing power. For example, we prove that GTs using SPE~\citep{huang2023stability} as an APE are stronger than the GTs using resistance distance as an RPE~\citep{zhang2023rethinking}.

(4) In the process of establishing our main results, we identify a new variant of the Weisfeiler-Lehman (WL)  test \cite{weisfeiler1968reduction} that we call the RPE-augWL test that allows us to connect the distinguishing power of RPEs and corresponding GTs. This RPE-augWL test is interesting in itself as a way to further the theoretical formulation of the representation power of graph learning frameworks. We generalize the equivalence between the WL and 2-WL tests to our RPE-augWL test and establish the equivalence between the resistance distance and the pseudoinverse of the Laplacian in terms of distinguishing power via the RPE-augWL test. Furthermore, we identify a class of RPEs resulting in a family of RPE GTs at least as strong as MPNNs.

\section{Positional Encodings and Graph Transformers}\label{sec:position_encoding}
In this section, we introduce some preliminaries on positional encodings and graph transformers, as well as some auxiliary results that will be used to prove our main results.

Let $G=(V,E)$ denote an unweighted graph, which can be either directed or undirected. A \textit{\textbf{featured graph}} $(G,X)$ includes node features \( X\in \mathbb{R}^{|V| \times d} \), indicating a \( d \)-dimensional feature $X(v)$ for each node in $v\in V$. Let $A$ denote the adjacency matrix, $D$ the diagonal matrix of the degrees, and $L=D-A$ the graph Laplacian matrix. To prevent ambiguity when working with multiple graphs, we use \( G \) as a subscript in our notations, such as \( V_G \) for the vertex set or \( X_G \) for the node features of graph \( G \).

For two graphs $G$ and $H$, a graph isomorphism is a bijection $\sigma:V_G\to V_H$ such that $\{u,v\}\in E_G$ if and only if $\{ \sigma(u), \sigma(v) \}\in E_H$. Two graphs $G$ and $H$ are isomorphic if there is a graph isomorphism $\sigma:V_G\to V_H$.

\subsection{Positional Encodings}
A positional encoding (PE) of a graph is a way of summarizing structural or positional information of the graph. In their most general form, there are two types of positional encodings we consider in this paper:
\begin{definition}
\label{def:pe_maps}
(Positional Encodings)
 \\
    An \textit{\textbf{absolute positional encoding (APE)}} $\phi$ assigns each graph $G$ a map $\phi_{G}:V_{G}\to\R^{l}$ such that for any two isomorphic graphs $G$ and $H$ and graph isomorphism $\sigma:V_{G}\to V_{H}$, one has that $\phi_{G} = \phi_{H}\circ\sigma$.
    \\
    A \textit{\textbf{relative positional encoding (RPE)}} $\psi$ assigns each graph $G$ a map $\psi_{G}:V_G\times V_G\to\R^k$ such that for any two isomorphic graphs $G$ and $H$ and graph isomorphism $\sigma:V_{G}\to V_{H}$, one has that $\psi_{G} = \psi_{H}\circ(\sigma\times\sigma)$.

    An APE is also naturally expressed as a matrix in $\R^{|V_G|\times l}$ and an RPE as a tensor $\R^{|V_G|\times |V_G|\times k}$.
\end{definition}

\textbf{Note:} While node features and APEs both assign a vector to each node, we emphasize that the difference between node features and APEs is that APEs are dependent on the topology of a graph, while two isomorphic graphs can have different node features.
\par
One of the simplest APEs is the degree map $\deg$: for any graph $G$, $\deg_{G}(u)$ is the degree of the vertex $u$.  Any graph distance is an RPE, e.g., shortest-path distance (SPD) or resistance distance (RD).
\par
Laplacian eigenvectors are commonly used as APEs \citep{dwivedi2020generalization, kreuzer2021rethinking}. However, there is no unique choice of eigenvector for a given eigenspace.
Therefore, Laplacian eigenvectors are not a well-defined APE by our definition, as isomorphic graphs may have different eigenvector encodings due to the choice of basis. However, the projection matrices onto the eigenspaces are a well-defined \textit{RPE}~\cite{lim2022sign}.

\subsection{Graph Transformers}
A \textit{\textbf{transformer}} is an architecture $T$ composed of \textit{\textbf{multiheaded attention layers}} $T = \mathrm{MHA}^{(L)}\circ\cdots \circ \mathrm{MHA}^{(1)}$, where a multiheaded attention layer with heads $1\leq h\leq H$ is a map $\mathrm{MHA}^{(l)}$ that sends $X^{(l)}\in\R^{n\times d}$ to $\mathrm{MHA}^{(l)}(X^{(l)}) = X^{(l+1)}\in\R^{n\times d}$ as defined below:
\begin{gather*}
    A^{(l,h)}(X^{(l)})=\mathrm{softmax}\left(\frac{X^{(l)}W^{(l,h)}_{Q}(X^{(l)}W^{(l,h)}_{K})^T}{\sqrt{d_h}}\right) \\
    Y^{(l)} = X^{(l)}+\left(\sum_{h=1}^{H} A^{(l,h)}(X^{(l)})X^{(l)}W^{(l,h)}_V \right)W^{(l)}_{O} \\
    X^{(l+1)} = Y^{(l)} + \sigma\left(Y^{(l)}W_1^{(l)}\right)W_2^{(l)}
\end{gather*}
where $W^{(l,h)}_{Q},W^{(l,h)}_{K},W^{(l,h)}_{V}, W^{(l)}_{O}\in\R^{d\times d_h}$, $W_1\in\R^{d_h\times d_r}$, $W^{(l)}_2\in\R^{d_r\times d}$ and $\sigma$ is an activation function. Here $Q,K,V$ refer to ``query'', ``key'' and ``value'', respectively, in the terminology of the attention literature \cite{vaswani2017attention}, and $d_h$ and $d_r$ are the dimension of the hidden layer and the dimension of the residual layer, respectively. The activation function $\sigma$ is usually the Gaussian error linear units (GELU) \cite{hendrycks2016gaussian}.

\begin{fact}[Transformers are Permutation Equivariant]
\label{thm:transformers_permutation_equivariant}
    Let $T$ be a transformer. Let $X\in\R^{n\times d}$. Let $P\in\R^{n\times n}$ be a permutation matrix. Then $PT(X) = T(PX)$.
\end{fact}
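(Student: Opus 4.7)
The plan is to prove equivariance layer by layer and then conclude by induction on the depth $L$. Since $T$ is a composition of multiheaded attention layers, it suffices to show that each $\mathrm{MHA}^{(l)}$ satisfies $\mathrm{MHA}^{(l)}(PX) = P\,\mathrm{MHA}^{(l)}(X)$ for any permutation matrix $P$; composing the intertwining relations across the $L$ layers yields $T(PX) = P\,T(X)$, which is the desired identity.

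The heart of the argument is verifying equivariance of a single layer, for which I would isolate three building blocks. First, every learnable parameter $W_Q^{(l,h)}, W_K^{(l,h)}, W_V^{(l,h)}, W_O^{(l)}, W_1^{(l)}, W_2^{(l)}$ is multiplied on the \emph{right} of the token matrix; since left multiplication by $P$ commutes with right multiplication by any fixed matrix, all linear maps in the architecture satisfy $(PX)W = P(XW)$. Second, the row-wise activation $\sigma$ and the row-wise softmax are entrywise or per-row operations, hence commute with row permutations: $\sigma(PM) = P\sigma(M)$ and $\mathrm{softmax}(PMP^T) = P\,\mathrm{softmax}(M)\,P^T$ (the second transpose permutes columns, which is exactly what a row-wise softmax tolerates). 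Third, residual connections $M\mapsto M + f(M)$ preserve equivariance whenever $f$ does.

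Assembling these, the key computation is for the attention matrix of head $h$:
\begin{align*}
A^{(l,h)}(PX) &= \mathrm{softmax}\!\left(\tfrac{1}{\sqrt{d_h}}(PX)W_Q^{(l,h)}\bigl((PX)W_K^{(l,h)}\bigr)^T\right) \\
&= \mathrm{softmax}\!\left(\tfrac{1}{\sqrt{d_h}}P\,XW_Q^{(l,h)}(XW_K^{(l,h)})^T P^T\right) \\
&= P\,A^{(l,h)}(X)\,P^T.
\end{align*}
Then $A^{(l,h)}(PX)\,(PX)\,W_V^{(l,h)} = P A^{(l,h)}(X) P^T P X W_V^{(l,h)} = P\,A^{(l,h)}(X) X W_V^{(l,h)}$, using $P^T P = I$. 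Summing over heads and multiplying on the right by $W_O^{(l)}$ shows that the attention block itself is equivariant; adding the residual $PX$ and then applying the feedforward block (another right-multiplication plus an entrywise $\sigma$ plus a residual) preserves this equivariance.

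The only mild obstacle is being careful about how $P$ appears twice inside the attention scores and then cancels via $P^T P = I$ after multiplying by the value matrix; once this cancellation is made explicit, the rest is bookkeeping. Induction on $L$ then gives $PT(X) = T(PX)$, completing the argument.
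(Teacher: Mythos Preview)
Your argument is correct and is the standard layer-by-layer verification: right-multiplications by weight matrices commute with the row permutation $P$, the row-wise softmax conjugates as $\mathrm{softmax}(PMP^T)=P\,\mathrm{softmax}(M)\,P^T$, the value-multiplication step cancels via $P^TP=I$, and residuals and entrywise activations trivially commute with $P$; induction over the $L$ layers finishes it.

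There is nothing to compare against in the paper: the statement is labeled a \emph{Fact} and no proof is given (it is treated as well known and is only invoked later, e.g.\ in the proof of \Cref{lem:APE_transformer_equal_APE}). Your write-up is exactly the kind of short argument one would supply if asked to justify it.
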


When applying a transformer to a graph $G$, the input is a set of node features $X_{G}\in\R^{|V_G|\times d}$; however, initial node features often do not contain any information about the topology of a graph, so two graphs with the same multiset of features but different topologies will have the same output from a transformer. Graph transformers augment the transformer by incorporating information about the input graph in the form of absolute or relative positional encodings.

\begin{definition}[Graph Transformers]
\label{def:graph_transformer}
    An \emph{\textbf{APE graph transformer with APE $\boldsymbol{\phi}$ ($\boldsymbol{\phi}$-APE-GT)}} is a transformer that either: concatenates the node features $X_G\in\R^{|V_G|\times d}$ with the APE $\phi_G\in\R^{|V_G|\times l}$, i.e.~$\widehat{X_G}=[X_G|\phi_G]\in \R^{|V_G|\times (d+l)}$, or adds the node features and APE, i.e.~$\widehat{X_G} = X_G + \phi_{G}$, before passing them through a transformer.
    \par
    An \emph{\textbf{RPE graph transformer with RPE $\boldsymbol{\psi}$ ($\boldsymbol{\psi}$-RPE-GT)}} is a transformer whose attention layer is modified as follows for a graph $G$ with node features $X\in\R^{|V_G|\times d}$:
    \begin{equation*}\label{eq:RPE transformer}
        A(X)=f_1(\psi_G)\odot\mathrm{softmax}\left(\frac{XW_{Q}(XW_{K})^T}{\sqrt{d_h}}+f_2(\psi_G)\right)
    \end{equation*}
    where $f_1,f_2:\R^k\to\R$ are functions applied entrywise to the tensor $\psi_G\in\R^{|V_G|\times |V_G|\times k}$.
\end{definition}

The original transformer~\cite{vaswani2017attention} used sines and cosines as APEs for sequences. Transformers for sequences with RPEs were subsequently proposed by~\citet{shaw2018self}. The first instance of an APE-GT for graphs was proposed by \citet{dwivedi2020generalization} using the Laplacian eigenvectors. The first instance of an RPE-GT was proposed by \citet{ying2021transformers} using the shortest-path distance, while the form of RPE-GT we consider was proposed by~\citet{zhang2023rethinking}. Although other ways of incorporating an RPE into a transformer have been proposed~\citep{mialon2021graphit,ma2023inductive}, in this paper, APE-GT and RPE-GT only refer to transformers of the forms in \Cref{def:graph_transformer}.

\subsection{Properties of RPEs}
In this subsection, we describe two special types of RPEs that will be useful later.



\paragraph{Diagonally-Aware RPEs.}

One benefit of using graph distances as RPEs---compared to e.g.,~the adjacency matrix---is their distinct diagonal entries are all zeros, in contrast to the positive off-diagonal terms. We characterize this property as ``diagonal-awareness''. As we will see throughout the paper, this property is important as it allows various algorithms to distinguish the feature at a node from all other features.







\begin{definition}
    An RPE $\psi$ is \emph{\textbf{diagonally-aware}} if for any two graphs $G$ and $H$ and vertices $v\in V_G$ and $x,y\in V_H$, then $\psi_{G}(v,v) = \psi_{H}(x,y)$ only if $x=y$.
\end{definition}



Diagonal-awareness is a very mild condition as one can always augment an RPE so that it becomes diagonally-aware:

\begin{definition}
\label{def:diag-aug}
    The \textbf{diagonal augmentation} $D^\psi$ of an RPE $\psi$ is defined, for a graph $G$, as the stacking $D^\psi_G:=(I_G,\psi_G)$ where $I_G$ is the identity matrix.
\end{definition}




For any graph $G$, the diagonal augmentation $D^{\psi}_G(u,v)$ has a 1 in the first coordinate iff $u=v$ and 0 otherwise. Therefore, $D^{\psi}$ is diagonally-aware:

\begin{proposition}\label{prop:diag-augmentation}
    Let $\psi$ be an RPE. The diagonal augmentation $D^{\psi}$ is diagonally-aware.
\end{proposition}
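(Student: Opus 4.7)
The plan is to unpack the definition of the diagonal augmentation and directly verify the diagonally-aware condition from the definition. The key observation is that the first coordinate of $D^{\psi}_G(u,v) \in \R^{1+k}$ is, by construction, the $(u,v)$-entry of the identity matrix $I_G$, which equals $1$ exactly when $u=v$ and $0$ otherwise. This coordinate therefore acts as an indicator of whether a pair of vertices coincides, regardless of what the underlying RPE $\psi$ does.

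First, I would fix arbitrary graphs $G$ and $H$ together with $v \in V_G$ and $x,y \in V_H$, and assume $D^{\psi}_G(v,v) = D^{\psi}_H(x,y)$. Projecting both sides onto the first coordinate yields $I_G(v,v) = I_H(x,y)$. Since the left-hand side equals $1$, this forces $I_H(x,y) = 1$, and hence $x = y$ by the defining property of the identity matrix. This is precisely the diagonally-aware condition from the definition, so the proof is complete.

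The main obstacle is essentially bookkeeping: one must make sure the stacking operation $(I_G, \psi_G)$ is interpreted so that $D^{\psi}_G$ is an RPE into $\R^{1+k}$ whose value at $(u,v)$ has $I_G(u,v)$ in its first slot and $\psi_G(u,v)$ in the remaining $k$ slots. Once this reading is fixed, the argument is a one-line consequence of comparing first coordinates, and nothing about $\psi$ itself is used beyond the fact that $D^{\psi}$ is a well-defined RPE (which should be noted for completeness, since the identity matrix is permutation-equivariant under any graph isomorphism, so augmenting an RPE by it preserves the RPE axiom from \Cref{def:pe_maps}).
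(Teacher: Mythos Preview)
Your proof is correct and follows essentially the same approach as the paper: the paper observes that $D^{\psi}_G(u,v)$ has a $1$ in the first coordinate iff $u=v$ and $0$ otherwise, which is exactly your projection-to-first-coordinate argument. Your added remark that the identity matrix is permutation-equivariant (so $D^{\psi}$ is indeed a well-defined RPE) is a nice completeness check that the paper leaves implicit.
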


\paragraph{Asymmetric RPEs.}
Most popular RPEs for undirected graphs are symmetric. However, for directed graphs, it is natural to consider asymmetric RPEs such as the adjacency matrix, the directed distance matrix, the Laplacian matrix, or directed random walk matrices~\cite{geisler2023transformers}.


We identify the following special type of (possibly asymmetric) RPEs, which will be useful later:

\begin{definition}[Pseudo-symmetric RPEs]
    Let $\psi$ be an RPE valued in $\R^{k}$ for either directed or undirected graphs. Let $f:\R^k\to\R^k$ be any injective function such that $f\circ f=\mathrm{Id}$. Then, $\psi$ is \emph{\textbf{($\boldsymbol{f}$-)pseudo-symmetric}} if $\psi_{G}(u,v)=f(\psi_{G}(v,u))$ for any graph $G$ and any vertices $u,v\in V_G$.
\end{definition}

Examples of $f$ satisfying $f \circ f = \mathrm{Id}$ include the identify map, the reflection map, and any coordinate switching map.
%
Obviously, any symmetric RPE $\psi$ is $\mathrm{Id}$-pseudo-symmetric. Any skew-symmetric matrix is pseudo-symmetric for $f(x) = -x$.
If we consider complex-valued RPEs, then any Hermitian matrix is pseudo-symmetric for $f(x) = \overline{x}$, the
complex conjugate map.

Although the adjacency or the Laplacian matrix of a {\bf directed} graph can also serve as RPEs, they are in general not pseudo-symmetric. In order to guarantee pseudo-symmetry, given any RPE $\psi$, we can define an augmented RPE $S^\psi:=(\psi,\psi^T)$, i.e., for any graph $G$, $S^\psi_G:=(\psi_G,\psi^T_G)$.
\vspace{-0.5cm}
\begin{lemma}[Pseudo-symmetric augmentation]\label{lem:ps-augmentation}
    Let $\psi$ be any RPE. Define $f:\R^{2k}\to\R^{2k}$ by letting $(x,y)\mapsto (y,x)$ for any $x,y\in\R^k$. Then, $S^\psi$ is $f$-pseudo-symmetric.
\end{lemma}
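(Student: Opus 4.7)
The plan is to unpack both conditions in the definition of $f$-pseudo-symmetry against the explicit formula $S^\psi_G(u,v) = (\psi_G(u,v),\psi_G(v,u))$ and see that they fall out immediately. This is really a one-line verification, so the ``proposal'' is mostly identifying what has to be checked rather than strategizing.

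First I would confirm the hypotheses on $f$. The map $f(x,y) := (y,x)$ on $\mathbb{R}^{2k}$ is clearly a bijection (hence injective), and $f(f(x,y)) = f(y,x) = (x,y)$, so $f \circ f = \mathrm{Id}$. Thus $f$ is an admissible choice in the definition of pseudo-symmetric RPE.

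Next I would write out, for an arbitrary graph $G$ and vertices $u,v \in V_G$, the two quantities that have to match. By definition of $S^\psi$,
\begin{equation*}
S^\psi_G(u,v) = (\psi_G(u,v),\, \psi_G^T(u,v)) = (\psi_G(u,v),\, \psi_G(v,u)),
\end{equation*}
and likewise $S^\psi_G(v,u) = (\psi_G(v,u),\, \psi_G(u,v))$. Applying $f$ swaps the two coordinates, giving $f(S^\psi_G(v,u)) = (\psi_G(u,v),\, \psi_G(v,u)) = S^\psi_G(u,v)$, which is exactly the required identity.

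There is no real obstacle here; the only thing to be careful about is to also note in passing that $S^\psi$ is indeed a bona fide RPE, i.e.\ that it respects graph isomorphisms. But this is inherited trivially from $\psi$ being an RPE, since transposition commutes with the coordinate-swap action of an isomorphism $\sigma \times \sigma$ on $V_G \times V_G$. So the whole argument amounts to the displayed computation above plus this brief remark.
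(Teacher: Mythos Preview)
Your verification is correct. The paper does not give an explicit proof of this lemma at all---it is stated and then the exposition moves on, treating the claim as immediate from the definitions---so your unpacking of $S^\psi_G(u,v)=(\psi_G(u,v),\psi_G(v,u))$ and the coordinate-swap $f$ is exactly the intended one-line check.
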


\section{Comparison of APE-GT and RPE-GT}\label{sec:equivalence between PEs}

\begin{figure*}[htbp!]
    \centering

    \begin{tikzcd}
        {\text{APE-GT}} &&&& {\text{APEs}} \\ \\ \\
        {\text{RPE-GT}} && {\text{RPE-augWL}} && {\text{RPE-2-WL}}
        \arrow[
        "\tiny{\begin{tabular}{c} \textbf{DeepSets} \\ \textbf{\Cref{lem:APE-to-RPE-WL}} \end{tabular}}"{description}, tail reversed, from=1-5, to=4-5, curve={height=-30pt}, dashed
        ]
        \arrow[
        "{{\text{\textbf{\Cref{lem:APE_transformer_equal_APE}}}}}"', tail reversed, from=1-5, to=1-1
        ]
        \arrow[
        "{{\text{\textbf{\Cref{thm:wl-equals-2-wl}}}}}", tail reversed, from=4-5, to=4-3
        ]
        \arrow[
        "{{\text{\Cref{lem:RPE_transformers_equal_w_WL}}}}", tail reversed, from=4-3, to=4-1
        ]
        \arrow[
        "\tiny{\begin{tabular}{c} \textbf{EGNs} \\ \textbf{~\Cref{lem:IGN}} \end{tabular}}"{description}, from=4-5, to=1-5, curve={height=-30pt}, dashed, tail reversed, two heads
        ]
        \arrow[
        "{{\text{\textbf{\Cref{thm:main-APE-to-RPE}}}}}"{description}, shift left=5, color={rgb,255:red,214;green,92;blue,92}, curve={height=-18pt}, from=1-1, to=4-1
        ]
        \arrow[
        "{{\text{\textbf{\Cref{thm:main-RPE-to-APE}}}}}"{description}, shift left=5, color={rgb,255:red,214;green,92;blue,92}, curve={height=-18pt}, from=4-1, to=1-1
        ]
    \end{tikzcd}

    \caption{\textbf{Illustration of main results.} Arrows denote non-decreasing in distinguishing power. Our main results are the two red arrows on the left.
    The proofs of the two theorems are illustrated by other parts of the diagram. Our contributions are in \textbf{bold}. The two-head arrow from RPE-2-WL to APEs indicates that the non-decreasing property only holds for unfeatured graphs.}
\label{fig:equivalence-diagram}
\end{figure*}
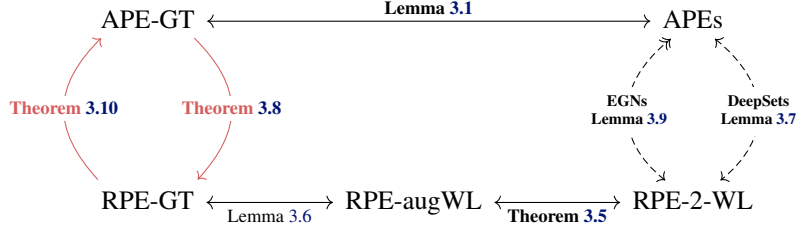

In this section, we establish the equivalence of the distinguishing power of APE and RPE GTs for distinguishing graphs without node features. This is done by showing that any RPE can be mapped to an APE without changing its distinguishing power and vice versa.
Our strategy is to relate the distinguishing power of APE and RPEs, as well as the maps between the two, to variants of the Weisfeiler-Lehman tests. This approach generalizes the method of~\citet{zhang2023rethinking} for studying the expressive power of the Graphormer-GD. Our main results are summarized in \Cref{fig:equivalence-diagram}.

However, for the case of graphs with node features, we prove that our map from RPEs to APEs can result in a strict decrease in the distinguishing power of the corresponding GTs. In this way, our results suggest that (1) the two types of positional encodings are theoretically equivalent for solving graph isomorphism problems but
(2) RPE GTs may be better suited for tasks where graph node features are involved.

Our theoretical results are tested empirically in \Cref{sec:experiments}.


\subsection{Distinguishing Power: PEs vs. Graph Transformers}
\label{sec:distinguishing-power}


%
Positional encodings provide a way of distinguishing different, non-isomorphic graphs. In their most basic form, two graphs can be distinguished by a positional encoding if it takes different values on the two graphs. For graphs with node features, a positional encoding can distinguish two graphs if it takes different values on nodes with different features. Specifically, two featured graphs $(G,X_G)$ and $(H,X_H)$ are \textit{\textbf{indistinguishable}} by an APE $\phi$ if the concatenation of their features with their images by the APE $\phi$ are the same multiset, i.e., $\multiset{ (X_G(v), \phi_{G}(v)) :v\in V_G} = \multiset{ (X_H(v),\phi_{H}(x)):x\in V_H}$, and $G$ and $H$ are indistinguishable by an RPE $\psi$ if $\multiset{(X_G(u),X_G(v),\psi_{G}(u,v)) : (u,v)\in V_G\times V_G} = \multiset{(X_H(x),X_H(y),\psi_{H}(x,y) ): (x,y)\in V_H\times V_H }$. Likewise, two graphs are \textit{\textbf{indistinguishable by a graph transformer}} if their outputs have the same multiset of node features.
\par
Next, we compare the distinguishing power of positional encodings and GTs using these positional encodings.


\paragraph{APE vs. APE-GT.}


Interestingly, APE-GTs are only as strong as their initial APEs in distinguishing power. This was previously noted by \citet[Section 2.1]{muller2023attending}, although we include a short proof (which follows from the permutation-equivariance of transformers) in \Cref{sec:ape_ape_gt} for completeness.


\begin{restatable}[Equivalence of APEs and APE-GT]{lemma}{apeapegt}
\label{lem:APE_transformer_equal_APE}
    Any two graphs $(G,X_G)$ and $(H,X_H)$ are indistinguishable by an APE $\phi$ iff $(G,X_G)$ and $(H,X_H)$ are indistinguishable by all $\phi$-APE-GTs.
\end{restatable}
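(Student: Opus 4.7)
The plan is to prove the two directions separately, using the permutation equivariance of transformers (Fact~\ref{thm:transformers_permutation_equivariant}) for the forward direction and an explicit construction of a trivial transformer for the backward direction.

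For the forward direction, suppose $(G, X_G)$ and $(H, X_H)$ are indistinguishable by $\phi$, so that $\multiset{(X_G(v), \phi_G(v)) : v \in V_G} = \multiset{(X_H(x), \phi_H(x)) : x \in V_H}$. In particular $|V_G| = |V_H| =: n$. Equality of these multisets means there is a bijection $\sigma: V_G \to V_H$ matching the paired features, which in matrix form gives a permutation matrix $P \in \R^{n \times n}$ with $P \widehat{X_G} = \widehat{X_H}$, where $\widehat{X}$ denotes either $[X|\phi]$ or $X + \phi$. For any $\phi$-APE-GT $T$, Fact~\ref{thm:transformers_permutation_equivariant} gives $T(\widehat{X_H}) = T(P \widehat{X_G}) = P T(\widehat{X_G})$. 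Since $P$ only permutes rows, the multiset of rows of $T(\widehat{X_H})$ equals that of $T(\widehat{X_G})$, so the two graphs are indistinguishable by $T$.

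For the backward direction, I would prove the contrapositive: if $(G, X_G)$ and $(H, X_H)$ are distinguishable by $\phi$, then some $\phi$-APE-GT distinguishes them. Use the concatenation variant, so $\widehat{X_G}$ and $\widehat{X_H}$ have distinct row multisets by hypothesis. It suffices to exhibit a transformer $T$ that acts as the identity map on inputs, because then the output row multisets equal $\multiset{(X_G(v), \phi_G(v))}$ and $\multiset{(X_H(x), \phi_H(x))}$, which are distinct. To build such a $T$, use a single layer (or any number) and set $W_V^{(l,h)} = 0$ for every head so that $\sum_h A^{(l,h)}(X^{(l)}) X^{(l)} W_V^{(l,h)} = 0$; then $Y^{(l)} = X^{(l)}$ via the residual. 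Likewise set $W_2^{(l)} = 0$ so that $\sigma(Y^{(l)} W_1^{(l)}) W_2^{(l)} = 0$ and $X^{(l+1)} = Y^{(l)} = X^{(l)}$. Thus $T$ is the identity, which distinguishes the two graphs.

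The main step to be careful about is the backward direction, since the transformer architecture as defined always includes attention, residual, and MLP blocks; the construction above shows that zeroing out $W_V$ and $W_2$ collapses every layer to the identity regardless of the choice of $W_Q, W_K, W_1$, so the trivial transformer genuinely lies in the class of $\phi$-APE-GTs. No difficulty arises in the forward direction beyond translating the multiset equality into a permutation matrix. Combining the two directions yields the lemma.
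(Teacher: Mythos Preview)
Your proof is correct and follows essentially the same approach as the paper's own proof: permutation equivariance for the forward direction, and an identity transformer for the backward direction. The paper simply sets all weight matrices to zero to obtain the identity map, whereas you more precisely observe that zeroing $W_V^{(l,h)}$ and $W_2^{(l)}$ already suffices; this is a minor refinement of the same idea.
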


\paragraph{RPE vs. RPE-GT.}
\label{sec:weighted_wl_tests}
An analogous result does \textbf{not} hold for RPEs and RPE-GTs, as using an RPE in an RPE-GT can increase its distinguishing power; see \Cref{sec:rpe_vs_rpe_wl_example} for an example. In fact, one can obtain a stronger notion of distinguishing power of RPEs via a variation of the Weisfeiler-Lehman (WL) graph isomorphism test~\citep{weisfeiler1968reduction}. This new notion of indistinguishability for RPEs will be equivalent to the indistinguishability of RPE-GTs.


We first recall the standard WL algorithm,  which iteratively assigns colors to the vertices of a graph and utilizes the multisets of node colors to distinguish non-isomorphic graphs. The color updating process is defined explicitly as follows:

\begin{definition}
\label{def:classical_wl}
Let $G$ be a graph. The \textit{\textbf{Weisfeiler-Lehman (WL) algorithm}} assigns a coloring to the vertices $\wlcolor{G}{t}:V_G\to C_t$ for each $t\geq 0$ defined by
\vspace{-0.2cm}
\begin{gather*}
\wlcolor{G}{0}(v) = 1,  \tag{1 denotes an arbitrary constant} \\
\wlcolor{G}{t+1}(v) = (\wlcolor{G}{t}(v),\multiset{\wlcolor{G}{t}(u) : \{v,u\}\in E_G }).
\end{gather*}
\end{definition}

However, to understand the distinguishing power of RPE transformers, we need to consider the following augmented WL test for graphs with node features and RPEs.

\begin{definition}
\label{def:w-WL}
Let $\psi$ be an RPE. Let $(G,X_G)$ be a featured graph. The \textit{\textbf{Weisfeiler-Lehman algorithm augmented with RPE $\boldsymbol{\psi}$ (RPE-augWL or $\boldsymbol{\psi}$-WL)}} assigns a coloring to the vertices $\chi^{(t)}_{\psi_G}:V_G\to C_t$ for each $t\geq 0$ defined by
\vspace{-0.2cm}
\begin{gather*}
    \chi_{\psi_G}^{(0)}(v) = X_G(v),\\
    \chi_{\psi_G}^{(t+1)}(v) = (\chi^{(t)}_{\psi_G}(v),\multiset{(\chi^{(t)}_{\psi_G}(u),\, \psi_G(v,u)) : u\in V_G }).
\end{gather*}
\end{definition}

\textbf{Note:} For unfeatured graphs, $\psi$-WL is defined by setting $\chi_{\psi_G}^{(0)}(v) = 1$ (where 1 is an arbitrary constant.) We adopt this convention for all later constructions.
\par
Note that the RPE-augWL algorithm has a different color updating mechanism from the WL algorithm: RPE-augWL considers {\bf all} nodes in the update stage, whereas WL considers only {\bf neighbors}. However, when the RPE is the adjacency matrix $A$, $A$-WL is equivalent to WL (cf. \Cref{lem:wl-equals-tradition-wl}). Hence, we consider RPE-augWL to be a generalization of the WL algorithm.

The definition of RPE-augWL is inspired by the GD-WL test of \citet{zhang2023rethinking} where the new feature is updated as $\widehat{\chi}_{\psi_G}^{(t+1)}(v) = \multiset{(\widehat{\chi}_{\psi_G}^{(t)}(u),\, \psi_{G}(v,u)) : u\in V }$, where $\psi_G$ is a graph distance. We note that GD-WL is equivalent to our $\psi$-WL; in GD-WL, the previous color of the node can be deduced from the fact that $\psi_G(v,v)=0$ since $\psi_G$ is a graph distance.


The WL algorithm has a family of higher-order generalizations called the $k$-WL algorithms that assign colors to $k$-tuples of nodes. While \Cref{def:w-WL} augments the WL test by using RPEs within the feature updating step, we can also directly use RPEs as the initial colors for 2-WL.

\begin{definition}
    Let $\psi$ be an RPE. Let $(G,X)$ be a featured graph. The \textit{\textbf{2-Weisfeiler-Lehman algorithm with RPE $\boldsymbol{\psi}$ (RPE-2-WL or $\boldsymbol{\psi}$-2-WL)}} assigns a color to each pair of vertices $\chi^{(t)}_{2}:V\times V\to C_t$ for each $t\geq 0$ defined by\footnote{In the literature \cite{chen2020can}, a pair $(u,v)$ is colored differently from ours at the 0-th step by considering both its isomorphism type and its RPE: $\chi_{2,\psi}^{(0)}(u,v) = (1_{u=v},A_{uv},\psi(u,v))$. We ignore the first two terms as one can easily augment $\psi$ with these terms to obtain a new RPE that reflects the isomorphism type if needed; see \Cref{sec:combinatorial_awareness} for a further discussion.
    }
    \vspace{-0.2cm}
    \begin{gather*}
    \chi_{2,\psi_G}^{(0)}(u,v) = (X_G(u), X_G(v), \psi_G(u,v)),\\
    \chi_{2,\psi_G}^{(t+1)}(u,v) = \big(\chi^{(t)}_{2,\psi_G}(u,v), \multiset{ \chi^{(t)}_{2,\psi_G}(u,w) : w\in V }, \\
        \multiset{ \chi^{(t)}_{2,\psi_G}(w,v) : w\in V }\big).
    \end{gather*}
\end{definition}

Because RPEs are dependent on the graph structure and will be the same (up to permutation) for two isomorphic graphs, we can use the RPE-augWL or RPE-2-WL as heuristics to distinguish non-isomorphic graphs. We say two featured graphs $(G,X_G)$ and $(H,X_H)$ are \textit{\textbf{$\boldsymbol{\psi}$-WL indistinguishable}} if $\multiset{ \chi^{(t)}_{\psi_{G}}(v) :v\in V_G } = \multiset{ \chi^{(t)}_{\psi_{H}}(v) :v\in V_H }$ for all $t\geq 0$. We denote this as $\chi_{\psi}(G)=\chi_{\psi}(H)$. Likewise, they are \textit{\textbf{$\boldsymbol{\psi}$-2-WL indistinguishable}}  if $\multiset{ \chi^{(t)}_{2, \psi_{G}}(u,v) : u,v\in V_G } = \multiset{ \chi^{(t)}_{2, \psi_{H}}(u,v) : u,v\in V_H }$  for all $t\geq 0$. We denote this as $\chi_{2,\psi}(G)=\chi_{2,\psi}(H)$.

$\psi$-WL and $\psi$-2-WL indistinguishability can be used as a heuristic test for a special type of graph isomorphism. Two featured graphs $(G,X_G)$ and $(H,X_H)$ are \textit{\textbf{feature isomorphic}} if there is a graph isomorphism $\sigma:V_G\to V_H$ such that $X_G(v) = X_H(\sigma(v))$ for all $v\in V_G$. Two feature isomorphic graphs will be $\psi$-WL and $\psi$-2-WL indistinguishable.

\begin{restatable}{fact}{isomorphicimpliessamewlcolors}
\label{prop:isomorphic_implies_same_wl_colors}
    If $(G,X_G)$ and $(H,X_H)$ are feature isomorphic, then $(G,X_G)$ and $(H,X_H)$ are $\psi$-WL indistinguishable and $\psi$-2-WL indistinguishable for any RPE $\psi$.
\end{restatable}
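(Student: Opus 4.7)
The plan is to prove both claims by a straightforward induction on the iteration count $t$, exploiting the isomorphism $\sigma:V_G\to V_H$ guaranteed by feature isomorphism together with the defining equivariance of an RPE ($\psi_G=\psi_H\circ(\sigma\times\sigma)$, from \Cref{def:pe_maps}).

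For the $\psi$-WL claim, I would prove the stronger pointwise statement that $\chi_{\psi_G}^{(t)}(v)=\chi_{\psi_H}^{(t)}(\sigma(v))$ for every $v\in V_G$ and every $t\geq 0$. The base case $t=0$ follows from $\chi_{\psi_G}^{(0)}(v)=X_G(v)=X_H(\sigma(v))=\chi_{\psi_H}^{(0)}(\sigma(v))$, using the defining property of feature isomorphism. For the inductive step, I would expand $\chi_{\psi_G}^{(t+1)}(v)$ using \Cref{def:w-WL}, substitute the inductive hypothesis into each term $\chi_{\psi_G}^{(t)}(u)$, and substitute the RPE equivariance into each term $\psi_G(v,u)$. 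Reindexing the resulting multiset along the bijection $u\mapsto\sigma(u)$ (which is legal because $\sigma$ is a bijection and multisets are invariant under bijective reindexing) yields exactly $\chi_{\psi_H}^{(t+1)}(\sigma(v))$. Since $\sigma$ is a bijection, the pointwise equality immediately gives the multiset equality $\multiset{\chi_{\psi_G}^{(t)}(v):v\in V_G}=\multiset{\chi_{\psi_H}^{(t)}(v):v\in V_H}$ for all $t$, which is $\psi$-WL indistinguishability.

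The $\psi$-2-WL claim follows from the same template, now proving $\chi_{2,\psi_G}^{(t)}(u,v)=\chi_{2,\psi_H}^{(t)}(\sigma(u),\sigma(v))$ for all pairs $(u,v)\in V_G\times V_G$. The base case uses feature isomorphism on both coordinates and RPE equivariance on the third. The inductive step is identical in spirit: expand using the update rule, apply the inductive hypothesis entrywise in each of the two multisets, and reindex along $w\mapsto\sigma(w)$.

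I do not anticipate any serious obstacle: the proof is essentially bookkeeping that makes the intuitive statement ``isomorphic inputs produce isomorphic color sequences'' precise. The only minor subtlety worth calling out in the write-up is that the reindexing step is valid precisely because $\sigma$ is a bijection on vertex sets of the same cardinality, so the multiplicities are preserved. Accordingly, in the final paper I would state and prove the pointwise refinement as a small lemma and then deduce the multiset statement in a single line.
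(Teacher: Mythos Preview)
Your proposal is correct and matches the paper's own proof essentially line for line: both prove the stronger pointwise statement $\chi_{\psi_G}^{(t)}(v)=\chi_{\psi_H}^{(t)}(\sigma(v))$ (and the analogous pairwise statement for $\psi$-2-WL) by induction on $t$, using feature isomorphism for the base case and the RPE equivariance $\psi_G=\psi_H\circ(\sigma\times\sigma)$ together with bijective reindexing for the inductive step.
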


Interestingly, these two WL tests---RPE-augWL and RPE-2-WL---are  equivalent for pseudo-symmetric RPEs. See \Cref{sec:wl-equals-2-wl} for a proof.

\begin{restatable}[Equivalence of RPE-augWL and RPE-2-WL.]{theorem}{wlequalstwowl}
    \label{thm:wl-equals-2-wl}
        Let $\psi$ be a \emph{pseudo-symmetric} RPE. Then two featured graphs $(G,X_G)$ and $(H,X_H)$ are $\psi$-WL indistinguishable iff they are  $\psi$-2-WL indistinguishable.
\end{restatable}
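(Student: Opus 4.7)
The plan is to reduce the theorem to an equivalence of refinements on $V_G \times V_G$, by showing via simultaneous induction on $t$ that there exist graph-independent maps $f_t, g_t$ satisfying $\chi^{(t)}_{2,\psi_G}(u,v) = f_t(\chi^{(t)}_{\psi_G}(u), \chi^{(t)}_{\psi_G}(v), \psi_G(u,v))$ and $(\chi^{(t)}_{\psi_G}(u), \chi^{(t)}_{\psi_G}(v), \psi_G(u,v)) = g_t(\chi^{(t)}_{2,\psi_G}(u,v))$. Writing $\alpha^{(t)}_G(u,v)$ for the triple, these two statements together say that the partition of $V_G \times V_G$ induced by $\chi^{(t)}_{2,\psi_G}$ coincides with the one induced by $\alpha^{(t)}_G$.

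The base case $t=0$ is immediate since $\chi^{(0)}_{2,\psi_G}(u,v) = \alpha^{(0)}_G(u,v)$ by definition. For the inductive step, the key obstruction is that the 2-WL update aggregates both a row multiset $\multiset{\chi^{(t)}_{2,\psi_G}(u,w):w}$ and a column multiset $\multiset{\chi^{(t)}_{2,\psi_G}(w,v):w}$, whereas the $\psi$-WL update at $v$ only accumulates $\multiset{(\chi^{(t)}_{\psi_G}(w),\psi_G(v,w)):w}$. Pseudo-symmetry resolves this asymmetry: the identity $\psi_G(w,v) = f(\psi_G(v,w))$ with $f \circ f = \mathrm{Id}$ lets us freely convert between multisets indexed by $\psi_G(v,w)$ and by $\psi_G(w,v)$. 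Concretely for (a), the row multiset $\multiset{(\chi^{(t)}_{\psi_G}(w), \psi_G(u,w)):w}$ can be read off directly from $\chi^{(t+1)}_{\psi_G}(u)$, while the column multiset $\multiset{(\chi^{(t)}_{\psi_G}(w), \psi_G(w,v)):w}$ is obtained from $\chi^{(t+1)}_{\psi_G}(v)$ by applying $f$ entrywise to the second coordinate of its inner multiset; $f_{t+1}$ then assembles $\chi^{(t+1)}_{2,\psi_G}(u,v)$. Direction (b) is dual: $\chi^{(t+1)}_{2,\psi_G}(u,v)$ carries $\chi^{(t)}_{2,\psi_G}(u,v)$ (yielding $\alpha^{(t)}_G(u,v)$ by the inductive hypothesis) together with the row and column multisets, which, after applying $g_t$ entrywise, projecting, and (for the column) invoking pseudo-symmetry, provide the multisets needed to reconstruct both $\chi^{(t+1)}_{\psi_G}(u)$ and $\chi^{(t+1)}_{\psi_G}(v)$.

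Given (a) and (b), the theorem follows by standard multiset manipulations. For the direction $\psi$-WL indistinguishable $\Rightarrow$ $\psi$-2-WL indistinguishable, equality of $\multiset{\chi^{(t+1)}_{\psi_G}(u):u\in V_G}$ across $G$ and $H$ implies, by extracting the inner multisets and attaching $\chi^{(t)}_{\psi_G}(u)$ to each of their elements, equality of $\multiset{(\chi^{(t)}_{\psi_G}(u), \chi^{(t)}_{\psi_G}(w), \psi_G(u,w)): u,w \in V_G}$; applying $f_t$ entrywise then yields equality of the $\chi^{(t)}_{2,\psi_G}$ multisets. Conversely, (b) together with $\psi$-2-WL indistinguishability gives equality of the multisets of $\alpha^{(t)}_G$ triples, and projecting onto the first coordinate (using $|V_G|=|V_H|$, which is forced at $t=0$) recovers the $\psi$-WL multisets. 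The hard part will be the inductive step of (a) and (b), where one must carefully align the two-sided aggregation of 2-WL with the single aggregation of $\psi$-WL; pseudo-symmetry is essential here, since without it the column multiset of 2-WL would strictly refine anything $\psi$-WL can see and the forward implication would genuinely fail.
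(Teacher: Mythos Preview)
Your proposal is correct and follows essentially the same approach as the paper: your claims (a) and (b), asserting the existence of graph-independent maps between $\chi^{(t)}_{2,\psi}$ and the triple $\alpha^{(t)}=(\chi^{(t)}_{\psi}(u),\chi^{(t)}_{\psi}(v),\psi(u,v))$, are exactly a repackaging of the paper's key \Cref{lem:wl-equals-2-wl-lemma}, and both are proved by the same induction using pseudo-symmetry to handle the column multiset. Your derivation of the theorem from this lemma via ``unnesting'' the $\chi^{(t+1)}_{\psi}$ multisets is slightly more streamlined than the paper's explicit partition argument, but the substance is identical.
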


This result generalizes the well-known fact that the WL and 2-WL tests are equivalent; see \citep{huang2021short}.

Finally, in contrast to Lemma \ref{lem:APE_transformer_equal_APE}, we have the following slight generalization of \citep[Theorem 4]{zhang2023rethinking} who only consider the case where the RPE is a distance function. See \Cref{sec:proof-RPE-transformers-equal-w-WL} for a sketch of the proof.


\begin{restatable}[Equivalence of RPE-augWL and RPE-GT]{lemma}{rpetransformersequalwl}
    \label{lem:RPE_transformers_equal_w_WL}
    Let $\psi$ be a \emph{diagonally-aware} RPE. Let $(G,X_G)$ and $(H,X_H)$ be featured graphs. Then $(G,X_G)$ and $(H,X_H)$ are indistinguisable by the $\psi$-WL test iff $(G,X_G)$ and $(H,X_H)$ are indistinguisable by all $\psi$-RPE-GTs.
\end{restatable}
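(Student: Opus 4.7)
The plan is to handle the two directions of the ``iff'' separately: show first that any $\psi$-RPE-GT is refined by the $\psi$-WL coloring, and then that one can simulate $\psi$-WL by an explicitly constructed $\psi$-RPE-GT. The structure mirrors the distance-only argument of \citet[Theorem 4]{zhang2023rethinking}, with diagonal-awareness of $\psi$ playing the role that $\psi_G(v,v)=0$ plays there.

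For the forward direction (GT $\le$ $\psi$-WL), I would induct on the layer depth $l$ to exhibit a graph-independent function $g^{(l)}$ with $X^{(l)}(v)=g^{(l)}(\chi^{(l)}_{\psi_G}(v))$ for every vertex of every featured graph. The base case holds because $X^{(0)}(v)=X_G(v)=\chi^{(0)}_{\psi_G}(v)$. For the inductive step, fix a head: the pre-softmax score at position $(v,u)$ and the value $X^{(l)}(u)W_V$ each depend on $u$ only through the pair $(X^{(l)}(u),\psi_G(v,u))$, and the softmax normalization (plus the Hadamard factor $f_1(\psi_G)$) makes the attention-weighted sum at $v$ a permutation-invariant function of $X^{(l)}(v)$ and the multiset $\multiset{(X^{(l)}(u),\psi_G(v,u)):u\in V_G}$. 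Summing over heads, then applying the residual connection and position-wise MLP, keeps $X^{(l+1)}(v)$ a deterministic function of these same data. Applying the inductive hypothesis to every $X^{(l)}(u)$ converts this into a function of $\chi^{(l+1)}_{\psi_G}(v)$, giving $g^{(l+1)}$. Taking multisets over $v$ then shows that $\chi^{(t)}_{\psi}(G)=\chi^{(t)}_{\psi}(H)$ at all $t$ forces the two featured graphs to produce identical multisets of GT outputs.

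For the converse ($\psi$-WL $\le$ GT) I argue the contrapositive. If $\chi_{\psi}(G)\neq\chi_{\psi}(H)$, there is a finite depth $L$ at which $\multiset{\chi^{(L)}_{\psi_G}(v):v\in V_G}\neq\multiset{\chi^{(L)}_{\psi_H}(x):x\in V_H}$. Since both graphs are finite, only finitely many color symbols appear at depths $0,\dots,L$, so I fix an injective encoding of them into $\R^{d}$. I then build an $L$-layer $\psi$-RPE-GT whose $l$-th layer produces this encoding of $\chi^{(l)}_{\psi_G}(v)$ at each node $v$. The key construction is a single attention block that, given the encoding of $\chi^{(l)}_{\psi_G}(v)$, outputs an injective encoding of the pair $(\chi^{(l)}_{\psi_G}(v),\multiset{(\chi^{(l)}_{\psi_G}(u),\psi_G(v,u)):u\in V_G})$. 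One realization sets $W_Q=W_K=0$ so that the softmax collapses to $\mathrm{softmax}(f_2(\psi_G))$; choosing $f_1,f_2$ and enough heads, the multi-head aggregate realizes an injective moment-type hash of the multiset (a DeepSets-style universality argument, sharpened to exact equality because the underlying colors lie in a finite set), while the residual stream plus post-attention MLP appends the self-feature $\chi^{(l)}_{\psi_G}(v)$. Diagonal-awareness ensures $\psi_G(v,v)$ differs from every off-diagonal $\psi_G(v,u)$, which is precisely what lets the MLP separate the self-term from the aggregated neighbors when assembling the injective update. A final linear readout distinguishing the two multisets of encoded depth-$L$ colors then produces a $\psi$-RPE-GT separating $(G,X_G)$ from $(H,X_H)$.

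The main obstacle is the universality step inside the converse: one must verify that an attention block of the restricted form $f_1(\psi)\odot\mathrm{softmax}(\cdot+f_2(\psi))$ can implement an injective multiset hash keyed by $\psi$, with the self-entry cleanly separable. Diagonal-awareness is essential here, and the argument is a mild adaptation of \citet[Theorem 4]{zhang2023rethinking}; once this single-layer simulation is in place, iterating it $L$ times and composing with the readout is routine.
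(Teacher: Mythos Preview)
Your proposal is correct and follows essentially the same approach as the paper. For the forward direction you phrase the induction as a graph-independent function $g^{(l)}$ with $X^{(l)}(v)=g^{(l)}(\chi^{(l)}_{\psi_G}(v))$, whereas the paper phrases it via a color-preserving bijection $\sigma^{(t)}$; these are equivalent formulations of the same refinement argument. For the converse, both you and the paper defer to the head-per-RPE-value construction of \citet[Theorem~4]{zhang2023rethinking}, with diagonal-awareness replacing the ``distance is zero on the diagonal'' property; the paper makes this explicit via the auxiliary lemma that $\psi$-WL coincides with the GD-WL variant $\widehat{\chi}^{(t+1)}_{\psi_G}(v)=\multiset{(\widehat{\chi}^{(t)}_{\psi_G}(u),\psi_G(v,u)):u\in V}$ when $\psi$ is diagonally-aware, which is exactly your ``separate the self-term from the aggregated neighbors'' step stated abstractly.
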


\subsection{Main Results: APE vs RPE Transformers}\label{sec:main}
In this section, we show that any APE can be turned into an RPE without changing its distinguishing power in terms of transformers and vice versa.

\paragraph*{Mapping APEs to RPEs.}
Let $S$ be any set, and let $\mathrm{Mul}_2(S)$ denote the collection of all 2-element multi-subsets of $S$. For a map $f:S\to\R^{d}$, define the map $h_{f}:\mathrm{Mul}_2(S)\to\R^{d}$ as $h_f(\multiset{x,y}) = f(x)+f(y)$. The map $h_f$ is a special case of the DeepSets network~\citep{zaheer2017deep} as it operates on multisets. Accordingly, we refer to $h_f$ as a DeepSet network in this paper.
\par
We propose the following way of mapping APEs to RPEs: given a function $f$ and an APE $\phi$, we define the RPE $\psi^{f}$ for a graph $G$ as $\psi^{f}_{G}(u,v):=h_f(\multiset{\phi_G(u),\phi_G(v)\}})$.
Our key observation is that given an APE $\phi$, there exists a universal function $f$ whose induced RPE $\psi^f$ distinguishes the same pairs of graphs as $\phi$ (Lemma \ref{lem:APE-to-RPE-WL}). The main \Cref{thm:main-APE-to-RPE} will then follow. Proofs of these results can be found in \Cref{sec:proof-main-APE-to-RPE}.




\begin{restatable}{lemma}{apetorpewl}
\label{lem:APE-to-RPE-WL}
    Let $\phi$ be an APE. Then there is a function $f$ such that any two featured graphs $(G,X_G)$ and $(H,X_H)$ are indistinguishable by $\phi$ iff $(G,X_G)$ and $(H,X_H)$ cannot be distinguished by the $\psi^{f}$-2-WL test.
\end{restatable}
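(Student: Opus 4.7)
The plan is to choose the DeepSets function $f$ so that the map $\multiset{x,y}\mapsto f(x)+f(y)$ is injective on the 2-element multisets of APE values that can arise (such an $f$ exists by standard DeepSets universality applied to the countable or compact codomain of interest), and then exploit the symmetry $\psi^f_G(u,v)=f(\phi_G(u))+f(\phi_G(v))$ to invoke \Cref{thm:wl-equals-2-wl}: since $\psi^f$ is $\mathrm{Id}$-pseudo-symmetric, $\psi^f$-2-WL and $\psi^f$-WL distinguish exactly the same featured graphs. Hence it suffices to prove the equivalence between $\phi$-indistinguishability and $\psi^f$-WL indistinguishability.

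For the forward direction, if $\multiset{(X_G(v),\phi_G(v)):v\in V_G}=\multiset{(X_H(v),\phi_H(v)):v\in V_H}$, then there is a bijection $\sigma:V_G\to V_H$ matching both features and APE values (though not necessarily a graph isomorphism). Since the $\psi^f$-WL update depends only on the initial feature colors and on $\psi^f$, substitution gives $\chi^{(0)}_{\psi^f_G}(v)=\chi^{(0)}_{\psi^f_H}(\sigma(v))$ together with $\psi^f_G(u,v)=\psi^f_H(\sigma(u),\sigma(v))$, and a routine induction on $t$ propagates the equality $\chi^{(t)}_{\psi^f_G}(v)=\chi^{(t)}_{\psi^f_H}(\sigma(v))$ to every iteration, so the multisets of WL colors agree.

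For the backward direction, expanding one step of the WL update gives
\[
\chi^{(1)}_{\psi^f_G}(v)=\Big(X_G(v),\,\multiset{(X_G(u),\,f(\phi_G(v))+f(\phi_G(u))):u\in V_G}\Big),
\]
and using the chosen injectivity of $f$ on $\mul_2$, the second coordinates encode the multiset of unordered pairs $\multiset{\multiset{\phi_G(v),\phi_G(u)}:u\in V_G}$. A simple counting argument now isolates $\phi_G(v)$: letting $g(x)$ denote the total multiplicity of a value $x$ across these $|V_G|$ pairs, we have $g(\phi_G(v))=|V_G|+|\{u\in V_G:\phi_G(u)=\phi_G(v)\}|\geq|V_G|+1$, whereas $g(x)\leq|V_G|$ for every $x\neq\phi_G(v)$; hence $\phi_G(v)$ is the unique mode of $g$. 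Thus $\chi^{(1)}_{\psi^f_G}(v)$ fully determines the pair $(X_G(v),\phi_G(v))$, so equality of the multisets of first-step WL colors forces $\multiset{(X_G(v),\phi_G(v)):v\in V_G}=\multiset{(X_H(v),\phi_H(v)):v\in V_H}$, i.e., $\phi$-indistinguishability.

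The main obstacle is pinning down the DeepSets function $f$ at a sufficient level of generality: one must guarantee that the sum $f(x)+f(y)$ is $\mul_2$-injective over the (possibly continuous) set of APE values arising across all featured graphs under consideration, which requires invoking a universal DeepSets construction rather than a finite-domain shortcut. Once such an $f$ is fixed, the induction in the forward direction, the counting argument in the backward direction, and the reduction via \Cref{thm:wl-equals-2-wl} are essentially bookkeeping.
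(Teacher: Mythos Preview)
Your proof is correct and follows essentially the same architecture as the paper: pick $f$ so that $h_f(\multiset{x,y})=f(x)+f(y)$ is injective on the relevant 2-multisets, prove the forward direction by fixing a bijection $\sigma$ matching both features and APE values and inducting on $t$, and for the backward direction extract $(X_G(v),\phi_G(v))$ from the WL colors.

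The one genuine difference is in the backward extraction. The paper works directly with the $\psi^f$-2-WL initial colors $\multiset{(X_G(u),X_G(v),\psi^f_G(u,v)):u,v}$, unpacks the unordered pairs via injectivity of $h_f$, and then invokes the multiset-square fact $A^2=B^2\Rightarrow A=B$ to conclude. You instead reduce to $\psi^f$-WL via \Cref{thm:wl-equals-2-wl} (using that $\psi^f$ is symmetric) and then read off $\phi_G(v)$ from $\chi^{(1)}_{\psi^f_G}(v)$ by your mode-counting argument. Both are valid; your route makes the reliance on \Cref{thm:wl-equals-2-wl} explicit (the paper's forward direction, which also only proves $\psi^f$-WL indistinguishability, needs it implicitly), and your counting argument is a clean alternative to the multiset-square trick.

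One remark on your closing paragraph: the worry about a ``possibly continuous'' set of APE values is a non-issue here. An APE depends only on the graph structure, and since there are only countably many finite graphs, $\bigcup_G \im\phi_G$ is countable regardless of the ambient codomain $\R^l$. This is precisely what the paper invokes (via \citep[Lemma 5]{xu2018powerful}) to obtain the injective $h_f$, so no compactness or more delicate universality argument is needed.
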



\begin{restatable}{theorem}{mainapetorpe}
\label{thm:main-APE-to-RPE}
     For any APE $\phi$, there exists a function $f$ such that any two graphs $(G,X_G)$ and $(H,X_H)$ are indistinguishable by all $\phi$-APE-GTs iff they are indistinguishable by all $\psi^{f}$-RPE-GTs.
\end{restatable}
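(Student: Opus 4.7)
The plan is to traverse the chain of equivalences depicted on the left and bottom of Figure~\ref{fig:equivalence-diagram}, combining results already established. By Lemma~\ref{lem:APE_transformer_equal_APE}, indistinguishability by all $\phi$-APE-GTs is equivalent to indistinguishability by the APE $\phi$ itself. By Lemma~\ref{lem:APE-to-RPE-WL}, there exists a function $f$ such that $\phi$-indistinguishability coincides with $\psi^{f}$-$2$-WL indistinguishability. Theorem~\ref{thm:wl-equals-2-wl} then converts $\psi^{f}$-$2$-WL indistinguishability into $\psi^{f}$-WL indistinguishability, and Lemma~\ref{lem:RPE_transformers_equal_w_WL} converts $\psi^{f}$-WL indistinguishability into indistinguishability by all $\psi^{f}$-RPE-GTs. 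Chaining these four equivalences yields the theorem.

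To apply the last two steps, I must verify that $\psi^{f}$ is \emph{pseudo-symmetric} and \emph{diagonally-aware}. Pseudo-symmetry is immediate, since $\psi^{f}_{G}(u,v) = f(\phi_{G}(u)) + f(\phi_{G}(v))$ is symmetric in $u$ and $v$, and hence $\mathrm{Id}$-pseudo-symmetric.

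The main obstacle is diagonal-awareness, since a priori the diagonal entry $\psi^{f}_{G}(v,v) = 2 f(\phi_{G}(v))$ could collide with an off-diagonal value $\psi^{f}_{H}(x,y) = f(\phi_{H}(x)) + f(\phi_{H}(y))$ for $x \neq y$. My plan is to exploit the freedom in choosing $f$ granted by Lemma~\ref{lem:APE-to-RPE-WL}: pick a moment-type encoding such as $f(x) = (g(x), g(x)^2)$ for an injective $g$ on the range of $\phi$. A short computation shows that any equality $2 f(a) = f(b) + f(c)$ then forces $(g(b) - g(c))^2 = 0$, hence $b = c$, so $\psi^{f}$ is diagonally-aware whenever distinct vertices have distinct $\phi$-values. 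Any residual collisions occur only between vertices sharing a $\phi$-value, and such vertices are already indistinguishable at the APE level---and hence at the $\phi$-APE-GT level---so conflating them on the RPE side is harmless for the biconditional we seek. If this point needs to be handled more carefully, I would instead replace $\psi^{f}$ by its diagonal augmentation from Definition~\ref{def:diag-aug}, which is guaranteed to be diagonally-aware (Proposition~\ref{prop:diag-augmentation}), and show that this augmentation preserves the equivalences in both Lemma~\ref{lem:APE-to-RPE-WL} and the RPE-GT side.
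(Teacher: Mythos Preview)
Your chain of equivalences is exactly the paper's; its proof is even terser than yours, jumping from Lemma~\ref{lem:APE-to-RPE-WL} straight to Lemma~\ref{lem:RPE_transformers_equal_w_WL} without explicitly invoking Theorem~\ref{thm:wl-equals-2-wl} or verifying the diagonal-awareness hypothesis.

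You are right to flag diagonal-awareness---you are being more careful than the paper here---but your handling of it is incomplete. The $(g,g^{2})$ trick only reduces a collision $\psi^{f}_{G}(v,v)=\psi^{f}_{H}(x,y)$ to the case $\phi_{H}(x)=\phi_{H}(y)$; it does not force $x=y$, so $\psi^{f}$ still fails the definition of diagonally-aware. The ``harmless'' remark is not a proof: the backward direction of Lemma~\ref{lem:RPE_transformers_equal_w_WL} builds an RPE-GT that simulates $\psi$-WL by isolating the self-term $(\chi^{(t)}(v),\psi(v,v))$ within the aggregated multiset, and when some $u\neq v$ with $\phi(u)=\phi(v)$ carries a different node feature $X(u)\neq X(v)$, that isolation fails. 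Your fallback of replacing $\psi^{f}$ by its diagonal augmentation $D^{\psi^{f}}=(I,\psi^{f})$ does close the gap cleanly---it is symmetric and diagonally-aware, and both directions of Lemma~\ref{lem:APE-to-RPE-WL} go through unchanged since any bijection $\sigma$ preserves the identity component---at the cost that the resulting RPE is no longer literally of the DeepSet form $\psi^{f}$ promised in the theorem statement.
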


\paragraph*{Mapping RPEs to APEs.}
In the other direction, we can transform an RPE $\psi$ to an APE $\phi^g$ by passing $\psi$ through a 2-equivariant graph network (2-EGN) $g$. (See \cite{maron2019invariant} or \Cref{sec:egn} for a definition of 2-EGN.) As 2-EGNs are permutation equivariant, the output $\phi^{g}$ will be a well-defined APE in the sense of~\Cref{def:pe_maps}. This is the approach taken by \citet{lim2022sign} to compute an APE from the eigenspaces of the graph Laplacian. It turns out that mapping RPEs to APEs using 2-EGNs can preserve their ability to distinguish non-featured graphs; however, mapping an RPE to an APE may decrease its ability to distinguish featured graphs. Proofs of the following theorems and example can be found in~\Cref{apx:rpe_to_ape}.

\begin{lemma}[restate=rpetoapewl,name=]\label{lem:IGN}
    Let $\psi$ be a \emph{diagonally-aware} RPE.
    \par
    For any 2-EGN $g$, if $(G,X_G)$ and $(H,X_H)$ are indistinguishable by the $\psi$-2-WL test then $(G,X_G)$ and $(H,X_H)$ are indistinguishable by $\phi^g$.
    \par
    Moreover, for any finite set of unfeatured graphs $\mathcal{G}$, there is a 2-EGN $g$ such that if $G,H\in\mathcal{G}$ are indistinguishable by $\phi^g$, then $G$ and $H$ are indistinguishable by the $\psi$-2-WL test.
\end{lemma}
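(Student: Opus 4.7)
The plan is to prove the two directions separately, invoking standard results on the expressive power of $k$-equivariant graph networks due to Maron et al. For the forward direction (the upper bound), the approach is to interpret the 2-EGN $g$ as acting on a tensor that jointly encodes the node features $X_G$ and the RPE $\psi_G$. Since $\psi$ is diagonally-aware, the diagonal of $\psi_G$ can be identified unambiguously, so the features $X_G(v)$ can be placed at position $(v,v)$ of the input tensor; the initial partition of pairs induced by this augmented tensor coincides, up to relabeling, with the initial coloring used by the $\psi$-2-WL test. Invoking the standard bound that 2-EGNs have distinguishing power at most that of 2-WL on their input tensor, the order-1 equivariant output $\phi^g$ of $g$ fails to distinguish $(G,X_G)$ and $(H,X_H)$ whenever $\psi$-2-WL does.

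For the converse direction, which only asserts existence of a suitable $g$ on a finite set $\mathcal{G}$, the plan is to build $g$ in two equivariant stages by exploiting universality of 2-EGNs on bounded domains. First, a 2-EGN block computes an injective encoding of the stable $\psi$-2-WL color $\chi^{(\infty)}_{2,\psi_G}(u,v)$ at each pair; since $\mathcal{G}$ is finite, only finitely many stable colors occur and the coloring stabilizes within a bounded number of iterations, which can be hard-coded into the network depth. Second, an equivariant row-pooling layer produces the APE $\phi^g_G(v) = \sum_{u \in V_G} F(\chi^{(\infty)}_{2,\psi_G}(u,v))$, where $F$ is a vector embedding chosen large enough that the sum is injective on multisets of stable colors---again feasible by finiteness of $\mathcal{G}$.

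To close the converse, suppose $\multiset{\phi^g_G(v) : v \in V_G} = \multiset{\phi^g_H(v) : v \in V_H}$. Then there exists a bijection $\sigma : V_G \to V_H$ matching values, which by injectivity of the aggregation forces the row-multisets $\multiset{\chi^{(\infty)}_{2,\psi_G}(u,v) : u \in V_G}$ and $\multiset{\chi^{(\infty)}_{2,\psi_H}(u,\sigma(v)) : u \in V_H}$ to agree for every $v$; summing across $v$ then gives equality of the full pair multisets, which is precisely $\psi$-2-WL indistinguishability of $G$ and $H$.

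The main obstacle I anticipate is rigorously carrying out the two-stage construction in the converse while keeping the output an order-1 equivariant tensor. This requires invoking universality or stabilization results for 2-EGNs on bounded domains and verifying that an equivariant row-pooling layer falls within the 2-EGN formalism used in the paper (as in the constructions of BasisNet and SPE). Diagonal-awareness is indispensable in the forward direction, since without it one cannot cleanly encode node features into the input tensor of $g$ in a way that matches the initialization used by $\psi$-2-WL.
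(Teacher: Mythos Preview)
Your proposal is correct and follows essentially the same route as the paper: both directions reduce to the standard Maron et al.\ results bounding 2-EGNs by 2-WL and simulating 2-WL with 2-EGNs on finite graph families, followed by an equivariant row-pooling to produce an order-1 output. The only cosmetic difference is the encoding of node features into the order-2 input tensor---the paper uses $[\psi_G,\row(X_G),\col(X_G)]$, which makes the initial pair-partition coincide \emph{exactly} with the $\psi$-2-WL initialization $(X_G(u),X_G(v),\psi_G(u,v))$, whereas your diagonal placement only matches it after one refinement step (so your phrase ``coincides up to relabeling'' is slightly imprecise); either encoding works once diagonal-awareness is assumed.
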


\begin{restatable}{theorem}{thmmainrpetoape}
\label{thm:main-RPE-to-APE}
     Let $\psi$ be a \emph{diagonally-aware} RPE.
     \par
     For any 2-EGN $g$, if $(G,X_G)$ and $(H,X_H)$ are indistinguishable by all $\psi$-RPE-GTs, then $(G,X_G)$ and $(H,X_H)$ are indistinguishable by $\phi^g$.
     \par
     Moreover, for any finite set of \emph{unfeatured} graphs $\mathcal{G}$, there is a 2-EGN $g$ such that if $G,H\in\mathcal{G}$ are indistinguishable by all $\phi^g$-APE-GTs, then $G$ and $H$ are indistinguishable by all $\psi$-RPE-GTs.
\end{restatable}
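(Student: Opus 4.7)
The plan is to mirror the proof of Theorem \ref{thm:main-APE-to-RPE} by traversing the right half of Figure \ref{fig:equivalence-diagram} in the upward direction, composing the equivalences that connect $\psi$-RPE-GTs to $\phi^g$-APE-GTs through the intermediate RPE-augWL and RPE-2-WL tests.

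For Part (1), I would compose four implications. Starting from the assumption that $(G,X_G)$ and $(H,X_H)$ are indistinguishable by all $\psi$-RPE-GTs, Lemma \ref{lem:RPE_transformers_equal_w_WL} gives $\psi$-WL indistinguishability (this is where the diagonal-awareness hypothesis is used). Theorem \ref{thm:wl-equals-2-wl} then upgrades $\psi$-WL indistinguishability to $\psi$-2-WL indistinguishability. The first part of Lemma \ref{lem:IGN} converts $\psi$-2-WL indistinguishability into $\phi^g$ indistinguishability for any 2-EGN $g$, and the conclusion that $(G,X_G)$ and $(H,X_H)$ are indistinguishable by $\phi^g$ is exactly what is required (equivalently, by Lemma \ref{lem:APE_transformer_equal_APE}, indistinguishable by all $\phi^g$-APE-GTs). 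For Part (2) I would reverse the chain: given a finite set of unfeatured graphs $\mathcal{G}$, the second part of Lemma \ref{lem:IGN} supplies a specific 2-EGN $g$ so that indistinguishability by $\phi^g$ on $\mathcal{G}$ implies $\psi$-2-WL indistinguishability; Lemma \ref{lem:APE_transformer_equal_APE} provides the start of the chain (from $\phi^g$-APE-GT indistinguishability to $\phi^g$ indistinguishability), the standard fact that $\psi$-2-WL refines $\psi$-WL (the diagonal colors $\chi^{(t)}_{2,\psi_G}(v,v)$ inductively encode the $\psi$-WL colors $\chi^{(t)}_{\psi_G}(v)$, exploiting diagonal-awareness) supplies the intermediate step, and Lemma \ref{lem:RPE_transformers_equal_w_WL} closes the chain, yielding indistinguishability by all $\psi$-RPE-GTs.

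The main obstacle is making the second step of Part (1) rigorous under the stated hypothesis of diagonal-awareness alone, since Theorem \ref{thm:wl-equals-2-wl} is proved under the stronger assumption of pseudo-symmetry. If $\psi$ is not already pseudo-symmetric, the natural fix is to pass through the pseudo-symmetric augmentation $S^{\psi}$ of Lemma \ref{lem:ps-augmentation} and verify that this substitution preserves both sides of the statement: any $\psi$-RPE-GT is realized as an $S^{\psi}$-RPE-GT whose $f_1, f_2$ read only the first block of the augmented RPE, and any 2-EGN acting on $\psi$ is the restriction of a 2-EGN acting on $S^{\psi}$, so distinguishing power is unchanged on either end. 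A small bookkeeping item is checking that the $g$ produced in the second part of Lemma \ref{lem:IGN} depends only on $\mathcal{G}$ (not on the particular pair being compared), so that a single $g$ witnesses Part (2) uniformly over pairs in $\mathcal{G}$.
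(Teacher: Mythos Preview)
Your proposal is correct and follows essentially the same chain as the paper's own proof: compose \Cref{lem:RPE_transformers_equal_w_WL}, \Cref{thm:wl-equals-2-wl}, \Cref{lem:IGN}, and \Cref{lem:APE_transformer_equal_APE} in the appropriate directions. You have in fact been more careful than the paper: the paper's proof silently invokes \Cref{thm:wl-equals-2-wl} under only the diagonal-awareness hypothesis, without addressing the pseudo-symmetry assumption that theorem actually requires, whereas you flag this gap and sketch the natural fix via the pseudo-symmetric augmentation $S^{\psi}$ of \Cref{lem:ps-augmentation} (together with the observation that passing to $S^{\psi}$ does not change the class of RPE-GTs or of 2-EGN-induced APEs in any way that affects distinguishing power).
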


\begin{example}[restate = rpetoapecounterexample , name = ]
\label{lem:rpe_to_ape_counterexample}
    There exists an RPE $\psi$ and featured graphs $(G,X_G)$ and $(H,X_H)$ that are distinguishable by $\psi$-2-WL but are indistinguishable by $\phi^{g}$ for any 2-EGN $g$.
\end{example}

\begin{remark}
    Even in the case of unfeatured graphs, there is still a slight asymmetry in the statements of the results for APEs (\Cref{thm:main-APE-to-RPE}) and RPEs (\Cref{thm:main-RPE-to-APE}), as the result for APEs does not have the restriction on ``a finite set'' of graphs. This restriction arises because there is no known universal 2-EGN equal in power to the RPE-2-WL test. This is in part due to the fact that known 2-EGNs would need an unbounded hidden dimension and number of layers to distinguish all pairs of graphs~\citep{maron2019provably}.
\end{remark}

\paragraph*{More expressive RPE to APE maps.}

\Cref{thm:main-RPE-to-APE} shows that RPEs can be transformed into equally-powerful (for unfeatured graphs) APEs using 2-EGNs. However, it is worth noting that RPEs can be turned into APEs \textit{that are even more powerful} than the original RPE using $k$-EGNs for $k> 2$~\citep{maron2019provably}. In the extreme case of $k\in\Omega(n)$, $k$-EGNs can distinguish all pairs of non-isomorphic graphs with $n$ vertices. Such techniques for constructing APEs from RPEs have been previously explored in the literature~\citep{huang2023stability}; for example, Expressive BasisNet~\citep{lim2022sign}. However, the downside to these techniques is that their computational cost increases with their expressive power; $k$-EGNs take $\Theta(n^{k})$ time as they operate on tensors of size $\Theta(n^{k})$. In practice, techniques that map RPEs to APEs typically only use $2$-EGNs because of the computational cost of higher $k$-EGNs.

\paragraph{Restrictions and Implications.}
Our main findings (\Cref{thm:main-APE-to-RPE} and \Cref{thm:main-RPE-to-APE}) show that APE-GTs and RPE-GTs have comparable distinguishing capabilities for unfeatured graphs. However, in practice, learning the DeepSet (to serve as $f$) in \Cref{thm:main-APE-to-RPE} or the 2-EGN (to serve as $g$) in \Cref{thm:main-RPE-to-APE} may not always yield the desired maps. Hence, it remains an interesting open question to figure out whether it is theoretically easier to learn the DeepSet or the 2-EGN. For example, one would benefit in designing architectures from understanding which method requires a lower dimension for hidden layers.
\par
Finally, we would like to point out that converting RPEs to APEs to fit a specific graph transformer architecture is not recommended, as shown by \Cref{lem:rpe_to_ape_counterexample}. See also discussion in \Cref{rem:rspe_to_spe} and \Cref{sec:ZINC} for some empirical validation of this point.

\section{Comparing Graph Transformers with Different Positional Encodings}\label{sec:comparing_different_pes}

\begin{figure*}[htbp!]
    \centering
\begin{tikzcd}
	{\text{Adjacency \& Laplacian}} & {\text{WL \& MPNN}} & {\text{Combinatorially-Aware RPEs}}  && {\text{SPD}} \\ \\
    {\text{SPE}} & {\text{Spectral Kernels}} & {\text{Powers of $L$}} & L^{\dagger} & {\text{RD}} \\
    \arrow["{\text{\Cref{prop:common matrices}}}",curve={height=-15pt}, tail reversed, from=1-1, to=1-2]
    \arrow["{\text{\Cref{thm:combinatorially-aware-rpe-stronger-than-WL}}}", curve={height=-15pt}, from=1-2, to=1-3]
    \arrow["{\text{SPD is combinatorially-aware}}", curve={height=-15pt}, from=1-3, to=1-5, dotted]
    \arrow["{\text{\Cref{thm:resistance_wl_equals_pinv_wl}}}", curve={height=-15pt}, tail reversed, from=3-4, to=3-5]
    \arrow["{\text{\Cref{lem:SPE spectral kernel}}}"', curve={height=15pt}, from=3-2, to=3-1]
    \arrow["{?}"{description}, dotted, from=1-5, to=3-5]
    \arrow["{\text{\Cref{thm:powers_of_laplacian_equal_spectral_kernel}}}", curve={height=15pt}, from=3-2, to=3-3]
    \arrow["{\text{$L^{\dagger}$ is a spectral kernel}}"', curve={height=25pt}, from=3-4, to=3-2]
    \arrow["\text{\Cref{thm:spe_stronger_than_resistance}}"', curve={height=-45pt}, from=3-5, to=3-1, two heads]
\end{tikzcd}
\caption{\textbf{Hierarchy of PEs.} The arrows indicate that the corresponding positional encoding is less strong than the one it points to in terms of distinguishing power. The two-head arrow indicates that the non-decreasing property only holds for unfeatured graphs. The dotted arrow between SPD and RD refers to some partial evidence (cf. \Cref{thm:cut edge}) that RD is stronger than SPD in some respects; however, it is an open question how the two compare as RPEs.}\label{fig:positional_encoding_hierarchy}
\end{figure*}
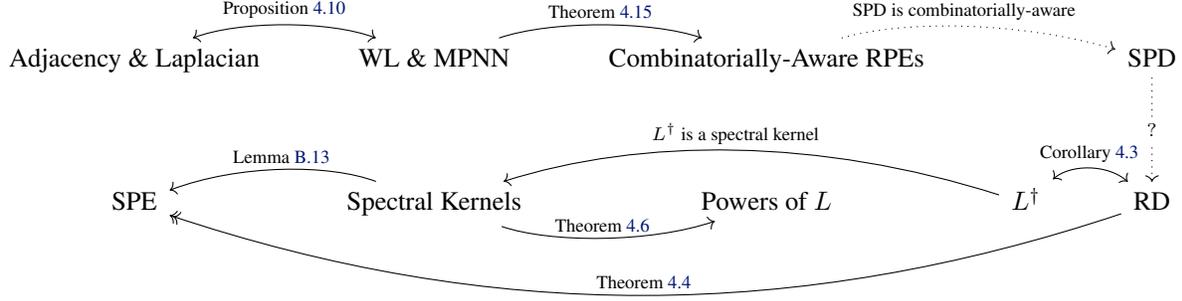

In this section, we compare the distinguishing power of graph transformers with different (absolute or relative) positional encodings. The main results are summarized in \Cref{fig:positional_encoding_hierarchy}.
Intuitively, by \Cref{thm:main-APE-to-RPE}, APE-GTs can be turned into RPE-GTs preserving the distinguishing power. This enables us to compare APE-GTs with RPE-GTs.
When comparing two RPE-GTs, by \Cref{lem:RPE_transformers_equal_w_WL} we simply need to compare the corresponding RPEs via the RPE-augWL test.
For two RPEs $\psi_1$ and $\psi_2$, we say that $\psi_1$-WL is \textit{\textbf{at least as strong as}} $\psi_2$-WL if for any two featured graphs $(G,X_G)$ and $(H,X_H)$ such that $\chi_{\psi_1}(G)=\chi_{\psi_1}(H)$, then we have $\chi_{\psi_2}(G)=\chi_{\psi_2}(H)$. Moreover, we say that $\psi_1$-WL and $\psi_2$-WL are \textit{\textbf{equally strong}} if for any two featured graphs $(G,X_G)$ and $(H,X_H)$, we have that $\chi_{\psi_1}(G)=\chi_{\psi_1}(H)$ iff $\chi_{\psi_2}(G)=\chi_{\psi_2}(H)$.

Some highlights of this section include a comparison of SPE-APE-GTs with RD-RPE-GTs (\Cref{thm:spe_stronger_than_resistance}), identification of a class of RPEs whose corresponding RPE-GTs are at least as strong as MPNNs (\Cref{prop:common matrices,cor:combinatorially-aware-rpe-stronger-than-WL}), and suggesting new RPEs for directed graphs (\Cref{prop:magnetic_laplacian}).

\subsection{Resistance Distance, Spectral Kernels, and SPE}

Many popular APEs (such as BasisNet~\citep{lim2022sign} and SPE~\citep{huang2023stability}) and RPEs (such as RD~\citep{zhang2023rethinking}) are defined using the spectrum of the Laplacian $L$. While BasisNet and SPE have been shown to be equivalent~\citep[Proposition 3.2]{huang2023stability}, it is unclear how either of these compares to RD in terms of transformers. More generally, it is unknown how much information RD-WL carries about the spectrum of $L$.

In this section, we establish that SPE-APE-GTs are stronger than resistance distance RPE-GTs (\Cref{thm:spe_stronger_than_resistance}). As one step to prove this, we show that two graphs are indistinguishable by RD-WL iff they are indistinguishable by $L^\dagger$-WL where $L^\dagger$ is the pseudoinverse of $L$ (\Cref{thm:resistance_wl_equals_pinv_wl}). This result gives a partial answer to the question of how much information RD-WL carries about the Laplacian spectrum. This is weaker than saying that RD-WL indistinguishable graphs have the same $L^\dagger$ up to permutation, but it does show that $L^\dagger$ of these graphs are indistinguishable when combined with RPE-augWL (and thus RPE-GTs).

In fact, our result about RD-WL is a corollary to a stronger result about the equivalence of RPE-augWL tests with spectral distance and spectral kernel as RPEs.


\begin{definition}
    Let $L=\sum_{i=2}^{n}\lambda_{i}z_{i}z_{i}^{T}$ be the spectral decomposition, where $\{\lambda_{2},\ldots,\lambda_{n}\}$ are the non-zero eigenvalues and $\{z_2,\ldots,z_n\}$ is an orthonormal basis of eigenvectors. Let $f:\R^{+}\to\R^{+}$ be a function. The \textit{\textbf{spectral kernel}}~\citep{hammond2011wavelets} corresponding to $f$ is the matrix
        $K^f_{G} = \sum_{i=2}^{n} f(\lambda_i)z_iz_i^T.$
    The \textit{\textbf{spectral distance}} corresponding to $f$ is  defined
    $$
        d^f_{G}(u,v) := \sqrt{K_G^f(u,u)+K_G^f(v,v)-2K_G^f(u,v)}.
    $$
\end{definition}

\begin{example}\label{ex:heat kernel}
    The \textit{\textbf{diffusion distance}} (at time $t$)~\cite{coifman2006diffusion} is the spectral distance $D^{f}$ for the function $f(x) = e^{-tx}$. The \textit{\textbf{heat kernel}} is the corresponding spectral kernel denoted by $H^{(t)}=\sum_{i=2}^{n} e^{-\lambda_it}z_iz_i^T$, which has been proposed as an RPE by~\citet{choromanski22blocktoeplitz}.
\end{example}

\textbf{Note:} Although eigenvectors are not unique due to the choice of bases for eigenspaces, the spectral kernel and spectral distance are unique up to graph isomorphism and so are well-defined RPEs.

Let $D^{f}$ denote the RPE assigning the matrix of spectral distance $d_G^f$ to a graph $G$ and let $K^{f}$ denote the RPE assigning the matrix of spectral kernel $K_G^f$.

\begin{theorem}[restate = spectraldistanceequalskernel , name = ]
\label{thm:spectral distance = kernel}
    Let $f:\R^{+}\to\R^{+}$. $D^{f}$-WL is at least as strong as $K^{f}$-WL.  $K^{f}$-WL with diagonal augmentation is at least as strong as $D^{f}$-WL.
\end{theorem}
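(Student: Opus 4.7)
The plan rests on the defining algebraic identity
\[
  d^f(v,u)^2 \;=\; K^f(v,v) + K^f(u,u) - 2\,K^f(v,u),
\]
which lets one convert between $K^f$ and $D^f$ pointwise, provided the diagonal entries $K^f(v,v)$ are available. For both directions my strategy is an induction on the RPE-augWL iteration $t$, constructing universal functions (depending only on $f$, not on the graph) that rewrite one color as a function of the other with a one-step shift. The asymmetry in the statement is intrinsic: $d^f(v,v)=0$ automatically singles out the self-term inside the RPE-augWL neighborhood multiset, whereas $K^f$ alone does not, which is precisely why the converse direction requires the diagonal augmentation $D^{K^f}=(I,K^f)$.

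For the first direction, the crux is to recover $K^f(v,v)$ from $D^f$-data. Because every eigenvector $z_i$ with $\lambda_i\neq 0$ is orthogonal to $\mathbf{1}$ (the nullspace of $L$ is spanned by connected-component indicators), one has $\sum_u K^f(v,u)=0$, so
\[
  s_v \;:=\; \sum_u d^f(v,u)^2 \;=\; n\,K^f(v,v) + T,
\]
where $n=|V_G|$ and $T=\operatorname{tr}(K^f_G)$. Summing over $v$ gives $T=\tfrac{1}{2n}\sum_{v,u}d^f(v,u)^2$. Hence $n$, $T$, and each $K^f(v,v)$ are all computable from $\chi^{(2)}_{D^f_G}(v)$ alone: $n$ from the size of the outer multiset, $T$ by extracting each inner $\chi^{(1)}_{D^f_G}(u)$, computing the corresponding $s_u$, and summing, and then $K^f(v,v)=(s_v-T)/n$. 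I would then induct: assuming $\chi^{(t)}_{K^f_G}(v)=F_t(\chi^{(t+1)}_{D^f_G}(v))$, expand $\chi^{(t+1)}_{K^f_G}(v)$ as $\bigl(\chi^{(t)}_{K^f_G}(v),\, \multiset{(\chi^{(t)}_{K^f_G}(u), K^f(v,u)) : u\in V_G}\bigr)$, invoke the IH on each neighbor through the $\chi^{(t+1)}_{D^f_G}(u)$ available inside $\chi^{(t+2)}_{D^f_G}(v)$, and rebuild each $K^f(v,u)$ from $(K^f(v,v),K^f(u,u),d^f(v,u))$ via the identity. Applying $F_t$ pointwise then carries multiset equality of $D^f$-colors to multiset equality of $K^f$-colors.

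For the second direction, the augmentation exposes $K^f(v,v)$ already at iteration $1$: inside $\chi^{(1)}_{D^{K^f}_G}(v)$ one identifies the unique multiset entry whose identity marker equals $1$ and reads off its $K^f$-coordinate. A mirror induction then produces universal $G_t$ with $\chi^{(t)}_{D^f_G}(v)=G_t(\chi^{(t+1)}_{D^{K^f}_G}(v))$, using the identity in the reverse direction to reconstruct $d^f(v,u)=\sqrt{K^f(v,v)+K^f(u,u)-2K^f(v,u)}$ from the three $K^f$-values, all of which are extractable from $\chi^{(t+1)}_{D^{K^f}_G}$-data thanks to the diagonal augmentation. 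The main obstacle I anticipate is in the first direction: $n$ and $T$ are \emph{global} graph invariants, yet the WL framework only manipulates per-vertex colors, so one must check that two iterations of $D^f$-WL suffice to enclose the full multiset $\multiset{\chi^{(1)}_{D^f_G}(u) : u\in V_G}$ inside a single vertex's color---enough to reconstruct $T$ locally and thereby each $K^f(v,v)$. Once this shift is in place, the remaining algebra in both directions is routine.
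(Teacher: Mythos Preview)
Your proposal is correct and follows essentially the same route as the paper: both directions hinge on the identity $d^f(v,u)^2=K^f(v,v)+K^f(u,u)-2K^f(v,u)$, the row-sum-zero property of $K^f$ (equivalently, centeredness of the point cloud $\{\sqrt{f(L)}\,1_v\}$) to recover the diagonals from two rounds of $D^f$-WL, and a shifted induction on $t$, with diagonal augmentation supplying the self-term in the reverse direction. The only cosmetic differences are that the paper abstracts the argument to Gram versus distance matrices of an arbitrary centered point cloud before specializing to the spectral case, and it uses a $+2$ shift in the forward induction where your tighter $+1$ shift also works.
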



\Cref{thm:spectral distance = kernel} is the result of the fact that the spectral distance and kernel are the distance and Gram matrix, respectively, of a point cloud in $\R^{d}$. A variant of the WL-algorithm for general distance matrices of point clouds was studied by \citet{rose2023iterations} as a heuristic for determining if two point clouds are isometric, so our proof may have implications for these algorithms as well. A proof of this theorem can be found in \Cref{sec:gram_wl_and_distance_wl}.

The \textit{\textbf{resistance distance (RD)}} is the \textit{squared} spectral distance of the function $f(x)=x^{-1}$, and the \textit{\textbf{pseudoinverse}} of the Laplacian $L^{\dagger}$ is the corresponding spectral kernel, so~\Cref{thm:spectral distance = kernel} applies to RD and $L^{\dagger}$. Interestingly, RD has additional properties that allow us to drop the assumption of diagonal augmentation. A proof of the following theorem can be found in \Cref{sec:resistance_wl_and_laplacian_wl}.

\begin{corollary}
\label{thm:resistance_wl_equals_pinv_wl}
    RD-WL and $L^{\dagger}$-WL are equally strong.
\end{corollary}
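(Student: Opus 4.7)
The plan is to instantiate \Cref{thm:spectral distance = kernel} with $f(x) = x^{-1}$, for which $K^f = L^\dagger$ and $(d^f)^2 = \mathrm{RD}$. Because squaring is a bijection on $[0,\infty)$, the $d^f$-WL and RD-WL tests are equally strong, so \Cref{thm:spectral distance = kernel} immediately yields (i) RD-WL is at least as strong as $L^\dagger$-WL, and (ii) $L^\dagger$-WL with diagonal augmentation is at least as strong as RD-WL. The remaining task is to show that the diagonal augmentation is superfluous, i.e.~that $L^\dagger$-WL alone is at least as strong as RD-WL.

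The main technical step is the following lemma: at the stable $L^\dagger$-WL coloring $\chi^*$, the diagonal entry $L^\dagger(v,v)$ is a function $M$ of the color $\chi^*(v)$, and $M$ is the same across different graphs. The proof exploits positive semi-definiteness. By the equitable-partition property of stable WL, for any color class $C$ the rows of the principal submatrix $L^\dagger|_{C \times C}$ all share the same multiset; and this principal submatrix is PSD. Writing $L^\dagger|_{C \times C} = BB^T$ with rows $b_v$, the maximum entry of the shared row multiset equals $\|b_{v^*}\|^2$ where $v^*$ maximizes the norm. Cauchy--Schwarz, in the chain $\langle b_v, b_u\rangle \le \|b_v\|\|b_u\| \le \|b_{v^*}\|^2$, forces every $v \in C$ to have $\|b_v\| = \|b_{v^*}\|$ via the equality case, so $L^\dagger(v,v) = \|b_v\|^2$ is constant on $C$. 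Identifying this common value as the maximum of the shared row multiset also shows it depends only on the color (not on the graph), because matched stable colors across $G$ and $H$ induce identical row multisets and hence identical maxima.

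Granting the lemma, the finish is straightforward. Using
\[
    \mathrm{RD}(v,u) \;=\; L^\dagger(v,v) + L^\dagger(u,u) - 2\,L^\dagger(v,u) \;=\; M(\chi^*(v)) + M(\chi^*(u)) - 2\,L^\dagger(v,u),
\]
the multiset $\{(\chi^*(u), \mathrm{RD}(v,u)) : u \in V\}$ is a deterministic function of $\{(\chi^*(u), L^\dagger(v,u)) : u \in V\}$, which by stability of $L^\dagger$-WL depends only on $\chi^*(v)$. Hence $\chi^*$ is a stable RD-WL coloring and therefore refines the coarsest one; it follows that any two graphs indistinguishable by $L^\dagger$-WL are indistinguishable by RD-WL, giving the remaining direction.

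The hardest step is the PSD lemma, and specifically the Cauchy--Schwarz equality argument: one must verify that whenever the shared-multiset maximum $\|b_{v^*}\|^2$ is attained in row $v$ at some index $u^*$, the sandwich $\|b_{v^*}\|^2 = \langle b_v, b_{u^*}\rangle \le \|b_v\|\|b_{u^*}\| \le \|b_{v^*}\|^2$ is tight throughout, forcing $\|b_v\| = \|b_{u^*}\| = \|b_{v^*}\|$. A more subtle issue is cross-graph consistency of $M$: matched stable colors in different graphs must produce identical row multisets over their respective color classes, and therefore identical maxima, yielding a single function $M$ on the color space rather than graph-dependent ones.
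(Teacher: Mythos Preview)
Your proposal is correct but follows a genuinely different route from the paper.

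The paper proves the sharper \Cref{lem:pseudo inverse diagonal}: already after \emph{one} iteration of $L^\dagger$-WL, two nodes with the same color have the same diagonal entry. The key ingredient is \Cref{lem:diagonal_pseudoinverse_laplacian_ineq}, which shows $L^\dagger(v,v) > L^\dagger(u,v)$ for all $u\neq v$ in a connected graph with more than one vertex; this strict inequality is obtained from the triangle inequality for resistance distance (a property specific to the pair $\mathrm{RD}/L^\dagger$, not shared by arbitrary squared spectral distances). Hence the diagonal entry is simply the strict maximum of its row, and is recoverable from the first $L^\dagger$-WL multiset. The rest of the proof then follows \Cref{prop:gram_wl_stronger_than_squared_distance_wl} verbatim, replacing the use of diagonal augmentation by this one-iteration lemma.

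Your argument instead works at the \emph{stable} coloring and uses only positive semidefiniteness: restricting $L^\dagger$ to a color class $C$, the equitable-partition property forces all rows of $L^\dagger|_{C\times C}$ to share a multiset, and your Cauchy--Schwarz sandwich on a Gram factorization then pins down all diagonal entries as equal to the maximum of that multiset. This is correct and in fact more general---the same argument shows that diagonal augmentation is unnecessary for \emph{any} PSD RPE, which the paper does not claim. The tradeoffs: the paper's proof is more elementary and quantitative (one iteration instead of stability), while yours avoids the RD-specific triangle inequality and yields a stronger statement. Your acknowledged ``subtle issue'' of cross-graph consistency of $M$ does need the extra sentence you describe (matched stable colors between $L^\dagger$-WL indistinguishable graphs give matched row multisets over the color class, hence matched maxima), and the finish via ``$\chi^*$ is RD-stable, hence refines the coarsest RD-stable coloring'' is the standard inductive refinement argument; both are routine to fill in.
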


Finally, as an immediate consequence of~\Cref{thm:resistance_wl_equals_pinv_wl} (combined with \Cref{thm:main-RPE-to-APE}), we compare RD-RPE-GT, with existing APE-GPTs using the so-called \emph{Stable and Expressive Positional Encodings (SPE)}~\cite{huang2023stability}. A proof of this theorem can be found in~\Cref{sec:spe_and_resistance}.


\begin{restatable}{theorem}{spestrongerthanresistance}
\label{thm:spe_stronger_than_resistance}
    For unfeatured graphs, diagonally augmented SPE-APE-GT is at least as strong as RD-RPE-GT.
\end{restatable}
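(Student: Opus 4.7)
The plan is to chain together the hierarchy results summarized in \Cref{fig:positional_encoding_hierarchy}, reducing both sides of the inequality to RPE-augWL tests and then bridging them via the spectral-kernel viewpoint.

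First, I would reduce the RD side to a WL test. Since the resistance distance satisfies $d(u,v)=0$ iff $u=v$ on any connected graph, RD is diagonally-aware, so \Cref{lem:RPE_transformers_equal_w_WL} identifies RD-RPE-GT with RD-WL in distinguishing power. Applying \Cref{thm:resistance_wl_equals_pinv_wl}, RD-WL is in turn equally strong as $L^{\dagger}$-WL, where $L^{\dagger}$ is the Moore--Penrose pseudoinverse of the Laplacian. Thus it suffices to show that diagonally augmented SPE-APE-GT is at least as strong as $L^{\dagger}$-WL.

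Second, I would bridge SPE to $L^{\dagger}$-WL through the spectral kernel. Note that $L^{\dagger}=\sum_{i\ge 2}\lambda_i^{-1}z_i z_i^T$ is precisely the spectral kernel $K^{f}$ for $f(x)=1/x$, and by \Cref{prop:diag-augmentation} the diagonal augmentation $D^{L^{\dagger}}$ is diagonally-aware. I would then invoke \Cref{lem:SPE spectral kernel} on the (diagonally-augmented) spectral kernel to conclude that diagonally augmented SPE-APE-GT dominates $D^{L^{\dagger}}$-RPE-GT. A second application of \Cref{lem:RPE_transformers_equal_w_WL} identifies $D^{L^{\dagger}}$-RPE-GT with $D^{L^{\dagger}}$-WL. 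Finally, because stacking an identity channel onto $L^{\dagger}$ only enriches the multiset aggregation in the RPE-augWL update (\Cref{def:w-WL}), $D^{L^{\dagger}}$-WL is at least as strong as $L^{\dagger}$-WL.

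Chaining these steps yields
\[
\text{diag.\ aug.\ SPE-APE-GT}\;\geq\; D^{L^{\dagger}}\text{-RPE-GT}\;\equiv\; D^{L^{\dagger}}\text{-WL}\;\geq\; L^{\dagger}\text{-WL}\;\equiv\; \text{RD-WL}\;\equiv\; \text{RD-RPE-GT},
\]
which is precisely the claim. The main obstacle is the interface between SPE and the diagonal augmentation: I must make sure that feeding the stacked input $(I, L^{\dagger})$ into the 2-EGN backbone of SPE is within the scope of \Cref{lem:SPE spectral kernel}, i.e., that the 2-EGN can absorb the extra identity channel alongside a spectral kernel. If \Cref{lem:SPE spectral kernel} is proved with enough flexibility for such extended inputs, this is immediate; otherwise a brief extension is needed. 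The restriction to \emph{unfeatured} graphs is inherited from \Cref{thm:main-RPE-to-APE} (used implicitly in \Cref{lem:SPE spectral kernel}) and is essential in view of \Cref{lem:rpe_to_ape_counterexample}, which shows that converting an RPE to an APE via a 2-EGN can strictly lose distinguishing power once node features are present.
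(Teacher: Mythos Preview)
Your proposal is correct and follows essentially the same route as the paper: reduce RD-RPE-GT to $L^{\dagger}$-WL via \Cref{lem:RPE_transformers_equal_w_WL} and \Cref{thm:resistance_wl_equals_pinv_wl}, observe that $L^{\dagger}$ is a spectral kernel so RSPE can realize it, and then convert the (diagonally augmented) RSPE-RPE-GT to SPE-APE-GT using the 2-EGN guaranteed for unfeatured graphs. One minor mislabeling: \Cref{lem:SPE spectral kernel} only asserts that RSPE can compute any spectral kernel; the step that actually passes from $D^{L^{\dagger}}$-RPE-GT to SPE-APE-GT is \Cref{lem:rspe_to_spe} (equivalently \Cref{thm:main-RPE-to-APE}), not something baked into \Cref{lem:SPE spectral kernel}, so \Cref{thm:main-RPE-to-APE} is used explicitly rather than ``implicitly in \Cref{lem:SPE spectral kernel}.''
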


\begin{remark}\label{rem:rspe_to_spe}
    In Appendices \ref{sec:CSL} and \ref{sec:experiments_brec}, we empirically validate this theorem by showing that SPE-APE-GTs perform similarly to RD-RPE-GTs on some unfeatured graph classification/isomorphism tasks. However, when node features are involved, the theorem may not hold since the 2-EGN involved in the construction of SPE-APE-GT \textit{may} have weaker distinguishing power than the RD-RPE-GT (cf. \Cref{lem:rpe_to_ape_counterexample}).
    In \Cref{sec:ZINC}, we empirically observe that for graph regression tasks when node features are involved, using RD-RPE-GT can indeed give us an edge in performance over SPE-APE-GT.
\end{remark}

\subsection{Resistance Distance and Cut Edges.}

The first proposed RPE for graph transformers was the \emph{\textbf{shortest-path distance (SPD)}}~\citep{ying2021transformers}. Later, \citet{zhang2023rethinking} proposed to use the stack of RD and SPD as an RPE.
It is natural to wonder how RD-WL compares with SPD-WL. While this remains an open question, we prove that RD-WL is at least as strong as SPD-WL in at least one regard.
\par
\citet{zhang2023rethinking} showed that SPD-WL could distinguish cut edges but not cut vertices in unfeatured graphs. Furthermore, the authors showed that RD-WL could distinguish cut vertices. However, it was not known whether or not RD-WL could distinguish cut edges in a graph. We answer this open question in the affirmative: RD-WL can distinguish cut edges in a graph. See \Cref{thm:cut edge} for a precise theorem statement and proof.
\par
However, our experiments on the BREC dataset~\cite{wang2024brec} suggest (but do not prove) that RD-WL may not be strictly stronger than SPD-WL, as there are pairs of graphs in this dataset that an SPD-RPE-GT was able to learn to distinguish, while we were unsuccessful in training a RD-RPE-GT to distinguish these graphs. See~\Cref{sec:experiments_brec} for details.

\subsection{Powers of Matrices and Spectral Kernels}
\label{sec:powers_of_matrices}

Another common type of RPE is the stack of powers of some matrix associated with the graph; for example, GRIT~\citep{ma2023inductive} uses a stack of powers of the random-walk adjacency matrix of a graph called the \textit{\textbf{relative random-walk positional encoding (RRWP)}}. In this section, we show that using stacks of various matrices as an RPE is at least as strong as any spectral kernel RPEs for any $f:\R^{+}\to\R^{+}$. Proofs for this section can be found in~\Cref{sec:proof-powers-of-laplacian-equal-spectral-kernel}.
\par
We first prove that using sufficiently many powers of the Laplacian is at least as strong as any spectral kernel.

\begin{restatable}{theorem}{powersoflaplacianspectralkernel}
\label{thm:powers_of_laplacian_equal_spectral_kernel}
$(I,L,\ldots,L^{2n-1})$-WL is at least as strong as $K^{f}$-WL on graphs with at most $n$ nodes.
\end{restatable}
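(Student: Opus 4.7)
My plan is to reduce to the $2$-WL versions of both tests and exploit the fact that $K^f$ equals $p(L)$ for a polynomial $p$ whose coefficients depend only on the Laplacian spectrum. By \Cref{thm:wl-equals-2-wl}, since both $(I,L,\ldots,L^{2n-1})$ and $K^f$ are symmetric (and therefore pseudo-symmetric with $f=\mathrm{Id}$), it suffices to prove that $(I,L,\ldots,L^{2n-1})$-$2$-WL indistinguishability implies $K^f$-$2$-WL indistinguishability. The key point is that once we know two graphs share a common Laplacian spectrum, $K^f(u,v)$ becomes a fixed linear combination of $I(u,v), L(u,v),\ldots,L^{n-1}(u,v)$ across both graphs, from which WL refinement follows by a routine induction.

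The first step is to recover the spectrum from the augmented RPE. The $I$-coordinate identifies diagonal pairs $u=v$, so from the multiset $\multiset{(I(u,v),L(u,v),\ldots,L^{2n-1}(u,v)) : u,v\in V}$ one can extract $\mathrm{tr}(L^k)=\sum_i\lambda_i^k$ for each $k=1,\ldots,n$ by summing the $L^k$-coordinates over tuples whose $I$-coordinate equals $1$. Since a multiset of at most $n$ real numbers is determined by its first $n$ power sums via Newton's identities, two $(I,L,\ldots,L^{2n-1})$-$2$-WL indistinguishable graphs $G$ and $H$ must share the same multiset of Laplacian eigenvalues.

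Given this shared spectrum, Lagrange interpolation at the distinct non-zero eigenvalues, with the additional constraint $p(0)=0$, produces a polynomial $p$ of degree at most $n-1$ such that $K_G^f=p(L_G)$ and $K_H^f=p(L_H)$ with the \emph{same} coefficients $c_0,\ldots,c_{n-1}$. Writing $K^f(u,v)=\sum_{k=0}^{n-1} c_k L^k(u,v)$, we see that $K^f(u,v)$ is a fixed deterministic function of the tuple $(I(u,v), L(u,v),\ldots, L^{n-1}(u,v))$---the same function for both $G$ and $H$. A straightforward induction on the number of WL iterations, using this graph-independent functional relation, then shows that the $K^f$-$2$-WL coloring is a coarsening of the $(I,L,\ldots,L^{2n-1})$-$2$-WL coloring at every step, completing the reduction.

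\textbf{Main obstacle.} The subtle point is that the interpolating polynomial $p$ is not graph-independent a priori: its coefficients depend on the spectrum of $L$, a global invariant. Consequently, $K^f$ is not a fixed polynomial in $L$ valid across \emph{all} graphs, and a naive ``functional refinement of RPEs implies refinement of WL'' argument does not directly apply. The spectrum-recovery step of the second paragraph is precisely what bridges this gap: by confining the comparison to the class of graphs with a common spectrum, $p$ becomes a shared function of the RPE tuple, and the power bound $2n-1$ is more than enough slack both to execute Newton's identities (needing powers up to $L^n$) and to express $p(L)$ as a linear combination of $I,L,\ldots,L^{n-1}$.
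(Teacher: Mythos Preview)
Your proof is correct, but it follows a genuinely different route from the paper's. The paper avoids the cospectrality step entirely: given any pair $G,H$ with at most $n$ nodes each, it interpolates $f$ (extended by $f(0)=0$) at the \emph{union} of their Laplacian spectra---at most $2n$ points in all---obtaining a single polynomial $p$ of degree at most $2n-1$ for which $p(L_G)=K^f_G$ and $p(L_H)=K^f_H$ simultaneously; then \Cref{prop:concatenate rpe} (an elementwise function of an RPE cannot increase distinguishing power) finishes the argument in one line. This is precisely the origin of the exponent $2n-1$ in the statement: it is the degree required to interpolate across \emph{both} spectra without first knowing that they coincide. Your approach instead first extracts the traces $\mathrm{tr}(L^k)$ from the diagonal (via the $I$-coordinate), invokes Newton's identities to prove cospectrality, and only then interpolates at the \emph{common} spectrum with a polynomial of degree at most $n-1$. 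This buys you a bonus---cospectrality of indistinguishable pairs, and implicitly a sharper power bound (roughly $(I,L,\dots,L^{n})$ already suffices)---at the cost of a longer argument. The reduction to $2$-WL is also not needed: once cospectrality gives you the fixed linear map $p_l$ with $K^f=p_l\circ(I,L,\dots,L^{2n-1})$ on both graphs, \Cref{prop:concatenate rpe} applies directly at the $\psi$-WL level.
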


\begin{remark}
    While it may seem excessive to consider $2n$ powers of $L$ to match the power of a single spectral kernel, this is not computationally more expensive than computing the spectral kernel in the first place, which requires adding $n$ matrices $f(\lambda_i)z_iz_i^T$ of size $n^{2}$, which takes $O(n^{3})$ time.
\end{remark}

A variant of RRWP that uses stacks powers of the symmetrically-normalized adjacency matrix $\widehat{A} = D^{-1/2}AD^{-1/2}$ is at least as strong as spectral kernels using the symmetrically-normalized Laplacian $\widehat{K}^{f}$ (see \Cref{sec:proof-powers-of-laplacian-equal-spectral-kernel} for definition).

\begin{restatable}{theorem}{stackofadjacencyspectralkernel}
     $(I,\widehat{A},\ldots,\widehat{A}^{2n-1})$-WL is at least as strong as $\widehat{K}^{f}$-WL on graphs with at most $n$ nodes.
\end{restatable}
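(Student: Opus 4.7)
The plan is to adapt the argument of \Cref{thm:powers_of_laplacian_equal_spectral_kernel} to the symmetrically-normalized setting by exploiting the identity $\widehat{L} = I - \widehat{A}$. Since $\widehat{L}$ shares its eigenvectors $\{z_i\}$ with $\widehat{A}$, with eigenvalues $\mu_i(\widehat{L}) = 1 - \lambda_i(\widehat{A})$, the spectral kernel can be rewritten as
\[
\widehat{K}^f_G \;=\; \sum_{i} f(\mu_i)\,z_iz_i^T \;=\; \sum_{i} f(1-\lambda_i(\widehat{A}))\,z_iz_i^T.
\]
Because $\widehat{A}$ is a symmetric $n \times n$ matrix, it has at most $n$ distinct eigenvalues, and by Lagrange interpolation I can find a polynomial $p$ of degree at most $n-1 \leq 2n-1$ with $p(\lambda) = f(1-\lambda)$ on the spectrum of $\widehat{A}$. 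Then $\widehat{K}^f_G = p(\widehat{A}_G) = \sum_{k=0}^{n-1} c_k\,\widehat{A}_G^k$ for coefficients $c_k$ that depend only on the spectrum of $\widehat{A}_G$.

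Next, I would show that the $(I,\widehat{A},\ldots,\widehat{A}^{2n-1})$-WL test has access to the spectrum of $\widehat{A}_G$. After one iteration, each node $v$'s color encodes the multiset $\{(I_{vu},\widehat{A}_{vu},\ldots,\widehat{A}^{2n-1}_{vu}) : u \in V_G\}$; the entry with $I_{vu}=1$ singles out the diagonal contribution $\widehat{A}^k_{vv}$, so the multiset of colors over all $v$ determines the multiset of diagonal entries of each $\widehat{A}^k$ and, in particular, the trace $\mathrm{tr}(\widehat{A}^k) = \sum_i \lambda_i(\widehat{A})^k$ for $k=0,1,\ldots,2n-1$. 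The power sums for $k=0,\ldots,n-1$ (via Newton's identities) uniquely determine the multiset of eigenvalues of $\widehat{A}$. Hence two graphs $G,H$ with on at most $n$ vertices that are $(I,\widehat{A},\ldots,\widehat{A}^{2n-1})$-WL indistinguishable share the same spectrum of $\widehat{A}$ and therefore admit a \emph{common} interpolating polynomial $p$ satisfying $\widehat{K}^f_G = p(\widehat{A}_G)$ and $\widehat{K}^f_H = p(\widehat{A}_H)$.

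Given this common $p$, the entry $\widehat{K}^f(u,v)$ becomes the same universal linear combination $\sum_{k=0}^{n-1} c_k \widehat{A}^k_{uv}$ of the $(I,\widehat{A},\ldots,\widehat{A}^{2n-1})$-coordinates for both $G$ and $H$. A straightforward induction on the WL iteration then shows that the $\widehat{K}^f$-WL coloring is refined by the $(I,\widehat{A},\ldots,\widehat{A}^{2n-1})$-WL coloring on the pair $(G,H)$, yielding the desired comparison. The main obstacle is precisely that the polynomial $p$ is \emph{a priori} graph-dependent, so one cannot simply appeal to the elementary lemma that a pointwise universal function of an RPE cannot increase WL distinguishing power; the resolution is the spectrum-recovery argument above, which pins down the polynomial once the two graphs are already known to be $(I,\widehat{A},\ldots,\widehat{A}^{2n-1})$-WL indistinguishable. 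Alternatively, one could directly invoke \Cref{thm:powers_of_laplacian_equal_spectral_kernel} applied to $\widehat{L}$ in place of $L$ and then observe that powers of $\widehat{A}=I-\widehat{L}$ and powers of $\widehat{L}$ span the same space of matrices (each is a polynomial in the other of the same degree), so $(I,\widehat{A},\ldots,\widehat{A}^{2n-1})$-WL and $(I,\widehat{L},\ldots,\widehat{L}^{2n-1})$-WL are equally strong.
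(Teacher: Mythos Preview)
Your proposal is correct, and the brief alternative you sketch at the end is essentially the paper's proof: the paper observes that $\mu_i(\widehat{A}) = 1 - \nu_i(\widehat{L})$ and then reruns the argument of \Cref{thm:powers_of_laplacian_equal_spectral_kernel} verbatim with the function $\widehat{f}(x) = f(1-x)$ in place of $f$, interpolating on the union of the eigenvalues of $\widehat{A}_G$ and $\widehat{A}_H$ (at most $2n$ values) by a single polynomial of degree at most $2n-1$.

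Your main approach---recovering the spectrum from the traces $\mathrm{tr}(\widehat{A}^k)$ via Newton's identities so that a \emph{common} interpolating polynomial of degree $\leq n-1$ applies to both graphs---is a genuinely different route. It is valid, but more elaborate than necessary: the paper sidesteps the spectrum-recovery step entirely by simply allowing the interpolating polynomial degree $2n-1$ and fitting it on the union of the two spectra, which yields one linear map $p_l$ that sends the stacked powers to $\widehat{K}^f$ for both graphs simultaneously; \Cref{prop:concatenate rpe} then finishes. This is exactly why the statement asks for $2n$ powers rather than $n$: to handle two graphs with possibly different spectra using one polynomial, without first having to prove those spectra coincide. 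Your argument would in principle give the sharper conclusion that $n$ powers already suffice, at the cost of the extra trace/Newton-identity detour.
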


Finally, we show that stacked heat kernels $H^{(t)}$ (cf. \Cref{ex:heat kernel}) are also at least as strong as any spectral kernel.

\begin{restatable}{theorem}{heatkernelsgeneralkernels}
     $(I,H^{(1)},\ldots,H^{(2n-1)})$-WL is at least as strong as $K^{f}$-WL on graphs with at most $n$ nodes.
\end{restatable}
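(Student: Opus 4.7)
The plan is to follow the same three-step strategy used in the proof of \Cref{thm:powers_of_laplacian_equal_spectral_kernel}, now exploiting the identity $H^{(t)} = (H^{(1)})^{t}$ for $t \geq 1$, which holds because $H^{(t)} = \exp(-tL) = (\exp(-L))^{t}$. Writing $\psi := (I, H^{(1)}, \ldots, H^{(2n-1)})$, I will show that (i) $\psi$-WL indistinguishability forces $G$ and $H$ to have the same Laplacian spectrum; (ii) for any fixed spectrum, $K^{f}$ is realized as a linear combination of $I, H^{(1)}, \ldots, H^{(n-1)}$ with coefficients depending only on that spectrum; and (iii) combining (i) and (ii), a standard induction on the WL iteration yields the claimed refinement.

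Step (ii) is the algebraic core and is analogous to the corresponding step in \Cref{thm:powers_of_laplacian_equal_spectral_kernel}. The eigenvalues of $H^{(1)}$ are $0$ together with the distinct nonzero values $\mu_{j} := e^{-\lambda_{(j)}}$ obtained from the distinct nonzero eigenvalues $\lambda_{(j)}$ of $L$; in particular there are at most $n$ of them. Polynomial interpolation then yields a polynomial $p$ of degree at most $n-1$ satisfying $p(0) = 0$ and $p(\mu_{j}) = f(\lambda_{(j)})$ for every $j$; by spectral calculus, $p(H^{(1)}) = K^{f}$. Writing $p(x) = a_{0} + a_{1} x + \cdots + a_{n-1} x^{n-1}$ and substituting $(H^{(1)})^{t} = H^{(t)}$ for $t \geq 1$ gives $K^{f} = a_{0} I + a_{1} H^{(1)} + \cdots + a_{n-1} H^{(n-1)}$, a linear combination of coordinates of $\psi$ whose coefficients $a_{t}$ depend only on the spectrum of $L$.

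Step (i), which I expect to be the main obstacle, mirrors the moment-matching argument from \Cref{thm:powers_of_laplacian_equal_spectral_kernel}. Diagonal awareness of $\psi$ (via the $I$ block) guarantees that $\psi$-WL indistinguishability yields $\multiset{H^{(t)}_{uu} : u \in V_{G}} = \multiset{H^{(t)}_{uu} : u \in V_{H}}$ for every $t$, and hence equal heat-traces $\mathrm{tr}(H^{(t)}_{G}) = \mathrm{tr}(H^{(t)}_{H}) = \sum_{j} m_{j} \mu_{j}^{t}$ for $t = 1, \ldots, 2n-1$, where the sum ranges over the distinct nonzero eigenvalues of $L$ with multiplicities $m_{j}$. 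This is an exponential sum in $t$ supported on at most $n-1$ distinct positive bases, so by Prony's method (equivalently, by a Hankel-rank argument for exponential sums), the $2(n-1) \leq 2n-1$ available samples uniquely determine the multiset $\{(\mu_{j}, m_{j})\}$. The multiplicity of the zero eigenvalue is then recovered from $\mathrm{tr}(I) = n$, so both graphs share their full Laplacian spectrum and therefore share the coefficients $a_{t}$ from step (ii).

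With steps (i) and (ii) in hand, step (iii) is a routine induction on the WL iteration, parallel to the one completing the proof of \Cref{thm:powers_of_laplacian_equal_spectral_kernel}: at every pair $(u,v)$ the value $K^{f}(u,v) = \sum_{t=0}^{n-1} a_{t} H^{(t)}(u,v)$ is a fixed linear function of the coordinates of $\psi(u,v)$, with the same coefficients in both graphs, so the $\psi$-WL refinement automatically induces the $K^{f}$-WL refinement. The delicate point throughout is step (i), where the range $t \in \{1, \ldots, 2n-1\}$ is essentially tight for Prony recovery of up to $n-1$ distinct nonzero spectral exponentials; this explains precisely why $2n-1$ heat-kernel powers appear in the statement.
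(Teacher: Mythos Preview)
Your argument is correct, but it takes a different and more elaborate route than the paper's. The paper's proof is essentially a one-liner: since the eigenvalues of $H^{(1)}$ are $0$ together with the numbers $e^{-\lambda_i}$, one reruns the argument of \Cref{thm:powers_of_laplacian_equal_spectral_kernel} verbatim, interpolating the function $\widehat f(x) := f(-\log x)$ (extended by $\widehat f(0)=0$) in place of $f$. Concretely, given any pair $G,H$, one collects the union of the eigenvalues of $H^{(1)}_G$ and $H^{(1)}_H$---at most $2n$ points---and fits a single polynomial $p$ of degree at most $2n-1$ to $\widehat f$ on these values. The associated linear map $p_l$ applied entrywise to $(I,H^{(1)},\ldots,H^{(2n-1)})$ then equals $K^f$ on \emph{both} graphs simultaneously, and \Cref{prop:concatenate rpe} finishes.

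The key difference is that your step (i)---forcing $\psi$-WL indistinguishable graphs to be Laplacian-cospectral via heat-trace matching and a Prony/Vandermonde argument---is entirely unnecessary: by interpolating on the union of the two spectra, the paper never needs the spectra to agree, and never needs a graph-dependent linear map. Your route does give an independent explanation of why $2n-1$ heat kernels is the right count, but at the cost of a substantially longer argument. Two minor notes: your identity $H^{(t)}=\exp(-tL)$ is not quite right under the paper's convention (the zero eigenspace is excluded from $H^{(t)}$), though the identity you actually use, $H^{(t)}=(H^{(1)})^{t}$ for $t\geq 1$, is correct; and the paper's proof of \Cref{thm:powers_of_laplacian_equal_spectral_kernel} itself is not the three-step strategy you describe---it too interpolates on the union of both spectra and never argues cospectrality---so your opening sentence is based on a mistaken guess about that proof.
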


\subsection{RPEs: Common Matrices for Graphs}
There are multiple common matrices used to characterize graphs, such as the adjacency matrix, Laplacian matrix and their normalized versions. We will examine their corresponding RPEs and compare their distinguishing power.

\paragraph{Undirected graphs.}

For undirected graphs, we show that various common matrices lead to equivalent RPEs.


Let $\widetilde{A}=D^{-1}A$. Let
$\widehat{L}=I-\widehat{A}$ and $\widetilde{L}=I-\widetilde{A}$ denote the symmetrically-normalized and the random-walk graph Laplacians, respectively. These, along with $A$, $L$ and $\widehat{A}$, give rise to six RPEs and hence five corresponding RPE-augWL tests. The following result suggests that in practice, perhaps there is no need to use more than one such type of information with RPE-GTs and one can interchangeably use any of the matrix as RPE.

\begin{restatable}{proposition}{commonmatrices}
\label{prop:common matrices}
    $A,\widehat{A}, \widetilde{A},L,\widehat{L}$ and $\widetilde{L}$  induce equivalent RPE-augWL tests. In particular, all of these RPE-augWL tests are equally strong as the WL test.
\end{restatable}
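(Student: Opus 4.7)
The plan is to reduce each of the six RPE-augWL tests to the $A$-WL test, which by \Cref{lem:wl-equals-tradition-wl} is already equivalent to the classical WL test. The key structural observation is that each of the six matrices is, entry by entry, a simple function of the adjacency matrix $A$ and the two endpoint degrees $\deg(v), \deg(u)$, together with the indicator $\mathbf{1}_{u=v}$ (for $L, \widehat{L}, \widetilde{L}$, which have nonzero diagonals). So once the RPE-augWL coloring encodes the degree of every vertex, the six choices of $\psi$ become mutually interconvertible within the WL update.

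First I would prove a ``degree discovery'' lemma: for each $\psi \in \{A, \widehat{A}, \widetilde{A}, L, \widehat{L}, \widetilde{L}\}$, the color $\chi^{(1)}_{\psi_G}(v)$ determines $\deg(v)$. For $A, \widehat{A}, \widetilde{A}$, this is the number of nonzero off-diagonal entries in row $v$, which is readable from the multiset; for $L$, it is the diagonal entry $L(v,v)=\deg(v)$ appearing in the multiset; for $\widehat{L}, \widetilde{L}$, it is again the count of nonzero off-diagonals. Second, I would prove by induction on $t \geq 1$ that all six colorings $\chi^{(t)}_{\psi_G}$ induce the same partition of $V_G$. For the inductive step, I would relabel colors so the partitions are given by a common coloring $\chi^{(t)}$; by induction $\chi^{(t)}(u)$ determines $\deg(u)$, and $\chi^{(t)}(v)$ determines $\deg(v)$, so for any pair $\psi_1, \psi_2$ the entry $\psi_2(v,u)$ is recoverable from $\psi_1(v,u)$ together with $\deg(v), \deg(u),$ and $\mathbf{1}_{u=v}$. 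This yields a bijective translation between the multisets $\multiset{(\chi^{(t)}(u), \psi_1(v,u)) : u \in V_G}$ and $\multiset{(\chi^{(t)}(u), \psi_2(v,u)) : u \in V_G}$, so $\chi^{(t+1)}_{\psi_1}$ and $\chi^{(t+1)}_{\psi_2}$ induce the same partition. Combined with $A$-WL $\equiv$ classical WL, this will prove both parts of the proposition.

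The main obstacle lies in the inductive step, specifically the translation between matrices whose diagonals disagree (e.g., $A$ vs $L$, where $A(v,v)=0$ but $L(v,v)=\deg(v)$). Since the WL multiset is unordered and may contain several elements sharing $v$'s color (coming from non-neighbors $u$ with $\chi^{(t)}(u)=\chi^{(t)}(v)$), one cannot single out the element corresponding to $u=v$ by its color alone. I plan to resolve this at the multiset level: given the known $\chi^{(t)}(v)$ and $\deg(v)$, remove one copy of $(\chi^{(t)}(v), \psi_1(v,v))$ and insert $(\chi^{(t)}(v), \psi_2(v,v))$, then translate each off-diagonal entry via the formula in $\deg(v), \deg(u)$. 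This is a well-defined and invertible operation on multisets, giving the required two-way translation. A secondary subtlety is the degree-discovery step for the normalized matrices, where the nonzero entries $1/\sqrt{\deg(v)\deg(u)}$ take many distinct values; however, since we only need the count of nonzero off-diagonals, the precise normalization is immaterial.
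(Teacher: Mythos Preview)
Your overall strategy—recover the degree after one round, then translate the six matrices entrywise via formulas in $A(v,u),\deg(v),\deg(u),\mathbf{1}_{u=v}$—is the right idea and close to the paper's. But the specific inductive claim you state, that all six colorings $\chi^{(t)}_{\psi_G}$ induce the \emph{same partition of $V_G$} at every $t\ge 1$, is false for $\widehat{A}$ (and hence $\widehat{L}$). The entry $\widehat{A}(v,u)=A(v,u)/\sqrt{\deg(v)\deg(u)}$ already encodes $\deg(u)$, so one round of $\widehat{A}$-WL sees the degrees of $v$'s neighbors, while one round of $A$-WL sees only $\deg(v)$. Concretely, on the unfeatured path $a\!-\!b\!-\!c\!-\!d\!-\!e\!-\!f$, after one step $A$-WL gives the partition $\{a,f\},\{b,c,d,e\}$ by degree, whereas $\widehat{A}$-WL already separates $\{b,e\}$ from $\{c,d\}$ since row $b$ contains the nonzero values $\{1/\sqrt{2},1/2\}$ but row $c$ contains $\{1/2,1/2\}$. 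So you cannot set up a common $\chi^{(t)}$ and translate in lockstep: going from $\chi^{(t)}_{A}$ to $\chi^{(t)}_{\widehat{A}}$ fails because $\chi^{(t-1)}_{A}(u)$ need not determine $\chi^{(t-1)}_{\widehat{A}}(u)$.

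The fix, which is what the paper does for $\widehat{A}$, is to allow a one-step shift: prove that $\chi^{(t)}_{\widehat{A}}$ refines $\chi^{(t)}_{\widetilde{A}}$ (your translation works in this direction, since from $\widehat{A}(v,u)$ and $\deg(v)$ you recover $A(v,u)$ and hence $\widetilde{A}(v,u)$), and separately that $\chi^{(t+1)}_{\widetilde{A}}$ refines $\chi^{(t)}_{\widehat{A}}$ (here you need $\deg(u)$ for neighbors $u$, available only from $\chi^{(t)}_{\widetilde{A}}$, i.e.\ one step later). Your uniform argument does go through verbatim for the subset $\{A,\widetilde{A},L,\widetilde{L}\}$, whose entries depend only on $A(v,u)$, $\deg(v)$, and $\mathbf{1}_{u=v}$; the symmetric normalization is the sole case needing the shift. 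One secondary point: the proposition concerns distinguishing \emph{pairs} of featured graphs, so the induction should be phrased across $v\in V_G$, $x\in V_H$ (as the paper does), not merely as ``same partition of $V_G$''; since your translation formulas are graph-independent this extension is routine once the shift is in place.
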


\paragraph{Directed graphs.}

Directed graphs are sometimes turned into a magnetic graph to obtain a Hermitian matrix called the \textit{\textbf{magnetic Laplacian}} \citep{fanuel2018magnetic}. Let $G$ be a directed graph with adjacency matrix $A$. Let ${A^*}:=(A+A^T)/2$ and let ${D^*}$ denote the corresponding degree matrix. Given a parameter $\alpha$, the magnetic Laplacian is defined
$$
    L^\alpha:={D^*}-T^\alpha\odot{A^*},
$$
where $T^\alpha_{ij}=\exp(\iota\cdot 2\pi\alpha \cdot\mathrm{sgn}(A_{ji}-A_{ij}))$ and $\iota$ is the imaginary unit. As mentioned in \Cref{sec:position_encoding}, $L^\alpha$ can serve as a pseudo-symmetric RPE. However, $L^\alpha$ is no stronger as an RPE than the simpler RPE that concatenates the pseudo-symmetric augmentation $(A,A^T)$ of $A$ (cf. \Cref{lem:ps-augmentation}) with the degree matrix $D^*$, i.e. $(D^*,A,A^T)$. Hence, instead of using magnetic Laplacian as an RPE, one should consider using $(D^*,A,A^T)$ as an RPE instead.

\begin{restatable}{proposition}{magneticlaplacians}
\label{prop:magnetic_laplacian}
    The $(D^*,A,A^T)$-WL test is at least as strong as the $L^\alpha$-WL test for any $\alpha$.
\end{restatable}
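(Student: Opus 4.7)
The plan is to prove a general color-refinement principle: if an RPE $\psi_2$ arises from another RPE $\psi_1$ via an entrywise function, i.e., there exists $F$ with $\psi_2(u,v)=F(\psi_1(u,v))$ for every graph and every pair of vertices, then $\psi_1$-WL is at least as strong as $\psi_2$-WL. The proposition will then follow once I observe that $L^\alpha$ is such an entrywise function of $(D^*, A, A^T)$.

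First I will verify the entrywise functional dependence. At position $(i,j)$, the triple $((D^*)_{ij}, A_{ij}, A_{ji})$ suffices to recover $L^\alpha_{ij}$. On the diagonal, $T^\alpha_{ii} = \exp(\iota \cdot 2\pi\alpha \cdot \mathrm{sgn}(0)) = 1$ and $A^*_{ii} = A_{ii}$, so $L^\alpha_{ii} = D^*_{ii} - A_{ii}$, which depends only on $(D^*_{ii}, A_{ii}, A_{ii})$. Off the diagonal, $(D^*)_{ij}=0$ since $D^*$ is diagonal, and
$$
L^\alpha_{ij} = -\exp(\iota \cdot 2\pi\alpha \cdot \mathrm{sgn}(A_{ji} - A_{ij})) \cdot \tfrac{A_{ij}+A_{ji}}{2}
$$
depends only on $(A_{ij}, A_{ji})$. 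Hence an appropriate entrywise function $F_\alpha$ exists; the fact that $F_\alpha$ is complex-valued is harmless since one may view each complex entry as a pair of reals, as the paper already does when treating Hermitian matrices as pseudo-symmetric RPEs.

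Second I will establish the general principle by induction on the WL iteration $t$, showing that there exists a function $\phi_t$ with $\chi^{(t)}_{\psi_2}(v) = \phi_t(\chi^{(t)}_{\psi_1}(v))$ for all graphs and vertices. The base case $t=0$ holds since both colorings start from the same node features $X_G(v)$, so $\phi_0$ is the identity. For the inductive step, rewriting
$$
\chi^{(t+1)}_{\psi_2}(v) = \left(\phi_t(\chi^{(t)}_{\psi_1}(v)),\, \multiset{(\phi_t(\chi^{(t)}_{\psi_1}(u)),\, F(\psi_1(v,u))) : u \in V_G}\right)
$$
exhibits $\chi^{(t+1)}_{\psi_2}(v)$ as a function of $\chi^{(t+1)}_{\psi_1}(v)$, yielding the desired $\phi_{t+1}$. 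Hence if $(G, X_G)$ and $(H, X_H)$ are $\psi_1$-WL indistinguishable, applying $\phi_t$ multiset-wise shows that they are also $\psi_2$-WL indistinguishable. Specializing to $\psi_1 = (D^*, A, A^T)$ and $\psi_2 = L^\alpha$ finishes the proof.

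There is no significant technical obstacle here; the argument is a routine application of color refinement. The main payoff is a reusable ``entrywise refinement lemma'' that also streamlines several other RPE-vs-RPE comparisons appearing in Section 4 (for instance, the fact that $L = D - A$ is entrywise a function of $(D, A)$ fits into the same template).
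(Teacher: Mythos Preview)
Your proposal is correct and follows essentially the same route as the paper. The paper states the entrywise dependence via a single formula $f^\alpha(x,y,z)=x-\exp(\iota 2\pi\alpha\cdot\mathrm{sgn}(z-y))\cdot\tfrac{y+z}{2}$ (which unifies your diagonal and off-diagonal cases), and then invokes its Proposition~\ref{prop:concatenate rpe} Part~1, which is exactly the ``entrywise refinement lemma'' you prove by induction; so you have independently recovered both ingredients the paper uses.
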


\subsection{RPEs: Combinatorial-Awareness and WL}
\label{sec:combinatorial_awareness}

It is an interesting question what information about a graph a transformer can extract from a positional encoding. In \Cref{eq:RPE transformer}, the learnable functions in an RPE-GT $f_i$ are applied elementwise to the RPEs. Moreover, the functions are also applied elementwise to the attention scores. In other words, given an RPE $\psi$, even though certain graph structure can be obtained indirectly through more complicated operations on $\psi$, such hidden information may not be accessible to the transformer because of the limited set of operations it can perform on the RPE. For example, while a graph can be reconstructed from its resistance distance matrix, an RPE-GT may not have access to the edges of a graph as it cannot perform the necessary operations to reconstruct the graph.
\par
One basic piece of information a GT may want to access through an RPE is---of course---the adjacency matrix. We identify a desirable type of RPE where a GT will have this information.

\begin{definition}
An RPE $\psi$ is \emph{\textbf{combinatorially-aware}} if for any two graphs $G$ and $H$ and vertices $u,v\in V_G$ and $x,y\in V_H$, then $\psi_G(u,v)=\psi_H(x,y)$ iff one of the following conditions hold:  (i) $\{u,v\}\in E_G$ and $\{x,y\}\in E_H$ (ii) $\{u,v\}\notin E_G$ and $\{x,y\}\notin E_H$.
\end{definition}

As it turns out, elementwise operations are insufficient for reconstructing the graph structure from the resistance distance, as evidenced by the following example.

\begin{example}\label{ex:SPD}
    The shortest-path distance (SPD) is combinatorially-aware as $\mathrm{SPD}(u,v)=1$ iff $\{u,v\}\in E$. The resistance distance is, however, not combinatorially-aware. See \Cref{sec:RD not ca} for a counterexample.
\end{example}

As in the case of diagonal-awareness, given any RPE, one can always augment it to be combinatorially-aware.

\begin{definition}[Combinatorial Augmentation]\label{ex:aug}
Let $\psi$ be any RPE. We define its \textbf{combinatorial augmentation} $C^\psi$ as follows: for any graph $G$, $C^\psi_G(u,v):=(A_G(u,v),\psi_G(u,v))$ for any $u,v\in V$, where $A_G$ denotes the adjacency matrix of $G$.
\end{definition}

This in fact provides an equivalent characterization of combinatorially-aware RPEs.




\begin{proposition}\label{prop:augmentation}
    An RPE $\psi$ is {combinatorially-aware} if for any $G$, the following condition holds for all $u,v,x,y\in V_G$: $\psi_G(u,v)=\psi_G(x,y)$ iff $C_G^\psi(u,v)=C_G^\psi(x,y)$.
\end{proposition}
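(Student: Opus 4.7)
The plan is to unwrap the definitions and observe that the proposition is, modulo one trivial half of the biconditional, a rewording of combinatorial-awareness within a single graph. By \Cref{ex:aug}, $C_G^\psi(u,v) = (A_G(u,v), \psi_G(u,v))$, so the equality $C_G^\psi(u,v) = C_G^\psi(x,y)$ unpacks as the conjunction of $A_G(u,v) = A_G(x,y)$ and $\psi_G(u,v) = \psi_G(x,y)$.

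From this decomposition, the $(\Leftarrow)$ half of the stated biconditional---$C_G^\psi(u,v) = C_G^\psi(x,y) \Rightarrow \psi_G(u,v) = \psi_G(x,y)$---is automatic for every RPE, simply by projecting onto the second coordinate. All the content of the biconditional therefore rests on the $(\Rightarrow)$ half, which, since the $\psi_G$-coordinates already agree by hypothesis, reduces to the implication $\psi_G(u,v) = \psi_G(x,y) \Rightarrow A_G(u,v) = A_G(x,y)$.

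To obtain this reduced implication from combinatorial-awareness, I will specialize the defining cross-graph condition to the diagonal case $H = G$: two pairs in $G$ with the same value of $\psi_G$ must both lie in $E_G$ or both lie outside, which is exactly $A_G(u,v) = A_G(x,y)$. Conversely, to get the ``if'' stated in the proposition (and the equivalent characterization alluded to in the preceding text), one runs the same unpacking in reverse: assuming the local biconditional in each $G$ gives immediately the within-graph form of combinatorial-awareness, and the cross-graph form then follows by applying the within-graph deduction in $G$ and $H$ separately and pairing up values of $\psi$ that coincide across the two graphs.

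I do not anticipate any substantive technical obstacle; the whole argument is a routine bookkeeping on definitions. The only point requiring minor care is keeping straight which half of the biconditional carries content and which is vacuous, and---when promoting the single-graph statement to the cross-graph combinatorial-awareness condition---correctly matching the common $\psi$-value across the two graphs to identify the two adjacency bits.
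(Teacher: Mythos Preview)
The paper states this proposition without proof, so there is no argument of its own to compare against. Your reduction is correct as far as it goes: unpacking $C^\psi_G(u,v)=(A_G(u,v),\psi_G(u,v))$, the $\Leftarrow$ half of the displayed biconditional is automatic, and the $\Rightarrow$ half is exactly the implication $\psi_G(u,v)=\psi_G(x,y)\Rightarrow A_G(u,v)=A_G(x,y)$. Specializing the cross-graph definition of combinatorial-awareness to $H=G$ yields this immediately, so the ``only if'' direction of the advertised equivalent characterization is fine.

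The gap is in your last step, where you try to promote the single-graph hypothesis back to the two-graph definition ``by applying the within-graph deduction in $G$ and $H$ separately and pairing up values of $\psi$ that coincide across the two graphs.'' The within-graph hypotheses give no control over how the value sets of $\psi_G$ and $\psi_H$ align, so there is nothing to pair. In fact the ``if'' direction as stated is false: set $\psi_G:=A_G$ when $|E_G|$ is even and $\psi_G:=\mathbf{1}-A_G$ (entrywise) when $|E_G|$ is odd. This is a well-defined RPE, since graph isomorphisms preserve both $|E|$ and adjacency, and within every fixed $G$ one has $\psi_G(u,v)=\psi_G(x,y)\Leftrightarrow A_G(u,v)=A_G(x,y)$, so the proposition's hypothesis holds. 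Yet taking $G$ to be the path on three vertices (two edges, even) and $H$ the graph on three vertices with a single edge (odd), an edge pair in $G$ and a non-edge pair in $H$ both receive $\psi$-value $1$, so $\psi$ is not combinatorially-aware in the cross-graph sense. Hence the proposition (and the ``equivalent characterization'' remark preceding it) appears to be misstated; only the direction you obtain by specialization to $H=G$ actually holds.
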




The reason combinatorially-aware RPEs are important is because RPE-augWL with combinatorially-aware RPEs are at least as strong as the WL test~\citep{weisfeiler1968reduction}. A proof of this theorem can be found in~\Cref{sec:wl_and_combinatorially_aware_rpes}.

\begin{restatable}{theorem}{combinatoriallyawarerpewl}
\label{thm:combinatorially-aware-rpe-stronger-than-WL}
    Let $\psi$ be a combinatorially-aware RPE. Then, $\psi$-WL is at least as strong as WL.
\end{restatable}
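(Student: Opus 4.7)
The plan is to prove the stronger pointwise refinement: for every $t\geq 0$, and for any vertices $u\in V_G$ and $v\in V_H$ of any two (unfeatured, for simplicity) graphs, if $\chi_{\psi_G}^{(t)}(u)=\chi_{\psi_H}^{(t)}(v)$ then $\wlcolor{G}{t}(u)=\wlcolor{H}{t}(v)$. The theorem statement (multiset-level indistinguishability) then follows immediately: a $\psi$-WL-witnessing bijection between $V_G$ and $V_H$ at iteration $t$ becomes a WL-witnessing bijection at iteration $t$.

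I would argue by induction on $t$. The base case $t=0$ is trivial, since both colorings start from the same constant (and in the featured case, $\chi_{\psi_G}^{(0)}=X_G$ already determines the input to WL, which is also constant). For the inductive step, assume the refinement holds at step $t$. Suppose $\chi_{\psi_G}^{(t+1)}(u)=\chi_{\psi_H}^{(t+1)}(v)$. The first coordinate of the $\psi$-WL color gives $\chi_{\psi_G}^{(t)}(u)=\chi_{\psi_H}^{(t)}(v)$, so by the inductive hypothesis $\wlcolor{G}{t}(u)=\wlcolor{H}{t}(v)$. It remains to match the neighbor multisets.

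The key step is to use combinatorial-awareness to split the $\psi$-WL neighborhood multiset into an edge-part and a non-edge-part in a way consistent across $G$ and $H$. Equality of
\[
\multiset{(\chi_{\psi_G}^{(t)}(w),\psi_G(u,w)):w\in V_G}=\multiset{(\chi_{\psi_H}^{(t)}(w'),\psi_H(v,w')):w'\in V_H}
\]
gives a bijection $\tau:V_G\to V_H$ matching these pairs. Because $\psi$ is combinatorially-aware, $\psi_G(u,w)=\psi_H(v,\tau(w))$ forces $w\in N_G(u) \Longleftrightarrow \tau(w)\in N_H(v)$. Hence $\tau$ restricts to a bijection $N_G(u)\to N_H(v)$ under which $\chi_{\psi_G}^{(t)}(w)=\chi_{\psi_H}^{(t)}(\tau(w))$. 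The inductive hypothesis then yields $\wlcolor{G}{t}(w)=\wlcolor{H}{t}(\tau(w))$ for every $w\in N_G(u)$, so $\multiset{\wlcolor{G}{t}(w):w\in N_G(u)}=\multiset{\wlcolor{H}{t}(w'):w'\in N_H(v)}$, which combined with the already-established equality of the first coordinate gives $\wlcolor{G}{t+1}(u)=\wlcolor{H}{t+1}(v)$, closing the induction.

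The main obstacle is exactly the combinatorial-awareness step: one has to be careful that equality of the pair $(\chi_{\psi_G}^{(t)}(w),\psi_G(u,w))$ with $(\chi_{\psi_H}^{(t)}(w'),\psi_H(v,w'))$ permits separating edge-pairs from non-edge-pairs. This is where Proposition \ref{prop:augmentation} (or the definition of combinatorial-awareness directly) is indispensable; without it the $\psi$-value alone could conflate an edge in $G$ with a non-edge in $H$, and the neighbor multisets would not line up. Everything else is a routine induction and a final pass from pointwise agreement to multiset agreement via composition of bijections.
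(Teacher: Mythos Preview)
Your proposal is correct and essentially identical to the paper's own proof: both establish the pointwise refinement $\chi_{\psi_G}^{(t)}(u)=\chi_{\psi_H}^{(t)}(v)\Rightarrow \wlcolor{G}{t}(u)=\wlcolor{H}{t}(v)$ by induction on $t$, using a bijection witnessing the multiset equality together with combinatorial-awareness to restrict it to neighbors, and then the inductive hypothesis to match the WL neighbor multisets. The paper's argument and yours differ only in notation.
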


\Cref{thm:combinatorially-aware-rpe-stronger-than-WL} is of interest because the  WL test is equally strong at distinguishing graphs as message passing graph neural networks (MPNNs)~\citep{xu2018powerful}.

\begin{restatable}{corollary}{corocombinatoriallyawarerpewl}
\label{cor:combinatorially-aware-rpe-stronger-than-WL}
    Let $\psi$ be a combinatorially-aware RPE. Then $\psi$-RPE-GTs are at least as strong as MPNNs.
\end{restatable}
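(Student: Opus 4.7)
The plan is a three-step chain: (i) by the classical result of \citet{xu2018powerful}, MPNNs are bounded above in distinguishing power by the WL test; (ii) by Theorem \ref{thm:combinatorially-aware-rpe-stronger-than-WL}, whenever $\psi$ is combinatorially-aware, the $\psi$-WL test is at least as strong as WL; (iii) a $\psi$-RPE-GT is at least as strong as the $\psi$-WL test on the same input. Composing the three yields $\psi$-RPE-GT $\succeq$ $\psi$-WL $\succeq$ WL $\succeq$ MPNN, which is exactly the corollary.

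For step (iii), Lemma \ref{lem:RPE_transformers_equal_w_WL} would deliver the conclusion immediately if $\psi$ were diagonally-aware. However, combinatorial-awareness alone does \emph{not} imply diagonal-awareness: because the diagonal pair $(v,v)$ is itself a non-edge, a combinatorially-aware $\psi$ assigns $\psi_G(v,v)$ the same ``non-edge'' value as $\psi_G(u,w)$ for any non-adjacent $u \neq w$. To get around this I would give a direct construction showing that a $\psi$-RPE-GT can simulate a GIN layer \citep{xu2018powerful}, whose distinguishing power matches WL. Combinatorial-awareness implies $\psi_G$ takes only two distinct values $\alpha \neq \beta$ (on edges and non-edges respectively), so one may set $f_1(\alpha)=1$ and $f_1(\beta)=0$ to obtain $f_1(\psi_G)=A_G$, the adjacency matrix. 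Taking $W_Q = W_K = 0$ and $f_2 \equiv 0$ renders the softmax uniform at $1/|V_G|$, so the attention output at node $v$ reduces to $|V_G|^{-1}\sum_{u \in N(v)} X(u)\,W_V W_O$, a constant multiple of sum aggregation over neighbors. Combined with the residual connection (which preserves $X(v)$) and the feedforward MLP as a universal approximator, this faithfully mimics a GIN update of the form $\mathrm{MLP}\!\left((1+\epsilon)X(v)+\sum_{u \in N(v)} X(u)\right)$, and iterating enough such layers matches the distinguishing power of GIN.

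The main obstacle is accommodating the $|V_G|^{-1}$ scaling introduced by the softmax: for graphs of a fixed size this is a harmless constant that the feedforward MLP can absorb, but when comparing graphs of different sizes one must recover $|V_G|$ separately---for instance, by using an additional attention head whose weights sum to $|V_G|$ and letting the MLP rescale accordingly. This is a technicality rather than a genuine barrier; once handled, the corollary follows by the mechanical composition of the three facts listed above.
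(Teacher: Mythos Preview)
Your observation that combinatorial-awareness does not imply diagonal-awareness is correct and shows more care than the paper itself, which offers no explicit proof and simply relies on the chain Theorem~\ref{thm:combinatorially-aware-rpe-stronger-than-WL} $+$ Lemma~\ref{lem:RPE_transformers_equal_w_WL} $+$ (WL $=$ MPNN); that chain indeed has the gap you spotted, since for a combinatorially-aware $\psi$ the diagonal entry $\psi_G(v,v)$ takes the same ``non-edge'' value as any non-adjacent off-diagonal pair. Your direct construction---using $f_1$ to recover $A_G$ from $\psi_G$ and then simulating a GIN layer---is a sound alternative that bypasses the $\psi$-WL step entirely.

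One correction, however: your proposed fix for the $1/|V_G|$ scaling does not work as written. With $f_1,f_2$ applied entrywise to a two-valued $\psi$ and softmax always normalizing to $1$, no attention head can produce weights summing to $|V_G|$. The good news is that the obstacle is illusory. Indistinguishability in this paper is defined by comparing \emph{multisets} of output node features, so two graphs with $|V_G|\neq |V_H|$ are distinguished by \emph{every} $\psi$-RPE-GT (the output multisets have different cardinalities). Hence your simulation only needs to work for pairs with $|V_G|=|V_H|=n$, and there the factor $1/n$ is a fixed constant absorbed into $W_V W_O$. A second minor point: the residual connection yields $X(v)+(\text{attention output})$, not the pair $(X(v),\text{attention output})$, so to keep the GIN aggregate injective you should allocate separate coordinate blocks---e.g., have the attention output land in coordinates that are initially zero---so that $X(v)$ and $\sum_{u\in N(v)}X(u)$ remain separable before the feedforward MLP. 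With these two clarifications your argument is complete.
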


\section{Conclusion and Open Questions}

We have established a framework for comparing different types of positional encodings.
Future work can continue this line of work of comparing different specific positional encodings, e.g.~shortest-path vs.~resistance distance or resistance distance vs.~SPE.
Furthermore, our theoretical results provide some suggestions for designing new positional encodings, e.g., one may not want to convert RPEs into APEs as done in the literature.
Finally, it would be interesting to shift the focus of research on positional encodings for graph transformers from coarse-grained distinguishing-power results (what pairs of graphs can GTs distinguish) to more fine-grained results on expressive power, for example, approximation results~\citep{azizian2021expressive} or results using RPE-augWL-inspired distances to study graph transformers \citep{chen2022weisfeiler,chen2023weisfeiler,boker2024fine}.


\section*{Acknowledgements}

This work was supported by the NSF (Grants CCF-2217058 for GM, ZW, and YW;
CCF-2112665 and CCF-2310411 for YW; and CCF-2311180 and CCF-1941086 for MB and AN).
MB would like to thank YW for supporting a visit to UCSD where this research was initiated.

\section*{Impact Statement}
This paper presents work whose goal is to advance the field of Machine Learning. There are many potential societal consequences of potentially more effective graph learning models due to our work, none of which we feel must be specifically highlighted here.




\bibliography{example_paper}
\bibliographystyle{icml2024}

\newpage
\appendix
\onecolumn

\section{Details from \Cref{sec:equivalence between PEs}}

\subsection{Example of RPE vs RPE-augWL}
\label{sec:rpe_vs_rpe_wl_example}

\begin{figure}[htbp!]
    \centering
    \subfigure{
        \centering
        \includegraphics[height=1in]{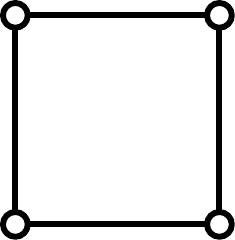}
    }
    \subfigure{
        \centering
        \includegraphics[height=1in]{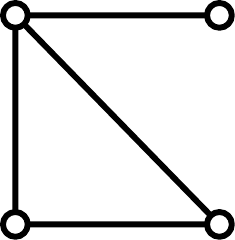}
    }
    \caption{Left: $G$. Right: $H$}
    \label{fig:RPE-WL-example}
\end{figure}

In this section, we show that RPE-augWL can distinguish graphs that are indistinguishable by just their RPE. By~\Cref{lem:RPE_transformers_equal_w_WL}, this also means the graphs are distinguishable by some RPE transformer. Consider the graphs $G$ and $H$ in \Cref{fig:RPE-WL-example} and the RPE $A$ that assigns each graph its adjacency matrix. Both graphs have 4 vertices and 4 edges, so their adjacency matrices $A_G$ and $A_H$ are indistinguishable. However, after one iteration of $A$-WL, all vertices of $G$ will have the same color, while the degree 1 vertex in $H$ will have a different color from all vertices in $G$. Therefore, $G$ and $H$ are distinguishable by $A$-WL.

\subsection{Equivalence of RPE-augWL and RPE-2-WL: Proof of~\Cref{thm:wl-equals-2-wl}}
\label{sec:wl-equals-2-wl}

Our proof of~\Cref{thm:wl-equals-2-wl} is based on the following lemma.

\begin{restatable}{lemma}{wlequalstwowllemma}
    \label{lem:wl-equals-2-wl-lemma}
        Let $\psi$ be a pseudo-symmetric RPE. Let $(G,X_G)$ and $(H,X_H)$ be featured graphs. Let $u,v\in V_G$ and $x,y\in V_H$. Then $\chi^{(t)}_{2,\psi_G}(u,v) = \chi^{(t)}_{2,\psi_H}(x,y)$ iff $\psi_{G}(u,v) = \psi_H(x,y)$, $\chi_{\psi_G}^{(t)}(u) = \chi_{\psi_H}^{(t)}(x)$, and $\chi_{\psi_G}^{(t)}(v) = \chi_{\psi_H}^{(t)}(y)$.
\end{restatable}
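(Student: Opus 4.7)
The plan is to prove the lemma by induction on $t$, exploiting the fact that both the RPE-2-WL color at $(u,v)$ and the pair of RPE-augWL colors at $u$ and $v$ (together with $\psi(u,v)$) are built out of essentially the same multiset data at each step. The base case $t=0$ is immediate from the definitions: $\chi^{(0)}_{2,\psi_G}(u,v)=(X_G(u),X_G(v),\psi_G(u,v))$ while $\chi^{(0)}_{\psi_G}(u)=X_G(u)$ and $\chi^{(0)}_{\psi_G}(v)=X_G(v)$, so the ``iff'' is a direct unfolding.

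For the inductive step, assume the claim for $t$ and consider the forward direction. If $\chi^{(t+1)}_{2,\psi_G}(u,v)=\chi^{(t+1)}_{2,\psi_H}(x,y)$, then the first coordinate of the update gives $\chi^{(t)}_{2,\psi_G}(u,v)=\chi^{(t)}_{2,\psi_H}(x,y)$, which by the inductive hypothesis yields $\psi_G(u,v)=\psi_H(x,y)$, $\chi^{(t)}_{\psi_G}(u)=\chi^{(t)}_{\psi_H}(x)$, and $\chi^{(t)}_{\psi_G}(v)=\chi^{(t)}_{\psi_H}(y)$. To promote these to level $t+1$, I would use the ``row'' multiset $\multiset{\chi^{(t)}_{2,\psi_G}(u,w):w\in V_G}=\multiset{\chi^{(t)}_{2,\psi_H}(x,z):z\in V_H}$. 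By the inductive hypothesis applied termwise, this multiset equality is equivalent to $\multiset{(\chi^{(t)}_{\psi_G}(w),\psi_G(u,w)):w\in V_G}=\multiset{(\chi^{(t)}_{\psi_H}(z),\psi_H(x,z)):z\in V_H}$, which combined with $\chi^{(t)}_{\psi_G}(u)=\chi^{(t)}_{\psi_H}(x)$ is exactly the definition of $\chi^{(t+1)}_{\psi_G}(u)=\chi^{(t+1)}_{\psi_H}(x)$. The ``column'' multiset is handled analogously to obtain $\chi^{(t+1)}_{\psi_G}(v)=\chi^{(t+1)}_{\psi_H}(y)$.

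The backward direction runs the same computation in reverse. Starting from $\psi_G(u,v)=\psi_H(x,y)$, $\chi^{(t+1)}_{\psi_G}(u)=\chi^{(t+1)}_{\psi_H}(x)$, and $\chi^{(t+1)}_{\psi_G}(v)=\chi^{(t+1)}_{\psi_H}(y)$, stripping the outermost tuple of each $(t+1)$-level WL color gives both the $t$-level WL equalities at $u,x$ and $v,y$ and the two multiset equalities $\multiset{(\chi^{(t)}_{\psi_G}(w),\psi_G(u,w))}=\multiset{(\chi^{(t)}_{\psi_H}(z),\psi_H(x,z))}$ and the analogous one for $v,y$. Adjoining the now-fixed colors $\chi^{(t)}_{\psi_G}(u)=\chi^{(t)}_{\psi_H}(x)$ to each tuple and invoking the inductive hypothesis termwise rebuilds $\multiset{\chi^{(t)}_{2,\psi_G}(u,w):w}=\multiset{\chi^{(t)}_{2,\psi_H}(x,z):z}$, and the $t$-level WL equalities together with $\psi_G(u,v)=\psi_H(x,y)$ give $\chi^{(t)}_{2,\psi_G}(u,v)=\chi^{(t)}_{2,\psi_H}(x,y)$ by the inductive hypothesis. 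Hence all three components of the $(t+1)$-level 2-WL update agree.

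The main obstacle is the ``column'' multiset $\multiset{\chi^{(t)}_{2,\psi_G}(w,v):w\in V_G}$: the inductive hypothesis applied to these pairs produces tuples involving $\psi_G(w,v)$, not $\psi_G(v,w)$, whereas the RPE-augWL color $\chi^{(t+1)}_{\psi_G}(v)$ is phrased in terms of $\psi_G(v,w)$. This is precisely where the pseudo-symmetry assumption enters: the bijection $f$ with $f\circ f=\mathrm{Id}$ and $\psi_G(w,v)=f(\psi_G(v,w))$ lets me translate one multiset into the other without loss of information. Without pseudo-symmetry one could only recover the ``reverse-direction'' WL color at $v$, which need not coincide with $\chi^{(t+1)}_{\psi_G}(v)$; so verifying that this single translation via $f$ is enough to close the induction is the delicate step of the argument.
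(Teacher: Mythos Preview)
Your proposal is correct and follows essentially the same approach as the paper: induction on $t$, with the base case unfolding the definitions, and the inductive step in each direction translating between the $(t+1)$-level 2-WL color and the pair of $(t+1)$-level augWL colors via the row and column multisets, applying the inductive hypothesis termwise. You have also correctly identified the one nontrivial point---that the column multiset involves $\psi_G(w,v)$ while $\chi^{(t+1)}_{\psi_G}(v)$ involves $\psi_G(v,w)$, and that pseudo-symmetry (via the injective involution $f$) is exactly what bridges this, in both directions of the induction---which is precisely where and how the paper invokes the hypothesis.
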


\begin{proof}[Proof of \Cref{lem:wl-equals-2-wl-lemma}]
    We prove this by induction on $t$. For $t=0$, this is follows directly from definition as $\chi^{(t)}_{\psi_G}(v)= X_G(v)$ and $\chi^{(0)}_{2,\psi_{G}}(u,v) = (X_G(u), X_G(v),\psi_{G}(u,v) )$.
    \par
    Now inductively suppose this is true for some value $t$; we will show this is true for $t+1$.
    \par
    First assume that $\chi^{(t+1)}_{2,\psi_G}(u,v) = \chi^{(t+1)}_{2,\psi_H}(x,y)$. Then, it follows from the definition that
    \begin{enumerate}
        \item $\chi^{(t)}_{2,\psi_G}(u,v) = \chi^{(t)}_{2,\psi_H}(x,y)$.
        \item $\multiset{ \chi^{(t)}_{2,\psi_G}(u,w) : w\in V_G } = \multiset{ \chi^{(t)}_{2,\psi_H}(x,z) : z\in V_H }$;
        \item $\multiset{ \chi^{(t)}_{2,\psi_G}(w,v) : w\in V_G } = \multiset{ \chi^{(t)}_{2,\psi_H}(z,y) : z\in V_H }$.
    \end{enumerate}
    By induction, Item 1 implies that
    \begin{enumerate}
        \item[4.] $\psi_G(u,v)=\psi_H(x,y)$, and by pseudo-symmetry, $\psi_G(v,u)=\psi_H(y,x)$;
        \item[5.] $\chi_{\psi_G}^{(t)}(u) = \chi_{\psi_H}^{(t)}(x)$ and $\chi_{\psi_G}^{(t)}(v) = \chi_{\psi_H}^{(t)}(y)$;
    \end{enumerate}
    By item 2, there is a bijection $\sigma:V_G\to V_H$ such that $\chi^{(t)}_{2,\psi_G}(u,w) = \chi^{(t)}_{2,\psi_H}(x,\sigma(w))$. The inductive hypothesis implies that $\psi_G(u,w) = \psi_H(x,\sigma(w))$ and $\chi^{(t)}_{\psi_G}(w) = \chi_{\psi_H}^{(t)}(\sigma(w))$. Therefore, the $\psi$-WL colors of $u$ and $x$ at $t+1$ are the same as
    \begin{align*}
        &\chi^{(t+1)}_{\psi_G}(u) = (\chi^{(t)}_{\psi_G}(u),\multiset{\, (\chi_{\psi_G}^{(t)}(w), {\psi_G}(u,w)) : w\in V_G \,}) \\
        &= (\chi^{(t)}_{\psi_H}(x),\multiset{\, (\chi_{\psi_H}^{(t)}(z), {\psi_H}(x,z)) : z\in V_H \,}) = \chi^{(t+1)}_{{\psi_H}}(x).
    \end{align*}
    By item 3, there is a bijection $\sigma':V_G\to V_H$ such that $\chi^{(t)}_{2,\psi_G}(w,v) = \chi^{(t)}_{2,\psi_H}(\sigma'(w),y)$. The inductive hypothesis implies that $\psi_G(w,v) = \psi_H(\sigma'(w),y)$ and $\chi^{(t)}_{\psi_G}(w) = \chi_{\psi_H}^{(t)}(\sigma'(w))$. By pseudo-symmetry, one also has that $\psi_G(v,w) = \psi_H(y,\sigma'(w))$. Therefore, the $\psi$-WL colors of $v$ and $y$ at $t+1$ are the same as
    \begin{align*}
        &\chi^{(t+1)}_{\psi_G}(v) = (\chi^{(t)}_{\psi_G}(v),\multiset{\, (\chi_{\psi_G}^{(t)}(w), {\psi_G}(v,w)) : w\in V_G \,}) \\
        &= (\chi^{(t)}_{\psi_H}(y),\multiset{\, (\chi_{\psi_H}^{(t)}(y), {\psi_H}(y,z)) : z\in V_H \,}) = \chi^{(t+1)}_{{\psi_H}}(y).
    \end{align*}
    \indent Conversely, assume that $\psi_G(u,v) = \psi_H(x,y)$, $\chi_{\psi_G}^{(t+1)}(u) = \chi_{\psi_H}^{(t+1)}(x)$, and $\chi_{\psi_G}^{(t+1)}(v) = \chi_{\psi_H}^{(t+1)}(y)$. By definition of the $\psi$-WL, we have that $\chi_{\psi_G}^{(t)}(u) = \chi_{\psi_H}^{(t)}(x)$ and $\chi_{\psi_G}^{(t)}(v) = \chi_{\psi_H}^{(t)}(y)$. Hence, by induction, we have that $\chi^{(t)}_{2,\psi_G}(u,v) = \chi^{(t)}_{2,\psi_H}(x,y)$.
    Moreover, there exist bijections $\sigma_{ux}:V_G\to V_H$ and $\sigma_{vy}:V_G\to V_H$ such that
    \begin{enumerate}
        \item for any $w\in V_G$, $\chi_{\psi_G}^{(t)}(w) = \chi_{\psi_H}^{(t)}(\sigma_{ux}(w))$ and $\psi_G(u,w)=\psi_H(x,\sigma_{ux}(w))$;
        \item for any $w\in V_G$, $\chi_{\psi_G}^{(t)}(w) = \chi_{\psi_H}^{(t)}(\sigma_{vy}(w))$ and $\psi_G(v,w)=\psi_H(y,\sigma_{vy}(w))$, and by pseudo-symmetry, we also have that $\psi_G(w,v)=\psi_H(\sigma_{vy}(w),y)$.
    \end{enumerate}
    For any given $w\in V_G$, by induction, one has that $\chi_{2,\psi_G}^{(t)}(u,w) = \chi_{2,\psi_H}^{(t)}(x,\sigma_{ux}(w))$ and $\chi_{2,\psi_G}^{(t)}(w,v) = \chi_{2,\psi_H}^{(t)}(\sigma_{vy}(x),y)$. Therefore, $\multiset{ \chi^{(t)}_{2,\psi_G}(u,w) : w\in V_G }=\multiset{ \chi^{(t)}_{2,\psi_H}(x,z) : z\in V_H }$ and $\multiset{ \chi^{(t)}_{2,\psi_G}(w,v) : w\in V_G }=\multiset{ \chi^{(t)}_{2,\psi_H}(z,y) : z\in V_H }$.  This implies that $\chi^{(t+1)}_{2,\psi_G}(u,v) = \chi^{(t+1)}_{2,\psi_H}(x,y)$ which concludes the proof.
\end{proof}

Now we are ready to prove our main theorem in this section.
\wlequalstwowl*


\begin{proof}[Proof of \Cref{thm:wl-equals-2-wl}]
    For the forward direction, we will show that for all $t\geq 0$, if $\multiset{ \chi_{\psi_{G}}^{(t+1)}(u) : u\in V_G } = \multiset{ \chi_{\psi_{H}}^{(t+1)}(v) : v\in V_H }$, then $\multiset{ \chi_{2,\psi_{G}}^{(t)}(u,v) : u,v\in V_G } = \multiset{ \chi_{2,\psi_{H}}^{(t)}(x,y) : x,y\in V_H }$. This implies that if $G$ and $H$ are indistinguishable by the $\psi$-WL, they are indistinguishable by the $\psi$-2-WL.
    \par
    We first prove the base case of $t=0$. As $\multiset{ \chi_{\psi_{G}}^{(1)}(u) : u\in V_G } = \multiset{ \chi_{\psi_{H}}^{(1)}(v) : v\in V_H }$, then there is a bijection $\sigma:V_G\to V_H$ such that $\chi_{\psi_{G}}^{(1)}(u) = \chi_{\psi_{G}}^{(1)}(\sigma(u))$. Moreover, by the definition of $\chi^{(1)}_\psi$, for each $u\in V$, there is a bijection $\sigma_{u}:V_{G}\to V_{H}$ such that $\psi_{G}(u,v) = \psi_{H}(\sigma(u),\sigma_{u}(v))$. Therefore, we can define a bijection $\Phi:V_{G}\times V_{G}\to V_{H}\times V_{H}$ defined $\Phi(u,v) = (\sigma(u), \sigma_{u}(v))$ such that $\psi_{G}(u,v) = \psi_{H}(\Phi(u,v))$. Therefore, $\multiset{ \chi_{2,\psi_{G}}^{(0)}(u,v) : u,v\in V_G } = \multiset{ \chi_{2,\psi_{H}}^{(0)}(x,y) : x,y\in V_H }$.
    \par
    We now prove that for each $t\geq 1$, if $\multiset{ \chi_{\psi_{G}}^{(t+1)}(u) : u\in V_G } = \multiset{ \chi_{\psi_{H}}^{(t+1)}(v) : v\in V_H }$, then $\multiset{ \chi_{2,\psi_{G}}^{(t)}(u,v) : u,v\in V_G } = \multiset{ \chi_{2,\psi_{H}}^{(t)}(x,y) : x,y\in V_H }$.
    Assume that both graphs are assigned $k^{(t+1)}$ colors by $\chi^{(t+1)}$. More specifically, assume there are partitions $V_G=V_{G,1}^{(t+1)}\cup\cdots\cup V_{G,k^{(t+1)}}^{(t+1)}$ and $V_H=V_{H,1}^{(t+1)}\cup\cdots\cup V_{H,k^{(t+1)}}^{(t+1)}$ such that 
    \begin{enumerate}
        \item $|V^{(t+1)}_{G,i}|=|V^{(t+1)}_{H,i}|$;
        \item for any $u\in V^{(t+1)}_{G,i}$ and $v\in V^{(t+1)}_{G,j}$, $\chi_{\psi_G}^{(t+1)}(u) = \chi_{\psi_G}^{(t+1)}(v)$ iff $i=j$;
        \item for any $u\in V_{G,i}^{(t+1)}$ and $x\in V_{H,i}^{(t+1)}$, $\chi_{\psi_G}^{(t+1)}(u) = \chi_{\psi_H}^{(t+1)}(x)$.
    \end{enumerate}
    We can define analogous partitions for $\chi^{(t)}_\psi$.

    \begin{figure}[htbp!]
        \centering
        \includegraphics[height=1.5in]{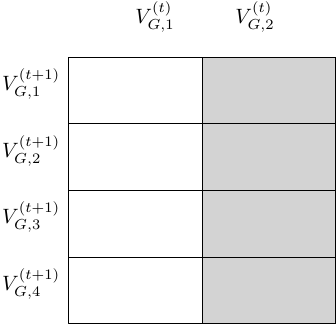}
        \caption{Decomposition of $V_G\times V_G$.}
        \label{fig: decomposition}
    \end{figure}
    \par
    These partitions induce a partition on the set of pairs of vertices $V_G\times V_G=\cup_{1\leq i\leq k^{(t+1)}}\cup_{1\leq j\leq k^{(t)}} V_{G,i}^{(t+1)}\times V_{G,j}^{(t)}$ and $V_H\times V_H=\cup_{1\leq i\leq k^{(t+1)}}\cup_{1\leq j\leq k^{(t)}}V_{H,i}^{(t+1)}\times V_{H,j}^{(t)}$; see \Cref{fig: decomposition} for an illustration of the decomposition.

    We show that $\chi_{2,\psi}^{(t)}$ has the same multiset of colors on $V_{G,i}^{(t+1)}\times V_{G,j}^{(t)}$ and $V_{H,i}^{(t+1)}\times V_{H,j}^{(t)}$. By the definition of the partitions, there exists a bijection $\sigma^{(t+1)}_i:V_{G,i}^{(t+1)}\to V_{H,i}^{(t+1)}$ such that $\chi_{\psi_G}^{(t+1)}(u)=\chi_{\psi_H}^{(t+1)}(\sigma^{(t+1)}_i(u))$ for any $u\in V_{G,i}^{(t+1)}$. By definition of $\psi$-WL, we have that $\chi_{\psi_G}^{(t)}(u)=\chi_{\psi_H}^{(t)}(\sigma^{(t+1)}_i(u))$ for any $u\in V_{G,i}^{(t+1)}$. Furthermore, for any $u\in V_{G,i}^{(t+1)}$,
    \begin{align*}
       \multiset{\, (\chi^{(t)}_{\psi_G}(w), \psi_G(u,w)) : w\in V_G \,}=\multiset{\, (\chi^{(t)}_{\psi_H}(z), \psi_H(\sigma^{(t+1)}_i(u),z)) : z\in V_H \,}.
    \end{align*}

    Now for each $1\leq j\leq k^{(t)}$, consider the sub-multiset $\multiset{\, (\chi^{(t)}_{\psi_G}(w), \psi_G(u,w)) : w\in V_{G,j}^{(t)} \,}$ of $\multiset{\, (\chi^{(t)}_{\psi_G}(w), \psi_G(u,w)) : w\in V_G \,}$,  then we have that
    \[\multiset{\, (\chi^{(t)}_{\psi_G}(w), \psi_G(u,w)) : w\in V_{G,j}^{(t)} \,}=\multiset{\, (\chi^{(t)}_{\psi_H}(z), \psi_H(\sigma_i^{(t+1)}(u),z)) : z\in V_{H,j}^{(t)} \,}.\]
    This is because when $j\neq b$, one has that $\chi_{\psi_G}^{(t)}(w)\neq \chi_{\psi_H}^{(t)}(z)$ for $w\in V_{G,j}^{(t)}$ and $z\in V_{H,b}^{(t)}$.


    Therefore,
    \begin{align*}
        &\multiset{ (\psi_G(u,w), \chi^{(t)}_{\psi_G}(u),\chi^{(t)}_{\psi_G}(w)):\,w\in V_{G,j}^{(t)} }\\
        =&\multiset{ (\psi_H(\sigma_i^{(t+1)}(u),z), \chi^{(t)}_{\psi_H}(\sigma_i^{(t+1)}(u)),\chi^{(t)}_{\psi_H}(z)):\,z\in V_{H,j}^{(t)} }.
    \end{align*}
    Therefore, by \Cref{lem:wl-equals-2-wl-lemma}, one has that
    \[\multiset{ \chi_{2,\psi_G}^{(t)}(u,w):\,w\in V_{G,j}^{(t)} }=\multiset{ \chi_{2,\psi_H}^{(t)}(\sigma_i^{(t+1)}(u),z):\,z\in V_{H,j}^{(t)} }.\]
    Since $\sigma_i^{(t+1)}$ is bijective, we conclude that
    \[
        \multiset{ \chi_{2,\psi_G}^{(t)}(u,v):\,u\in V_{G,i}^{(t+1)},v\in V_{G,j}^{(t)} }=\multiset{ \chi_{2,\psi_H}^{(t)}(x,y):\,x\in V_{H,i}^{(t+1)},y\in V_{H,j}^{(t)} }.
    \]
    Taking the multiset union over $1\leq i\leq k^{(t+1)}$ and $1\leq j\leq k^{(t)}$, this implies that $\multiset{ \chi_{2,\psi_{G}}^{(t)}(u,v) : u,v\in V_G } = \multiset{ \chi_{2,\psi_{H}}^{(t)}(x,y) : x,y\in V_H }$.
    \par
    Conversely, we will show that for $t\geq 0$, if $\multiset{ \chi_{2,\psi_G}^{(t)}(u,v):u,v\in V_G }=\multiset{ \chi_{2,\psi_H}^{(t)}(x,y):x,y\in V_H }$, then $\multiset{ \chi^{(t)}_{\psi_G}(u):u\in V_G }=\multiset{ \chi^{(t)}_{\psi_H}(x):x\in V_H }$.
    \par
    For the base case of $t=0$, we have that $\multiset{ (X_G(u),X_G(v),\psi_G(u,v)):u,v\in V_G }=\multiset{ (X_H(x),X_H(y),\psi_H(x,y)):x,y\in V_H }$. Therefore, $\multiset{ (X_G(u),X_G(v)):u,v\in V_G }=\multiset{ (X_H(x),X_H(y)):x,y\in V_H }$ and hence $\multiset{ X_G(u)=:u\in V_G }^2=\multiset{ X_H(x):x\in V_H }^2$. Now we apply the following fact:
    \begin{fact}\label{fact:multiset-square}
        Let $A$ and $B$ be two finite multisets. If $A^2=B^2$, then $A=B$.
    \end{fact}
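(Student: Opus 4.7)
The plan is to reconstruct the multiset $A$ from its square $A^2$ by examining only the diagonal pairs $(x,x)$. The key observation is that if $m_A(x)$ denotes the multiplicity of $x$ in $A$, then for any $x$ the multiplicity of the pair $(x,x)$ in the multiset product $A^2 = \multiset{(a,a') : a,a' \in A}$ equals $m_A(x)^2$, while for $x\neq y$ the multiplicity of $(x,y)$ in $A^2$ is $m_A(x)\cdot m_A(y)$. In particular, $x$ lies in the support of $A$ if and only if $(x,x)$ appears in $A^2$, and the multiplicity can then be recovered as $m_A(x) = \sqrt{m_{A^2}((x,x))}$.

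Applying the same identity to $B$, the hypothesis $A^2 = B^2$ yields, for every element $x$,
\[
    m_A(x) = \sqrt{m_{A^2}((x,x))} = \sqrt{m_{B^2}((x,x))} = m_B(x),
\]
which immediately gives $A = B$ as multisets.

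I do not anticipate any real obstacle. The argument is essentially a counting identity, and the only step that warrants a moment of care is that the square-root operation is well-defined; this is automatic because, by the diagonal-pair identity above, the multiplicity of any $(x,x)$ in $A^2$ is always a perfect square. One could alternatively prove the statement by induction on $|A|$, picking a distinguished element (say maximum under an ordering), matching it across $A$ and $B$, and recursing on the remaining sub-multisets; however, the diagonal-reconstruction argument is the most direct and avoids any case analysis.
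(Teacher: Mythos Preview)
Your proof is correct and follows essentially the same approach as the paper: both arguments examine the diagonal pairs $(x,x)$ in $A^2$, note that their multiplicity equals $m_A(x)^2$, and conclude from $A^2=B^2$ that $m_A(x)=m_B(x)$ for every $x$. The paper phrases this by setting $j=i$ in the identity $m_{A^2}((c_i,c_j))=r^A_i r^A_j$, but the content is identical.
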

    Then, we have that $\multiset{ \chi^{(0)}_{\psi_G}(u)=X_G(u):u\in V_G }=\multiset{ \chi^{(0)}_{\psi_H}(x)=X_H(x):x\in V_H }$.

    Now assume that for $k\geq 1$, we have that
    \[
        \multiset{ \chi_{2,\psi_G}^{(t)}(u,v):u,v\in V_G }=\multiset{ \chi_{2,\psi_H}^{(t)}(x,y):x,y\in V_H }.
    \]
    \Cref{lem:wl-equals-2-wl-lemma} implies that
    \[\multiset{ (\psi_G(u,v),  \chi^{(t)}_{\psi_G}(u),\chi^{(t)}_{\psi_G}(v)):u,v\in V_G }=\multiset{ (\psi_H(x,y), \chi^{(t)}_{\psi_H}(x),\chi^{(t)}_{\psi_H}(y)):x,y\in V_H }.\]
    Dropping the $\phi$ term, one has that
    \[\multiset{ (\chi^{(t)}_{\psi_G}(u),\chi^{(t)}_{\psi_G}(v)):u,v\in V_G }=\multiset{ (\chi^{(t)}_{\psi_H}(x),\chi^{(t)}_{\psi_H}(y)):x,y\in V_H }.\]
    This implies that the following Cartesian products are equal:
    \[\multiset{ \chi^{(t)}_{\psi_G}(u):u\in V_G }^2=\multiset{ \chi^{(t)}_{\psi_H}(x):x\in V_H }^2.\]

    Then, by \Cref{fact:multiset-square} again, we have that
    \[
        \multiset{ \chi^{(t)}_{\psi_G}(u):u\in V_G }=\multiset{ \chi^{(t)}_{\psi_H}(x):x\in V_H }.
    \]
    This means that $G$ and $H$ are indistinguishable by $\psi$-WL.

    We now end this proof by proving \Cref{fact:multiset-square}.
    \begin{proof}[Proof of \Cref{fact:multiset-square}]
        We let $C:=A\cup B$ denote the set of all distinct elements in $A$ and $B$. Let $C=\{c_1,\ldots,c_n\}$ and assume that $A$ contains $r^A_i$ many $c_i$ and $B$ contains $r^B_i$ many $c_i$ for each $i=1,\ldots,n$. Then, $A^2$ contains $r^A_i r^A_j$ many $(c_i,c_j)$ and $B^2$ contains $r^B_i r^B_j$ many $(c_i,c_j)$. Therefore, $A^2=B^2$ implies that $r^A_i r^A_j=r^B_i r^B_j$ for all $i,j=1,\ldots,n$. For each $i=1,\ldots,n$, we let $j=i$ and hence $(r^A_i)^2 =(r^B_i)^2$. Since we are dealing with nonnegative integers, we must have that $r^A_i=r^B_i$. Therefore, $A=B$.
    \end{proof}
\end{proof}

\subsection{RPE-2-WL vs 2-EGN}\label{sec:egn}

Equivariant or invariant graph networks (EGNs and IGNs)~\cite{maron2019invariant} are another important type of graph neural networks. These models will be important when trying to understand the distinguishing power of positional encodings later.

\begin{definition}[EGNs]
    A \textbf{$k$\textsuperscript{th} order Equivariant Graph Network ($k$-EGN)} is a function $F:\R^{n^k\times d_0}\to\R^{n^s\times d_o}$ for $s\leq k$ of the form
    \[
        F= h\circ m\circ L^{(T)}\circ\sigma\circ\cdots\circ\sigma\circ L^{(1)}
    \]
    where $h$ and $L^{(t)}:\R^{n^{k_{t-1}}\times d_{t-1}}\to \R^{n^{k_t}\times d_t}$ are equivariant linear layers for each $t=1,\ldots,T$ such that $k_t\leq k$, $k_0=k$ and $k_L=s$, $\sigma$ is an activation function, and $m$ is a pointwise MLP. When $s=0$, the resulting $k$-EGN is also called a \textbf{$k$\textsuperscript{th} order Invariant Graph Network ($k$-IGN)}.
\end{definition}

\textbf{Note}: Some implementations of 2-IGNs and 2-EGNs (e.g.~\citep{maron2019invariant}) stack the adjacency matrix to the input $W$. In this paper, unless otherwise stated, the adjacency matrix is \textbf{not} stacked to the input.
\par
EGNs are permutation equivariant in the sense that permutation of the order of input tensors will result in a corresponding permutation of the order of output tensors. Likewise, IGNs are permutation invariant as any permutation of the input tensors results in the same output.
\par
We are interested in 2-EGNs and 2-IGNs since they are closely related to the 2-WL graph isomorphism test. Let $\psi$ be an RPE valued in $\R^k$ and let $g:\R^{n^2\times k}\to\R^{n\times l}$ be any 2-EGN. Recall that for any graph $G$ with $n$ vertices, the RPE $\psi_G$ can be represented as a tensor in $\R^{n^2\times k}$. If $X_G\in\R^{n\times l}$ are node features for $G$, let $\row(X_G),\col(X_G)\in\R^{n^2\times l}$ be the tensors where each row or column respectively are the node feature $X_G$, e.g. $\row(X_G)[i,:,:]= X_G$ and $\col(X_G)[:,j,:] = X_G$ for all $1\leq i,j\leq n$.
\par
We now show that 2-EGNs with the input $[\psi_G,\row(X_G),\col(X_G)]\in\R^{n^2\times(k+2l)}$ have the same distinguishing powers as the $\psi$-2-WL test.

\begin{restatable}[Equivalence of RPE-2-WL and 2-EGN]{proposition}{twowltwoign}\label{thm:2-WL-2-IGN}
    Let $\psi$ be a \emph{diagonally-aware} RPE valued in $\R^k$. Let $\mathcal{G}$ be any finite set of featured graphs. Then there exists a 2-EGN $g:\R^{n^2\times k}\to\R^{n\times l}$ such that any $(G,X_G),(H,X_H)\in \mathcal{G}$ can be distinguished by $g$ with the input $[\psi,\row(X),\col(X)]$ iff $G$ and $H$ can be distinguished by the $\psi$-2-WL test.
\end{restatable}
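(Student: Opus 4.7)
The plan is to mirror the classical equivalence between 2-IGN expressivity and 2-WL refinement (e.g., the analyses of~\citet{maron2019provably} and~\citet{chen2020can}), adapted to the RPE setting where the 2-EGN input tensor is $[\psi,\row(X),\col(X)]\in\R^{n^2\times(k+2l)}$ and the output lives in $\R^{n\times l}$. I will prove both containments separately.

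For the direction ``2-EGN is at most as discriminating as $\psi$-2-WL,'' I would argue by induction on depth that every intermediate tensor produced by $g$ is refined by the $\psi$-2-WL coloring: whenever two pairs $(u,v)\in V_G^2$ and $(x,y)\in V_H^2$ receive the same $\psi$-2-WL color at iteration $t$, they also receive identical intermediate features after $t$ layers of $g$. The induction step goes through because every equivariant linear layer on second-order tensors decomposes into a constant-size basis of standard aggregations (row-sum, column-sum, transpose, diagonal extraction, pointwise identity), and each of these is refined by the $\psi$-2-WL update rule. Diagonal-awareness of $\psi$ enters here to ensure that the input tensor already encodes $1_{u=v}$, matching the information the equivariant basis can extract by separating diagonal and off-diagonal entries. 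The final equivariant reduction from $\R^{n^2\times d}$ to $\R^{n\times l}$ then depends only on the $\psi$-2-WL coloring, so equal colorings on $G$ and $H$ force equal output multisets.

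For the converse direction, I would construct $g$ so as to simulate $\psi$-2-WL for enough steps. Since $\mathcal{G}$ is finite, there is a uniform $T$ after which $\psi$-2-WL stabilizes, and the set of colors ever produced across $\mathcal{G}$ is finite. A single $\psi$-2-WL update can be realized by one 2-EGN block that computes row-sum and column-sum of a pointwise MLP applied to the current tensor and aggregates these with the previous tensor via another MLP; by universal approximation over the finite color set, these MLPs can be chosen to act as injective multiset hashes, exactly as in the Deep Sets / GIN argument. Stacking $T$ such blocks yields a tensor $Z_T$ whose entries are in bijection with the $T$-th $\psi$-2-WL colors. A final equivariant reduction computes node features $Y_u := \sum_v \phi(Z_T[u,v])$ for a pointwise MLP $\phi$.

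The main obstacle is the final reduction: translating a difference in multisets of pair-colors into a difference in multisets of node features. I would use the observation that the multiset of all $n^2$ pair-colors is precisely the multiset-sum of the $n$ per-row multisets; hence if the two pair-color multisets differ, the two multisets of row-multisets must also differ. Choosing $\phi$ to injectively hash the finite family of row-multisets that can occur across $\mathcal{G}$ (again available by universal approximation on a finite domain) then guarantees distinct node-feature multisets on $G$ and $H$, which completes the construction.
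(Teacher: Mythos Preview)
Your proposal is correct and follows essentially the same route as the paper: both directions are obtained by adapting the classical equivalences of~\citet{maron2019provably} and~\citet{chen2020can} to the RPE-initialized setting, with diagonal-awareness supplying the ``$1_{u=v}$'' information needed for the upper bound and a layerwise simulation of $\psi$-2-WL giving the lower bound. The one place you are more explicit than the paper is the final reduction from pair-colors to node-features: the paper simply replaces Maron et al.'s global sum by an equivariant row-sum, whereas you spell out why row-summing an injective pointwise hash still separates graphs whenever the pair-color multisets differ (via the observation that equal multisets of row-multisets force equal multisets of all $n^2$ pair-colors). That extra justification is a welcome addition but does not change the underlying strategy.
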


This result is a generalization of existing results showing that 2-IGNs of the form $g:\R^{n^2\times k}\to\R^m$ are equivalent to the 2-WL test~\citep{maron2019provably, chen2020can}.

\begin{proof}[Proof of \Cref{thm:2-WL-2-IGN}]

We will prove the forward and backward direction of this statement separately in the following lemmas. However, we emphasize that only~\Cref{lem:2_wl_stronger_than_2_IGN} needs the assumption of diagonal-awareness. Moreover, we emphasize that the ``proofs'' of both of these lemmas are just slightly adapting existing results connecting 2-IGNs to the 2-WL test.

\begin{lemma}[2-EGNs are stronger than RPE-2-WL]
\label{lem:2_ign_stronger_than_2_wl}
Let $\psi$ be an RPE valued in $\R^k$.
Let $\mathcal{G}$ be a finite set of graphs. Then, there exists a 2-EGN $g:\R^{n^2\times (k+2l)}\to\R^{n\times m}$ such that if any $(G,X_G),(H,X_H)\in\mathcal{G}$ can be distinguished by $\psi$-2-WL test then $\multiset{g([\psi_G,\row(X_G),\col(X_G)])(u):u\in V_G}\neq\multiset{g([\psi_H,\row(X_H),\col(X_H)])(v):v\in V_H}$.
\end{lemma}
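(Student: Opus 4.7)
My plan is to reduce to the standard equivalence between 2-IGNs and the 2-WL test on finite graph families (as established by Maron et al. 2019 and Chen et al. 2020), and then upgrade the invariant output of the resulting 2-IGN to an equivariant $\R^{n \times m}$ output by a single broadcasting layer. The first step is to observe that the $(i,j)$-entry of the input tensor $[\psi_G, \row(X_G), \col(X_G)] \in \R^{n^2 \times (k+2l)}$ is exactly the triple $(\psi_G(i,j), X_G(i), X_G(j))$, which matches the initial $\psi$-2-WL color $\chi^{(0)}_{2,\psi_G}(i,j)$ up to a reordering of coordinates. Hence the 2-WL refinement is supplied with all the information it needs from the input alone.

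Next, I would invoke the standard color-refinement simulation. Since $\mathcal{G}$ is finite, only finitely many colors arise across all graphs and all iterations, and the procedure stabilizes in some bounded number of rounds $T$. Each round of $\psi$-2-WL can be simulated by a constant number of 2-to-2 equivariant linear layers composed with a pointwise MLP: the linear layers implement row and column aggregation across the two indices, each being one of the basis elements of the 2-to-2 equivariant linear maps of Maron et al., so that from the tensor holding the current pair-colors one can produce a tensor whose $(i,j)$ entry contains sum-pooled encodings of $\multiset{\chi^{(t)}_{2,\psi_G}(i,w) : w \in V_G}$ and $\multiset{\chi^{(t)}_{2,\psi_G}(w,j) : w \in V_G}$; a pointwise MLP then injectively hashes the triple (current color, row-aggregate, column-aggregate) into $\chi^{(t+1)}_{2,\psi_G}(i,j)$. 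Finiteness of $\mathcal{G}$ is exactly what lets us realize the multiset-injective encoding (via sum pooling of one-hot features, in the DeepSets style for bounded multisets) and the tuple-injective hash by finite-width MLPs whose weights are shared across all of $\mathcal{G}$.

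After $T$ such blocks, a final invariant 2-to-0 linear layer followed by a pointwise MLP yields a 2-IGN $g_1 : \R^{n^2 \times (k+2l)} \to \R^m$ that separates any two featured graphs in $\mathcal{G}$ distinguishable by $\psi$-2-WL; this is exactly the content of the cited 2-IGN/2-WL equivalence. I would then extend $g_1$ to the required 2-EGN by appending the equivariant broadcasting map $\R^m \ni v \mapsto \mathbb{1}_n v^T \in \R^{n \times m}$, which is one of the basis elements of the 0-to-1 equivariant linear maps. The resulting $g$ outputs a tensor in which every row equals $g_1$, so if $g_1(G,X_G) \neq g_1(H,X_H)$ then the two multisets $\multiset{g(G,X_G)(u) : u \in V_G}$ and $\multiset{g(H,X_H)(v) : v \in V_H}$ differ, either because their cardinalities $|V_G|, |V_H|$ differ or because they consist of distinct repeated values.

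The main obstacle is the second step: being precise enough about the 2-to-2 linear layers and MLP hashes so that one full step of $\psi$-2-WL is simulated by a single 2-EGN block whose weights work uniformly across all of $\mathcal{G}$. The existing proofs in Maron et al. 2019 and Chen et al. 2020 carry this out for $k$-WL starting from the adjacency tensor; the adaptation needed here is essentially syntactic because our input already encodes $\psi$ together with the node features at every pair, and the only update mechanism used by $\psi$-2-WL is the same pair of row/column multiset aggregations that the existing argument already handles. Note that diagonal-awareness of $\psi$ is not used anywhere in the argument, in agreement with the remark preceding the lemma.
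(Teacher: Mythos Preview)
Your proposal is correct and follows essentially the same approach as the paper: both reduce to the Maron et al.\ simulation of 2-WL by 2-EGNs, observe that the input tensor $[\psi_G,\row(X_G),\col(X_G)]$ already encodes the $\psi$-2-WL initial coloring so the initialization step can be skipped, and then adapt the final aggregation to produce an $\R^{n\times m}$ output. The only cosmetic difference is in that last step---the paper applies an equivariant row-sum to land directly in $\R^{n\times l}$, whereas you first collapse to a graph-level invariant in $\R^m$ and then broadcast back up; both are valid equivariant choices and yield the same conclusion.
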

\begin{proof}
    This follows from the proof by~\citet[Theorem 1]{maron2019provably} showing that there exists a $2$-EGN that can distinguish any two graphs from a finite set of graphs well as the classical 2-WL algorithm. (While their statement of Theorem 1 is only for two graphs, their proof explains why this can be generalized to a finite set of graphs.) This lemma shows how this result can be generalized to the RPE-2-WL test.
    \par
    Their proof is broken into two steps. First, the authors show that given a tensor $W\in\R^{n^{2}\times(e+1)}$ of a diagonal feature matrix stacked with the adjacency matrix, there is a 2-equivariant linear layer $f:\R^{n^{2}\times(e+1)}\to \R^{n^{2}\times 4\times(e+2)}$ such that, for each pair $(u,v)$, the vector $f(W)[u,v,:]$ corresponds to the isomorphism type of $(u,v)$, which is the initial color of $(u,v)$ in the 2-WL test. Second, they show that there is a $d$ layers 2-EGN $f=L^{(d)}\circ\sigma\circ\cdots\circ\sigma\circ L^{(1)}:\R^{n^{2}\times k}\to \R^{n^2\times l}$ that can implement $d$ iterations of the 2-WL algorithm; that is to say, $f(A_G)(u,v) = f(A_H)(u,v)$ iff $\wlcolor{ }{d}(u,v) = \wlcolor{ }{d}(x,y)$. Finally, they show that there exists an MLP $m$, together with a final layer $h$ of summing over all entries, such that $g=h\circ m\circ f$ implements the 2-WL test, i.e. $g(A_G) = g(A_H)$ iff $\chi(G) = \chi(H)$.
    \par
    The difference between the 2-WL test and the RPE-2-WL test lies in the initialization step. In the classical 2-WL test, a pair $(u,v)$ is colored with its isomorphism type, while in the RPE-2-WL test, the initial color is $(X_G(u), X_G(v), \psi_G(u,v))$. If the input to the 2-EGN is the RPE matrix concatenated with the node features (i.e. the initial coloring), we can skip the initialization step and just perform the update steps of the RPE-2-WL through a 2-EGN $f=L^{(d)}\circ\sigma\circ\cdots\circ\sigma\circ L^{(1)}:\R^{n^{2}\times k}\to \R^{n^2\times l}$. Finally, instead of taking the sum of all entries, we can apply the operation of row sum (which is equivariant) to the output of $f$ to obtain our target 2-EGN $g:\R^{n^2\times k}\to\R^{n\times l}$.
\end{proof}

\begin{lemma}[RPE-2-WL is stronger than 2-EGNs]
\label{lem:2_wl_stronger_than_2_IGN}
Let $\psi$ be a \emph{diagonally-aware} RPE. Let $(G,X_G)$ and $(H,X_H)$ be graphs. If $G$ and $H$ cannot be distinguished by $\psi$-2-WL, then $(G,X_G)$ and $(H,X_H)$ cannot be distinguished by any 2-EGN with input $[\psi,\row(X),\col(X)]$.
\end{lemma}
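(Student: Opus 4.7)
The plan is to adapt the standard argument of~\citet{maron2019provably, chen2020can} that 2-IGNs are upper-bounded in distinguishing power by the 2-WL test, carrying over the adaptation to our RPE-enriched input $[\psi, \row(X), \col(X)]$ and the corresponding $\psi$-2-WL coloring. The driving invariant, maintained by induction on the layer index $t$ of the 2-EGN, is: after $t$ equivariant linear layers and pointwise nonlinearities, whenever $(u,v) \in V_G \times V_G$ and $(x,y) \in V_H \times V_H$ satisfy $\chi^{(t)}_{2,\psi_G}(u,v) = \chi^{(t)}_{2,\psi_H}(x,y)$, the intermediate tensors agree, $W^{(t)}_G[u,v,:] = W^{(t)}_H[x,y,:]$. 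The base case $t=0$ is immediate since the input entry at $(u,v)$ is $(\psi_G(u,v), X_G(u), X_G(v))$, which coincides with $\chi^{(0)}_{2,\psi_G}(u,v)$ up to coordinate reordering.

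For the inductive step, I would invoke the basis characterization of equivariant linear maps $\R^{n^2 \times d} \to \R^{n^2 \times d'}$ from~\citet{maron2019invariant}: every such map is a linear combination of fifteen basis operations, and each one evaluates at the output coordinate $(u,v)$ a quantity determined solely by (i) the entries $W[u,v]$, $W[v,u]$, $W[u,u]$, $W[v,v]$; (ii) the row sums $\sum_w W[u,w]$, $\sum_w W[v,w]$ and the analogous column sums; (iii) the diagonal sum $\sum_w W[w,w]$; or (iv) the total sum $\sum_{w_1,w_2} W[w_1,w_2]$. By the inductive hypothesis, each of these quantities is determined by the multisets $\multiset{\chi^{(t)}_{2,\psi}(u,w) : w \in V}$ and $\multiset{\chi^{(t)}_{2,\psi}(w,v) : w \in V}$ together with the entry $\chi^{(t)}_{2,\psi}(u,v)$, all of which are captured by $\chi^{(t+1)}_{2,\psi}(u,v)$ via its update rule. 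Diagonal-awareness of $\psi$ enters here crucially: it guarantees that diagonal pairs $(u,u)$ have a distinct initial color from all off-diagonal pairs, so the 2-WL coloring can separate and track the diagonal entries that basis operations such as trace extraction or diagonal-to-row broadcasting access. Pointwise MLPs and nonlinearities then preserve the invariant trivially.

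The final equivariant layer reduces order from two to one, mapping $\R^{n^2 \times d_T} \to \R^{n \times d'}$. Its basis operations output, at node $u$, functions of the multisets $\multiset{W^{(T)}_G[u,v,:] : v \in V_G}$ and $\multiset{W^{(T)}_G[v,u,:] : v \in V_G}$ and of the diagonal value $W^{(T)}_G[u,u,:]$; the maintained invariant then shows these outputs are determined by the $\psi$-2-WL ``view'' from $u$, and hence are refined by the multiset $\multiset{\chi^{(T)}_{2,\psi_G}(u,v) : v \in V_G}$. Consequently, if $(G,X_G)$ and $(H,X_H)$ are $\psi$-2-WL indistinguishable, the resulting node-feature tensors have equal multisets of rows, which is exactly indistinguishability by the 2-EGN.

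The main obstacle is the careful bookkeeping of how diagonal information propagates through 2-EGN layers. Without diagonal-awareness of $\psi$, an equivariant layer could exploit the trace or diagonal basis operations to extract information invisible to $\psi$-2-WL, breaking the invariant already at layer $0$. The diagonal-awareness assumption is precisely what synchronizes the diagonal access of 2-EGNs with the 2-WL update, so that the standard Maron-style inductive argument goes through verbatim in the RPE-augmented setting.
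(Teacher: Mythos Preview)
Your proposal is correct and takes essentially the same approach as the paper. The paper's own proof is a one-paragraph deferral to \citet[Lemma~4.2]{chen2020can}, noting that the only place their argument uses structure of the initial coloring is the diagonal-awareness property \citep[Lemma~D.1]{chen2020can}; you spell out that same layer-by-layer inductive argument over the fifteen-basis decomposition of equivariant linear maps, and you correctly isolate diagonal-awareness as precisely what lets $\psi$-2-WL track the diagonal-sensitive basis operations.
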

\begin{proof}
    This follows directly from the proof showing that if two graphs cannot be distinguished by the 2-WL, they cannot be distinguished by 2-EGNs~\citep[Lemma 4.2]{chen2020can}. The only reason we cannot apply this proof directly is that it relies on the fact that the initial colors to the 2-WL---and therefore all subsequent colors---are diagonally-aware~\citep[Lemma D.1]{chen2020can}, which is not necessarily the case for the initial colors of RPE-2-WL for general RPEs. However, if the RPE is diagonally-aware, then our lemma can be proven following the proof of \citet[Lemma 4.2]{chen2020can}.
\end{proof}

\end{proof}

\subparagraph{Importance of diagonal-awareness.}
In the proof of \Cref{thm:2-WL-2-IGN} (specifically in \Cref{lem:2_wl_stronger_than_2_IGN}), we show that RPE-2-WL is at least as strong as 2-EGNs if the RPE is diagonally-aware.  We emphasize that if we drop the assumption of diagonal-awareness, the lemma does not hold. Therefore, 2-EGN can be strictly stronger than th RPE-2-WL test for non-diagonally-aware RPEs, as shown by the following simple example.
\par
Consider the two graphs on two vertices $G$ and $H$ where $G$ has an edge and $H$ does not. (In this example, the graphs have no node features.) Consider the (artificial) RPE $\psi$ that assigns
$$
\psi_{G} = \begin{bmatrix}
    1 & 0 \\
    0 & 1
\end{bmatrix},\,
\psi_{H} = \begin{bmatrix}
    0 & 1 \\
    1 & 0
\end{bmatrix}.
$$
Note that $\psi$ is \textit{not} diagonally-aware, as the values 0 and 1 appear on both the diagonal and off-diagonal elements.
\par
The $\psi$-2-WL colorings for $t=0$ are
$$
\chi_{2,\psi}^{(0)}(G) = \begin{bmatrix}
    1 & 0 \\
    0 & 1
\end{bmatrix},\,
\chi_{2,\psi}^{(0)}(H) = \begin{bmatrix}
    0 & 1 \\
    1 & 0
\end{bmatrix}.
$$
These are, in fact, already stable $\psi$-2-WL colorings, as another iteration of $\psi$-2-WL gives the same partition. This means further iterations of $\psi$-2-WL won't be able to distinguish these two graphs. For example,
$$
\wlcolor{2,\psi}{1}(G)=
\begin{bmatrix}
    (1, (\multiset{0,1},  \multiset{0,1})) & (0, (\multiset{0,1},\multiset{0,1})) \\
    (0, (\multiset{0,1},  \multiset{0,1})) & (1, (\multiset{0,1},  \multiset{0,1}))
\end{bmatrix}
$$
$$
\wlcolor{2,\psi}{1}(H) =
\begin{bmatrix}
    (0, (\multiset{0,1},  \multiset{0,1})) & (1, (\multiset{0,1},\multiset{0,1})) \\
    (1, (\multiset{0,1},  \multiset{0,1})) & (0, (\multiset{0,1},  \multiset{0,1}))
\end{bmatrix},\,
$$
As $\wlcolor{2,\psi}{0}(G)$ and $\wlcolor{2,\psi}{0}(H)$ have the same multiset of colors $\multiset{0,0,1,1}$, they are indistinguishable by $\psi$-2-WL. However, there is a 2-IGN that distinguishes these graphs. Specifically, one invariant linear matrix is $B:\R^{n\times n}\to\R$ that maps $B(W) = \sum_{v\in V} W_{vv}$. (In the language of the original IGN paper~\citep{maron2019invariant}, this corresponds to the partition $\gamma=\multiset{1,2}$.) Then $B(\psi_{G}) = 2\neq B(\psi_{H})=0$, so this 2-IGN distinguishes $\psi_{G}$ and $\psi_{H}$.

\subsection{Equivalence of APEs and APE-GTs: Proof of \Cref{lem:APE_transformer_equal_APE}}
\label{sec:ape_ape_gt}

\apeapegt*

\begin{proof}
    Let $T$ be any transformer, and let $(G,X_G)$ and $(H,X_H)$ indistinguishable by $\phi$. The initial input to $T$ are $\hat{X_{G}} = [X_G|\phi_{G}]$ and $\hat{X}_{H} = [X_H|\phi_{H}]$. As $G$ and $H$ are indistinguishable by $\phi$, then there is a permutation $P$ such that $P\hat{X}_G = \hat{X}_H$; however, as transformers are permutation-equivariant (\Cref{thm:transformers_permutation_equivariant}), then $T(\hat{X}_H) = T(P\hat{X}_G) = PT(\hat{X}_G)$, so the theorem follows.
    \par
    Now we prove the inverse. Consider the transformer $T$ with all weight matrices set to the zero matrix. The transformer $T$ is exactly the identity map. Thus, if $G$ and $H$ are distinguishable by $\phi$, then they are distinguishable by $T$.
\end{proof}

\subsection{Isomorphic implies WL Indistinguishable: Proof of \Cref{prop:isomorphic_implies_same_wl_colors}}\label{sec:proof-isomorphic-implies-same-wl-colors}

\isomorphicimpliessamewlcolors*

\begin{proof}
    Let $(G,X_G)$ and $(H,X_H)$ be feature isomorphic graphs connected by isomorphism $\sigma:V_G\to V_H$. Let $\psi$ be an RPE. We will prove the stronger statement that $\wlcolor{\psi_G}{t}(u)=\wlcolor{\psi_H}{t}(\sigma(u))$ for all $v\in V_G$ and $t\geq 0$. For the base case of $t=0$, this is trivial as $\wlcolor{\psi_G}{0}(u) = X_G(u) = X_H(\sigma(u)) = \wlcolor{\psi_H}{0}(\sigma(u))$ for all $v\in V_G$ by the definition of feature isomorphism.
    \par
    Now suppose the statement is true for $t$, and we will show it is true for $t+1$. Let $u\in V_G$. Consider the colors $\wlcolor{\psi_G}{t+1}(u) = (\wlcolor{\psi_G}{t}(u), \multiset{(\wlcolor{\psi_G}{t}(v), \psi_{G}(u,v)) : v\in V_G})$ and $\wlcolor{\psi_H}{t+1}(\sigma(u)) = (\wlcolor{\psi_H}{t}(\sigma(u)), \multiset{(\wlcolor{\psi_H}{t}(v), \psi_H(\sigma(u), v)) : v\in V_H})$. By induction, we know that the first coordinates $\wlcolor{\psi_G}{t}(u) = \wlcolor{\psi_H}{t}(\sigma(u))$. Moreover, by the inductive hypothesis and the definition of RPE, we know that $(\wlcolor{\psi_G}{t}(v), \psi_{G}(u,v)) = (\wlcolor{\psi_H}{t}(\sigma(v)), \psi_{H}(\sigma(u),\sigma(v)))$, which implies the second coordinates are equals. Therefore, the theorem holds.
    \par
    A similar argument shows that $\wlcolor{2,\psi_G}{t}(u,v)=\wlcolor{2,\psi_H}{t}(\sigma(u),\sigma(v))$ for all $u,v\in V_G$ and $t\geq 0$.
\end{proof}


\subsection{Equivalence of RPE-augWL and RPE-GTs: Proof of \Cref{lem:RPE_transformers_equal_w_WL}}\label{sec:proof-RPE-transformers-equal-w-WL}

\rpetransformersequalwl*
\begin{proof}
     We will denote the input at the $l$-th layer of the RPE-GT for graph $G$ as $X_G^{(l)}\in\R^{|V_G|\times\ell}$, and for each node $u\in V_G$, its feature is denoted by $X_G^{(l)}(u)$. Note $X_G^{(0)}(u)=X_G(u)$. We adopt similar notations for $H$.
    \par
    For the forward direction, it suffices to show that if $G$ and $H$ are indistinguishable by the $\psi$-WL test, they are indistinguishable by any $\psi$-RPE-GT. We will prove the stronger statement that if the multiset of colors $\multiset{\wlcolor{\psi_G}{t}(u) : v\in V_G} = \multiset{\wlcolor{\psi_H}{t}(x) : x\in V_H}$, then any bijection $\sigma^{(t)}:V_G\to V_H$ such that $\wlcolor{\psi_G}{t}(u)=\wlcolor{\psi_H}{t}(\sigma^{(t)}(u))$ will satisfy the following property: $X_G^{(t)}(u)=X_H^{(t)}(\sigma^{(t)}(u))$ for all $u\in V_G$, and hence $\multiset{X_G^{(t)}(u) : u\in V_G}=\multiset{X_H^{(t)}(x) : x\in V_H}$. We will prove this by induction on $t$.
    \par
    The key observation is that the only part of the transformer architecture that is not applied separately to the node features $X^{(t)}(u)$ is the multiplication by the attention matrix $A^{(l,h)}(X^{(l)})X^{(l)}$, meaning for all other parts of the transformer, if the inputs $X_G^{(t)}(u)=X_H^{(t)}(x)$ are equal, then the outputs {at $u$ and $x$} are equal.
    \par
    For the base case of $t=0$, the statement holds as the initial node features equals the $\psi$-WL colors for $t=0$, i.e.~ $\multiset{\wlcolor{\psi_G}{0}(v):v\in V_G} = \multiset{X_G^{(0)}(v) : v\in V_G}$.
    \par
    Now suppose the theorem inductively holds for some $t$. Assume that $\multiset{\wlcolor{\psi_G}{t+1}(u) : v\in V_G} = \multiset{\wlcolor{\psi_H}{t+1}(x) : x\in V_H}$ and let $\sigma^{(t+1)}:V_G\to V_H$ denote any bijection such that $\wlcolor{\psi_G}{t+1}(u)=\wlcolor{\psi_H}{t+1}(\sigma^{(t+1)}(u))$ for any $u\in V_G$. By the definition of $\chi_{\psi}$ and the inductive hypothesis, we have that $\multiset{\wlcolor{\psi_G}{t}(u) : v\in V_G} = \multiset{\wlcolor{\psi_H}{t}(x) : x\in V_H}$, $\wlcolor{\psi_G}{t}(u)=\wlcolor{\psi_H}{t+1}(\sigma^{(t)}(u))$ and $X_G^{(t)}(u)=X_H^{(t)}(\sigma^{(t+1)}(u))$ for all $u\in V_G$.
    \par
    Now for any $v\in V_G$ and $y\in V_H$ such that $X_G^{(t)}(v)=X_H^{(t)}(y)$, we have that
    \begin{equation}\label{eq:transformer equation}
        X_G^{(t)}(u)W_Q(X_G^{(t)}(v)W_K)^{T}=X_H^{(t)}(\sigma^{(t+1)}(u))W_Q(X_H^{(t)}(y)W_K)^{T}
    \end{equation}
    for any matrices $W_Q$ and $W_K$.

    Now, for any $u\in V_G$, $\wlcolor{\psi_G}{t+1}(u)=\wlcolor{\psi_H}{t+1}(\sigma^{(t+1)}(u))$ implies that
    $$\multiset{(\wlcolor{\psi_G}{t}(v), \psi_{G}(u,v)) : v\in V_G} = \multiset{(\wlcolor{\psi_H}{t}(y), \psi_{H}(\sigma^{(t+1)}(u),y)) : y\in V_H}.$$
    Hence,
    $$\multiset{(X_G^{(t)}(v), \psi_{G}(u,v)) : v\in V_G} = \multiset{(X_H^{(t)}(y), \psi_{H}(\sigma^{(t+1)}(u),y)) : y\in V_H}.$$
    Note that for any given value $a$,
    \begin{equation}\label{eq:multiset equation}
        \multiset{\psi_{G}(u,v) : v\in V_G, X_G^{(t)}(v)=a} = \multiset{\psi_{H}(\sigma^{(t+1)}(u),y) : y\in V_H, X_H^{(t)}(y)=a}.
    \end{equation}
    Then, we have that
    \begin{align*}
        &[A^{(t,h)}(X_G^{(t)})X_G^{(t)}](u) = \sum_vA^{(t,h)}(X_G^{(t)})(u,v)X_G^{(t)}(v)\\
        &=\sum_vf_1(\psi_G(u,v))\mathrm{softmax}(X_G^{(t)}(u)W_Q(X_G^{(t)}(\cdot)W_K)^{T}/\sqrt{d_h}+f_2(\psi_G(u,\cdot)))(v)X_G^{(t)}(v)\\
        &=\sum_yf_1(\psi_H(\sigma^{(t+1)}(u),y))\mathrm{softmax}(X_H^{(t)}(\sigma^{(t+1)}(u))W_Q(X_H^{(t)}(\cdot)W_K)^{T}/\sqrt{d_h}+f_2(\psi_H(\sigma^{(t+1)}(u),\cdot)))(y)X_H^{(t)}(y)\\
        &=[A^{(t,h)}(X_H^{(t)})X_H^{(t)}](\sigma^{(t+1)}(u)).
    \end{align*}
    Here the third equation follows from \Cref{eq:transformer equation} and \Cref{eq:multiset equation}.

    It is then easy to check that  $X^{(t+1)}_G(u) = X^{(t+1)}_H(\sigma^{(t+1)}(u))$ for any $u\in V_G$ and this concludes the proof.

    \par
    For the other direction,  the proof  follows from the one of \citet[Theorem 4]{zhang2023rethinking}; More precisely, \citet{zhang2023rethinking} are dealing with the case when the RPE $\psi$ is taking value from a finite set $D$ for graphs with bounded sizes. Furthermore, their RPE $\psi$ is a distance (or a stack of distance functions) so that $\psi_G(u,v)=0$ iff $u=v\in V_G$ for any graph $G$. In this way, elements in $D$ can be written as $0=r_0<r_1<\cdots r_H$. Note that, for one step of $\psi$-WL, the second argument can be decomposed by
    $$\multiset{(\chi^{(t)}_{\psi_G}(u),\, \psi_G(v,u)) : u\in V_G }=\bigcup_{r_i}\multiset{({\chi}^{(t)}_{\psi_G}(u),r_i):\, \psi_G(v,u)=r_i, u\in V_G }.$$
    \par
    Then, \citet{zhang2023rethinking} explicitly constructed $H+1$ different RPE attention heads $A_{\mathrm{RPE}}^{h}$ which can simulate $\multiset{({\chi}^{(t)}_{\psi_G}(u),r_i):\, \psi_G(v,u)=r_h,u\in V_G }$ in an injective manner for each $h=0,\ldots,H$. In this way, they argue that by concatenating these heads, one can simulate their GD-WL test where each step is updated by $\widehat{\chi}^{(t+1)}_{\psi_G}(v) = \multiset{(\widehat{\chi}^{(t)}_{\psi_G}(u),\, \psi_G(v,u)) : u\in V_G }$. But as mentioned already in \Cref{sec:distinguishing-power}, this GD-WL test is equivalent to the $\psi$-WL test when $\psi$ is a graph distance (or a stack of graph distances), so their one layer RPE transformer can simulate one step of $\psi$-WL in an injective manner.
    \par
    Recall that GD-WL is equivalent to $\psi$-WL due to the fact that $\psi$ takes value 0 on diagonal entries and $\psi$ takes non-zero values on off-diagonal entries. This is the reason why one can drop the first argument in $(\chi^{(t)}_{\psi_G}(v),\multiset{(\chi^{(t)}_{\psi_G}(u),\, \psi_G(v,u)) : u\in V })$ without losing distinguishing power. Hence, the same argument in the proof of \citep[Theorem 4]{zhang2023rethinking} can be applied to the setting of diagonally-aware RPEs due to the following simple result.

    \begin{lemma}
        When $\psi$ is a diagonally-aware RPE, then $\psi$-WL is equivalent to the following variant of $\psi$-WL:
        \begin{gather*}
            \widehat{\chi}_{\psi_G}^{(0)}(v) = X_G(v),\\
            \widehat{\chi}_{\psi_G}^{(t+1)}(v) = \multiset{(\widehat{\chi}^{(t)}_{\psi_G}(u),\, \psi_G(v,u)) : u\in V }.
            \end{gather*}
    \end{lemma}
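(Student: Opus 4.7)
The plan is to prove the stronger claim by induction on $t \geq 0$: for any vertices $v \in V_G$ and $v' \in V_H$, one has $\chi^{(t)}_{\psi_G}(v) = \chi^{(t)}_{\psi_H}(v')$ if and only if $\widehat{\chi}^{(t)}_{\psi_G}(v) = \widehat{\chi}^{(t)}_{\psi_H}(v')$. This vertex-level equivalence immediately implies equality of the corresponding multisets of colors on $V_G$ and $V_H$ at every iteration, so $(G,X_G)$ and $(H,X_H)$ are $\chi$-WL indistinguishable iff they are $\widehat{\chi}$-WL indistinguishable, which is exactly the lemma. The base case $t=0$ holds because both colorings equal the node features $X_G$ and $X_H$.

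For the inductive step, the key observation is that if the hypothesis holds at step $t$, then $\chi^{(t)}_{\psi_G}$ and $\widehat{\chi}^{(t)}_{\psi_G}$ (and analogously on $H$) induce the same partition on vertices, and this partition is consistent across $G$ and $H$. Hence one can freely substitute $\chi^{(t)}$ for $\widehat{\chi}^{(t)}$ (or vice versa) inside any multiset without affecting multiset equality. The forward direction, $\chi^{(t+1)}(v) = \chi^{(t+1)}(v')$ implies $\widehat{\chi}^{(t+1)}(v) = \widehat{\chi}^{(t+1)}(v')$, then follows by extracting the second component of $\chi^{(t+1)}$ and applying this substitution.

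The hard part will be the reverse direction, where I must recover the first component $\chi^{(t)}(v)$ of $\chi^{(t+1)}(v)$ from the multiset $\widehat{\chi}^{(t+1)}(v)$ alone. This is precisely where diagonal-awareness is essential. Within the multiset $\multiset{(\widehat{\chi}^{(t)}_{\psi_G}(u), \psi_G(v,u)) : u \in V_G}$, the entry with $u = v$ has second coordinate $\psi_G(v,v)$, which by diagonal-awareness applied with $H = G$ differs from $\psi_G(v,u)$ for every $u \neq v$; hence $(\widehat{\chi}^{(t)}_{\psi_G}(v), \psi_G(v,v))$ is the unique entry whose second coordinate is a diagonal value of $\psi_G$. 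Suppose this multiset equals $\multiset{(\widehat{\chi}^{(t)}_{\psi_H}(u'), \psi_H(v',u')) : u' \in V_H}$. By the same argument on the $H$-side, the matched element must have a second coordinate that is a diagonal value of $\psi_H$, and diagonal-awareness applied across $G$ and $H$ forces this matched element to be precisely $(\widehat{\chi}^{(t)}_{\psi_H}(v'), \psi_H(v',v'))$. Hence $\widehat{\chi}^{(t)}(v) = \widehat{\chi}^{(t)}(v')$, and by the inductive hypothesis $\chi^{(t)}(v) = \chi^{(t)}(v')$. Combined with the equality of the multiset components (rewritten via induction), this gives $\chi^{(t+1)}(v) = \chi^{(t+1)}(v')$, completing the induction.

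The main obstacle is thus localizing the ``self'' entry $u=v$ within an unordered multiset and matching it correctly between the two graphs; diagonal-awareness is the single hypothesis that enables both, since it ensures diagonal values are separated from off-diagonal values simultaneously within a single graph and across graphs. Without it, the multiset $\widehat{\chi}^{(t+1)}(v)$ could genuinely lose the information carried by $\chi^{(t)}(v)$, and the reverse direction would fail.
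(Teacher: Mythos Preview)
Your proposal is correct and follows exactly the approach the paper sketches: the paper does not give a formal proof of this lemma but observes that diagonal-awareness lets one identify the ``self'' entry $(\widehat{\chi}^{(t)}_{\psi_G}(v),\psi_G(v,v))$ inside the multiset, so the dropped first argument $\chi^{(t)}_{\psi_G}(v)$ can be recovered. Your inductive argument makes this precise in the natural way, and your use of diagonal-awareness both within $G$ and across $G,H$ to pin down the matched entry on the $H$-side is exactly the point.
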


    Now, to prove our theorem, we simply let $D:=\{\psi_G(u,v):u,v\in V_G\}\cup\{\psi_H(x,y):x,y\in V_H\}$ and apply the same argument as in the proof of \citep[Theorem 4]{zhang2023rethinking} to conclude the proof.
\end{proof}

\subsection{Mapping APEs to RPEs: Proof of \Cref{lem:APE-to-RPE-WL,thm:main-APE-to-RPE}}\label{sec:proof-main-APE-to-RPE}


We first recall a key lemma from \citet{xu2018powerful}. For any set $S$, we let $\mathrm{Mul}(S)$ denote the collection of all finite multi-subsets of $S$ and recall that $\mathrm{Mul}_2(S)$ denotes the collection of all multi-subsets of size 2 in $S$.

\begin{lemma}[{\citep[Lemma 5]{xu2018powerful}}]\label{lem:countable-for all multiset}
    If $S$ is countable, then there exists a function $f:S\to \R$ such that the map $h_f:\mathrm{Mul}(S)\to \R$ defined by sending $h_f(X)=\sum_{x\in X}f(x)$ is injective.
\end{lemma}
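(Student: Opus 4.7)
My plan is to construct $f$ so that $h_f(X)=\sum_{x\in X}f(x)$ injectively encodes the multiplicity vector of $X$. Since $S$ is countable, I first enumerate $S=\{s_1,s_2,\ldots\}$; any $X\in\mathrm{Mul}(S)$ is then determined by a finitely-supported multiplicity sequence $m_i(X)\in\mathbb{Z}_{\geq 0}$, and $h_f(X)=\sum_i m_i(X)\,f(s_i)$ is a finite $\mathbb{Z}_{\geq 0}$-linear combination of the values $f(s_i)$. Injectivity of $h_f$ thus reduces to choosing the real numbers $\{f(s_i)\}_{i\geq 1}$ to be $\mathbb{Q}$-linearly independent, since then any relation $\sum_i(m_i(X)-m_i(Y))\,f(s_i)=0$ with integer coefficients forces $m_i(X)=m_i(Y)$ for every $i$, whence $X=Y$.

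Concretely, I would pick any transcendental $\alpha\in\R$ (say $\alpha=\pi$) and set $f(s_i):=\alpha^i$. The family $\{\alpha^i\}_{i\geq 1}$ is $\mathbb{Q}$-linearly independent, because any nontrivial rational relation $\sum_i c_i\,\alpha^i=0$ would exhibit $\alpha$ as a root of a nonzero polynomial in $\mathbb{Q}[x]$, contradicting transcendence. An equally clean alternative is $f(s_i):=\log p_i$ with $p_i$ the $i$-th prime; here $\mathbb{Q}$-linear independence follows from unique prime factorization. Either choice produces an $f:S\to\R$ with the required injectivity property, and the verification that $h_f(X)=h_f(Y)$ implies $X=Y$ is then a one-line reduction to linear independence as above.

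The main subtlety, and the reason the tempting base-$N$ encoding $f(s_i)=N^{-i}$ does not suffice, is that multiplicities in $\mathrm{Mul}(S)$ are unbounded non-negative integers: any fixed $N$ eventually collides once some $m_i\geq N$, since $N\cdot N^{-i}=N^{-(i-1)}$ matches the contribution of a single copy of $s_{i-1}$. The $\mathbb{Q}$-linear independence trick sidesteps this uniformly, because a $\mathbb{Z}$-linear combination of a $\mathbb{Q}$-independent family is unique regardless of how large the integer coefficients become. A simpler base-$N$ construction would work if one restricted to multisets of bounded size, which is the regime in which this lemma is typically applied downstream (e.g.\ to APEs of graphs of bounded size), but the statement here is fully general.
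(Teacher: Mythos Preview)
Your proof is correct. The paper itself does not prove this lemma; it simply cites \cite{xu2018powerful} and moves on, so there is no in-paper argument to compare against directly.

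That said, your approach differs from the one in the cited source. Xu et al.\ assume multisets of \emph{bounded} size (at most $N$ elements), enumerate $S=\{s_1,s_2,\ldots\}$, and take $f(s_i)=N^{-i}$; injectivity then follows because the sum $\sum_i m_i N^{-i}$ is essentially a base-$N$ expansion with digits $m_i\in\{0,\ldots,N\}$. As you correctly point out in your last paragraph, that construction collapses once multiplicities are unbounded. Your $\mathbb{Q}$-linear-independence argument (via $\alpha^i$ for transcendental $\alpha$, or $\log p_i$) genuinely handles all finite multisets without any size bound, which is what the paper's statement of the lemma actually requires since $\mathrm{Mul}(S)$ is defined as all finite multi-subsets. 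So your proof is both correct and strictly more general than the cited one; the trade-off is that it appeals to transcendence or unique factorization rather than an elementary digit-reading argument.
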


As a direct consequence, we immediately have the following corollary.

\begin{corollary}\label{lem:countable}
    If $S$ is countable, then there exists a function $f:S\to \R$ such that the map $h_f:\mathrm{Mul}_2(S)\to \R$ defined by sending $h_f(\multiset{x,y})=f(x)+f(y)$ is injective.
\end{corollary}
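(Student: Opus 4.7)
The plan is to derive this immediately from Lemma 5 of \citet{xu2018powerful} (the preceding \Cref{lem:countable-for all multiset}) by restriction, since the only difference between the two statements is that the domain of $h_f$ has been narrowed from $\mathrm{Mul}(S)$ to the subset $\mathrm{Mul}_2(S) \subseteq \mathrm{Mul}(S)$ of two-element multisets. There is nothing further to prove beyond observing that (i) the restriction of an injective map to a subdomain is still injective, and (ii) when $X = \multiset{x,y}$, the sum $\sum_{z \in X} f(z)$ coincides with $f(x)+f(y)$, so the formula demanded by the corollary matches the one delivered by the lemma.

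Concretely, I would apply \Cref{lem:countable-for all multiset} to the countable set $S$ to obtain a function $f:S \to \mathbb{R}$ such that the map $h_f : \mathrm{Mul}(S) \to \mathbb{R}$, defined by $h_f(X) = \sum_{x \in X} f(x)$, is injective. Because every two-element multiset over $S$ is in particular a finite multiset over $S$, we have the inclusion $\mathrm{Mul}_2(S) \hookrightarrow \mathrm{Mul}(S)$, and therefore the restriction $h_f\big|_{\mathrm{Mul}_2(S)}$ is injective as well. Unpacking the definition on a generic element $\multiset{x,y} \in \mathrm{Mul}_2(S)$, this restriction sends $\multiset{x,y} \mapsto f(x)+f(y)$, which is exactly the map asserted by the corollary.

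There is no real obstacle here, so there is nothing genuinely hard to confront; the main thing to be careful about is the multiset convention when $x=y$. If one writes $\multiset{x,x}$ as a two-element multiset, the sum $\sum_{z \in \multiset{x,x}} f(z)$ should be read with multiplicity, i.e.\ $2f(x)$, which is consistent with the stated formula $f(x)+f(y)$ at $y=x$. As long as $\mathrm{Mul}_2(S)$ is interpreted with multiplicities (as it is in the body of the paper, e.g.\ in the construction of $\psi^f$ in \Cref{sec:main}), the restriction argument goes through without modification, and the corollary follows in one line.
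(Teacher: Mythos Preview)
Your proposal is correct and matches the paper's approach exactly: the paper simply states ``As a direct consequence, we immediately have the following corollary'' after \Cref{lem:countable-for all multiset}, offering no further argument. Your observation that the restriction of an injective map to the subdomain $\mathrm{Mul}_2(S)\subseteq\mathrm{Mul}(S)$ remains injective, together with the multiplicity check for $\multiset{x,x}$, is precisely the one-line justification the paper leaves implicit.
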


\apetorpewl*

\begin{proof}
    Let $S=\cup_{G\in\mathcal{G}}\im\phi_{G}$ where $\mathcal{G}$ is the set of all finite graphs. We first note that $S$ is countable as the set of all finite graphs is countable. Let $f$ be any function such that $h_f$ that is injective on $S$; such a function must exist by~\Cref{lem:countable} as $S$ is countable. Assume that $G$ and $H$ are indistinguishable by the $\psi^{f}$-2-WL test. This, in particular, implies that the multisets of initial colors are the same:
    \[
        \multiset{(X_G(u), X_G(v),\psi^f_G(v,w)):u,v\in V_G }=  \multiset{ (X_H(x), X_H(y),\psi^f_H(x,y)):x,y\in V_H }.
    \]
    By the definition of $\psi^{f}$ and the fact that $h_f$ is injective, one has that
    \[
        \multiset{ (X_G(u), X_G(v),\phi_G(u),\phi_G(v)):u,v\in V_G }= \multiset{ (X_H(x), X_H(y),\phi_H(x),\phi_H(y)):x,y\in V_H }.
    \]
    Hence, the following two Cartesian products of multisets are equal:
    \[
        \multiset{ (X_G(u),\phi_G(u)):u\in V_G }^2= \multiset{ (X_H(x),\phi_H(x)):x\in V_H }^2.
    \]
    By \Cref{fact:multiset-square}, we have that $\multiset{(X_G(u),\phi_G(u)):u\in V_G}=\multiset{ (X_H(x),\phi_H(x)):x\in V_H }$.
    \par
    Conversely, assume that $(G,X_G)$ and $(H,X_H)$ are indistinguishable by $\phi$. Then $\multiset{(X_G(u),\phi_G(u):u\in V_G}=\multiset{ (X_H(x),\phi_H(x)):x\in V_H }$. Let $\sigma:V_{G}\to V_{H}$ be a bijection such that $\phi_{G}(v) = \phi_{H}(\sigma(v))$ and $X_G(v)=X_H(\sigma(v))$ for all $v\in V_{G}$. We will prove that $(G,X_G)$ and $(H,X_H)$ are indistinguishable by $\psi^{f}$-WL for any function $f$. In fact, we will prove the stronger statement that $\wlcolor{\psi^{f}_G}{t}(v) = \wlcolor{\psi^{f}_H}{t}(\sigma(v))$ for all $t\geq 0$. This is stronger than proving they are $\psi^{f}$-WL indistinguishable as we use the \textit{same} bijection for all iterations $t$.
    \par
    We prove this by induction on the iteration $t$. For $t=0$, this is true as the initial $\psi^{f}$-WL colors $\wlcolor{\psi^{f}_H}{t}(v)$ are just the node features $X_G(v)$, which are implied by the definition of indistinguishability by an APE. Now suppose this is true for some iteration $t$; we will show it is true for $t+1$. Let $v\in V_{G}$. The inductive hypothesis implies that the first element of the colors are equal $\wlcolor{\psi^{f}_G}{t}(v) = \wlcolor{\psi^{f}_H}{t}(\sigma(v))$. Moreover, if we consider the second element of the colors, we find that
    \begin{align*}
        \multiset{ (\wlcolor{\psi^{f}_G}{t}(w), \psi_{G}^{f}(v, w)) : w\in V_{G} } =& \multiset{ (\wlcolor{\psi^{f}_G}{t}(w), f(\phi_{G}(v))+ f(\phi_{G}(w))) : w\in V_{G} } \tag{defintion of $\psi_{G}^{f}$} \\
        =& \multiset{ (\wlcolor{\psi^{f}_G}{t}(w), f(\phi_{H}(\sigma(v)))+ f(\phi_{H}(\sigma(w)))) : w\in V_{G} } \tag{defintion of $\sigma$} \\
        =& \multiset{ (\wlcolor{\psi^{f}_H}{t}(\sigma(w)), f(\phi_{H}(\sigma(v))+f(\phi_{H}(\sigma(w))))) : w\in V_{G} } \tag{inductive hypothesis} \\
        =& \multiset{ (\wlcolor{\psi^{f}_H}{t}(\sigma(w)), \psi_{H}^{f}(\sigma(v),\sigma(w))) : w\in V_{G} } \tag{defintion of $\psi_{H}^{f}$} \\
        =& \multiset{ (\wlcolor{\psi^{f}_H}{t}(w), \psi_{H}^{f}(\sigma(v), w)) : w\in V_{H} }. \tag{$\sigma$ is a bijection}
    \end{align*}
    Therefore, $\wlcolor{\psi^{f}_G}{t+1}(v) = \wlcolor{\psi^{f}_H}{t+1}(\sigma(v))$ as claimed.
\end{proof}

\mainapetorpe*

\begin{proof}
    By \Cref{lem:APE_transformer_equal_APE}, two graphs $(G,X_G)$ and $(H,X_H)$ are indistinguishable by all APE transformers with APE $\phi$ iff they are indistinguishable by $\phi$. By \Cref{lem:APE-to-RPE-WL}, this is equivalent to being indistinguishable by $\psi^{f}$-2-WL. Hence, by \Cref{lem:RPE_transformers_equal_w_WL}, this is equivalent to being indistinguishable by any RPE transformer with RPE $\psi^{f}$.
\end{proof}

\subsection{Mapping RPE to APE: Proof of \Cref{thm:main-RPE-to-APE}}
\label{apx:rpe_to_ape}

\rpetoapewl*

\begin{proof}
    Let $(G,X_G)$ and $(H,X_H)$ be featured graphs that are indistinguishable by $\psi$-2-WL.
    By~\Cref{lem:2_wl_stronger_than_2_IGN}, this means that for any $2$-EGN $g$ of the appropriate dimension, $g([\psi_G,\row(X_G),\col(X_G)]) = g([\psi_G,\row(X_G),\col(X_G)])$. In particular, for any 2-EGN $h$ of appropriate dimension, there is a 2-EGN $g_{h}$ such that $g_h([\psi_{G},X_GX_G^{T}]) = [h(\psi_{G}), X_G^{2}]$; the 2-EGN $g_{h}$ runs $h$ on the first columns of $[\psi_G,\row(X_G),\col(X_G)]$ and then copies the mean of the rows of $\row(X_{G})$ on the remaining columns. The output of $g_h$ is exactly the APE $\psi^{h}_{G}$ concatenated to the node features $X_G$.
    \par
    For the backward direction of the case of graphs without node features, this follows directly from~\Cref{lem:2_ign_stronger_than_2_wl}.
\end{proof}

\thmmainrpetoape*

\begin{proof}
    By \Cref{lem:RPE_transformers_equal_w_WL} and \Cref{thm:wl-equals-2-wl}, $G$ and $H$ are indistinguisable by an RPE transformer with RPE $\psi$ iff $G$ and $H$ are indistinguisable by the $\psi$-2-WL test. By \Cref{lem:IGN}, this is equivalent to $G$ and $H$ being indistinguishable by $\phi^g$. By \Cref{lem:APE_transformer_equal_APE}, this is equivalent to $G$ and $H$ being indistinguishable by any APE transformer with APE $\phi^{g}$.
\end{proof}

\begin{figure}[htbp!]
    \centering
    \subfigure{
        \centering
        \includegraphics[height=1.5in]{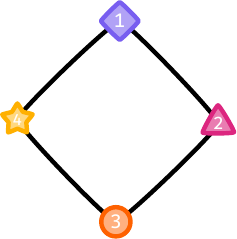}
    }
    \subfigure{
        \centering
        \includegraphics[height=1.5in]{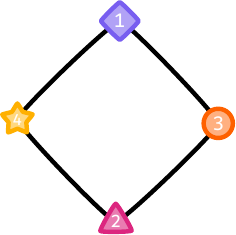}
    }
    \caption{Left: $(G,X_G)$. Right: $(H,X_H)$.}
    \label{fig:node_feature_counterexample}
\end{figure}

We now show that RPEs are strictly stronger than APEs derived from RPEs for graphs with node features.

\rpetoapecounterexample*

\begin{proof}
    The graph $G$ and $H$ are both be the cycle graph on 4-vertices $C_4$. The RPE $\psi$ is the adjacency matrix. The node features are $X_G=[1,2,3,4]$ and $X_H=[1,3,2,4]$. See~\Cref{fig:node_feature_counterexample}.
    \par
    We first prove these graphs are distinguishable by $\psi$-WL. If we consider the first $\psi$-WL colors $\wlcolor{\psi}{1}$, we see that $G$ has a node of color $(1,\multiset{(2,1),(3,0),(4,1)})$; however, $H$ has no node of that color, as the only node with feature $1$ has color $(1,\multiset{(3,1),(2,0),(4,1)})$.
    \par
    We next prove that are indistinguishable by the APE $\phi^{g}$ for any 2-EGN $g$. The first thing to observe is that the APE $\phi^{g}$ will assign the same value to each node in $G$ and $H$; this follows from the definition of APE and the fact that for any two nodes $u,v\in V_G$, there is a graph isomorphism $\sigma:V_G\to V_G$ such that $\sigma(u)=v$; likewise for any two nodes $x,y\in V_H$ or nodes $u\in V_G$ and $x\in V_H$. Therefore, as $(G,X_G)$ and $(H,X_H)$ have the same multiset of node features and $\phi^{g}$ has uniform entries, they cannot be distinguished by $\phi^{g}$.
\end{proof}

\section{Details from \Cref{sec:comparing_different_pes}}

\subsection{A Lemma for RPE-augWL Composition.}
\label{sec:useful_properties_of_wl_tests}

In this section, we will repeatedly use the following useful property of composing the RPE-augWL test. Let $\psi$ be an RPE valued in $\R^d$, and let $f:\R^{d}\to\R$ be a function. Consider the composition $f\circ \psi$ defined by $(f\circ\psi)_G(u,v):=f(\psi_G(u,v))$ for any graph $G$. One naturally wonders about the distinguishing power of $\psi$-WL versus $f\circ\psi$-WL. In fact, $\psi$-WL will always be at least as strong as $f\circ\psi$-WL, and under mild conditions on the input, there will exist a function $f$ such that $\psi$-WL and $f\circ\psi$-WL are equally strong.

\begin{proposition}\label{prop:concatenate rpe}
    Let $\psi$ be an RPE. Let $G$ and $H$ be graphs. Let $u\in V_G$ and $v\in V_H$.
    \begin{enumerate}
        \item If $\chi^{(t)}_{\psi_{G}}(u) = \chi^{(t)}_{\psi_{H}}(v)$, then for any function $f:\R^{d}\to\R$, $\chi^{(t)}_{f\circ\psi_G}(u) = \chi^{(t)}_{f\circ\psi_{H}}(v)$. In particular, $\psi$-WL is as strong as $f\circ\psi$-WL.
        \item For any set $\Omega\subset\R^{d}$ and any function $f:\R^{d}\to\R$ injective on $\Omega$, if $\im\psi_G\cup\im\psi_H\subset\Omega$ and $\chi^{(t)}_{\psi_{G}}(u) \neq \chi^{(t)}_{\psi_{H}}(v)$, then $\chi^{(t)}_{f\circ\psi_G}(u) \neq \chi^{(t)}_{f\circ\psi_{H}}(v)$. In particular, $f\circ\psi$-WL is as strong as $\psi$-WL.
    \end{enumerate}
\end{proposition}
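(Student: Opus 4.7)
The plan is to prove both parts by induction on the iteration index $t$, working at the level of vertex pairs $(u,v)$; the ``in particular'' consequences for the whole-graph RPE-augWL tests will then follow mechanically, as I explain at the end.

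For part~(1), the base case $t=0$ will be immediate since $\chi^{(0)}_{\psi_G}(u)=X_G(u)=\chi^{(0)}_{f\circ\psi_G}(u)$ by definition. For the inductive step, I would suppose $\chi^{(t+1)}_{\psi_G}(u)=\chi^{(t+1)}_{\psi_H}(v)$ and unroll the update rule of \Cref{def:w-WL}: equality of the first coordinates yields $\chi^{(t)}_{\psi_G}(u)=\chi^{(t)}_{\psi_H}(v)$, and equality of the second coordinates supplies a bijection $\sigma:V_G\to V_H$ satisfying $\chi^{(t)}_{\psi_G}(w)=\chi^{(t)}_{\psi_H}(\sigma(w))$ and $\psi_G(u,w)=\psi_H(v,\sigma(w))$ for every $w\in V_G$. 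Applying the inductive hypothesis pointwise transfers the first family of equalities to $f\circ\psi$, while applying $f$ to the second preserves it, so the same $\sigma$ witnesses equality of the $f\circ\psi$-side multisets, giving $\chi^{(t+1)}_{f\circ\psi_G}(u)=\chi^{(t+1)}_{f\circ\psi_H}(v)$.

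For part~(2) I would prove the contrapositive: under the injectivity hypothesis, $\chi^{(t)}_{f\circ\psi_G}(u)=\chi^{(t)}_{f\circ\psi_H}(v)$ should imply $\chi^{(t)}_{\psi_G}(u)=\chi^{(t)}_{\psi_H}(v)$. The base case is again immediate. For the inductive step, the bijection $\sigma$ extracted from the multiset equality on the $f\circ\psi$ side will satisfy $f(\psi_G(u,w))=f(\psi_H(v,\sigma(w)))$; since $\im\psi_G\cup\im\psi_H\subset\Omega$ and $f$ is injective on $\Omega$, I can cancel $f$ to recover $\psi_G(u,w)=\psi_H(v,\sigma(w))$. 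Combined with the inductive hypothesis, this shows the $\psi$-side multisets also agree under $\sigma$, closing the induction. Finally, each ``in particular'' statement will follow by taking a bijection witnessing graph-level multiset equality on one side and applying the pointwise conclusion vertex-by-vertex to obtain the analogous bijection on the other side. The only point that deserves care---and the closest thing to an obstacle---is the standard fact that equality of a multiset of tuples is witnessed by a single bijection matching all coordinates simultaneously, which is what allows one $\sigma$ to control both the previous WL colors and the RPE values in the inductive step; beyond this bookkeeping I do not foresee any genuine difficulty.
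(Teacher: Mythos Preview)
Your proposal is correct and follows essentially the same approach as the paper: a straightforward induction on the iteration index $t$, using a bijection $\sigma$ witnessing the multiset equality at each step. The paper's own proof is far terser---it cites \citep[Theorem 3]{zhu2023structural} for item~1 and merely writes ``the lemma follows by induction of the iteration $t$'' for item~2---so your write-up is actually more self-contained than what appears in the paper, but the underlying argument is identical.
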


\begin{proof}
For item 1, it follows from
\citep[Theorem 3]{zhu2023structural}.

For item 2, we choose the function $f$ to be any function that is injective on $\Omega$; such a function exists as $\Omega$ are finite. The lemma follows by induction of the iteration $t$.
\end{proof}
\begin{remark}
    It is quite common that RPEs will be drawn from a finite set $\Omega\subset\R^{d}$. In particular, if we consider any RPE defined on unweighted graphs with $n$ vertices that is purely a function of its topology, e.g., the shortest-path distance or Laplacian eigenspace projection, then because there are only finitely many unweighted graphs on $n$ vertices, then the image of the PE in $\R^{d}$ will be finite.
\end{remark}

\subsection{Proof of \Cref{thm:spectral distance = kernel} and \Cref{thm:resistance_wl_equals_pinv_wl}}
\label{sec:gram_wl_and_distance_wl}

\paragraph*{Euclidean Distance and Gram Matrices.}
Let $P=\{p_1,\ldots,p_n\}\subset\R^{d}$ be a point cloud. The \textit{\textbf{Gram matrix}} of $P$ is denoted $G_{P}\in\R^{n\times n}$ where $(G_{P})_{ij} = p_i^{T}p_j$. The \textit{\textbf{distance matrix}} of $P$ is denoted $D_{P}\in\R^{n\times n}$ where $(D_{P})_{ij} = \| p_i-p_j \|$. A point cloud $P$ is \textit{\textbf{centered}} if $\sum_{i=1}^{n} p_i=0$.

\begin{restatable}{theorem}{gramwlequalsdistancewl}
\label{thm:gram_wl_equals_distance_wl}
    Assume that for any graph $G$, there is an embedding of the nodes $P_G:V\to\R^{d}$ such that $\im P_G$ is centered.
    Let $\psi_{d}$ and $\psi_{g}$ be RPEs that are the distance and Gram matrix of the point set $\im P_{G}$, respectively. Then, $\psi_{d}$-WL is at least as strong as are as $\psi_{g}$-WL. Conversely, $\psi_g$-WL with diagonal augmentation is at least as strong as $\psi_d$.
\end{restatable}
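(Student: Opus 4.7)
The approach hinges on two classical identities from multidimensional scaling relating the squared-distance matrix and the Gram matrix of a centered point cloud:
\[
\|p_i - p_j\|^2 = (G_P)_{ii} + (G_P)_{jj} - 2(G_P)_{ij},
\]
\[
(G_P)_{ij} = -\tfrac{1}{2}\Bigl(D^{(2)}_{ij} - \tfrac{1}{n}\sum_k D^{(2)}_{ik} - \tfrac{1}{n}\sum_k D^{(2)}_{kj} + \tfrac{1}{n^2}\sum_{k,l} D^{(2)}_{kl}\Bigr),
\]
where $D^{(2)}$ is the entrywise square of the distance matrix and the second identity uses the centering $\sum_k p_k = 0$. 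I will work with $\psi_d^2$ in place of $\psi_d$, which is equivalent in distinguishing power by \Cref{prop:concatenate rpe} since entrywise squaring is injective on nonnegative reals. Both $\psi_d$ and $\psi_g$ are symmetric, hence pseudo-symmetric, so by \Cref{thm:wl-equals-2-wl} it suffices to compare the corresponding RPE-2-WL tests.

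For the first direction, I pass to the 2-WL formulation and show that $\chi^{(t+2)}_{2,\psi_d^2}$ refines $\chi^{(t)}_{2,\psi_g}$ for every $t\geq 0$. The key observation is that after one iteration, the color $\chi^{(1)}_{2,\psi_d^2}(u,v)$ already encodes the entry $D^{(2)}_{uv}$ together with the row multisets $\multiset{D^{(2)}_{uw}:w}$ and $\multiset{D^{(2)}_{wv}:w}$, hence the averages $\bar D^{(2)}_{u\cdot}$ and $\bar D^{(2)}_{\cdot v}$ (as well as $n$, from the size of the multisets). A second iteration then exposes the full collection of column averages $\multiset{\bar D^{(2)}_{\cdot w}:w}$, which can be extracted from $\multiset{\chi^{(1)}_{2,\psi_d^2}(u,w):w}$, delivering the global average $\bar D^{(2)}_{\cdot\cdot}$. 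Plugging these into the MDS formula shows that $(G_P)_{uv}$ is a function of $\chi^{(2)}_{2,\psi_d^2}(u,v)$, so the initial 2-WL color $\chi^{(0)}_{2,\psi_g}$ is refined. A standard induction on $t$, following the functional-refinement pattern of \Cref{prop:concatenate rpe}, then propagates this through all subsequent iterations.

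For the second direction, diagonal augmentation makes the diagonal entries $\psi_g(v,v) = \|p_v\|^2$ visible: since $D^{\psi_g}_G$ carries $\delta_{vu}$ in its first coordinate, the unique element of the neighbor multiset of $v$ at step $1$ with $\delta_{vu}=1$ is precisely $(X(v),(1,\|p_v\|^2))$, so $\chi^{(1)}_{D^{\psi_g}}(v)$ determines $\|p_v\|^2$. At step $2$, given both $\|p_v\|^2$ (from the first component $\chi^{(1)}_{D^{\psi_g}}(v)$) and $\|p_u\|^2$ (from $\chi^{(1)}_{D^{\psi_g}}(u)$ inside the neighbor multiset), together with $\psi_g(v,u)=p_v^T p_u$, the quantity $\|p_v - p_u\|^2$ and hence $\|p_v - p_u\|$ is a function of $\chi^{(2)}_{D^{\psi_g}}(v)$. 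Thus $\chi^{(2)}_{D^{\psi_g}}(v)$ refines $\chi^{(1)}_{\psi_d}(v)$, and induction yields that $\chi^{(t+1)}_{D^{\psi_g}}$ refines $\chi^{(t)}_{\psi_d}$ for all $t\geq 1$.

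The main obstacle is the \emph{global} quantity $\bar D^{(2)}_{\cdot\cdot}$ in direction (i): it is not visible at any single pair $(u,v)$ after one iteration, and the cleanest way to surface it is through the second 2-WL iteration. This is why the detour through \Cref{thm:wl-equals-2-wl} (and hence through 2-WL) is essential rather than cosmetic---a direct $\psi_d$-WL argument would require threading the global sum into each iteration by hand. A secondary subtlety is verifying carefully that refinement at the initial color lifts cleanly through all iterations of the 2-WL update, which is a standard bookkeeping argument once the key pair-level functional relationships above are in hand.
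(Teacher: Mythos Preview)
Your proposal is correct and uses the same underlying identities (the MDS reconstruction formula and the polarization $\|p_u-p_v\|^2=\|p_u\|^2+\|p_v\|^2-2p_u^Tp_v$) as the paper. Direction (ii) matches the paper's argument almost verbatim: the paper (\Cref{prop:gram_wl_stronger_than_squared_distance_wl}) also shows $\chi^{(t+1)}_{D^{\psi_g}}$ refines $\chi^{(t)}_{\psi_d}$ by extracting $\|p_v\|^2$ from the diagonal augmentation at step~1 and then reconstructing the squared distance at each subsequent step.

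The genuine difference is in direction (i). You pass to $\psi_d^2$-2-WL via \Cref{thm:wl-equals-2-wl} and argue that two iterations of 2-WL at the pair level surface $D^{(2)}_{uv}$, the row/column averages, and the global average, hence $(G_P)_{uv}$ by the MDS formula. The paper instead stays entirely in the 1-WL (RPE-augWL) framework: it proves a pointwise lemma (\Cref{lem:two_iterations_squared_distance_wl}) that two iterations of $\psi_d^2$-WL already pin down the \emph{diagonal} Gram entries $(G_P)_{uu}$ via $(G_P)_{uu}=\tfrac{1}{n}\sum_w D^{(2)}_{uw}-\tfrac{1}{2n^2}\sum_{w,z}D^{(2)}_{wz}$, and then shows $\chi^{(t+2)}_{\psi_d^2}$ refines $\chi^{(t)}_{\psi_g}$ directly (\Cref{thm:squared_distance_wl_stronger_than_gram_wl}). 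So your remark that the 2-WL detour is ``essential rather than cosmetic'' overstates the case: the paper carries out precisely the direct 1-WL argument you describe as onerous, and it is no harder than yours---the global sum is recovered at step~2 because $\chi^{(2)}_{\psi_d^2}(u)$ contains the multiset of $\chi^{(1)}_{\psi_d^2}(w)$, each of which encodes its own row sum. Your 2-WL route is perfectly valid and arguably more symmetric (it directly targets $(G_P)_{uv}$ rather than going through the diagonal), but it costs you the extra invocation of \Cref{thm:wl-equals-2-wl}; the paper's route avoids that dependency at the price of the auxiliary \Cref{lem:diagonal_gram_matrix,lem:two_iterations_squared_distance_wl}.
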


The proof is based on the following key lemma. We define the \textit{\textbf{squared distance matrix}} of $P$ to be the matrix $D^{2}_{P}\in\R^{n\times n}$ where $(D^{2}_{P})_{ij} = \| p_i-p_j \|^{2}$.

\begin{lemma}
\label{lem:gram_wl_equals_squared_distance_wl}
    Let $\psi_{d}$ and $\psi_{g}$ be RPEs such that, for any graph $G$, $\psi_{d,G}$ and $\psi_{g,G}$ are the squared distance and Gram matrix of a centered point set $P_{G}\subset\R^{d}$. Then $\psi_{d}$-WL is at least as strong as are as $\psi_{g}$-WL. Conversely, $\psi_g$-WL with diagonal augmentation is at least as strong as $\psi_d$.
\end{lemma}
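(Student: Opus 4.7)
The argument rests on the classical multidimensional scaling (MDS) identity. For a centered point cloud $P = \{p_1,\ldots,p_n\} \subset \R^d$, writing $J = I - \tfrac{1}{n}\mathbf{1}\mathbf{1}^{\top}$, one has $G_P = -\tfrac{1}{2}\,J\,D^2_P\,J$; entrywise,
\[
    (G_P)_{ij} \;=\; -\tfrac{1}{2}\Big((D^2_P)_{ij} - \tfrac{r_i}{n} - \tfrac{r_j}{n} + \tfrac{s}{n^2}\Big),
\]
with $r_i = \sum_k (D^2_P)_{ik}$ and $s = \sum_{k,\ell}(D^2_P)_{k\ell}$, together with the inverse identity $(D^2_P)_{ij} = (G_P)_{ii} + (G_P)_{jj} - 2(G_P)_{ij}$. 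The plan is to propagate these two algebraic relations through the RPE-augWL iterates in both directions.

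For the forward direction, I would first observe that a single iteration of $\psi_d$-WL already exposes the row sum $r^G_u = \sum_{w} \psi_{d,G}(u,w)$ as a function of $\wlcolor{\psi_{d,G}}{1}(u)$ (by summing the second coordinates of its defining multiset). Hence any $\psi_d$-WL indistinguishable $G,H$ automatically satisfy $n_G = n_H$ and $s^G = s^H$. I would then prove by induction on $t \geq 0$ the implication
\[
    \wlcolor{\psi_{d,G}}{t+1}(u) = \wlcolor{\psi_{d,H}}{t+1}(v) \;\Longrightarrow\; \wlcolor{\psi_{g,G}}{t}(u) = \wlcolor{\psi_{g,H}}{t}(v).
\]
The base case extracts $X_G(u)=X_H(v)$ from the first coordinate of $\wlcolor{\psi_d}{1}$. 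In the inductive step, unfolding the level-$(t{+}2)$ equality yields a bijection $\sigma : V_G \to V_H$ matching both $\wlcolor{\psi_d}{t+1}$-colors and $\psi_d$-values along $(u,w) \leftrightarrow (v,\sigma(w))$; those matched colors further force $r^G_u = r^H_v$ and $r^G_w = r^H_{\sigma(w)}$, so plugging into the MDS formula (using $n_G = n_H$ and $s^G = s^H$) gives $\psi_{g,G}(u,w) = \psi_{g,H}(v,\sigma(w))$, while the inductive hypothesis upgrades matching $\psi_d$-colors to matching $\psi_g$-colors one level lower.

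For the converse direction, let $\widehat{\psi}_g = (I,\psi_g)$ denote the diagonal augmentation. Its role is that within the multiset defining $\wlcolor{\widehat{\psi}_{g,G}}{1}(u)$ there is exactly one element whose $I$-indicator equals $1$ (the $w = u$ term), and its $\psi_g$-coordinate is precisely $(G_P)_{uu}$; hence $\wlcolor{\widehat{\psi}_{g,G}}{t+1}(u) = \wlcolor{\widehat{\psi}_{g,H}}{t+1}(v)$ already implies $(G_P)_{uu}^{G} = (G_P)_{vv}^{H}$ for every $t\geq 0$. A parallel induction on $t$ then yields
\[
    \wlcolor{\widehat{\psi}_{g,G}}{t+1}(u) = \wlcolor{\widehat{\psi}_{g,H}}{t+1}(v) \;\Longrightarrow\; \wlcolor{\psi_{d,G}}{t}(u) = \wlcolor{\psi_{d,H}}{t}(v).
\]
In the inductive step, the bijection extracted from level-$(t{+}2)$ equality matches $\widehat{\psi}_g$-entries (and hence, via the indicator, respects identity so $\sigma(u)=v$), matches level-$(t{+}1)$ colors, and supplies matching diagonals via the observation above; the inverse identity $(D^2_P)_{uw} = (G_P)_{uu} + (G_P)_{ww} - 2(G_P)_{uw}$ then delivers matching $\psi_d$-values, and the inductive hypothesis completes the color equality.

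The main obstacle, and the source of the asymmetry of the statement, is the index shift together with the need to access information that is not locally present in the RPE. The forward direction needs one extra $\psi_d$-WL iteration because $r_u$ and $s$ live in the aggregate statistics of a multiset rather than in any single entry of $\psi_d$; the converse direction requires diagonal augmentation precisely because the un-augmented Gram matrix does not separate diagonal from off-diagonal entries, and those diagonals are exactly what one needs to translate $\psi_g$ back into $\psi_d$ through the inverse identity.
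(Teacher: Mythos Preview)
Your proposal is correct and follows essentially the same route as the paper. Both directions rely on the same two algebraic identities relating the Gram and squared-distance matrices of a centered point cloud, and both propagate these through the RPE-augWL iterates by induction with an index shift.

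The one noteworthy difference is in the forward direction. You invoke the full double-centering (MDS) identity $(G_P)_{ij} = -\tfrac{1}{2}\bigl((D^2_P)_{ij} - r_i/n - r_j/n + s/n^2\bigr)$ directly, which lets you recover $\psi_g$-entries from a single additional $\psi_d$-WL iteration (since $r_u$ is readable from $\wlcolor{\psi_d}{1}(u)$ and $s,\,n$ from the aggregate of such row sums), giving a shift-$1$ implication. The paper instead first proves as a standalone lemma that two $\psi_d$-WL iterations determine the diagonal Gram entry $(G_P)_{uu}$, and then uses $(G_P)_{uv} = \tfrac{1}{2}\bigl((G_P)_{uu}+(G_P)_{vv}-(D^2_P)_{uv}\bigr)$ in the induction; this yields a shift-$2$ implication. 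The two are algebraically equivalent (substituting the paper's diagonal formula into the second identity reproduces your MDS formula), so this is a packaging difference rather than a genuinely different idea; your version is slightly tighter and more direct. For the converse direction your argument is essentially identical to the paper's: diagonal augmentation isolates $(G_P)_{uu}$ inside the level-$1$ multiset, after which the inverse identity $(D^2_P)_{uw} = (G_P)_{uu}+(G_P)_{ww}-2(G_P)_{uw}$ drives the induction with a shift of one.
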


\begin{proof}[Proof of \Cref{thm:gram_wl_equals_distance_wl} assuming \Cref{lem:gram_wl_equals_squared_distance_wl}]
    As the square root function is injective, then by \Cref{prop:concatenate rpe}, $G$ and $H$ are indistinguishable by $\psi_d$-WL if and only if $D_P^{2}$ and $D_{Q}^{2}$ are indistinguishable by $\psi_g$-WL.
\end{proof}

Now we proceed to prove \Cref{lem:gram_wl_equals_squared_distance_wl}.
\begin{lemma}
\label{lem:diagonal_gram_matrix}
    Let $P=\{p_1,\ldots,p_n\}\subset\R^{d}$ be a centered point cloud. For any $1\leq k\leq n$, $(G_P)_{ii} = \frac{1}{n}\sum_{i=1}^{n}(D_P^{2})_{ik} - \frac{1}{2n^2}\sum_{i=1}^{n}\sum_{j=1}^{n} (D_P^{2})_{ij}$.
\end{lemma}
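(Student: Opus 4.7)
The identity is the classical double-centering formula from multidimensional scaling, restricted to the diagonal, so the plan is a direct algebraic expansion exploiting the centering hypothesis. I first note that the statement contains a minor indexing infelicity: the summation index on the right-hand side collides with the free index $i$ on the left. Reading it correctly, the claim is that for every $k$,
\[
(G_P)_{kk} \;=\; \frac{1}{n}\sum_{i=1}^{n}(D_P^{2})_{ik} \;-\; \frac{1}{2n^{2}}\sum_{i=1}^{n}\sum_{j=1}^{n}(D_P^{2})_{ij},
\]
and I will prove exactly this.

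The plan is to expand $(D_P^{2})_{ij}=\|p_i-p_j\|^{2}=\|p_i\|^{2}+\|p_j\|^{2}-2\,p_i^{T}p_j$ and repeatedly use $\sum_{i}p_i=0$. First I would compute
\[
\sum_{i=1}^{n}(D_P^{2})_{ik}\;=\;\sum_{i=1}^{n}\|p_i\|^{2} \;+\; n\|p_k\|^{2} \;-\; 2\Bigl(\sum_{i=1}^{n}p_i\Bigr)^{\!T}p_k \;=\; \sum_{i=1}^{n}\|p_i\|^{2}+n\|p_k\|^{2},
\]
where the cross-term vanishes by centeredness. Next I would compute the double sum by the same expansion,
\[
\sum_{i=1}^{n}\sum_{j=1}^{n}(D_P^{2})_{ij} \;=\; 2n\sum_{i=1}^{n}\|p_i\|^{2} \;-\; 2\Bigl\|\sum_{i=1}^{n}p_i\Bigr\|^{2} \;=\; 2n\sum_{i=1}^{n}\|p_i\|^{2},
\]
again using the centering hypothesis to kill the inner-product term.

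Substituting both identities into the right-hand side of the claim, the $\frac{1}{n}\sum_i\|p_i\|^{2}$ contributions cancel exactly, leaving $\|p_k\|^{2}=(G_P)_{kk}$, which is the desired equality. There is no real obstacle here: the only content is the cancellation provided by $\sum_i p_i = 0$, without which the formula would carry extra linear correction terms (this is precisely why classical MDS requires centering the point cloud, or equivalently double-centering the squared-distance matrix). I would present the proof as the two displayed computations followed by the one-line substitution.
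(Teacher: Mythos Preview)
Your proposal is correct and follows essentially the same route as the paper: expand $(D_P^{2})_{ij}=\|p_i\|^2+\|p_j\|^2-2p_i^Tp_j$, use $\sum_i p_i=0$ to kill the cross terms in both the single and double sums, and observe the resulting cancellation. Your identification of the index collision (the left side should read $(G_P)_{kk}$) is also correct and matches how the paper actually uses the lemma.
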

\begin{proof}
    Observe that $(D_p^{2})_{ij} = p_i^{T}p_i + p_j^{T}p_j - 2p_i^{T}p_{j}$. Therefore, the first term simplifies to
    \begin{align*}
        \frac{1}{n}\sum_{i=1}^{n}(D_P^{2})_{ik} =& p_k^{T}p_{k} + \frac{1}{n}\sum_{i=1}^{n} p_{i}^{T}p_i - \frac{2}{n}\sum_{i=1}^{n}p_i^{T}p_{k} \\
        =& p_k^{T}p_{k} + \frac{1}{n}\sum_{i=1}^{n} p_{i}^{T}p_i \tag{as $\sum_{i=1}^{n} p_i=0$}.
    \end{align*}
    The second term simplifies to:
    \begin{align*}
        \frac{1}{2n^2}\sum_{i=1}^{n}\sum_{j=1}^{n} (D_P^{2})_{ij} =& \frac{1}{2n^2}\sum_{i=1}^{n}\sum_{j=1}^{n} \left(p_i^{T}p_i + p_j^{T}p_j - 2p_i^{T}p_{j}\right)\\
        =& \frac{1}{2n^2}\sum_{i=1}^{n}\left( np_i^{T}p_i + \sum_{j=1}^{n}p_j^{T}p_j - \sum_{j=1}^{n}2p_i^{T}p_{j}\right)\\
        =& \frac{1}{2n^2}\sum_{i=1}^{n}\left( np_i^{T}p_i + \sum_{j=1}^{n}p_j^{T}p_j\right) \tag{as $\sum_{i=1}^{n} p_i=0$}\\
        =& \frac{1}{2n^2}\sum_{i=1}^{n} 2np_i^{T}p_i\\
        =& \frac{1}{n}\sum_{i=1}^{n} p_i^{T}p_i.
    \end{align*}
    Therefore,
    \begin{align*}
        \frac{1}{n}\sum_{i=1}^{n}(D_P^{2})_{ik} - \frac{1}{2n^2}\sum_{i=1}^{n}\sum_{j=1}^{n} (D_P^{2})_{ij} =& p_{k}^{T}p_k + \frac{1}{n}\sum_{i=1}^{n} p_{i}^{T}p_i - \frac{1}{n}\sum_{i=1}^{n} p_{i}^{T}p_i \\
        =& p_k^{T}p_k \\
        =& (G_P)_{kk}. \qedhere
    \end{align*}
\end{proof}

\begin{lemma}
\label{lem:two_iterations_squared_distance_wl}
    Let $\psi_{d}$ and $\psi_{g}$ be RPEs such that, for any graph $G$, $\psi_{d,G}$ and $\psi_{g,G}$ are the squared distance and Gram matrix of a centered point set $P_{G}\subset\R^{d}$. Let $(G,X_G)$ and $(H,X_H)$ be featured graphs. If $\wlcolor{\psi_{d,G}}{2}(u) = \wlcolor{\psi_{d,H}}{2}(x)$, then $(G_{P_G})_{uu} = (G_{P_{H}})_{xx}$.
\end{lemma}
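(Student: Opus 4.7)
The idea is to use \Cref{lem:diagonal_gram_matrix} as the bridge: it expresses
\[
(G_{P_G})_{uu} \;=\; \tfrac{1}{n}\sum_{i=1}^n (D_{P_G}^2)_{iu}\;-\;\tfrac{1}{2n^2}\sum_{i,j=1}^n (D_{P_G}^2)_{ij},
\]
so it will suffice to show that $\chi^{(2)}_{\psi_{d,G}}(u)$ determines, as a deterministic function of the color, the vertex count $n = |V_G|$, the row sum $\sum_i (D_{P_G}^2)_{iu}$, and the total sum $\sum_{i,j}(D_{P_G}^2)_{ij}$. Once this is established, the hypothesis $\chi^{(2)}_{\psi_{d,G}}(u) = \chi^{(2)}_{\psi_{d,H}}(x)$ forces these three numbers to agree for $(G,u)$ and $(H,x)$, and the displayed identity yields the claim.

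I would first unwrap one iteration: the second component of $\chi^{(1)}_{\psi_{d,G}}(u)$ is the multiset $\multiset{(X_G(v),\,\psi_{d,G}(u,v)) : v\in V_G}$, whose cardinality is $n$ and whose projection onto the second coordinate is exactly the multiset of squared distances from $u$. Summing that projection recovers $\sum_i (D_{P_G}^2)_{iu}$. Two iterations are then needed precisely to reach the total sum: the second component of $\chi^{(2)}_{\psi_{d,G}}(u)$ is $\multiset{(\chi^{(1)}_{\psi_{d,G}}(v),\,\psi_{d,G}(u,v)) : v\in V_G}$, and projecting to the first coordinate yields the outer multiset $\multiset{\chi^{(1)}_{\psi_{d,G}}(v) : v\in V_G}$. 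Applying the one-iteration extraction to each element $\chi^{(1)}_{\psi_{d,G}}(v)$ of that outer multiset recovers the individual row sum $\sum_w (D_{P_G}^2)_{vw}$, and summing over $v\in V_G$ produces the total $\sum_{i,j} (D_{P_G}^2)_{ij}$.

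With all three quantities extracted as canonical functions of the color, the assumed equality of colors transfers to equality of $n$, of row sums, and of total sums between $(G,u)$ and $(H,x)$, and \Cref{lem:diagonal_gram_matrix} finishes the proof. The main place to be careful is that the centering hypothesis on $P_G$ and $P_H$ is indispensable: without it, distances alone do not pin down the Gram diagonal, so the entire strategy relies on the centering being inherited from the setup of \Cref{thm:gram_wl_equals_distance_wl}. There is no deeper obstacle, since every extraction step used above is just a coordinate projection or a summation applied to a multiset, each of which is obviously a well-defined function of the color.
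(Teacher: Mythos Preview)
Your proposal is correct and follows essentially the same approach as the paper: both use \Cref{lem:diagonal_gram_matrix} as the bridge, extract the row sum $\sum_v (D_{P_G}^2)_{uv}$ from the level-$1$ color, and extract the total sum $\sum_{v,w}(D_{P_G}^2)_{vw}$ from the level-$2$ color via the multiset $\multiset{\chi^{(1)}_{\psi_{d,G}}(v):v\in V_G}$. Your version is in fact slightly more careful in that you explicitly note the vertex count $n$ is also determined by the color, which the paper leaves implicit.
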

\begin{proof}
    We will prove this using \Cref{lem:diagonal_gram_matrix}. First, the fact that $\wlcolor{\psi_{d,G}}{1}(u) = \wlcolor{\psi_{d,H}}{1}(x)$ implies $\multiset{(D^{2}_{P_G})_{uv} : v\in V_G} = \multiset{(D^{2}_{P_H})_{xy} : y\in V_H}$, which in particular implies that $\frac{1}{n}\sum_{v\in V_{G}}(D_{P_G}^{2})_{uv} = \frac{1}{n}\sum_{y\in V_H}(D_{P_H}^{2})_{xy}$. Second, as $\wlcolor{\psi_{d,G}}{2}(u) = \wlcolor{\psi_{d,H}}{2}(x)$, then as the second coordinate of these colors are equal, there is a bijection $\sigma:V_G\to V_H$ such that $\wlcolor{\psi_{d,G}}{1}(v) = \wlcolor{\psi_{d,H}}{1}(\sigma(v))$ for all $v\in V_G$. In particular, this implies that $\multiset{(D_{P_G}^{2})_{vw} : w\in V_G} = \multiset{(D_{P_H}^{2})_{\sigma(v)z} : x\in V_H}$ for all $v\in V_G$. Therefore, $P_{G}$ and $P_{H}$ have the same multiset of distances, and $\frac{1}{2n^2}\sum_{v\in V_G}\sum_{w\in V_G} (D_{P_G}^{2})_{vw}=\frac{1}{2n^2}\sum_{y\in V_H}\sum_{z\in V_H} (D_{P_H}^{2})_{yz}$. These two observations and \Cref{lem:diagonal_gram_matrix} imply the lemma.
\end{proof}

\begin{proposition}[Forward Direction of \Cref{lem:gram_wl_equals_squared_distance_wl}]
\label{thm:squared_distance_wl_stronger_than_gram_wl}
    Let $\psi_{d}$ and $\psi_{g}$ be RPEs such that, for any graph $G$, $\psi_{d,G}$ and $\psi_{g,G}$ are the squared distance and Gram matrix of a centered point set $P_{G}\subset\R^{d}$. Let $(G,X_G)$ and $(H,X_H)$ be featured graphs. Let $u\in V_G$ and $x\in V_H$. If $\wlcolor{\psi_{d,G}}{t+2}(u) = \wlcolor{\psi_{d,H}}{t+2}(x)$, then $\wlcolor{\psi_{g,G}}{t}(u) = \wlcolor{\psi_{g,H}}{t}(x)$. Moreover, if $(G,X_G)$ and $(H,X_H)$ are indistinguishable by $\psi_{d}$-WL, then $(G,X_G)$ and $(H,X_H)$ are indistinguishable by $\psi_g$-WL.
\end{proposition}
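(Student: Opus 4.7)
The plan is to prove the pointwise claim by induction on $t$, exploiting the polarization identity $p^T q = \tfrac{1}{2}\bigl(\|p\|^2 + \|q\|^2 - \|p-q\|^2\bigr)$, or, in matrix form,
\[
    (G_{P})_{ij} \;=\; \tfrac{1}{2}\bigl((G_{P})_{ii} + (G_{P})_{jj} - (D^{2}_{P})_{ij}\bigr).
\]
The crucial consequence is that once the diagonal entries of the Gram matrix are known, every off-diagonal entry is recoverable from the squared distance matrix. \Cref{lem:two_iterations_squared_distance_wl} supplies those diagonal entries after two iterations of $\psi_{d}$-WL, which is the source of the ``$+2$'' shift in the statement.

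For the base case $t=0$, the initial $\psi_{g}$-WL color is just $X_{G}(u)$, which is determined by $\wlcolor{\psi_{d,G}}{0}(u)$ and hence by $\wlcolor{\psi_{d,G}}{2}(u)$. For the inductive step, suppose the claim holds for $t$ and assume $\wlcolor{\psi_{d,G}}{t+3}(u) = \wlcolor{\psi_{d,H}}{t+3}(x)$. Unpacking the definition of the $(t+3)$-th $\psi_{d}$-WL color gives two pieces of information: first, $\wlcolor{\psi_{d,G}}{t+2}(u) = \wlcolor{\psi_{d,H}}{t+2}(x)$ (so by the inductive hypothesis $\wlcolor{\psi_{g,G}}{t}(u) = \wlcolor{\psi_{g,H}}{t}(x)$), and second, a bijection $\sigma:V_{G}\to V_{H}$ satisfying $\wlcolor{\psi_{d,G}}{t+2}(v) = \wlcolor{\psi_{d,H}}{t+2}(\sigma(v))$ and $(D^{2}_{P_{G}})_{uv} = (D^{2}_{P_{H}})_{x\,\sigma(v)}$ for every $v\in V_{G}$. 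Applying the inductive hypothesis to each pair $(v,\sigma(v))$ yields $\wlcolor{\psi_{g,G}}{t}(v) = \wlcolor{\psi_{g,H}}{t}(\sigma(v))$.

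It then remains to match the Gram-matrix entries themselves. \Cref{lem:two_iterations_squared_distance_wl} applied to the pairs $(u,x)$ and $(v,\sigma(v))$ gives $(G_{P_{G}})_{uu}=(G_{P_{H}})_{xx}$ and $(G_{P_{G}})_{vv}=(G_{P_{H}})_{\sigma(v)\sigma(v)}$; combined with the matching squared distances and the polarization identity above, this yields $(G_{P_{G}})_{uv}=(G_{P_{H}})_{x\,\sigma(v)}$. Consequently
\[
    \multiset{(\wlcolor{\psi_{g,G}}{t}(v),\,(G_{P_{G}})_{uv}) : v\in V_{G}} \;=\; \multiset{(\wlcolor{\psi_{g,H}}{t}(y),\,(G_{P_{H}})_{xy}) : y\in V_{H}}
\]
via $\sigma$, and together with the matching first coordinate we conclude $\wlcolor{\psi_{g,G}}{t+1}(u) = \wlcolor{\psi_{g,H}}{t+1}(x)$, closing the induction.

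The global statement follows routinely: if $(G,X_{G})$ and $(H,X_{H})$ are $\psi_{d}$-WL indistinguishable, then for every $t$ there is a bijection matching $\wlcolor{\psi_{d,G}}{t+2}$ colors, which by the pointwise claim also matches $\wlcolor{\psi_{g,\cdot}}{t}$ colors, so the multisets of $\psi_{g}$-WL colors agree at every iteration. The main obstacle I anticipate is purely bookkeeping: carefully tracking the two-iteration offset and making sure that the first coordinate of the WL color (the previous color) is handled consistently so that both pieces of information---the matching previous Gram-WL color and the existence of the bijection $\sigma$---are extracted from the single hypothesis $\wlcolor{\psi_{d,G}}{t+3}(u) = \wlcolor{\psi_{d,H}}{t+3}(x)$.
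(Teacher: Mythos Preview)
Your proof is correct and follows essentially the same approach as the paper: induction on $t$, using \Cref{lem:two_iterations_squared_distance_wl} to recover the diagonal Gram entries from two iterations of $\psi_d$-WL, and then the polarization identity to reconstruct the off-diagonal Gram entries from the squared distances. The paper's argument is structurally identical, including the same ``$+2$'' offset and the same use of the bijection $\sigma$ extracted from the $(t+3)$-th $\psi_d$-WL color.
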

\begin{proof}
    We prove this by induction on $t$. For $t=0$, this is obvious as $\wlcolor{\psi_{d,G}}{2}(u) = \wlcolor{\psi_{d,H}}{2}(x)$ implies that $\wlcolor{\psi_{g,G}}{0}(u) = X_G(u) = X_H(v) = \wlcolor{\psi_{g,H}}{0}(x)$
    \par
    Now suppose the proposition is true for a natural number $t$, and let $u\in V_G$ and $x\in V_H$ such that $\wlcolor{\psi_{d,G}}{t+3}(u) = \wlcolor{\psi_{d,H}}{t+3}(x)$. By induction, we know that $\wlcolor{\psi_{d,G}}{t+2}(u) = \wlcolor{\psi_{d,H}}{t+2}(x)$ implies $\wlcolor{\psi_{g,G}}{t}(u) = \wlcolor{\psi_{g,H}}{t}(x)$, so the first coordinates of $\wlcolor{\psi_{g,G}}{t+1}(u)$ and $\wlcolor{\psi_{g,H}}{t+1}(x)$ are equal.
    \par
    We now show the second coordinates are equal. The fact that $\wlcolor{\psi_{d,G}}{t+3}(u) = \wlcolor{\psi_{d,H}}{t+3}(x)$ implies there is a bijection $\sigma:V_G\to V_H$ such that $(\wlcolor{\psi_{d,G}}{t+2}(v),\, (D_{P_G}^2)_{uv}) = (\wlcolor{\psi_{d,H}}{t+2}(\sigma(v)),\, (D_{P_H}^2)_{x\sigma(v)})$ for all $v\in V_G$. As $\wlcolor{\psi_{d,G}}{t+2}(v)=\wlcolor{\psi_{d,H}}{t+2}(\sigma(v))$ implies $(G_{P_G})_{vv} = (G_{P_H})_{\sigma(v)\sigma(v)}$ by \Cref{lem:two_iterations_squared_distance_wl}, then we know that
    $$
        (G_{P_G})_{uv} = \frac{1}{2}\left((G_{P_G})_{vv} + (G_{P_G})_{uu} - (D^{2}_{P_G})_{uv}\right) = \frac{1}{2}\left((G_{P_H})_{\sigma(v)\sigma(v)} + (G_{P_H})_{xx} - (D_{P_H}^{2})_{x\sigma(v)}\right) = (G_{P_Q})_{x\sigma(v)}.
    $$
    Moreover, $\wlcolor{\psi_{d,G}}{t+2}(v)=\wlcolor{\psi_{d,H}}{t+2}(\sigma(v))$ implies $\wlcolor{\psi_{g,G}}{t}(v)=\wlcolor{\psi_{g,H}}{t}(\sigma(v))$ by the inductive hypothesis. Therefore, as $\sigma$ is a bijection, the second coordinates of $\wlcolor{\psi_{g,G}}{t+1}(u)$ and $\wlcolor{\psi_{g,H}}{t+1}(x)$ are equal:
    \begin{equation*}
        \multiset{(\wlcolor{\psi_{g,G}}{t}(v),\, (G_{P_G})_{uv}) : v\in V_G} = \multiset{(\wlcolor{\psi_{g,H}}{t}(y),\, (G_{P_H})_{xy}) : y\in V_H}. \qedhere
    \end{equation*}
\end{proof}

\begin{proposition}[Backward Direction of \Cref{lem:gram_wl_equals_squared_distance_wl}]
\label{prop:gram_wl_stronger_than_squared_distance_wl}
    Let $\psi_{d}$ and $\psi_{g}$ be RPEs such that, for any graph $G$, $\psi_{d,G}$ and $\psi_{g,G}$ are the squared distance and Gram matrix of a centered point set $P_{G}\subset\R^{d}$. Let $\psi_{g}'=(\psi_g,I)$ be the diagonal augmentation of $\psi_g$.  Let $(G,X_G)$ and $(H,X_H)$ with featured graphs. Let $u\in V_G$ and $x\in V_H$. If $\wlcolor{\psi_{g,G}'}{t+1}(u) = \wlcolor{\psi_{g,H}'}{t+1}(x)$, then $\wlcolor{\psi_{d,G}}{t}(u) = \wlcolor{\psi_{d,H}}{t}(x)$. In particular, $\psi_g'$-WL with diagonal augmentation is at least as strong as $\psi_d$-WL.
\end{proposition}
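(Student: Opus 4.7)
\emph{Proof plan.} I will prove the statement by induction on $t$. The key identity driving the argument is the polarization-style relation
\[
(D^2_{P_G})_{uv} = (G_{P_G})_{uu} + (G_{P_G})_{vv} - 2(G_{P_G})_{uv},
\]
which lets one reconstruct squared-distance values from Gram values, provided one has access to the diagonal entries of $G_{P_G}$. The role of the diagonal augmentation $\psi_g' = (\psi_g, I)$ is precisely to ensure this access: from the level-$1$ color $\wlcolor{\psi_{g,G}'}{1}(v)$, the multiset entry with indicator $1$ uniquely identifies the pair $w = v$, and therefore reveals the value $(G_{P_G})_{vv}$. In particular, whenever two vertices $v, v'$ satisfy $\wlcolor{\psi_{g,G}'}{s}(v) = \wlcolor{\psi_{g,H}'}{s}(v')$ for some $s \geq 1$, they must share the same Gram diagonal entry.

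The base case $t = 0$ is immediate: the initial colors on both sides are the node features, so $\wlcolor{\psi_{g,G}'}{1}(u) = \wlcolor{\psi_{g,H}'}{1}(x)$ implies $X_G(u) = X_H(x)$, hence $\wlcolor{\psi_{d,G}}{0}(u) = \wlcolor{\psi_{d,H}}{0}(x)$. For the inductive step, assume the claim holds at level $t$ and suppose $\wlcolor{\psi_{g,G}'}{t+2}(u) = \wlcolor{\psi_{g,H}'}{t+2}(x)$. Unwinding the definition gives a bijection $\sigma : V_G \to V_H$ such that for every $v \in V_G$,
\[
\bigl(\wlcolor{\psi_{g,G}'}{t+1}(v),\, (G_{P_G})_{uv},\, I_{uv}\bigr) = \bigl(\wlcolor{\psi_{g,H}'}{t+1}(\sigma(v)),\, (G_{P_H})_{x\sigma(v)},\, I_{x\sigma(v)}\bigr).
\]
The matching of the $I$-coordinates forces $\sigma(u) = x$, which yields $(G_{P_G})_{uu} = (G_{P_H})_{xx}$.

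The first coordinate of $\wlcolor{\psi_{d,G}}{t+1}(u)$ matches that of $\wlcolor{\psi_{d,H}}{t+1}(x)$ by the inductive hypothesis applied at level $t$, since $\wlcolor{\psi_{g,G}'}{t+1}(u) = \wlcolor{\psi_{g,H}'}{t+1}(x)$. For the second coordinate I will show that for each $v \in V_G$,
\[
(D^2_{P_G})_{uv} = (D^2_{P_H})_{x\sigma(v)} \qquad \text{and} \qquad \wlcolor{\psi_{d,G}}{t}(v) = \wlcolor{\psi_{d,H}}{t}(\sigma(v)).
\]
The second of these follows from the inductive hypothesis, since the pair equality above gives $\wlcolor{\psi_{g,G}'}{t+1}(v) = \wlcolor{\psi_{g,H}'}{t+1}(\sigma(v))$. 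For the first, I use the polarization identity together with three equalities: $(G_{P_G})_{uu} = (G_{P_H})_{xx}$ from $\sigma(u) = x$, $(G_{P_G})_{uv} = (G_{P_H})_{x\sigma(v)}$ from the bijection, and $(G_{P_G})_{vv} = (G_{P_H})_{\sigma(v)\sigma(v)}$ from the diagonal-awareness observation above, applied to $\wlcolor{\psi_{g,G}'}{t+1}(v) = \wlcolor{\psi_{g,H}'}{t+1}(\sigma(v))$ (since $t+1 \geq 1$). Since $\sigma$ is a bijection, the two multisets defining the second coordinates of $\wlcolor{\psi_{d,G}}{t+1}(u)$ and $\wlcolor{\psi_{d,H}}{t+1}(x)$ coincide, completing the induction.

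The main subtlety to watch out for is the level bookkeeping: the diagonal augmentation only reveals Gram diagonal entries after one refinement step, which is precisely why we need $\wlcolor{\psi_{g}'}{t+1}$ to control $\wlcolor{\psi_{d}}{t}$ rather than getting equal indices on both sides. No additional hard step is anticipated; the argument is essentially an inductive transfer using the polarization identity and the diagonal indicator.
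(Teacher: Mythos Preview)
Your proposal is correct and follows essentially the same approach as the paper's proof: induction on $t$, with the inductive step unfolding the level-$(t+2)$ color to obtain a bijection $\sigma$, then using the diagonal augmentation to recover the Gram diagonals $(G_{P_G})_{uu}$ and $(G_{P_G})_{vv}$, and finally applying the polarization identity $(D^2_{P})_{uv} = (G_P)_{uu} + (G_P)_{vv} - 2(G_P)_{uv}$ to transfer the multiset equality to the squared-distance side. Your explicit preliminary observation---that equality of $\psi_g'$-WL colors at any level $s\geq 1$ forces equality of Gram diagonal entries---is exactly what the paper invokes (somewhat more tersely) as ``by a similar argument'' when extracting $(G_{P_G})_{vv} = (G_{P_H})_{\sigma(v)\sigma(v)}$.
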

\begin{proof}
    We prove this by induction on $t$. For $t=0$, this is obvious as $\wlcolor{\psi_{g,G}'}{1}(u) = \wlcolor{\psi_{g,H}'}{1}(x)$ implies $\wlcolor{\psi_{g,G}'}{0}(u) = X_G(u) = X_H(v) = \wlcolor{\psi_{g,H}'}{0}(x)$ and equivalently $\wlcolor{\psi_{d,G}}{0}(u) = X_G(u) = X_H(v) = \wlcolor{\psi_{d,H}}{0}(x)$.
    \par
    Now suppose the proposition is true for a natural number $t$, and suppose that $\wlcolor{\psi_{g,G}'}{t+2}(u) = \wlcolor{\psi_{g,H}'}{t+2}(x)$. By induction, we know that $\wlcolor{\psi_{g,G}'}{t+1}(u) = \wlcolor{\psi_{g,H}'}{t+1}(x)$ implies $\wlcolor{\psi_{d,G}}{t}(u) = \wlcolor{\psi_{d,H}}{t}(x)$, so the first coordinates of $\wlcolor{\psi_{d,G}}{t+1}(u)$ and $\wlcolor{\psi_{d,H}}{t+1}(x)$ are equal.
    \par
    We now show the second coordinates are equal. As $\wlcolor{\psi_{g,G}'}{t+2}(u) = \wlcolor{\psi_{g,H}'}{t+2}(x)$, then there is a bijection $\sigma:V_G\to V_H$ such that $(\wlcolor{\psi_{g,G}'}{t+1}(v),(G_{P_G},I)_{uv}) = (\wlcolor{\psi_{g,H}'}{t+1}(\sigma(v)),(G_{P_G},I)_{x\sigma(v)})$ for all $v\in V_G$. The diagonal augmentation implies that $(G_{P_G}, I)_{uu} = (G_{P_H}, I)_{xx}$ as a diagonal element of $(G_{P_G},I)$ can only equal a diagonal element of $(G_{P_H},I)$. By a similar argument, the fact that $\wlcolor{\psi_{g,G}'}{t+1}(v)) = \wlcolor{\psi_{g,H}'}{t+1}(\sigma(v))$ implies that $(G_{P_G},I)_{vv} = (G_{P_H},I)_{\sigma(v)\sigma(v)}$. Therefore,
    \begin{align*}
        (D_{P_G}^{2})_{uv} = & (G_{P_G})_{uu} + (G_{P_G})_{vv} - 2(G_{P_G})_{uv} \\
        = & (G_{P_H})_{xx} + (G_{P_H})_{\sigma(v)\sigma(v)} - 2(G_{P_H})_{x\sigma(v)} = (D_{P_H}^2)_{x\sigma(v)}.
    \end{align*}
    Therefore, the second coordinates of $\wlcolor{\psi_{d,G}}{t+1}(u)$ and $\wlcolor{\psi_{d,H}}{t+1}(x)$ are equal, i.e.,
    \begin{equation*}
        \multiset{(\wlcolor{\psi_{d,G}}{t}(v),\, (D_{P_G}^{2})_{uv}) : v\in V_G} = \multiset{(\wlcolor{\psi_{d,H}}{t}(v),\, (D_{P_H}^{2})_{uv}) : v\in V_H}. \qedhere
    \end{equation*}

\end{proof}

Now we can prove \Cref{thm:spectral distance = kernel} below:

\spectraldistanceequalskernel*

\begin{proof}[Proof of \Cref{thm:spectral distance = kernel}]
    Note that for given graph $G$ with $n$ points, any spectral decomposition of the Laplacian $L=\sum_{i=2}^n\lambda_iz_iz_i^T$, and any function $f:\R^{+}\to\R^{+}$, we can define the point cloud $P_G=\{ \sqrt{f(L)} 1_v :  v\in V \}$, where $\sqrt{f(L)} = \sum_{i=2}^{n} \sqrt{f(\lambda_i)} z_iz_i^{T}$ The spectral kernel $K_G^f$ is the Gram matrix of this point cloud and the spectral distance $D^f_G$ is the distance matrix of this point cloud. (While different choice of bases for the eigenspace result in different point clouds $P_G$, these point clouds will always have the same Gram and distance matrices regardless of choice of basis.) This point cloud is centered as $\sum_{v\in V} \sqrt{f(L)} 1_v = \sum_{v\in V}\sum_{i=2}^{n}\sqrt{f(L)} \lambda_i z_iz_i^{T}1_v=0$ by the fact that each eigenvector $z_i$ is orthogonal to the all-1s vector, which is the eigenvector of $L$ corresponding to $\lambda_1=0$. The result for RPE-augWL with RPEs $K_{G}^{f}$ and $D^{f}_{G}$ holds by directly applying \Cref{thm:gram_wl_equals_distance_wl}.
\end{proof}

\paragraph*{RD-WL and $\boldsymbol{L^\dagger}$-WL}
\label{sec:resistance_wl_and_laplacian_wl}

In this section, we prove \Cref{thm:resistance_wl_equals_pinv_wl} by showing that it is unnecessary to perform diagonal augmentation on $L^{\dagger}$ in order for $L^{\dagger}$-WL to be as strong as RD-WL. In particular, we show that vertices $u$ and $v$ with the same $L^{\dagger}$-WL color after one iteration have the same diagonal entry $L^{\dagger}(u,u) = L^{\dagger}(v,v)$. This is the point in the proof of \Cref{prop:gram_wl_stronger_than_squared_distance_wl} where diagonal augmentation is used, so because this holds for $L^{\dagger}$-WL, we can drop the diagonal augmentation.
\par
For background, the \textit{\textbf{pseudoinverse}} of the Laplacian is $L^{\dagger}=\sum_{i=2}^{n} \frac{1}{\lambda_i} z_iz_i^{T}$, and the \textit{\textbf{resistance distance}} between nodes $u$ and $v$ in a graph is $RD(u,v) = (1_u-1_v)^{T}L^{\dagger}(1_u-1_v)$, where $1_u$ is the vector whose $u$-th entry is 1 and other entries are 0. The resistance distance is a \textbf{squared} spectral distance, but remarkably, it is also a metric (see for example \citep[Section 12.8]{spielman2019sagt}), which is not true in general for the square of distances.
\par
In particular, we prove the following lemma:
\begin{lemma}
    \label{lem:pseudo inverse diagonal}
            Let $G$ and $H$ be graphs. Let $u\in V_{G}$ and $v\in V_{H}$ such that $\chi^{(1)}_{L^{\dagger}_G}(u) = \chi^{(1)}_{L^{\dagger}_H}(v)$. Then $L^{\dagger}_G(u,u) = L^{\dagger}_H(v,v)$.
    \end{lemma}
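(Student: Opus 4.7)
The plan is to establish the stronger claim that for any graph $G$ and any vertex $u \in V_G$, the diagonal entry $L^\dagger_G(u,u)$ is the \emph{maximum} of the $u$-th row of $L^\dagger_G$, i.e., $L^\dagger_G(u,u) \geq L^\dagger_G(u,w)$ for all $w \in V_G$. Once this is in hand, the lemma is immediate: the hypothesis $\chi^{(1)}_{L^\dagger_G}(u) = \chi^{(1)}_{L^\dagger_H}(v)$ forces the equality of multisets $\multiset{L^\dagger_G(u,w) : w \in V_G} = \multiset{L^\dagger_H(v,y) : y \in V_H}$ (regardless of whether the graphs carry node features, since the $L^\dagger$-values appear as the second component of the paired colors in the update rule), and the maxima of these two equal multisets are exactly $L^\dagger_G(u,u)$ and $L^\dagger_H(v,v)$.

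To prove the row-maximum claim, I would express the entries of $L^\dagger_G$ in terms of the resistance distance $R_G$. Combining the identity $R_G(u,v) = L^\dagger_G(u,u) + L^\dagger_G(v,v) - 2 L^\dagger_G(u,v)$ with the double-centering formula $L^\dagger_G(u,u) = \tfrac{1}{n}\sum_w R_G(u,w) - \tfrac{1}{2n^2}\sum_{i,j} R_G(i,j)$ (which is \Cref{lem:diagonal_gram_matrix} applied to the centered point cloud $P_G=\{\sqrt{L^\dagger_G}\,\mathbf{1}_v : v \in V_G\}$ realizing $L^\dagger_G$ as a Gram matrix; centeredness follows from $\sqrt{L^\dagger_G}\mathbf{1}=0$), one obtains after cancellation
\[
    L^\dagger_G(u,u) - L^\dagger_G(u,v) = \tfrac{1}{2}\Bigl(R_G(u,v) + \tfrac{1}{n}\sum_{w\in V_G}\bigl(R_G(u,w) - R_G(v,w)\bigr)\Bigr).
\]

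The crucial last step invokes the fact that the resistance distance is a genuine \emph{metric}, hence satisfies the triangle inequality $R_G(v,w) \leq R_G(u,v) + R_G(u,w)$ for every $w$. Rearranging gives $R_G(u,w) - R_G(v,w) \geq -R_G(u,v)$; averaging over $w \in V_G$ yields $\tfrac{1}{n}\sum_w(R_G(u,w) - R_G(v,w)) \geq -R_G(u,v)$, which makes the bracketed quantity above non-negative. Thus $L^\dagger_G(u,u) \geq L^\dagger_G(u,v)$ for every $v$, completing the proof.

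The main obstacle is finding the right tool to strengthen what positive semidefiniteness alone yields. Cauchy--Schwarz on the Gram matrix gives only $|L^\dagger_G(u,v)| \leq \sqrt{L^\dagger_G(u,u)L^\dagger_G(v,v)}$, which can be strictly larger than $L^\dagger_G(u,u)$ when $L^\dagger_G(v,v)>L^\dagger_G(u,u)$; so the PSD property by itself does not imply the row-dominance of the diagonal. It is precisely the interplay between the explicit resistance-based formula for $L^\dagger_G$ and the metric property of $R_G$ that yields the sharper row-wise bound needed here.
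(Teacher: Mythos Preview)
Your approach is essentially the paper's: both arguments show that $L^\dagger_G(u,u)$ is the maximum of its row by writing $L^\dagger_G(u,u)-L^\dagger_G(u,v)=\tfrac12\bigl(R_G(u,v)+L^\dagger_G(u,u)-L^\dagger_G(v,v)\bigr)$, expressing $L^\dagger_G(u,u)-L^\dagger_G(v,v)$ via \Cref{lem:diagonal_gram_matrix} as an average of resistance differences, and then invoking the triangle inequality for $R_G$. The paper obtains a strict inequality while you obtain a non-strict one, but either suffices for the conclusion.

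There is one gap you should close. The metric property of $R_G$ that you invoke is only guaranteed on \emph{connected} graphs. For a disconnected graph, the quantity $R_G(u,v):=L^\dagger_G(u,u)+L^\dagger_G(v,v)-2L^\dagger_G(u,v)$ (the squared embedding distance) need not satisfy the triangle inequality: take $G$ to be an edge $\{a,b\}$ together with an isolated vertex $c$; then $R_G(a,b)=1$ while $R_G(a,c)=R_G(b,c)=\tfrac14$, so $R_G(a,b)>R_G(a,c)+R_G(c,b)$, and your term-wise bound $R_G(u,w)-R_G(v,w)\ge -R_G(u,v)$ fails for $u=c$, $v=a$, $w=b$. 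The row-maximum claim is nonetheless true in the disconnected case, and the paper handles it by observing that $L^\dagger$ is block-diagonal over connected components: within a nontrivial block the (strict) connected-case argument applies, off-block entries vanish, and the diagonal of a nontrivial block is strictly positive (by PSD and the kernel being $\mathbf{1}$ on that block); an isolated vertex has its entire row equal to $0$. Adding this one-line reduction to components makes your argument complete.
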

The proof is based on several auxiliary results.

\begin{lemma}\label{lem:diagonal_pseudoinverse_laplacian_ineq}
    Let $G$ be a connected graph with more than 1 vertex and let $L$ denote its graph Laplacian. Then, for any $u\neq v\in V$, we have that $L^{\dagger}(v,v)>L^{\dagger}(u,v)$.
\end{lemma}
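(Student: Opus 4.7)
The plan is to recast the claim as a statement about the single potential function $f := L^{\dagger}(1_v - 1_u) \in \R^V$, whose entries are $f(w) = L^{\dagger}(w,v) - L^{\dagger}(w,u)$. By symmetry of $L^{\dagger}$, the desired inequality $L^{\dagger}(v,v) > L^{\dagger}(u,v)$ is precisely $f(v) > 0$. In electrical-network language, $f$ is the voltage profile when a unit of current is injected at $v$ and extracted at $u$, and the intuition is that this voltage is maximized at the source $v$ and is strictly positive there because the mean of $f$ is zero.

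First I would verify the identity $Lf = 1_v - 1_u$. Since $G$ is connected, $\ker L = \operatorname{span}(\mathbf{1})$ and $1_v - 1_u \perp \mathbf{1}$, so $LL^{\dagger}(1_v - 1_u) = 1_v - 1_u$. Hence $f$ is discretely harmonic at every $w \notin \{u,v\}$ (i.e., $(Lf)(w) = 0$), while $(Lf)(v) = 1$ and $(Lf)(u) = -1$. Moreover $f \in \im L^{\dagger} = \operatorname{span}(\mathbf{1})^{\perp}$, so $\sum_w f(w) = 0$; and $f$ is not constant, because $Lf \neq 0$. Consequently $\max_w f(w) > 0$, and it remains to show that this maximum is attained at $v$.

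The main step is a discrete maximum principle argument. Let $S := \arg\max_w f(w)$. First, $u \notin S$: the equation $(Lf)(u) = \sum_{x \sim u}(f(u)-f(x)) = -1 < 0$ forces $f(x) > f(u)$ for some neighbor $x$. Second, for any $w \in S$ with $w \notin \{u,v\}$ we have $\sum_{x \sim w}(f(w)-f(x)) = 0$ with every summand non-negative (since $w \in S$), so all summands vanish and every neighbor of $w$ lies in $S$. Now suppose, for contradiction, that $v \notin S$: then $S \subseteq V \setminus \{u,v\}$, so the previous observation applies to every element of $S$, and $S$ is closed under taking $G$-neighbors. Since $S$ is non-empty and $G$ is connected, this forces $S = V$, contradicting $u \notin S$. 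Therefore $v \in S$, and $f(v) = \max_w f(w) > 0$.

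The main pitfall to sidestep is over-claiming uniqueness of the maximizer: one cannot rule out that $f$ attains its maximum at several vertices besides $v$ (equality $f(v) = f(w)$ for some harmonic $w \neq u$ is compatible with the setup, e.g.\ on a path $u\!-\!v\!-\!w$). The argument therefore only places $v$ inside $\arg\max f$, which is already enough to conclude $f(v) > 0$.
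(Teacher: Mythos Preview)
Your proof is correct but takes a genuinely different route from the paper. The paper rewrites $L^{\dagger}(v,v)-L^{\dagger}(u,v)=\tfrac{1}{2}\bigl(RD(u,v)+L^{\dagger}(v,v)-L^{\dagger}(u,u)\bigr)$ and then uses the identity $L^{\dagger}(k,k)=\tfrac{1}{n}\sum_{w}RD(k,w)-\text{const}$ (their diagonal--Gram formula) together with the triangle inequality for resistance distance to bound $L^{\dagger}(u,u)-L^{\dagger}(v,v)\le \tfrac{n-2}{n}\,RD(u,v)<RD(u,v)$. You instead work directly with the potential $f=L^{\dagger}(1_v-1_u)$ and apply a discrete maximum principle: $f$ is harmonic off $\{u,v\}$, has mean zero, is non-constant, and its argmax set cannot avoid $v$ by a connectivity argument. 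Your approach is more elementary and self-contained---it does not rely on the (non-trivial) metric property of $RD$---whereas the paper's argument integrates naturally with the resistance-distance machinery already developed in that section. Your caution about not claiming uniqueness of the maximizer is well placed and exactly what is needed; membership of $v$ in the argmax set suffices.
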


\begin{proof}
    By the fact that $RD(u,v)=L^{\dagger}(u,u)+L^{\dagger}(v,v)-2L^{\dagger}(u,v)$, we have that
    \[
        L^{\dagger}(v,v)-L^{\dagger}(u,v)=\frac{1}{2}(RD(u,v)+L^{\dagger}(v,v)-L^{\dagger}(u,u))
    \]
    Therefore, to prove that $L^{\dagger}(v,v) > L^{+}_{uv}$, it suffices to prove that
    \[
        RD(u,v)> L^{\dagger}(u,u)-L^{\dagger}(v,v).
    \]
    By \Cref{lem:diagonal_gram_matrix}, we have that
    \begin{align*}
    L^{\dagger}(u,u)-L^{\dagger}(v,v)&=\frac{1}{n}\left(\sum_{w\in V}RD(u,w)-\sum_{w\in V}RD(v,w)\right)\\
    &= \frac{1}{n}\sum_{w\in V}RD(u,w)-RD(v,w) \\
    &= \frac{1}{n}\sum_{w\in V\setminus\{u,v\}}RD(u,w)-RD(v,w) \tag{as $RD(v,u)=0$} \\
    &\leq \frac{1}{n}\sum_{w\in V\setminus\{u,v\}}RD(u,v) \tag{Triangle inequality.} \\
    &=\frac{n-2}{n}RD(u,v)<RD(u,v).  \qedhere
    \end{align*}
\end{proof}

\begin{lemma}\label{lem:diagonal_pseudoinverse_laplacian_general}
    Let $G$ be a graph and let $L$ denote its graph Laplacian. Let $u\in V$. Then either
    \begin{enumerate}
        \item $L^{\dagger}(u,u)>L^{\dagger}(u,v)$ for all $v\in V$, or
        \item $L^{\dagger}(u,v)=0$ for all $v\in V$.
    \end{enumerate}
\end{lemma}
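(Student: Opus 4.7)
The plan is to split on the connected component of $u$ and reduce to the already-proved \Cref{lem:diagonal_pseudoinverse_laplacian_ineq}. The essential structural fact I will use is that both the Laplacian $L$ and its Moore-Penrose pseudoinverse $L^{\dagger}$ respect the connected-component decomposition of $G$. Concretely, if we list the vertices of $G$ component by component, $L$ becomes block-diagonal with blocks $L_{C_1},\ldots,L_{C_k}$, where $L_{C_i}$ is the Laplacian of the induced subgraph on component $C_i$. Since the Moore-Penrose pseudoinverse of a block-diagonal matrix is the block-diagonal matrix of the pseudoinverses of the blocks, one has $L^{\dagger}(u,v)=0$ whenever $u,v$ lie in different components, and $L^{\dagger}$ restricted to $C_u\times C_u$ coincides with $(L_{C_u})^{\dagger}$.

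With this structural fact in hand, I would split into two cases based on the size of $C_u$, the connected component containing $u$. If $C_u=\{u\}$ (i.e.\ $u$ is isolated), then the $u$-th row and column of $L$ vanish, and the block-diagonal description of $L^{\dagger}$ forces $L^{\dagger}(u,v)=0$ for every $v\in V$; this places us in case (2) of the lemma. Otherwise $|C_u|>1$, and I want to show case (1) holds. For any $v\in C_u\setminus\{u\}$, I apply \Cref{lem:diagonal_pseudoinverse_laplacian_ineq} to the connected graph induced on $C_u$ (with the roles of $u$ and $v$ swapped) to obtain $L^{\dagger}(u,u)>L^{\dagger}(u,v)$. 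For $v\notin C_u$, the block structure gives $L^{\dagger}(u,v)=0$, so it remains to show $L^{\dagger}(u,u)>0$.

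This last positivity is the only nontrivial ingredient. Writing the spectral decomposition of $L_{C_u}$ as $L_{C_u}=\sum_{i\geq 2}\lambda_i z_i z_i^T$ (so $\lambda_1=0$ with eigenvector proportional to the all-ones vector on $C_u$), one has $L^{\dagger}(u,u)=\sum_{i\geq 2}\lambda_i^{-1}z_i(u)^2\geq 0$, with equality iff $z_i(u)=0$ for every $i\geq 2$, iff $\mathbf{1}_u$ lies in the kernel of $L_{C_u}$, iff $\mathbf{1}_u$ is a multiple of the all-ones vector on $C_u$. The last condition fails since $|C_u|>1$, giving the desired strict inequality $L^{\dagger}(u,u)>0>L^{\dagger}(u,v)$ for $v\notin C_u$. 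Combining the two sub-cases yields case (1), completing the argument.

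The main obstacle is honestly quite mild; it is just the bookkeeping to justify that $L^{\dagger}$ is block-diagonal with block pseudoinverses, and the small spectral observation that $L^{\dagger}(u,u)>0$ whenever the component of $u$ has at least two vertices. No new ideas beyond the previous lemma are needed, and the dichotomy in the statement of the result corresponds exactly to the dichotomy between isolated and non-isolated vertices.
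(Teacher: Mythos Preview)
Your proof is correct and essentially identical to the paper's: block-decompose $L^{\dagger}$ along connected components, invoke \Cref{lem:diagonal_pseudoinverse_laplacian_ineq} within $C_u$, and observe that an isolated vertex yields a zero row. One tiny slip: for $v\notin C_u$ you should write $L^{\dagger}(u,u)>0=L^{\dagger}(u,v)$ rather than $>0>$; your spectral justification that $L^{\dagger}(u,u)>0$ when $|C_u|>1$ is in fact a detail the paper's own proof leaves implicit.
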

\begin{proof}
    Let $G_1,\cdots,G_k$ denote the connected components of $G$. Then, we have that $L=L_1\oplus \cdots\oplus L_k$ and thus $L^\dagger=L_1^\dagger\oplus \cdots\oplus L_k^\dagger$. For each connected component $G_l$, if it contains more than 1 vertex, then by \Cref{lem:diagonal_pseudoinverse_laplacian_ineq},  $L_l^\dagger$ satisfies the condition that the diagonal entries are strictly larger than all other entries in the corresponding rows. If $G_l$ contains only 1 vertex, then $L_l^\dagger=0$ and hence the corresponding row and column in $L^\dagger$ is always 0. This concludes the proof.
\end{proof}

Now, we finish the proof of \Cref{lem:pseudo inverse diagonal}.

\begin{proof}[Proof of \Cref{lem:pseudo inverse diagonal}]
    By the definition of the $L^\dagger$-WL test, $\chi^{(1)}_{L^{\dagger}_G}(u) = \chi^{(1)}_{L^{\dagger}_H}(v)$ implies that $\multiset{L^{\dagger}_G(u,w) : w\in V_{G}} = \multiset{L^{\dagger}_H(v,x) : x\in V_{H}}$. By \Cref{lem:diagonal_pseudoinverse_laplacian_general}, we know that either the multisets $\multiset{L^{\dagger}_G(u,w) : w\in V_{G}}=\multiset{L^{\dagger}_H(v,x) : x\in V_{H}}$ are all 0s, in which case $L_G^{\dagger}(u,u)= 0 = L_H^{\dagger}(v,v)$, or they have the same maximal element, which is $L_G^{\dagger}(u,u)=L_H^{\dagger}(v,v)$.
\end{proof}

\subsection{SPE APE-GT is Stronger than RD-RPE-GT}
\label{sec:spe_and_resistance}

\paragraph{Background.} First, we provide the definition of SPE \cite{huang2023stability}. Denote the eigenvalues and eigenvectors of the Laplacian as $\lambda_{1}\leq\ldots\leq\lambda_{|V|}$ and $z_1,\ldots,z_{|V|}$. Let $\Lambda := [\lambda_{1},\ldots,\lambda_{|V|}]\in\R^{|V|}$ and $V:= [z_1|\cdots|z_{|V|}]\in\R^{|V|\times|V|}$, so $L=V\diag(\Lambda)V^{T}$.
\par
\textit{\textbf{Stable and expressive positional encoding (SPE)}} is the APE
$$
\SPE(V,\Lambda) = g(V\diag(f(\Lambda))V^{T}),
$$
where $f:\R^{n}\to\R^{n}$ is a permutation-equivariant function like an elementwise MLP or DeepSets~\cite{zaheer2017deep} and $g:\R^{n\times n}\to\R^{n}$ is any graph neural network. In our context, we assume that $f$ is a DeepSets network and $g$ is a 2-EGN.

By removing the final layer of 2-EGN from SPE, one obtains a natural RPE that we call the \textit{\textbf{relative-SPE (RSPE)}}:
$
\mathrm{RSPE}(V,\Lambda) = V\diag(f(\Lambda))V^{T}.
$

\begin{lemma}\label{lem:rspe_to_spe}
    If a chosen RSPE is diagonally-aware (or diagonally augmented), then there exists a 2-EGN $g$ such that RSPE-RPE-GT is equivalent to SPE-APE-GT in terms of distinguishing power.
\end{lemma}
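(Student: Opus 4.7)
The plan is to recognize that SPE is literally an instance of the construction $\phi^g$ obtained by passing an RPE through a 2-EGN, with RSPE playing the role of the underlying RPE. Unpacking the definitions, one has
\[
    \mathrm{SPE}(V,\Lambda) \;=\; g\bigl(V\diag(f(\Lambda))V^{T}\bigr) \;=\; g\bigl(\mathrm{RSPE}(V,\Lambda)\bigr),
\]
so for the 2-EGN $g$ appearing in the definition of SPE, the induced APE $\phi^g$ (in the sense of \Cref{sec:main}) is precisely SPE. Consequently, any statement comparing RSPE-RPE-GTs to SPE-APE-GTs reduces to the general comparison between $\psi$-RPE-GTs and $\phi^g$-APE-GTs for $\psi = \mathrm{RSPE}$.

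With that observation, the proof is a direct invocation of \Cref{thm:main-RPE-to-APE}. The hypothesis of the lemma is exactly the diagonal-awareness hypothesis required by that theorem (if RSPE is not already diagonally-aware, we replace it by its diagonal augmentation $D^{\mathrm{RSPE}}$, which is diagonally-aware by \Cref{prop:diag-augmentation}, and feed this into $g$ instead). The first half of \Cref{thm:main-RPE-to-APE} then asserts that for \emph{any} choice of 2-EGN $g$, two featured graphs that are indistinguishable by all RSPE-RPE-GTs are indistinguishable by $\phi^g = \mathrm{SPE}$, which by \Cref{lem:APE_transformer_equal_APE} means they are indistinguishable by all SPE-APE-GTs. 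The second half of \Cref{thm:main-RPE-to-APE} supplies, for any finite set $\mathcal{G}$ of unfeatured graphs, a specific 2-EGN $g^{*}$ such that indistinguishability by all $\phi^{g^{*}}$-APE-GTs on $\mathcal{G}$ implies indistinguishability by all RSPE-RPE-GTs; taking this $g^{*}$ as the 2-EGN inside SPE gives the reverse direction.

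The main obstacle, and the reason the statement must be understood with a caveat, is the asymmetry already noted after \Cref{thm:main-RPE-to-APE}: the reverse direction requires fixing the 2-EGN $g^{*}$ to a finite set of graphs of bounded size, because no single 2-EGN is known to universally simulate $\mathrm{RSPE}$-$2$-WL across all graphs. Thus the equivalence in the lemma should be read as ``for any finite collection of unfeatured graphs, there is a choice of the 2-EGN layer $g$ of SPE making SPE-APE-GT equivalent in distinguishing power to RSPE-RPE-GT''; the ``for unfeatured graphs'' restriction is inherited from \Cref{thm:main-RPE-to-APE} (cf.\ \Cref{lem:rpe_to_ape_counterexample}) and should be flagged in the statement or in a remark, tying back to the discussion in \Cref{rem:rspe_to_spe} on when the conversion of an RPE to an APE does or does not preserve distinguishing power.
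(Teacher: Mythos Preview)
Your proposal is correct and takes essentially the same approach as the paper: the paper's proof is the single line ``This follows from \Cref{thm:main-RPE-to-APE},'' and you unpack exactly why that theorem applies (namely, $\mathrm{SPE}=g(\mathrm{RSPE})$ is an instance of the $\phi^g$ construction). Your added discussion of the finite-set and unfeatured-graph caveats is accurate and matches how the lemma is actually used downstream in the proof of \Cref{thm:spe_stronger_than_resistance}.
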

\begin{proof}
    This follows from \Cref{thm:main-RPE-to-APE}.
\end{proof}

Notice that when $f$ is an elementwise function, RSPE reduces to a spectral kernel and we hence have the following result; however, RSPE is more general than spectral kernels as $f(\Lambda)_i$ may be a function of all the eigenvalues, not just the $i$th eigenvalue. Accordingly, we have the following lemma.

\begin{lemma}\label{lem:SPE spectral kernel}
    RSPE can compute any spectral kernel.
\end{lemma}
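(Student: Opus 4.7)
The plan is to exhibit a specific choice of the permutation-equivariant function in RSPE that reproduces a given spectral kernel. Fix any $f\colon\R^{+}\to\R^{+}$ and consider the associated spectral kernel $K^{f}_{G}=\sum_{i=2}^{n}f(\lambda_{i})z_{i}z_{i}^{T}$. I would extend $f$ to a function $\tilde{f}\colon\R_{\geq 0}\to\R$ by setting $\tilde{f}(0):=0$ and $\tilde{f}(x):=f(x)$ for $x>0$, and then define the map $F\colon\R^{n}\to\R^{n}$ by coordinate-wise application, i.e.\ $F(\Lambda)_{i}=\tilde{f}(\lambda_{i})$.

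Next I would note that $F$ is permutation-equivariant (any coordinate-wise map is), and is therefore an admissible choice of the inner function in RSPE: the excerpt explicitly allows ``an elementwise MLP or DeepSets'' so such an $F$ is directly realizable, and one can further approximate $\tilde{f}$ on the relevant finite set of eigenvalues by a standard elementwise MLP if an explicit architecture is required. With this $F$, the RSPE output becomes
\begin{equation*}
\mathrm{RSPE}(V,\Lambda)\;=\;V\,\diag(F(\Lambda))\,V^{T}\;=\;\sum_{i=1}^{n}\tilde{f}(\lambda_{i})\,z_{i}z_{i}^{T}.
\end{equation*}

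Now I would separate the $i=1$ term from the rest. Since $\lambda_{1}=0$ (using the convention from the paper's spectral decomposition), we have $\tilde{f}(\lambda_{1})=0$, so that term drops out. If $G$ is disconnected and has further zero eigenvalues (not covered in the paper's convention but worth addressing for robustness), the choice $\tilde{f}(0)=0$ still kills those terms. What remains is exactly $\sum_{i\,:\,\lambda_{i}>0}f(\lambda_{i})\,z_{i}z_{i}^{T}=K^{f}_{G}$, establishing the claim.

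There is essentially no obstacle here: the only thing to be careful about is the handling of the zero eigenvalue(s), which is resolved by defining the elementwise extension $\tilde{f}$ to vanish at $0$. The broader point I would emphasize in the write-up is that RSPE strictly generalizes spectral kernels because its inner function can mix information across all eigenvalues (e.g.\ through DeepSets aggregation), whereas a spectral kernel can only act on each $\lambda_{i}$ in isolation; the lemma captures the easy containment direction.
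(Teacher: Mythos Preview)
Your proposal is correct and follows essentially the same approach as the paper, which simply observes (in the sentence preceding the lemma) that when the inner permutation-equivariant map is chosen to be an elementwise function, RSPE reduces directly to a spectral kernel. Your treatment is more careful than the paper's in that you explicitly handle the zero eigenvalue by extending $f$ to vanish at $0$, thereby matching the convention that the spectral kernel sums only over $i\geq 2$; the paper leaves this implicit.
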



Based on the lemma, we are able to prove our main result in this section.

\spestrongerthanresistance*

\begin{proof}
    The precise statement of the result is that there exists a choice of RSPE with the diagonal augmentation such that its corresponding SPE-APE-GT is at least as strong as RD-RPE-GT.

     By \Cref{lem:rspe_to_spe}, we know that there exists a 2-EGN $g$ such that RSPE-RPE-GT is equivalent to SPE-APE-GT in terms of distinguishing power. Therefore, it suffices to show that RSPE-RPE-GT is at least as strong as RD-RPE-GT.
    Since the pseudoinverse of the Laplacian $L^{\dagger}$ is a spectral kernel, by \Cref{lem:SPE spectral kernel} RSPE can be chosen so that its corresponding RSPE-WL is at least as strong as $L^\dagger$-WL. Since the extra diagonal augmentation of the two RPEs will not change the relative distinguishing power of their RPE-augWL test, by \Cref{lem:RPE_transformers_equal_w_WL} and the fact that RD is diagonally-aware, we conclude the proof.
\end{proof}

\subsection{Powers of Matrices and Spectral Kernels: Proofs from \Cref{sec:powers_of_matrices}}
\label{sec:proof-powers-of-laplacian-equal-spectral-kernel}

We want to show that the multidimensional RPE $(I,L,\ldots,L^{k})$ is equivalent to spectral kernels in distinguishing power. To prove this, we will use a basic result from the interpolation literature showing that any function on $k$ points can be fit exactly by a $(k-1)$-degree polynomial $p$.

\begin{lemma}
\label{lem:polynomials_regression_guarantee}
    Let $f:\R\to\R$ be any function and let $\{x_1,\ldots,x_n\}\subset\R$. Then there exists an $(n-1)$-degree polynomial $p:\R\to\R$ such that $p(x_i)=f(x_i)$ for all $1\leq i\leq n$.
\end{lemma}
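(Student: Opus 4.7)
The plan is to prove this by explicit construction via Lagrange interpolation, which is the standard tool for this kind of finite-point interpolation problem. First I would note the (implicit) assumption that the points $x_1, \ldots, x_n$ are distinct, since if two coincided with distinct $f$-values the statement would be false. Under distinctness, the problem reduces to exhibiting a single polynomial of degree at most $n-1$ that agrees with $f$ on the given $n$ points.

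Concretely, for each $i \in \{1, \ldots, n\}$ I will define the Lagrange basis polynomial
\[
L_i(x) := \prod_{j \neq i} \frac{x - x_j}{x_i - x_j},
\]
which has degree $n-1$ (as a product of $n-1$ linear factors) with well-defined coefficients because $x_i - x_j \neq 0$ for $j \neq i$. A direct evaluation gives $L_i(x_i) = 1$ and $L_i(x_k) = 0$ for $k \neq i$.

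Then I would take the candidate polynomial
\[
p(x) := \sum_{i=1}^{n} f(x_i)\, L_i(x),
\]
which is a sum of degree-$(n-1)$ polynomials and hence itself has degree at most $n-1$. Evaluating at any $x_k$ gives $p(x_k) = \sum_{i=1}^n f(x_i) L_i(x_k) = f(x_k)\cdot 1 = f(x_k)$, as required. There is no real obstacle here: this is a classical textbook fact, and the only thing worth flagging is the distinctness hypothesis on the $x_i$, which is needed for the denominators in the Lagrange basis to be nonzero and is implicit in the set notation $\{x_1, \ldots, x_n\}$ used in the statement.
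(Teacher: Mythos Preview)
Your proof is correct and is the standard Lagrange interpolation argument. The paper does not actually prove this lemma; it simply states it as ``a basic result from the interpolation literature,'' so your explicit construction via Lagrange basis polynomials is exactly the expected justification and is consistent with the paper's treatment.
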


\powersoflaplacianspectralkernel*

\begin{proof}
    Let $G$ and $H$ be graphs such that $K^{f}$-WL distinguishes $G$ and $H$. Let $\{\lambda_{G,1},\ldots,\lambda_{G,n}\}$ and  $\{\lambda_{H,1},\ldots,\lambda_{H,n}\}$ be the spectra of the Laplacians $L_G$ and $L_H$ respectively, and denote their union $\{\lambda^{G}_1,\ldots,\lambda^{G}_n\}\cup\{\lambda^{G}_1,\ldots,\lambda^{G}_n\}=\{\lambda_0,\ldots,\lambda_{2n-1}\}$ (without loss of generality, we assume these elements are distinct). Let $p(x)=\sum_{i=0}^{2n-1}c_ix^{i}$ be the polynomial guaranteed by \Cref{lem:polynomials_regression_guarantee} for the point cloud $\{\lambda_0,\ldots,\lambda_{2n-1}\}$ and for the function $f$ in defining $K^f$. Let $p_{l}:\R^{2n}\to\R$ be the linear function with the same coefficients as $p$, i.e. $p_{l}(x) = \sum_{i=0}^{2n-1} c_ix_i$. By linearity, if we apply $p_l$ elementwise on the tensor $(I,L_G,\ldots,L_G^{2n-1})$, we find that it equals $K^f$ for $G$:
    $$
        p_{l}(I,L_{G},\ldots,L_{G}^{2n-1}) = \sum_{i=0}^{2n-1}c_iL_{G}^{i} =  \sum_{i=0}^{2n-1}\sum_{j=2}^{n} c_i\lambda_{G,j}^{i} z_{j}z_{j}^{T} =\sum_{j=2}^{n} p(\lambda_{G,j}) z_{j}z_{j}^{T} = \sum_{j=2}^{n} f(\lambda_{G,j}) z_{j}z_{j}^{T} = K^{f}_{G}
    $$
    The same calculation holds for $H$.
    Therefore, as $K^{f}$-WL distinguishes $G$ and $H$, then $p_l\circ(I,L,\ldots,L^{2n-1})$-WL distinguishes $G$ and $H$.
    \par
    Moreover,~\Cref{prop:concatenate rpe} implies $(I,L,\ldots,L^{2n-1})$-WL distinguishes $G$ and $H$. As the above argument can be repeated (using different polynomials $p$) to show that $(I,L,\ldots,L^{2n-1})$ can distinguish any pair of graphs that $K^{f}$ can, then $(I,L,\ldots,L^{2n-1})$-WL is as strong as $K^{f}$-WL on graphs with $n$ nodes.
\end{proof}

The following theorem uses normalized versions of the adjacency matrix and Laplacian. Recall that $D$ denotes the degree matrix, $A$ denotes the adjacency matrix, and $L$ denotes the graph Laplacian. The symmetrically-normalized adjacency matrix is $\widehat{A} = D^{-1/2}AD^{-1/2}$, and its symmetrically-normalized  Laplacian is $\widehat{L} = D^{-1/2}AD^{-1/2} = I-\widehat{A}$.
We let $\mu_1\leq\cdots\leq \mu_n $ denote eigenvalues of $\widehat{A}$ and let $\nu_1\leq\cdots\leq \nu_n$ denote eigenvalues of $\widehat{L}$

Note that $\nu_{i} = 1-\mu_{i}$. Moreover, $\widehat{A}$ and $\widehat{L}$ they have the same set of eigenvectors $\{x_1,\ldots,x_n\}$ (up to choice of orthonormal basis). For a function $f:\R^{+}\to\R^{+}$, we define the \textit{\textbf{normalized spectral kernel}} as $\widehat{K}^{f} = \sum_{i=2}^{n} f(\nu_i) x_ix_i^T$.

\stackofadjacencyspectralkernel*

\begin{proof}
    This theorem follows as the eigenvalues $\mu_{i}$ of $\widehat{A}$ and $\nu_{i}$ of $\widehat{L}$ can be matched as $\mu_{i} = 1-\nu_{i}$. The rest of the proof is the same as \Cref{thm:powers_of_laplacian_equal_spectral_kernel} by using polynomials to fit the function $\widehat{f}(x) = f(1-x)$.
\end{proof}

\heatkernelsgeneralkernels*

\begin{proof}
    This theorem follows as the eigenvalues of $H^{(t)}$ are $e^{-t\lambda_i} = (e^{-\lambda_i})^{t}$, so the proof is the same as the proof of \Cref{thm:powers_of_laplacian_equal_spectral_kernel} by using polynomials to fit the function $\widehat{f}(x) = f(-\log(x))$.
\end{proof}

\subsection{Common Matrices: Proof of \Cref{prop:common matrices}}

\commonmatrices*

\begin{proof}
    By \Cref{lem:wl-equals-tradition-wl} below, we know that $A$-WL is equivalent to the WL test. Hence, we only need to show that all other RPE-augWL tests are equivalent to $A$-WL.

    Choose any featured graphs $(G,X_G)$ and $(H,X_H)$. Pick $u\in V_G$ and $v\in V_H$.
    First of all, we observe the following fact.
    \begin{fact}\label{fact:degree}
        $\chi_{A_G}^{(1)}(u)=\chi_{A_H}^{(1)}(v)$ simplies that $\deg_G(u)=\deg_H(v)$.
    \end{fact}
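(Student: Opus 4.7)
The plan is to directly unfold the definition of the $A$-WL coloring at step $1$. By \Cref{def:w-WL}, we have
\[
\chi_{A_G}^{(1)}(u) = \Bigl(\chi_{A_G}^{(0)}(u),\; \multiset{(\chi_{A_G}^{(0)}(w),\, A_G(u,w)) : w\in V_G}\Bigr),
\]
and similarly for $v\in V_H$. So equality of these colors forces equality of the two multisets
\[
M_G(u) := \multiset{(\chi_{A_G}^{(0)}(w),\, A_G(u,w)) : w\in V_G}, \qquad M_H(v) := \multiset{(\chi_{A_H}^{(0)}(w),\, A_H(v,w)) : w\in V_H}.
\]

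The key observation is that the degree of $u$ in $G$ equals the number of entries in $M_G(u)$ whose second coordinate equals $1$. Indeed, since $A_G$ is the $\{0,1\}$-valued adjacency matrix of an unweighted graph, we have $A_G(u,w)=1$ iff $\{u,w\}\in E_G$, and the number of such $w$ is exactly $\deg_G(u)$. The same counting statement holds for $v$ in $H$. Because $M_G(u)=M_H(v)$ as multisets, projecting to the second coordinate preserves equality, and hence the number of $1$'s agrees. This gives $\deg_G(u)=\deg_H(v)$.

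The proof is essentially a one-line counting argument, so I do not anticipate any obstacles; the only subtlety is remembering that even for featured graphs the pair $(\chi_{A_G}^{(0)}(w),A_G(u,w))$ still carries $A_G(u,w)$ as its second coordinate, so projecting onto that coordinate works regardless of whether initial colors come from node features $X_G$ or from the constant $1$. No additional hypotheses on $\psi$ (such as diagonal- or combinatorial-awareness) are needed here, since $A$ itself is already combinatorially-aware by construction.
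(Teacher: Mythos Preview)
Your argument is correct and is exactly the natural unfolding the paper has in mind; the paper itself states this fact without proof (as a simple observation inside the proof of \Cref{prop:common matrices}), and your counting of the $1$-entries in the second coordinate of the multiset is precisely the implicit justification.
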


    \paragraph{$\widetilde{A}$:} We inductively prove that $\chi_{A_G}^{(t)}(u)=\chi_{A_H}^{(t)}(v)$ iff $\chi_{\widetilde{A}_G}^{(t)}(u)=\chi_{\widetilde{A}_H}^{(t)}(v)$. The case when $t=0$ trivially holds. Assume that the statement holds for some $t\geq 0$. Then, we first assume that $\chi_{A_G}^{(t+1)}(u)=\chi_{A_H}^{(t+1)}(v)$. This implies that $\chi_{A_G}^{(1)}(u)=\chi_{A_H}^{(1)}(v)$. By \Cref{fact:degree}, we have that $\deg_G(u)=\deg_H(v)$. By induction, we have that $\wlcolor{\widetilde{A}_G}{t}(u)=\wlcolor{\widetilde{A}_H}{t}(v)$ and
    \[\multiset{(\wlcolor{\widetilde{A}_G}{t}(x),\widetilde{A}_G(u,x)):x\in V_G}=\multiset{(\wlcolor{\widetilde{A}_H}{t}(y),\widetilde{A}_H(v,y)):y\in V_H}.\]
    Hence, we have that
    \[\multiset{(\wlcolor{\widetilde{A}_G}{t}(x),A_G(u,x)/\deg_G(u)):x\in V_G}=\multiset{(\wlcolor{\widetilde{A}_H}{t}(y),A_H(v,y)/\deg_H(v)):y\in V_H},\]
   and thus $\chi_{\widetilde{A}_G}^{(t+1)}(u)=\chi_{\widetilde{A}_H}^{(t+1)}(v)$.

   Conversely, assume that $\chi_{\widetilde{A}_G}^{(t+1)}(u)=\chi_{\widetilde{A}_H}^{(t+1)}(v)$. This implies that $\chi_{{\widetilde{A}_G}}^{(1)}(u)=\chi_{{\widetilde{A}_H}}^{(1)}(v)$. Similarly to \Cref{fact:degree}, we have that $\deg_G(u)=\deg_H(v)$. Then, the above argument can be repeated to show that $\chi_{{A_G}}^{(t+1)}(u)=\chi_{{A_H}}^{(t+1)}(v)$.

   Therefore, $\widetilde{A}$-WL is equivalent to $A$-WL.

    \paragraph{$\widehat{A}$:} We prove that $\widehat{A}$-WL is equivalent to $\widetilde{A}$-WL and hence $A$-WL by showing that for any $u\in V_G$ and $v\in V_H$, one has for all $t\geq 0$ that
    \[\chi_{\widehat{A}_G}^{(t)}(u)=\chi_{\widehat{A}_H}^{(t)}(v)\,\text{ implies }\,\chi_{\widetilde{A}_G}^{(t)}(u)=\chi_{\widetilde{A}_H}^{(t)}(v)\text{  and } \chi_{\widetilde{A}_G}^{(t+1)}(u)=\chi_{\widetilde{A}_H}^{(t+1)}(v)\,\text{ implies }\,\chi_{\widehat{A}_G}^{(t)}(u)=\chi_{\widehat{A}_H}^{(t)}(v)\]

    The case when $t=0$ holds trivially.

    Now assume that the statement holds for any $t\geq 1$. $\chi_{\widehat{A}_G}^{(t+1)}(u)=\chi_{\widehat{A}_H}^{(t+1)}(v)$ implies that
    \begin{align*}
        &(\chi_{\widehat{A}_G}^{(t)}(u),\multiset{(\chi_{\widehat{A}_G}^{(t)}(x),A_G(u,x)/\sqrt{\deg_G(u)\deg_G(x)}):x\in V_G})\\
        &=(\chi_{\widehat{A}_H}^{(t)}(v),\multiset{(\chi_{\widehat{A}_H}^{(t)}(y),A_H(v,y)/\sqrt{\deg_H(v)\deg_H(y)}):y\in V_H}).
    \end{align*}
    By counting the number of non-zero elements on both sides, we have that $\deg_G(u)=\deg_H(v)$. Hence,
    \begin{align*}
        &(\chi_{\widehat{A}_G}^{(t)}(u),\multiset{(\chi_{\widehat{A}_G}^{(t)}(x),\deg_G(u),A_G(u,x)/\sqrt{\deg_G(x)}):x\in V_G})\\
        &=(\chi_{\widehat{A}_H}^{(t)}(v),\multiset{(\chi_{\widehat{A}_H}^{(t)}(y),\deg_H(v),A_H(v,y)/\sqrt{\deg_H(y)}):y\in V_H})
    \end{align*}
    and thus
    \begin{align*}
        &(\chi_{\widehat{A}_G}^{(t)}(u),\multiset{(\chi_{\widehat{A}_G}^{(t)}(x),\deg_G(u),A_G(u,x)):x\in V_G})\\
        &=(\chi_{\widehat{A}_H}^{(t)}(v),\multiset{(\chi_{\widehat{A}_H}^{(t)}(y),\deg_H(v),A_H(v,y)):y\in V_H}).
    \end{align*}
    Finally, by the inductive hypothesis we have that
    \begin{align*}
        \wlcolor{\widetilde{A}_G}{t+1}(u)&=(\chi_{\widetilde{A}_G}^{(t)}(u),\multiset{(\chi_{\widetilde{A}_G}^{(t)}(x),\widetilde{A}_G(u,x)):x\in V_G})\\
        &=(\chi_{\widetilde{A}_H}^{(t)}(v),\multiset{(\chi_{\widetilde{A}_H}^{(t)}(y),\widetilde{A}_H(v,y)):y\in V_H})=\wlcolor{\widetilde{A}_H}{t+1}(v).
    \end{align*}
    For the other direction, assume that $\chi_{\widetilde{A}_G}^{(t+2)}(u)=\chi_{\widetilde{A}_H}^{(t+2)}(v)$. This implies that
    \begin{align*}
        &(\chi_{\widetilde{A}_G}^{(t+1)}(u),\multiset{(\chi_{\widetilde{A}_G}^{(t+1)}(x),\widetilde{A}_G(u,x)):x\in V_G})\\
        &=(\chi_{\widetilde{A}_H}^{(t+1)}(v),\multiset{(\chi_{\widetilde{A}_H}^{(t+1)}(y),\widetilde{A}_H(v,y)):y\in V_H}).
    \end{align*}
    It is obvious that $\deg_G(u)=\deg_H(v)$.
    Let $\sigma:V_G\to V_H$ denote a bijection such that $(\chi_{\widetilde{A}_G}^{(t+1)}(x),\widetilde{A}_G(u,x))=(\chi_{\widetilde{A}_H}^{(t+1)}(\sigma(x)),\widetilde{A}_H(v,\sigma(x)))$ for all $x\in V_G$. Then, we have that $\deg_G(x)=\deg_H(\sigma(x))$. Hence, $(\chi_{\widetilde{A}_G}^{(t+1)}(x),\widehat{A}_G(u,x))=(\chi_{\widetilde{A}_H}^{(t+1)}(\sigma(x)),\widehat{A}_H(v,\sigma(x)))$. By the inductive hypothesis, we have that $(\chi_{\widehat{A}_G}^{(t)}(x),\widehat{A}_G(u,x))=(\chi_{\widehat{A}_H}^{(t)}(\sigma(x)),\widehat{A}_H(v,\sigma(x)))$. Therefore,
    \begin{align*}
        \chi_{\widehat{A}_G}^{(t+1)}(u)&=(\chi_{\widehat{A}_G}^{(t)}(u),\multiset{(\chi_{\widehat{A}_G}^{(t)}(x),\widehat{A}_G(u,x)):x\in V_G})\\
        &=(\chi_{\widehat{A}_H}^{(t)}(v),\multiset{(\chi_{\widehat{A}_H}^{(t)}(y),\widehat{A}_H(v,y)):y\in V_H})=\chi_{\widehat{A}_H}^{(t+1)}(v).
    \end{align*}



    $L,\widehat{L},\widetilde{L}$:
    Note that the graph Laplacian $L$ is combinatorially-aware. Hence by \Cref{thm:combinatorially-aware-rpe-stronger-than-WL}, $L$-WL is at least as strong as the WL test and hence $A$-WL test. For the other direction, we prove the stronger result that $\chi_{A_G}^{(t)}(u)=\chi_{A_H}^{(t)}(v)$ implies that $\chi_{L_G}^{(t)}(u)=\chi_{L_H}^{(t)}(v)$ for any $u\in V_G$ and $v\in V_H$.

    The statement trivially holds when $t=0$. Assume that the statement holds for some $t\geq 0$. Then, we assume that $\chi_{A_G}^{(t+1)}(u)=\chi_{A_H}^{(t+1)}(v)$. This implies that $\chi_{A_G}^{(1)}(u)=\chi_{A_H}^{(1)}(v)$. By \Cref{fact:degree}, we have that $\deg_G(u)=\deg_H(v)$. By definition, $\chi_{A_G}^{(t)}(u)=\chi_{A_H}^{(t)}(v)$ and hence by induction hypothesis, $\chi_{L_G}^{(t)}(u)=\chi_{L_H}^{(t)}(v)$. Now, from the fact that $\multiset{(\chi_{A_G}^{(t)}(x),A_G(u,x)):x\in V_G}=\multiset{(\chi_{A_H}^{(t)}(y),A_H(v,y)):y\in V_H}$, we obtain a bijection $\sigma:V_G\to V_H$ such that $(\chi_{A_G}^{(t)}(x),A_G(u,x))=(\chi_{A_H}^{(t)}(\sigma(x)),A_H(v,\sigma(x)))$ for all $x\in V_G$. We assume that $\sigma(u)=v$ (otherwise we can modify $\sigma$ so that the assumption holds). Then, by inductive hypothesis we have that $\chi_{L_G}^{(t)}(x)=\chi_{L_H}^{(t)}(\sigma(x))$ for all $x\in V_G$. Furthermore, from the fact that $A_G(u,x)=A_H(v,\sigma(x))$, $\sigma(u)=v$ and $\deg_G(u)=\deg_H(v)$, we have that $L_G(u,x)=L_H(v,\sigma(x))$ for all $x\in V_G$. Hence, $\multiset{(\chi_{L_G}^{(t)}(x),L_G(u,x)):x\in V_G}=\multiset{(\chi_{L_H}^{(t)}(y),L_H(v,y)):y\in V_H}$. Therefore, $\chi_{L_G}^{(t+1)}(u)=\chi_{L_H}^{(t+1)}(v)$. This concludes the proof.

    Therefore,  $L$-WL is equivalent to $A$-WL. Similarly, $\widehat{L}$-WL is equivalent to $\widehat{A}$-WL and $\widetilde{L}$-WL is equivalent to $\widetilde{A}$-WL. Hence, all of these RPE-augWL tests are equivalent to the WL test.
\end{proof}

\paragraph{Magnetic Laplacian}

In this section, we prove \Cref{prop:magnetic_laplacian} showing that the magnetic Laplacian is as most as strong as an RPE as the concatenation of the RPEs $({D^*},A,A^T)$.

\begin{lemma}
    The magnetic Laplacian $L^\alpha=f^\alpha({D^*},A,A^T)$ where $f^\alpha:\R^3\to\R$ is defined
    \[f^\alpha(x,y,z):=x-\exp(\iota2\pi\alpha \cdot\mathrm{sgn}(z-y))\cdot \frac{y+z}{2}.\]
    and is applied to each channel of $(D^{*}, A,A^{T})$.
\end{lemma}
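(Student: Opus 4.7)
The plan is to unfold the definition of $L^\alpha$ entrywise and read off the map $f^\alpha$ directly. Since $D^*$ is diagonal (so $D^*_{ij}$ is well-defined for every $(i,j)$), $A^*_{ij}=(A_{ij}+A_{ji})/2=(A_{ij}+A^T_{ij})/2$, and $T^\alpha_{ij}=\exp(\iota 2\pi\alpha\cdot\mathrm{sgn}(A_{ji}-A_{ij}))=\exp(\iota 2\pi\alpha\cdot\mathrm{sgn}(A^T_{ij}-A_{ij}))$, the $(i,j)$-entry of $L^\alpha$ is
\[
    L^\alpha_{ij} \;=\; D^*_{ij} \;-\; \exp\!\bigl(\iota 2\pi\alpha\cdot\mathrm{sgn}(A^T_{ij}-A_{ij})\bigr)\cdot\frac{A_{ij}+A^T_{ij}}{2}.
\]
Setting $x = D^*_{ij}$, $y = A_{ij}$, $z = A^T_{ij}$, the right-hand side is precisely $f^\alpha(x,y,z)$, so $L^\alpha_{ij} = f^\alpha(D^*_{ij},A_{ij},A^T_{ij})$, which is the claim. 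The only (very mild) subtlety is that $f^\alpha$ acts channel-wise on the three-channel RPE $(D^*,A,A^T)$; this matches exactly because each of the three matrices contributes one scalar at position $(i,j)$.

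The proposition \Cref{prop:magnetic_laplacian} is then an immediate corollary: applying \Cref{prop:concatenate rpe}(1) to the RPE $\psi_G=(D^*_G,A_G,A^T_G)$ and the entrywise function $f^\alpha$ yields $L^\alpha = f^\alpha\circ\psi$, so $(D^*,A,A^T)$-WL is at least as strong as $L^\alpha$-WL. I expect no real obstacle here: the lemma is purely a symbolic rewriting, and the only thing to be careful about is that the sign function and the complex exponential are both applied pointwise, so the composition is well-defined entrywise and not as a matrix function.
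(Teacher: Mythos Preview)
Your proposal is correct and matches the paper's approach: the lemma is purely definitional, and the paper does not give a separate proof beyond stating it, since unfolding the entrywise formula for $L^\alpha_{ij}$ in terms of $D^*_{ij}$, $A_{ij}$, and $A^T_{ij}=A_{ji}$ is immediate. Your derivation of \Cref{prop:magnetic_laplacian} via \Cref{prop:concatenate rpe}(1) is also exactly how the paper proceeds.
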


\magneticlaplacians*

\begin{proof}
    As the magnetic Laplacian $L^{\alpha}$ is an elementwise function of the concatenation $(D^{*}, A, A^T)$, then by~\Cref{prop:concatenate rpe} Part 1, $(D^{*},A,A^{T})$-WL is at least as strong as $L^{\alpha}$-WL.
\end{proof}

\subsection{Combinatorial-Awareness: Proof of \Cref{thm:combinatorially-aware-rpe-stronger-than-WL}}
\label{sec:wl_and_combinatorially_aware_rpes}



\combinatoriallyawarerpewl*

For any two featured graphs
$(G,X_G)$ and $(H,X_H)$, we prove the following stronger lemma that immediately implies~\Cref{thm:combinatorially-aware-rpe-stronger-than-WL}.

\begin{lemma}
    Let $\psi$ be a combinatorially-aware RPE. Let $u\in G$ and $v\in H$. If $\wlcolor{\psi_G}{t}(u)=\wlcolor{\psi_H}{t}(v)$, then $\wlcolor{G}{t}(u) = \wlcolor{H}{t}(v)$.
\end{lemma}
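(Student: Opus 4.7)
The proof will proceed by induction on $t$, matching the structure of the $\psi$-WL and WL color definitions layer by layer.

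For the base case $t=0$, both $\chi_{\psi_G}^{(0)}$ and $\chi_G^{(0)}$ are the constant coloring (or both equal the node features $X_G$, $X_H$ if we are in the featured setting), so the implication is trivial. The induction hypothesis is that for some $t$, whenever $\chi_{\psi_G}^{(t)}(u)=\chi_{\psi_H}^{(t)}(v)$, then $\chi_G^{(t)}(u)=\chi_H^{(t)}(v)$, and we want to upgrade this to $t+1$.

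Assume $\chi_{\psi_G}^{(t+1)}(u)=\chi_{\psi_H}^{(t+1)}(v)$. By definition of the RPE-augWL update, this implies two things: first, $\chi_{\psi_G}^{(t)}(u)=\chi_{\psi_H}^{(t)}(v)$, and second, the multisets $\multiset{(\chi_{\psi_G}^{(t)}(w),\psi_G(u,w)):w\in V_G}$ and $\multiset{(\chi_{\psi_H}^{(t)}(z),\psi_H(v,z)):z\in V_H}$ coincide. The multiset equality yields a bijection $\sigma:V_G\to V_H$ such that for every $w\in V_G$, one has $\chi_{\psi_G}^{(t)}(w)=\chi_{\psi_H}^{(t)}(\sigma(w))$ and $\psi_G(u,w)=\psi_H(v,\sigma(w))$.

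Now the key step uses combinatorial-awareness: the condition $\psi_G(u,w)=\psi_H(v,\sigma(w))$ forces either both $\{u,w\}\in E_G$ and $\{v,\sigma(w)\}\in E_H$, or neither edge exists. Consequently $\sigma$ restricts to a bijection between the neighborhood of $u$ in $G$ and the neighborhood of $v$ in $H$. Applying the inductive hypothesis to each such pair $(w,\sigma(w))$ gives $\chi_G^{(t)}(w)=\chi_H^{(t)}(\sigma(w))$, and applying it to $(u,v)$ itself gives $\chi_G^{(t)}(u)=\chi_H^{(t)}(v)$. Combining these, the standard WL update yields
\[
\chi_G^{(t+1)}(u)=\bigl(\chi_G^{(t)}(u),\multiset{\chi_G^{(t)}(w):\{u,w\}\in E_G}\bigr)=\bigl(\chi_H^{(t)}(v),\multiset{\chi_H^{(t)}(\sigma(w)):\{v,\sigma(w)\}\in E_H}\bigr)=\chi_H^{(t+1)}(v),
\]
completing the induction.

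I expect no serious obstacle; the only subtle point is the use of combinatorial-awareness to translate the value-matching condition on $\psi$ into an adjacency-matching condition, which is exactly why the definition was introduced. Once that is in place, the argument is a direct bookkeeping of the WL update, and the corollary $\chi_\psi(G)=\chi_\psi(H)\Rightarrow\chi(G)=\chi(H)$ (hence Theorem~\ref{thm:combinatorially-aware-rpe-stronger-than-WL}) follows by taking multisets over all vertices.
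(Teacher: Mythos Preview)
Your proof is correct and follows essentially the same approach as the paper's own proof: induction on $t$, extracting a bijection $\sigma$ from the multiset equality in the $\psi$-WL update, using combinatorial-awareness to show $\sigma$ restricts to a bijection between neighborhoods, and then applying the inductive hypothesis to conclude the standard WL colors match.
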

\begin{proof}
    We prove this by induction on the iteration. The case $t=0$ holds trivially by definition of the two WL tests.
    \par
    Now inductively suppose the theorem is true for iteration $t$. Let $u\in G$ and $v\in H$ such that $\wlcolor{\psi_G}{t+1}(u)=\wlcolor{\psi_H}{t+1}(v)$.
    \par
    First, by definition, $\wlcolor{\psi_G}{t}(u) = \wlcolor{\psi_H}{t}(v)$. This and the inductive hypothesis imply $\wlcolor{G}{t}(u) = \wlcolor{H}{t}(v)$.
    \par
    Next, by definition, there is a bijection $\sigma:V_G\to V_H$ such that $(\wlcolor{\psi_G}{t}(w),\psi_{G}(u,w)) = (\wlcolor{\psi_H}{t}(\sigma(w)), \psi_{H}(v, \sigma(w)))$ for all $w\in V_G$. This implies $\psi_{G}(u,w) = \psi_{H}(v,\sigma(w))$ if and only if both $\{u,w\}$ and $\{v,\sigma(w)\}$ are edges or non-edges; therefore, $\multiset{ \wlcolor{\psi_G}{t}(w) : \{u,w\}\in E_G } = \multiset{ \wlcolor{\psi_H}{t}(x) : \{v,x\}\in E_H }$. This and the inductive hypothesis imply $\multiset{ \wlcolor{G}{t}(w) : \{u,w\}\in E_G } = \multiset{ \wlcolor{H}{t}(x) : \{v,x\}\in E_H }$. This and the observation from the previous paragraph imply $\wlcolor{G}{t+1}(u) = \wlcolor{H}{t+1}(v)$.
\end{proof}

Note that the adjacency matrix $A$ is trivially combinatorially-aware and hence $A$-WL is at least as strong as the WL. In fact, the other direction also holds and hence we have that the $A$-WL and WL are equally strong.

\begin{proposition}[RPE-augWL generalizes WL]\label{lem:wl-equals-tradition-wl}
    Consider the RPE given by the adjacency matrix $A$. Given two graphs $G$ and $H$, then $\chi(G)=\chi(H)$ iff $\chi_A(G)=\chi_A(H)$.
\end{proposition}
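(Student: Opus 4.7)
The plan is to handle the two directions separately. The backward direction, $\chi_A(G) = \chi_A(H) \Rightarrow \chi(G) = \chi(H)$, follows immediately from \Cref{thm:combinatorially-aware-rpe-stronger-than-WL}, since the adjacency matrix is trivially combinatorially-aware: the value $A_G(u,v) \in \{0,1\}$ already encodes whether $\{u,v\}$ is an edge or not, so $A_G(u,v) = A_H(x,y)$ iff both pairs are edges or both are non-edges.

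The forward direction requires more care. Assuming $\chi(G) = \chi(H)$ holds at every iteration, the plan is to prove by induction on $t$ the following stronger bijection-based statement: \emph{for every bijection $\sigma:V_G\to V_H$ with $\chi^{(t)}(u) = \chi^{(t)}(\sigma(u))$ for all $u\in V_G$, one also has $\chi_A^{(t)}(u) = \chi_A^{(t)}(\sigma(u))$ for all $u\in V_G$.} Applied at each $t$ with a bijection guaranteed by the multiset equality $\chi(G) = \chi(H)$, this statement yields $\chi_A(G) = \chi_A(H)$.

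The base case $t=0$ is trivial since all colors equal the constant $1$ and $\chi(G)=\chi(H)$ forces $|V_G|=|V_H|$. For the inductive step, start with a bijection $\sigma$ preserving $\chi^{(t+1)}$-colors. Since $\chi^{(t)}$ is encoded in the first coordinate of $\chi^{(t+1)}$, the same $\sigma$ preserves $\chi^{(t)}$-colors, so by the inductive hypothesis it also preserves $\chi_A^{(t)}$-colors. Now write the $A$-WL update multiset $\multiset{(\chi_A^{(t)}(w),A_G(u,w)):w\in V_G}$ and, using bijectivity of $\sigma$, rewrite the corresponding multiset at $\sigma(u)$ as $\multiset{(\chi_A^{(t)}(\sigma(w)),A_H(\sigma(u),\sigma(w))):w\in V_G} = \multiset{(\chi_A^{(t)}(w),A_H(\sigma(u),\sigma(w))):w\in V_G}$. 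Splitting into the edge part ($A=1$) and the non-edge part ($A=0$), the edge parts match because the WL update at $\sigma$ gives equality of neighbor multisets under $\chi^{(t)}$, which transfers to $\chi_A^{(t)}$ via the well-defined relabeling $\chi^{(t)}(w)\mapsto\chi_A^{(t)}(w)$ furnished by the inductive hypothesis. The non-edge parts then match because $\chi(G)=\chi(H)$ gives equal \emph{total} $\chi_A^{(t)}$-multisets on both graphs, and the non-edge multiset is the total multiset minus the edge multiset.

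The main obstacle is that the naive pointwise statement ``$\chi^{(t)}(u)=\chi^{(t)}(v)\Rightarrow \chi_A^{(t)}(u)=\chi_A^{(t)}(v)$'' across two graphs is \emph{false} in general. For instance, with $u\in C_4$ and $v$ a vertex in two disjoint triangles, both have WL color $(1,\multiset{1,1})$ after one step, yet their $A$-WL colors differ because $A$-WL senses the total vertex count via the $(c,0)$ entries. This is precisely why the induction must be formulated in terms of a global bijection (equivalently, in terms of full-graph multisets) rather than pointwise color equivalence, and why the hypothesis $\chi(G)=\chi(H)$ is essential: it equates cell sizes of the WL partitions on both graphs, which is exactly the extra information $A$-WL uses beyond WL.
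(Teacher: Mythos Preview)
Your argument is correct and tracks the paper's proof closely: the backward direction via combinatorial-awareness is identical, and the forward direction is the same induction that splits the $A$-WL update multiset into a neighbor part (matched via the WL neighbor condition) and a non-neighbor part (matched by subtracting from the global multiset, which is where the hypothesis $\chi(G)=\chi(H)$ enters).

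One presentational wrinkle worth flagging: the ``well-defined relabeling $\chi^{(t)}(w)\mapsto\chi_A^{(t)}(w)$'' you invoke for the edge part is \emph{exactly} the pointwise implication you warn against in your final paragraph. The paper states its induction directly in that pointwise form---``under the standing assumption $\multiset{\chi_G^{(t)}(u)}=\multiset{\chi_H^{(t)}(v)}$, if $\chi_G^{(t)}(u)=\chi_H^{(t)}(v)$ then $\chi_{A_G}^{(t)}(u)=\chi_{A_H}^{(t)}(v)$''---and your $C_4$-versus-two-triangles example does not contradict this, since those graphs have different sizes and so violate the global hypothesis. Your bijection-based hypothesis is equivalent to the paper's pointwise one (any $\chi^{(t)}$-preserving bijection can be modified by a swap to send a prescribed $u$ to a prescribed $v$ of the same $\chi^{(t)}$-color), but you leave that swapping step implicit; making it explicit, or simply adopting the paper's pointwise formulation from the start, would tighten the exposition.
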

\begin{proof}
We only need to prove the direction that the WL is at least as strong as the $A$-WL. Let $(G,X_G)$ and $(H,X_H)$ be two featured graphs. We prove the following stronger statement:
\begin{lemma}\label{lem:WL = A-WL}
    Assume that $\multiset{\wlcolor{G}{t}(u):u\in V_G}=\multiset{\wlcolor{H}{t}(v):v\in V_H}$. Let $u\in G$ and $v\in H$. If $\wlcolor{G}{t}(u) = \wlcolor{H}{t}(v)$, then $\wlcolor{A_G}{t}(u)=\wlcolor{A_H}{t}(v)$ and hence $\multiset{\wlcolor{A_G}{t}(u):u\in V_G}=\multiset{\wlcolor{A_H}{t}(v):v\in V_H}$.
\end{lemma}
\begin{proof}[Proof of \Cref{lem:WL = A-WL}]
    We prove this by induction on the iteration. The case $t=0$ holds trivially by definition of the two WL tests.
    \par
    Now inductively suppose the theorem is true for iteration $t$. Let $u\in G$ and $v\in H$ such that $\wlcolor{G}{t+1}(u)=\wlcolor{H}{t+1}(v)$. By definition, $\wlcolor{G}{t}(u)=\wlcolor{H}{t}(v)$ and $\{\!\!\{ \wlcolor{G}{t}(w) : \{u,w\}\in E_G \}\!\!\} = \{\!\!\{ \wlcolor{H}{t}(x) : \{v,x\}\in E_H \}\!\!\}$. By induction assumption, we have that $\wlcolor{G}{t}(u)=\wlcolor{H}{t}(v)$. Furthermore, there is a bijection $\sigma:N_G(u)\to N_H(v)$ such that $\wlcolor{G}{t}(w)=\wlcolor{H}{t}(\sigma(w))$ for all $w\in N_G(u)$. Now, since $\multiset{\wlcolor{G}{t}(x):x\in V_G}=\multiset{\wlcolor{H}{t}(y):y\in V_H}$, we have that $\multiset{\wlcolor{G}{t}(x):x\in V_G\backslash N_G(u)}=\multiset{\wlcolor{H}{t}(y):y\in V_H\backslash N_H(v)}$. Hence there exists a bijection $\tau:V_G\backslash N_G(u)\to V_H\backslash N_H(v)$ such that $\wlcolor{G}{t}(x)=\wlcolor{H}{t}(\tau(x))$ for all $x\in V_G\backslash N_G(u)$. Now, we define a bijection $\Phi:V_G\to V_H$ as the union of $\sigma$ and $\tau$. In this way, we have that for any $x\in V_G$, $A_G({u,x})=A_H(v,\Phi(x))$ and $\wlcolor{A}{t}(x)=\wlcolor{A}{t}(\Phi(x))$. Hence, we have that
    \[(\wlcolor{A_G}{t}(u),\multiset{(\wlcolor{A_G}{t}(x), A_G(u,x)):x\in V_G})=(\wlcolor{A_H}{t}(v),\multiset{(\wlcolor{A_H}{t}(y), A_H(v,y)):y\in V_H}).\]
    This implies that $\wlcolor{A_G}{t+1}(u)=\wlcolor{A_H}{t+1}(v)$ and we hence concludes the proof.
\end{proof}

This concludes the proof.
\end{proof}

\subsection{Resistance Distance is not Combinatorially-Aware}\label{sec:RD not ca}

In \Cref{fig:RDnotca}, we show a graph $G$ so that $\text{RD}(1,3)=\text{RD}(4,5)=1$ but $\{4,5\}\in E_G$ and $\{1,3\}\notin E_G$. This proves  that the resistance distance is not combinatorially-aware.
\begin{figure}[htbp!]
    \centering
    \includegraphics[scale=0.8]{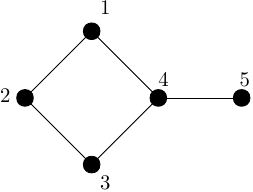}
    \caption{$\text{RD}(1,3)=\text{RD}(4,5)=1$. }
    \label{fig:RDnotca}
\end{figure}

\subsection{Resistance Distance and Block Cut-Edge Component Trees}
\label{sec:effective_resistance_and_cut_edges}

For a connected graph $G$, a \textit{\textbf{cut edge}} is an edge $e\in E_G$ such that $G\setminus\{e\}$ is disconnected. An \textit{\textbf{(edge-)biconnected component}} of $G$ is a maximal subset $U\subset V_G$ such that the subgraph of $G$ induced by $U$ is connected and has no cut edges. The \textit{\textbf{block cut-edge tree}} of $G$ is the tree whose vertices are the edge-biconnected components $BCC(G)$ of $G$ and its edges are $\{ \{C_1,C_2\} : C_1\neq C_2\in BCC(G),\, \exists \{u,v\}\in E_G \text{ such that } u\in C_1\,\&\, v\in C_2 \}$, i.e. the edges of the block cut-edge tree correspond to cut edges in $G$ that connect two edge-biconnected components.
\par
Similarly, a \textit{\textbf{cut vertex}} is a vertex $v\in V_G$ such that $G\setminus\{v\}$ is disconnected.
\par
In this section, we prove the following theorem.

\begin{theorem}\label{thm:cut edge}
    Let $G$ and $H$ be unfeatured graphs.
    \begin{enumerate}
        \item Let $\{u,v\}\in E_{G}$ and $\{x,y\}\in E_H$. If $\multiset{\chi_{RD_G}^{(t)}(u), \chi_{RD_G}^{(t)}(v)} = \multiset{\chi_{RD_H}^{(t)}(x), \chi_{RD_H}^{(t)}(y)}$ for all $t\geq 0$, then $\{u,v\}$ is a cut edge if and only if $\{x,y\}$ is a cut edge.
        \item If $\chi_{RD}(G) = \chi_{RD}(H)$, then the block cut-edge trees of $G$ and $H$ are isomorphic.
    \end{enumerate}
\end{theorem}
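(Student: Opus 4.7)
The strategy hinges on the classical electrical-network identity: for an edge $\{u,v\} \in E_G$ of a connected unweighted graph, $RD_G(u,v) \leq 1$ with equality if and only if $\{u,v\}$ is a cut edge (bridge). Consequently, Part 1 reduces to showing that the hypothesis on RD-WL colors forces $RD_G(u,v) = RD_H(x,y)$, after which the bridge characterization gives cut-iff-cut. Part 2 reduces to matching cut edges and their incidences with biconnected components between $G$ and $H$ in a structure-preserving way.

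For Part 1, fix a stabilization iteration $t^{\ast}$ of RD-WL on both graphs. The hypothesis, applied at iterations $t^{\ast}$ and $t^{\ast}+1$, gives (up to swapping the roles of $x$ and $y$) that $\chi^{(t^{\ast})}_{RD_G}(u) = \chi^{(t^{\ast})}_{RD_H}(x)$ and $\chi^{(t^{\ast})}_{RD_G}(v) = \chi^{(t^{\ast})}_{RD_H}(y)$; the symmetric case in which $u$ and $v$ already share a color is handled analogously. Unrolling the definition of $\chi^{(t^{\ast}+1)}_{RD_G}(u)$ yields a color-preserving bijection $\sigma\colon V_G \to V_H$ with $RD_G(u,w) = RD_H(x,\sigma(w))$ for every $w$; symmetrically, the equality at $v$ yields a bijection $\rho\colon V_G \to V_H$ with $RD_G(v,w) = RD_H(y,\rho(w))$. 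The crux is to upgrade these bijections to the identity $RD_G(u,v) = RD_H(x,y)$ itself. I plan to do this by exploiting the additivity of resistance across a bridge: if $\{u,v\}$ is a cut edge then $V_G = V_u \sqcup V_v$ with $RD_G(u,w) = 1 + RD_{G_v}(v,w)$ for every $w \in V_v$, so the color-refined partition of the multiset $\multiset{RD_G(u,w) : w \in V_G}$ carries the rigid signature of a separation, and this signature must be mirrored by $x$ in $H$. Contrasting this with the structure forced when $\{x,y\}$ is a non-cut edge (so $RD_H(x,y) < 1$ and no additive separation occurs) rules out the mismatch and supplies the desired equality.

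For Part 2, I start from the hypothesis $\chi_{RD}(G) = \chi_{RD}(H)$ and use that $RD$ is symmetric, hence pseudo-symmetric, to invoke \Cref{thm:wl-equals-2-wl} and obtain $\chi_{2,RD}(G) = \chi_{2,RD}(H)$. By \Cref{lem:wl-equals-2-wl-lemma}, this supplies a multiset equality of triples $(RD_G(u,v),\chi(u),\chi(v))$ across the two graphs. Combined with Part 1, this yields a bijection between cut edges of $G$ and cut edges of $H$ that respects vertex color classes at both endpoints. I then define the biconnected components as the equivalence classes under the transitive closure of the relation ``joined by a non-cut edge,'' observe that the pair-level matching preserves this relation (since it is characterized by the vanishing of the cut-edge indicator along a chain), and deduce a bijection between the blocks of $G$ and of $H$. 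Because the cut-edge bijection is compatible with this bijection on blocks, incidences are preserved and the induced map between the block cut-edge trees is a tree isomorphism.

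The main obstacle lies in Part 1: deriving $RD_G(u,v) = RD_H(x,y)$ from vertex-level color equality alone. The naive use of $\sigma$ and $\rho$ only guarantees $RD_G(u,v) = RD_H(x,\sigma(v))$ for some $\sigma(v)$ of the same color as $y$, which need not equal $y$ itself; one must rule out a color-class substitution in which a distinct same-colored vertex $v' \neq y$ carries the matched resistance value while $\{x,y\}$ itself is a non-cut edge. The extremal nature of $RD = 1$ on edges, together with the rigid separation of the vertex set induced by a bridge (and the absence of any such separation for a non-bridge edge), is what I expect to rule out these substitutions and complete the argument.
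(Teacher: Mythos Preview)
Your Part~1 sketch has the right ingredients (the bridge criterion $RD=1$ and the additivity across a cut) but stops exactly at the gap you yourself flag: from $\chi_{RD_G}(u)=\chi_{RD_H}(x)$ you only get $RD_G(u,v)=RD_H(x,\sigma(v))$ for some $\sigma(v)$ in the color class of $y$, not for $y$ itself. The phrase ``rigid signature of a separation'' is not a proof, and nothing you wrote rules out the substitution you worry about. The paper closes this gap with a concrete extremal lemma you are missing: for a cut edge $\{u,v\}$, if $C_{uv}$ denotes the set of vertices colored like $u$ or $v$, then on at least one side of the cut (say $S_v$) the only vertex of $C_{uv}$ is $v$ itself. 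This is proved by taking the \emph{farthest} vertex of $C_{uv}$ from $\{u,v\}$ and showing every vertex strictly beyond $v$ has a larger maximum-distance profile to $C_{uv}$ than either $u$ or $v$, hence a different color. Once $S_v\cap C_{uv}=\{v\}$, the nearest element of $C_{uv}\setminus\{v\}$ to $v$ is $u$ at distance exactly $1$; the color equality then forces the nearest element of $C_{u'v'}\setminus\{v'\}$ to $v'$ to also lie at distance $1$, and since $\{x,y\}\in E_H$ gives $RD_H(x,y)\le 1$, equality follows. Your proposal never isolates this min-distance-to-color-class invariant, which is the actual mechanism.

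Your Part~2 outline has a more serious problem. Passing through $\psi$-2-WL gives you a multiset bijection on triples $(RD(u,v),\chi(u),\chi(v))$, hence (with Part~1) a bijection between cut edges respecting endpoint colors. But this does \emph{not} yield an isomorphism of block cut-edge trees: a bijection on edges that preserves only vertex colors need not preserve which cut edges attach to the same block. Your claim that ``the pair-level matching preserves'' the same-block relation is unsupported---being in the same block means there is a \emph{chain} of non-cut edges, and a multiset bijection on pairs tells you nothing about chains. The paper's argument is substantially more work: it first shows that for matched cut edges $\{u,v\}$ and $\{u',v'\}$, the multisets of distances from $v$ (resp.\ $v'$) to vertices of any fixed color class lying on the $v$-side (resp.\ $v'$-side) coincide, via a careful cancellation argument on the two lists $D_v\cup(L_u+1)$ and $(D_v+1)\cup L_u$. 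It then proves that a vertex at distance $1$ from $v'$ with the right color must in fact be joined to $v'$ by a cut edge (this is nontrivial since such a vertex is not assumed adjacent). With these in hand, it shows the outgoing cut edges of the block $B_v$ correspond bijectively, with matching endpoint colors, to those of $B_{v'}$, and finishes by induction down the rooted block cut-edge tree. None of this structure is present in your proposal.
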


\paragraph{Background} The following are well-known facts about effective resistance that will be useful for proving this theorem.

\begin{lemma}
\label{lem:cut_edge_one_eff}
    Let $G$ be a graph with $\{u, v\}\in E_G$.  Then $RD_G(u,v)\leq 1$. Moreover, $RD_G(u,v) = 1$ if and only if $\{u,v\}$ is a cut edge.
\end{lemma}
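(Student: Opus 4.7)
The plan is to exploit the classical electrical-network interpretation of $RD_G(u,v)$ as the effective resistance between $u$ and $v$ when every edge of $G$ carries a unit resistor. From the algebraic definition $RD_G(u,v)=(1_u-1_v)^T L_G^{\dagger}(1_u-1_v)$, a standard Lagrange-multiplier computation (using $L_G L_G^{\dagger}(1_u-1_v)=1_u-1_v$ whenever $u,v$ lie in the same component) yields the Dirichlet variational formula
\[
RD_G(u,v)^{-1}=\min\!\left\{\sum_{\{a,b\}\in E_G}(f(a)-f(b))^2 : f:V_G\to\R,\ f(u)=1,\ f(v)=0\right\}.
\]
This immediately implies Rayleigh's monotonicity: if $G'\subseteq G$ on the same vertex set, then $RD_G(u,v)\le RD_{G'}(u,v)$, since deleting edges only removes non-negative terms from the objective, shrinking the minimum.

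For the inequality $RD_G(u,v)\le 1$, I would apply Rayleigh's principle to the subgraph $G'$ consisting of the single edge $\{u,v\}$ (and all other vertices isolated). A single unit resistor yields $RD_{G'}(u,v)=1$, so $RD_G(u,v)\le 1$.

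For the characterization of equality, I would split into two directions. If $\{u,v\}$ is \emph{not} a cut edge, there is a simple path $P=(u,w_1,\dots,w_{k-1},v)$ of length $k\ge 2$ in $G\setminus\{u,v\}$. Applying Rayleigh's principle to the subgraph $G''$ consisting of the edge $\{u,v\}$ together with the edges of $P$ (two internally disjoint $u$–$v$ paths), the parallel-resistor formula gives $RD_{G''}(u,v)=\tfrac{k}{k+1}<1$, whence $RD_G(u,v)<1$. Conversely, if $\{u,v\}$ is a cut edge, write $G\setminus\{u,v\}=C_u\sqcup C_v$ with $u\in C_u$ and $v\in C_v$, and use the test function $f\equiv 1$ on $C_u$, $f\equiv 0$ on $C_v$. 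Then $f(u)=1$, $f(v)=0$, and the only edge whose endpoints receive different values is the cut edge itself, so $\sum_e(f(a)-f(b))^2=1$. By the Dirichlet principle, $RD_G(u,v)^{-1}\le 1$, i.e.\ $RD_G(u,v)\ge 1$; combined with the upper bound we get $RD_G(u,v)=1$.

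The only real obstacle is justifying the Dirichlet variational formula from the pseudoinverse definition, which is standard and follows from a direct Lagrange-multiplier computation together with the observation that the minimizer $f^{\star}$ is harmonic off $\{u,v\}$ and agrees with $L_G^{\dagger}(1_u-1_v)$ up to an additive constant and rescaling. Once this is in place, the two inequalities and the cut-edge characterization follow by the short subgraph/test-function arguments above.
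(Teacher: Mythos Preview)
Your argument is correct: the Dirichlet variational characterization of effective conductance, Rayleigh monotonicity, the single-edge comparison for the upper bound, the parallel-path comparison for strict inequality when the edge is not a cut edge, and the indicator-of-one-side test function for the lower bound when it is a cut edge are all standard and valid. One small wording quibble: when you say ``deleting edges only removes non-negative terms from the objective, shrinking the minimum,'' the quantity being shrunk is the effective \emph{conductance} $RD^{-1}$, not $RD$ itself; your stated conclusion $RD_G\le RD_{G'}$ is nonetheless the right one.

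As for comparison with the paper: there is nothing to compare. The paper does not prove this lemma at all; it is listed under ``Background'' as one of two ``well-known facts about effective resistance'' and is simply asserted without argument. Your electrical-network proof via the Dirichlet principle is exactly the kind of standard justification the paper is implicitly deferring to.
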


\begin{lemma}
    Let $G$ be a graph with $u,v,w\in V$. Then $RD_{G}(u,v) + RD_{G}(v,w) = RD_{G}(u,w)$ if and only if $v$ is a cut vertex such that $u$ and $w$ are disconnected in $G\setminus\{v\}$.
\end{lemma}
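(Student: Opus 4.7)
My plan is to use the electrical-network interpretation of resistance distance throughout. First I would set notation: on a connected graph, for $a,b\in V$ let $\varphi_{ab}$ denote the voltage satisfying $L\varphi_{ab}=e_{a}-e_{b}$, and let $g_{ab}(x,y):=\varphi_{ab}(x)-\varphi_{ab}(y)$ for $\{x,y\}\in E$ be the induced unit $a\to b$ current flow, so that $RD(a,b)=\varphi_{ab}(a)-\varphi_{ab}(b)=\|g_{ab}\|^{2}$; since $\varphi_{ab}$ is harmonic on $V\setminus\{a,b\}$, the discrete maximum principle gives that $\varphi_{ab}$ attains its maximum at $a$ and its minimum at $b$. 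The central computation I would establish is the inner-product identity
\[
\langle g_{uv},g_{vw}\rangle \;=\;\varphi_{vw}^{T}L\varphi_{uv}\;=\;\varphi_{vw}(u)-\varphi_{vw}(v)\;\leq\;0,
\]
from the Laplacian quadratic form together with $L\varphi_{uv}=e_u-e_v$. Since $g_{uv}+g_{vw}$ is a unit $u\!\to\!w$ flow while $g_{uw}$ minimizes energy among such flows, expanding $\|g_{uv}+g_{vw}\|^{2}$ gives the crucial bound $RD(u,w)\leq RD(u,v)+RD(v,w)+2[\varphi_{vw}(u)-\varphi_{vw}(v)]$, so the assumed equality $RD(u,v)+RD(v,w)=RD(u,w)$ would force $\varphi_{vw}(u)=\varphi_{vw}(v)$.

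For the backward direction ($\Leftarrow$), I would decompose $G=G_1\cup G_2$ with $V(G_1)\cap V(G_2)=\{v\}$, $u\in G_1$, $w\in G_2$, and invoke uniqueness of the solution to Kirchhoff's equations to argue that $g_{uv}^{G}$ is supported only on edges of $G_1$ (there is no source or sink in $G_2$ beyond $v$), and symmetrically for $g_{vw}^{G}$; hence $RD_G(u,v)=RD_{G_1}(u,v)$ and $RD_G(v,w)=RD_{G_2}(v,w)$. The concatenated flow $g_{uv}^{G}+g_{vw}^{G}$ would then be a unit $u\!\to\!w$ flow with edge-disjoint supports whose energy equals $RD_G(u,v)+RD_G(v,w)$; by uniqueness of the minimum-energy unit flow it must coincide with $g_{uw}^{G}$, delivering the equality $RD_G(u,w)=RD_G(u,v)+RD_G(v,w)$.

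For the forward direction ($\Rightarrow$), I would set $U:=\{x\in V:\varphi_{vw}(x)=\varphi_{vw}(v)\}$, the set of maximizers of $\varphi_{vw}$; by the key inequality $u\in U$, while $w\notin U$ since $\varphi_{vw}(w)<\varphi_{vw}(v)$. The observation to exploit is that any $x\in U\setminus\{v\}$ satisfies $x\neq w$ and hence $\varphi_{vw}$ is harmonic at $x$, so the averaging identity $\varphi_{vw}(x)=\tfrac{1}{\deg(x)}\sum_{y\sim x}\varphi_{vw}(y)$ combined with $\varphi_{vw}(x)=\max\varphi_{vw}$ forces every neighbor $y\sim x$ to lie in $U$. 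Then, supposing for contradiction that $u$ and $w$ lie in the same connected component of $G\setminus\{v\}$, any path $u=x_0,x_1,\dots,x_k=w$ avoiding $v$ must cross from $U$ to $V\setminus U$ at some edge $\{x_i,x_{i+1}\}$ with $x_i\in U\setminus\{v,w\}$, contradicting the closure property just derived. The hard part will be executing the maximum-principle step cleanly so that the ``maximizer-propagates-to-neighbors'' claim is invoked only at interior vertices ($\neq v,w$); the uniqueness-of-flow argument in the backward direction is standard but also worth spelling out carefully.
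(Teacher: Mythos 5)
The paper states this lemma as a ``well-known fact'' without proof, so there is no internal argument to compare against; you are supplying the missing justification. Your electrical-network proof is correct in its essentials and is the standard route: compute $\langle g_{uv},g_{vw}\rangle=\varphi_{vw}(u)-\varphi_{vw}(v)\leq 0$ via $L\varphi_{uv}=e_u-e_v$ and the maximum principle, deduce the triangle inequality through Thomson, and characterize equality by $\varphi_{vw}(u)=\varphi_{vw}(v)$; for $(\Leftarrow)$ localize the two unit flows to disjoint edge sets using the cut vertex; for $(\Rightarrow)$ propagate the maximizer set $U$ through harmonic averaging and derive a path contradiction. One step in $(\Leftarrow)$ is stated too loosely: you identify $g_{uv}^G+g_{vw}^G$ with $g_{uw}^G$ ``by uniqueness of the minimum-energy unit flow,'' but Thomson's uniqueness only tells you the minimizer is unique, not that your candidate flow achieves the minimum. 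The clean justification is linearity: $L(\varphi_{uv}+\varphi_{vw})=e_u-e_w$, so $\varphi_{uv}+\varphi_{vw}$ \emph{is} $\varphi_{uw}$ up to an additive constant, and hence $g_{uv}+g_{vw}=g_{uw}$ directly (equivalently, the concatenated flow is curl-free, so it is the Kirchhoff solution, whose uniqueness you already invoked earlier). With that line added, the argument is complete; you might also note explicitly that the statement presumes $G$ connected and $u,v,w$ pairwise distinct, which you are using tacitly when asserting $\varphi_{vw}(w)<\varphi_{vw}(v)$ and when forming the block decomposition.
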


\paragraph{Notation} For any pair of vertices $u$ and $v$ of a graph $G$, their resistance distance is denoted by $RD_G(u,v)$; however, we may drop the subscript $G$ when it is clear from the context.

Recall that the notation $\chi_{RD}(G) = \chi_{RD}(H)$ means that two graphs $G$ and $H$ have the same multisets of the RD-WL colorings of their vertices for all $t\geq 0$, i.e.
$\multiset{\chi_{RD_G}^{(t)}(v) : v\in V_G} = \multiset{\chi_{RD_H}^{(t)}(v) : v\in V_H}$,
where $\chi_{RD_G}$ and $\chi_{RD_H}$ are RD-WL colorings of $G$ and $H$, respectively.
Similarly, we define $\chi_{RD_G}(u)$ as the ordered tuple $(\wlcolor{RD_G}{0}(u), \wlcolor{RD_G}{1}(u),\ldots)$. In other words, $\chi_{RD_G}(u) = \chi_{RD_H}(v)$ if and only if $\chi_{RD_G}^{(t)}(u) = \chi_{RD_H}^{(t)}(v)$ for all $t\geq 0$. We call these the \textit{\textbf{colors}} of the graphs and nodes, respectively. We prove the following lemma about this new notation.

\begin{lemma}
\label{lem:stable_coloring_lemma}
    Let $u\in V_G$ and $u'\in V_H$ be such that $\chi_{RD_G}(u) = \chi_{RD_H}(u')$. Then $\multiset{(\chi_{RD_G}(v), RD_G(u,v)) : v\in V_G} = \multiset{(\chi_{RD_H}(v'), RD_H(u',v')) : v' \in V_H}$.
\end{lemma}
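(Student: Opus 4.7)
The plan is to upgrade the per-iteration multiset equalities implied by the hypothesis into a single permutation $\sigma : V_G \to V_H$ that matches colors simultaneously at every iteration, using the finiteness of the vertex sets. First I would unpack $\chi_{RD_G}(u) = \chi_{RD_H}(u')$ as the statement $\chi_{RD_G}^{(t+1)}(u) = \chi_{RD_H}^{(t+1)}(u')$ for every $t \geq 0$. Reading off the second coordinate of the RD-WL update (\Cref{def:w-WL}) yields, for each fixed $t$,
\begin{equation*}
\multiset{(\chi_{RD_G}^{(t)}(v),\, RD_G(u,v)) : v \in V_G} = \multiset{(\chi_{RD_H}^{(t)}(v'),\, RD_H(u',v')) : v' \in V_H}.
\end{equation*}
Equivalently, the set $S_t$ of bijections $\sigma : V_G \to V_H$ with $\chi_{RD_G}^{(t)}(v) = \chi_{RD_H}^{(t)}(\sigma(v))$ and $RD_G(u,v) = RD_H(u',\sigma(v))$ for all $v \in V_G$ is non-empty.

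The difficulty is that $\sigma \in S_t$ a priori depends on $t$, whereas the conclusion of the lemma asks for a multiset of pairs involving the full infinite sequence $\chi_{RD_G}(v) = (\chi_{RD_G}^{(0)}(v), \chi_{RD_G}^{(1)}(v), \ldots)$. To resolve this I would observe the monotonicity $S_{t+1} \subseteq S_t$: by \Cref{def:w-WL} the color $\chi_{RD_G}^{(t+1)}(v)$ contains $\chi_{RD_G}^{(t)}(v)$ as its first coordinate, so matching colors at iteration $t+1$ forces matching them at iteration $t$, while the resistance-distance constraint does not involve $t$ at all. Thus $\{S_t\}_{t\geq 0}$ is a decreasing chain of non-empty subsets of the finite set of bijections $V_G \to V_H$, and hence $\bigcap_{t \geq 0} S_t \neq \emptyset$.

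Finally, any $\sigma \in \bigcap_{t\geq 0} S_t$ satisfies $\chi_{RD_G}^{(t)}(v) = \chi_{RD_H}^{(t)}(\sigma(v))$ for every $t \geq 0$ and every $v \in V_G$, which at the level of infinite tuples is exactly $\chi_{RD_G}(v) = \chi_{RD_H}(\sigma(v))$, together with $RD_G(u,v) = RD_H(u',\sigma(v))$. Re-indexing the left-hand multiset through $\sigma$ then yields the claimed identity. The only genuinely non-routine step is this finite monotone-intersection argument; everything else is a direct unpacking of the RD-WL update rule.
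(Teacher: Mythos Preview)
Your proof is correct. The paper's own argument is a one-line contrapositive: if the infinite-tuple multisets differed, then for some iteration $t$ the projected multisets $\multiset{(\chi^{(t)}_{RD_G}(v), RD_G(u,v))}$ and $\multiset{(\chi^{(t)}_{RD_H}(v'), RD_H(u',v'))}$ would already differ, forcing $\chi^{(t+1)}_{RD_G}(u) \neq \chi^{(t+1)}_{RD_H}(u')$. The step the paper leaves unjustified---why inequality at the level of infinite tuples must already show up at some finite $t$---is precisely what your monotone-intersection argument supplies: nonemptiness of each $S_t$ together with $S_{t+1}\subseteq S_t$ and finiteness of the bijection set yields a single $\sigma\in\bigcap_t S_t$, hence equality of the infinite-tuple multisets. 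So the two arguments are contrapositives of one another, with yours filling in the only nontrivial gap; the paper's version is shorter but tacitly relies on exactly the finiteness/stabilization fact you make explicit.
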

\begin{proof}
    If not, then there is some iteration $t$ such that $\multiset{(\chi^{(t)}_{RD_G}(v), RD_G(u,v)) : v\in V_G} \neq \multiset{(\chi^{(t)}_{RD_H}(v'), RD_H(u',v')) : v\in V_H}$. In this case, clearly $\chi^{(t+1)}_{RD_G}(u) \neq \chi^{(t+1)}_{RD_H}(u')$.
\end{proof}

\begin{lemma}
\label{lem:identical_dist_list}
Let $C$ be any set of colors. Let $u\in V_G$ and $u'\in V_H$ be such that $\chi_{RD_G}(u) = \chi_{RD_H}(u')$.  Let $D(u)$ (respectively, $D(u')$) be the multisets of the resistance distances of all vertices of $G$ (respectively, $H$) with color $C$ from $u$ (respectively, $u'$), i.e.
$D(u) = \multiset{RD_G(u, v) : v\in V_G,\,\chi_{RD_G}(v)\in C}$ and
$D(u') = \multiset{RD_H(u', v') : v'\in V_H,\, \chi_{RD_H}(v)\in C}$.
Then $D(u) = D(u')$.  In particular, the mean, minimum, and maximum of the multisets $D(u)$ and $D(u')$ are equal.
\end{lemma}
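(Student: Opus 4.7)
The plan is to derive this directly from \Cref{lem:stable_coloring_lemma}. First I would apply that lemma to the hypothesis $\chi_{RD_G}(u) = \chi_{RD_H}(u')$ to obtain the multiset identity
\[
\multiset{(\chi_{RD_G}(v),\, RD_G(u,v)) : v\in V_G}
= \multiset{(\chi_{RD_H}(v'),\, RD_H(u',v')) : v'\in V_H}.
\]
Next I would restrict to the sub-multisets whose first coordinate lies in $C$: since equal multisets remain equal after filtering by any predicate on the ground set, we get
\[
\multiset{(\chi_{RD_G}(v), RD_G(u,v)) : v\in V_G,\, \chi_{RD_G}(v)\in C}
= \multiset{(\chi_{RD_H}(v'), RD_H(u',v')) : v'\in V_H,\, \chi_{RD_H}(v')\in C}.
\]
Finally, projecting onto the second coordinate (which preserves multiset equality) yields $D(u)=D(u')$. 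The ``in particular'' clause about mean, minimum, and maximum is immediate, as these are well-defined functions of a finite multiset of real numbers.

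There is essentially no obstacle here; the main observation is just that filtering by a property depending only on the color is a legitimate operation on the multiset identity supplied by \Cref{lem:stable_coloring_lemma}, and then projection onto one coordinate preserves the identity.
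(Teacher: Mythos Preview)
Your proof is correct and takes essentially the same approach as the paper: both derive the result directly from \Cref{lem:stable_coloring_lemma}. The paper states it as a one-line contrapositive, while you spell out the filter-then-project argument explicitly, but the underlying reasoning is identical.
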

\begin{proof}
If these sets are not the same, then by~\Cref{lem:stable_coloring_lemma}, it cannot be the case that $\chi_{RD_G}(u) = \chi_{RD_H}(u')$.
\end{proof}

\begin{lemma}
\label{lem:one_singleton_side}
Let $\{u,v\}$ be a cut edge in $G$, and let $S_u$ and $S_v$ be the vertex sets of the connected components of $G\setminus\{u,v\}$ such that $u\in S_u$ and $v\in S_v$. Let $C_{uv}\subset V$ be the subset of all vertices that have RD-WL-color $\chi_{RD_G}(u)$ or $\chi_{RD_H}(v)$. Then $S_u\cap C_{uv} = \{u\}$ or $S_v\cap C_{uv} = \{v\}$.
\end{lemma}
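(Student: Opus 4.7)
The plan is to argue by contradiction: suppose both $|S_u \cap C_{uv}| \ge 2$ and $|S_v \cap C_{uv}| \ge 2$ and derive a contradiction. The only structural inputs needed beyond Lemma~\ref{lem:identical_dist_list} are that, since $\{u,v\}$ is a cut edge, Lemma~\ref{lem:cut_edge_one_eff} gives $RD_G(u,v) = 1$, and series-additivity of effective resistance across a bridge gives
\[
RD_G(x,y) \;=\; RD_G(x,u) + 1 + RD_G(v,y) \quad\text{for every } x \in S_u,\ y \in S_v,
\]
plus the fact that $RD_G$ is a metric, so obeys the triangle inequality. The engine of every case is the same: pick two same-color vertices $p, p'$, apply Lemma~\ref{lem:identical_dist_list} to $\sum_w RD_G(p,w) = \sum_w RD_G(p',w)$ over a color class, substitute the bridge formula, cancel the symmetric terms, and convert the residue into a numerical inequality via the triangle inequality.

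In the sub-case $\chi_{RD_G}(u) = \chi_{RD_G}(v)$, I would pick any $u^* \in (S_u \cap C_{uv}) \setminus \{u\}$; it has color $\chi_{RD_G}(u)$, so Lemma~\ref{lem:identical_dist_list} applies with $C = C_{uv}$. After substitution and cancellation the identity collapses to
\[
\sum_{u_i \in (S_u \cap C_{uv}) \setminus \{u, u^*\}} \bigl(RD_G(u, u_i) - RD_G(u^*, u_i)\bigr) \;=\; |S_v \cap C_{uv}| \cdot RD_G(u, u^*).
\]
The RD-triangle inequality bounds the left side in absolute value by $(|S_u \cap C_{uv}| - 2)\,RD_G(u, u^*)$, giving $|S_v \cap C_{uv}| \le |S_u \cap C_{uv}| - 2$. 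Running the same argument with $v$ and some $v^* \in (S_v \cap C_{uv}) \setminus \{v\}$ gives the reverse inequality, and the two together yield a contradiction.

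The main obstacle is the sub-case $\chi_{RD_G}(u) \ne \chi_{RD_G}(v)$, which requires splitting $C_{uv} = C_u \sqcup C_v$ into single-color strata and combining several instances of the same computation. Writing $a = |S_u \cap C_u|$, $b = |S_v \cap C_u|$, $c = |S_u \cap C_v|$, $d = |S_v \cap C_v|$ with $a, d \ge 1$, the assumption becomes $a + c \ge 2$ and $b + d \ge 2$. If $a \ge 2$ and $d \ge 2$, I would apply the Case~A-style computation twice --- on $(u,u^*)$ with $C = C_v$ yielding $c \ge d$, and on $(v,v^*)$ with $C = C_v$ yielding $c \le d - 2$ --- producing a contradiction. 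If $d = 1$, then $b \ge 1$ is forced by $b + d \ge 2$; taking any $u' \in S_v \cap C_u$ and applying Lemma~\ref{lem:identical_dist_list} to $(u, u')$ with $C = C_v$, the fact that $C_v \cap S_v = \{v\}$ makes the bridge substitution collapse to the scalar identity $1 = c(r+1) + r$, where $r = RD_G(u',v) > 0$; this is impossible once $c \ge 1$ is verified, which happens directly when $a = 1$ and via the $c \ge d = 1$ step when $a \ge 2$. The case $a = 1$ is symmetric, with $(v, v')$ in place of $(u, u')$, closing the argument. The bookkeeping across these sub-configurations is the main technical nuisance; no tool beyond Lemma~\ref{lem:identical_dist_list}, the bridge formula, and the RD-triangle inequality is required.
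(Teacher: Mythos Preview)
Your argument is correct, but it takes a genuinely different route from the paper's proof, and a substantially longer one. The paper does not use sums of distances at all: it picks the vertex $x \in C_{uv}$ maximizing $\min(RD(x,u), RD(x,v))$, assumes WLOG $x \in S_u$, and then shows that any $y \in S_v \setminus \{v\}$ has $\max_{w \in C_{uv}} RD(y,w) \ge RD(y,x) = RD(y,v) + 1 + RD(u,x)$, which strictly exceeds the corresponding maxima for both $u$ and $v$. A single application of Lemma~\ref{lem:identical_dist_list} (via the \emph{maximum} of the distance multiset) then rules out $y \in C_{uv}$. No contradiction hypothesis, no case split on whether the two colors coincide, and no counting parameters $a,b,c,d$ are needed.

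Your approach instead uses the \emph{sum} of the distance multiset as the invariant, which forces you to exploit the exact additive structure of the bridge formula to cancel the cross-bridge terms. That works, and the resulting inequalities $|S_v \cap C_{uv}| \le |S_u \cap C_{uv}| - 2$ (Case A) and the scalar identity $1 = c(r+1) + r$ (Case B) are clean once isolated. But the price is the case analysis on $a,b,c,d$, and the need to bootstrap $c \ge 1$ or $b \ge 1$ before the final step in each sub-configuration. The max-based argument avoids all of this because a single extremal witness $x$ already certifies the discrepancy, whereas the sum only yields a contradiction after the cross-terms have been eliminated. If you want to shorten your write-up, switching from the sum to the max of $D(u)$ in Lemma~\ref{lem:identical_dist_list} is the key simplification.
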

\begin{proof}
Let $x$ be the farthest vertex of $C_{uv}$ (with respect to the resistance distance) from the set $\{u, v\}$ in $G$. Assume $x\neq u$ and $x\neq v$, otherwise the statement of the lemma trivially holds.  Without loss of generality assume $x\in S_u$.  We show that $C_{uv}\cap S_v = \{v\}$.

\begin{figure}[h]
    \centering
    \includegraphics[height=1.0in]{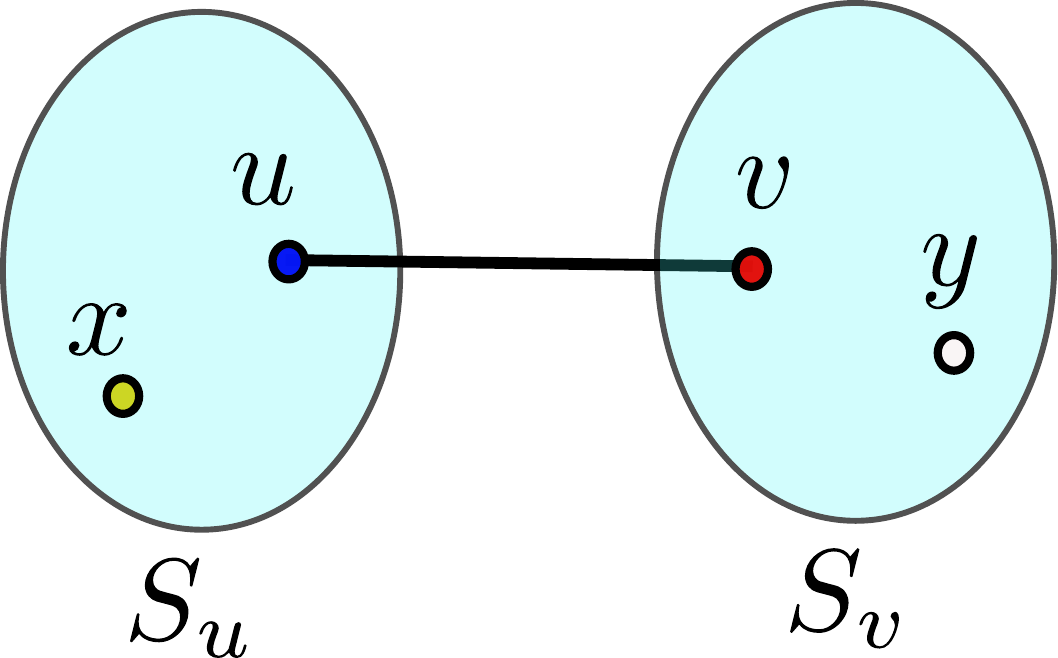}
\end{figure}

Let $y\in S_v$, and $y\neq v$.
Let $m_y$, $m_u$ an $m_v$ be the maximum (resistance) distance of $y$, $u$ and $v$ from any vertex in $C_{uv}$.
Note $m_u = RD(u,x)$, and $m_v = RD(v, x) = 1 + RD(u,x)$ because of the choice of $x$.
Further, we have $m_y \geq RD(y, v) + RD(v, x) = RD(y, v) + m_v > m_v$, and that $m_y \geq RD(y, u) + RD(u, x) = RD(y,u) + m_u = RD(y,v) + 1 + m_u > m_u$ as $\{u,v\}$ is a cut edge.
Thus, $m_y \neq m_u$ and $m_y \neq m_v$. In particular, the list of distances of both $v$ and $u$ to vertices in $C_{u,v}$ are different from the list of distances of $y$ to these vertices.
Hence, by \Cref{lem:identical_dist_list}, the color $\chi_{RD_G}(y)$ is different from both of the colors $\chi_{RD_G}(u)$ and $\chi_{RD_G}(v)$.
\end{proof}

\begin{theorem}
\label{thm:same_color_cut_edge}
Let $G$ and $H$ be two graphs with the same RD-WL colorings.
Let $\{u,v\}\in E_G$ be a cut edge, and let $\{u',v'\}\in E_H$ be such that $\chi_{RD_G}(u) = \chi_{RD_H}(u')$ and $\chi_{RD_G}(v) = \chi_{RD_H}(v')$. Then, $\{u',v'\}$ is a cut edge in $H$.
\end{theorem}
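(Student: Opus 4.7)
The plan is to reduce the conclusion to showing $RD_H(u',v') = 1$, since by Lemma~\ref{lem:cut_edge_one_eff} this is equivalent to $\{u',v'\}$ being a cut edge. Because $\{u',v'\}\in E_H$, Lemma~\ref{lem:cut_edge_one_eff} already gives $RD_H(u',v')\leq 1$ for free, so only the lower bound $RD_H(u',v')\geq 1$ requires work.

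To obtain that lower bound, I would apply Lemma~\ref{lem:one_singleton_side} to $G$ and assume without loss of generality that $S_v\cap C_{uv}=\{v\}$; the opposite case is handled symmetrically by swapping the roles of $u$ and $v$ and of $u'$ and $v'$. Now I would consider the multiset
\[
E_G:=\multiset{\,RD_G(v,w)\,:\,w\in V_G,\ \chi_{RD_G}(w)=\chi_{RD_G}(u)\,}
\]
and argue $\min E_G=1$. The element $RD_G(v,u)=1$ lies in $E_G$ by Lemma~\ref{lem:cut_edge_one_eff}. For any other $w\neq u$ with $\chi_{RD_G}(w)=\chi_{RD_G}(u)$, the singleton-side assumption forces $w\notin S_v$, so $w\in S_u\setminus\{u\}$. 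Because $\{u,v\}$ is a cut edge with $v\in S_v$, the vertex $u$ is a cut vertex of $G$ separating $v$ from $S_u\setminus\{u\}$; the cut-vertex characterization of resistance distance then yields
\[
RD_G(v,w)=RD_G(v,u)+RD_G(u,w)=1+RD_G(u,w)>1,
\]
since $RD_G$ is a metric and $w\neq u$. Thus $1$ is indeed the minimum.

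Finally, the hypothesis $\chi_{RD_G}(v)=\chi_{RD_H}(v')$ together with Lemma~\ref{lem:identical_dist_list} applied to the color class $c=\chi_{RD_G}(u)=\chi_{RD_H}(u')$ transfers this multiset to $H$:
\[
E_G=\multiset{\,RD_H(v',w')\,:\,w'\in V_H,\ \chi_{RD_H}(w')=\chi_{RD_H}(u')\,}.
\]
The vertex $u'$ lies in the relevant color class, so $RD_H(v',u')$ is an element of the right-hand side and therefore $RD_H(v',u')\geq \min E_G=1$. Combined with the upper bound, this gives $RD_H(u',v')=1$, so by Lemma~\ref{lem:cut_edge_one_eff} the edge $\{u',v'\}$ is a cut edge.

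The main obstacle is choosing the right multiset whose minimum is pinned at $1$. A naive choice, such as distances from $u$ to vertices of color $\chi(v)$, can fail because some such vertex could sit close to $u$ on the $S_u$ side. The point of pairing Lemma~\ref{lem:one_singleton_side} with the cut-vertex additivity of $RD$ is precisely to clean out one side of the cut edge of both relevant colors, so that every non-trivial element of the multiset is strictly larger than $1$ and cannot displace $1$ as the minimum when the multiset is transported to $H$.
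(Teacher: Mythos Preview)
Your argument follows essentially the same route as the paper's proof: use Lemma~\ref{lem:one_singleton_side} to clean one side of the bridge, show the minimum nonzero distance from $v$ to the relevant colour class is exactly $1$, transport this via Lemma~\ref{lem:identical_dist_list}, and combine with the upper bound $RD_H(u',v')\le 1$ from Lemma~\ref{lem:cut_edge_one_eff}. The only real difference is that the paper works with the full colour set $C_{uv}=\{\chi(u),\chi(v)\}$ whereas you work with the single colour $\chi(u)$.

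There is, however, a small gap in your version when $\chi_{RD_G}(u)=\chi_{RD_G}(v)$ (which can occur, e.g.\ when the two sides of the cut edge are isomorphic). In that case $v$ itself lies in the colour class of $\chi_{RD_G}(u)$, so your multiset $E_G$ contains $RD_G(v,v)=0$ and your claim $\min E_G=1$ is false; correspondingly, the sentence ``the singleton-side assumption forces $w\notin S_v$'' fails for $w=v$. The fix is immediate: since $RD$ is a metric, the element $0$ occurs in $E_G$ exactly once (from $v$) and in the transported multiset exactly once (from $v'$), so the \emph{second} smallest element of both is $1$, and $RD_H(v',u')\ge 1$ still follows because $u'\neq v'$. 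Alternatively, split off the case $\chi(u)\neq\chi(v)$ (where your argument is clean as written) from $\chi(u)=\chi(v)$ (where you look at the smallest nonzero element). The paper's choice of the two-colour set $C_{uv}$ does not avoid this issue either; it handles it implicitly in the same way.
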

\begin{proof}
Let $S_u$ and $S_v$ be the vertex sets of the connected components of $G\backslash \{u,v\}$ such that $u\in S_u$ and $v\in S_v$.
Also, let $C^G_{uv}$ (respectively, $C^H_{u'v'}$) be the set of all vertices that are colored $\chi_{RD_G}(u)$ or $\chi_{RD_G}(v)$ in $G$ (respectively, in $H$).
By \Cref{lem:one_singleton_side}, and without loss of generality, we assume $C^G_{uv} \cap S_v = \{v\}$.  Therefore, the closest vertex of $C^G_{uv}$ to $v$ is $u$, which is at distance 1 to $v$ since $\{u,v\}$ is a cut edge; the vertex $u$ is the closest to $v$ as there are no other vertices besides $v$ in $C^G_{uv} \cap S_v$, and any vertex in $C^G_{uv} \cap S_u$ is at distance at least 1 from $v$ via the series formula of the resistance distance.
\par
Since $\chi_{RD_G}(v) = \chi_{RD_H}(v')$, the closest vertex in $C^H_{u'v'}$ to $v'$ has distance 1 from $v'$---otherwise $\chi_{RD_H}(v')$ would be different from $\chi_{RD_G}(v)$ by \Cref{lem:identical_dist_list}. Moreover, $RD_H(u',v')\leq 1$ as $\{u',v'\}\in E_H$. Thus, $RD_H(u',v')=1$ and, by \Cref{lem:cut_edge_one_eff}, $\{u',v'\}$ is a cut edge.
\end{proof}

\begin{lemma}
\label{lem:left_right_separation}
Let $G$ and $H$ be two graphs with the same RD-WL colorings.
Let $\{u,v\}$ be a cut edge in $G$ and $S_u$, $S_v$ be the partitions of $G\backslash \{u,v\}$.
Let $\{u',v'\}$ be a cut edge in $H$ such that $\chi_{RD_G}(u) = \chi_{RD_H}(u')$ and $\chi_{RD_G}(v) = \chi_{RD_H}(v')$, and let $S'_{u'}$, $S'_{v'}$ be partitions of $H\backslash \{u',v'\}$.
Let $C$ be any subset of colors used in these equivalent colorings.
Let $D_v = \multiset{d_1, \ldots, d_k}$ be the list of distances of vertices with color $C$ in $S_v$ from $v$
and let $L_u = \multiset{\ell_1, \ldots, \ell_t}$ be the list of distances of vertices with color $C$ in $S_u$ from $u$.
Similarly, let $D'_{v'} = \multiset{d'_1, \ldots, d'_{k'}}$ be the list of distances of vertices with color $C$ in $S'_{v'}$ from $v'$
and let $L'_{u'} = \multiset{\ell'_1, \ldots, \ell'_{t'}}$ be the list of distances of vertices with color $C$ in $S'_{u'}$ from $u'$.
Then, we have $D_v = D'_{v'}$ and $L_u = L'_{u'}$. In particular, $|D_v| = |D'_{v'}|$ and $|L_u| = |L'_{u'}|$.
\begin{figure}[h]
    \centering
    \includegraphics[height=1.0in]{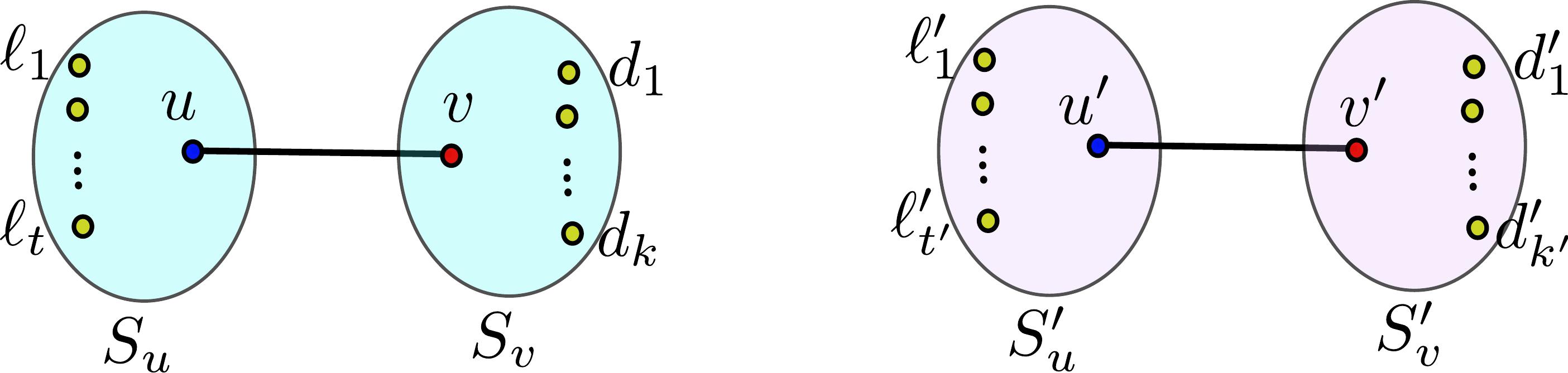}
\end{figure}
\end{lemma}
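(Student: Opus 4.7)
The approach is to leverage the series law for resistance at cut edges, combined with the color-matching multiset identities from \Cref{lem:identical_dist_list}, to separate the contributions of the two sides of the cut edges. Since $\{u,v\}$ and $\{u',v'\}$ are cut edges, \Cref{lem:cut_edge_one_eff} gives $RD_G(u,v)=RD_H(u',v')=1$, and the series law yields the dichotomy: for every $w\in V_G$, setting $m^G(w):=\min\{RD_G(u,w),RD_G(v,w)\}$, the pair $(RD_G(v,w),RD_G(u,w))$ equals $(m^G(w),m^G(w)+1)$ when $w\in S_v\cup\{v\}$ and $(m^G(w)+1,m^G(w))$ when $w\in S_u\cup\{u\}$, with an identical statement in $H$. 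Fix a color $c$ with corresponding class $K_c^G\subseteq V_G$ and define
\[A^G_c:=\multiset{m^G(w):w\in K_c^G\cap(S_v\cup\{v\})},\quad B^G_c:=\multiset{m^G(w):w\in K_c^G\cap(S_u\cup\{u\})},\]
together with $A^H_c,B^H_c$ analogously. Writing $M+1$ for the multiset obtained by shifting each element of $M$ up by $1$, and $\uplus$ for multiset sum (adding multiplicities), the dichotomy yields $\multiset{RD_G(v,w):w\in K_c^G}=A^G_c\uplus(B^G_c+1)$ and $\multiset{RD_G(u,w):w\in K_c^G}=(A^G_c+1)\uplus B^G_c$, and likewise for $H$. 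Applying \Cref{lem:identical_dist_list} with the singleton color set $\{c\}$---once using $\chi_{RD_G}(v)=\chi_{RD_H}(v')$ and once using $\chi_{RD_G}(u)=\chi_{RD_H}(u')$---matches the two left-hand sides across $G$ and $H$, giving
\[A^G_c\uplus(B^G_c+1)=A^H_c\uplus(B^H_c+1)\quad\text{and}\quad(A^G_c+1)\uplus B^G_c=(A^H_c+1)\uplus B^H_c.\]

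The key step is to extract $A^G_c=A^H_c$ and $B^G_c=B^H_c$ from these two identities. Let $\alpha_r$ denote the difference between the multiplicities of $r$ in $A^G_c$ and $A^H_c$, and define $\beta_r$ similarly. The identities translate, for every $r$, into $\alpha_r+\beta_{r-1}=0$ and $\alpha_{r-1}+\beta_r=0$, which combine to the recurrence $\alpha_r=\alpha_{r-2}$. Because both multisets are supported in $[0,\infty)$, $\alpha_r=0$ for $r<0$; iterating the recurrence down by steps of $2$ then forces $\alpha_r=0$ for every $r\geq 0$, and $\beta_r=0$ follows from either identity. This algebraic decoupling of the two sides of the cut edge from their shifted overlap is the main obstacle in the proof.

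The lemma now follows: $D_v$ is obtained from $\biguplus_{c\in C}A^G_c$ by removing the entry $0$ contributed by $v$ itself in the case $\chi_{RD_G}(v)\in C$, and the analogous removal from $\biguplus_{c\in C}A^H_c$ defines $D'_{v'}$; since $\chi_{RD_G}(v)=\chi_{RD_H}(v')$, this correction is identical in both graphs, so $D_v=D'_{v'}$. The equality $L_u=L'_{u'}$ is symmetric, and the cardinality claims are immediate corollaries.
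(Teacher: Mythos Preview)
Your proof is correct and follows essentially the same approach as the paper: both derive the two shifted-multiset identities from \Cref{lem:identical_dist_list} via the series law at the cut edges, then separate the $S_v$- and $S_u$-contributions using a shift-by-$2$ argument together with non-negativity of the distances. The paper carries out this last step combinatorially (pass to the residuals outside the common intersections and show a leftover multiset equals its own $+2$ shift, hence is empty), whereas your multiplicity-difference recurrence $\alpha_r=\alpha_{r-2}$ with $\alpha_r=0$ for $r<0$ is an equivalent and slightly cleaner algebraic formulation of the same idea.
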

\begin{proof}
Without loss of generality, by permuting indices, let
$
D_I = D_v\cap D'_{v'} = \multiset{d_1, \ldots, d_p} = \multiset{d'_1, \ldots, d'_p}
$, and
$
L_I = L_u\cap L'_{u'} = \multiset{\ell_1, \ldots, \ell_q} = \multiset{\ell'_1, \ldots, \ell'_q}
$.  Hence,
\[
D'_{v'} = \multiset{
    d_1, \ldots, d_p, d'_{p+1}, \ldots, d'_{k'}
},
\]
and
\[
L'_{u'} = \multiset{
    \ell_1, \ldots, \ell_q, \ell'_{q+1}, \ldots, \ell'_{t'}
}.
\]
By the definition of $D_I$,
\begin{equation}
\label{eqn:disjoint_1}
\multiset{
    d'_{p+1}, \ldots, d'_{k'}
} \cap \multiset{
    d_{p+1}, \ldots, d_{k}
} = \emptyset,
\end{equation}
and
\begin{equation}
\label{eqn:disjoint_2}
\multiset{
    \ell'_{q+1}, \ldots, \ell'_{t'}
} \cap \multiset{
    \ell_{q+1}, \ldots, \ell_{t}
} = \emptyset.
\end{equation}
Since $\chi_{RD_G}(v) = \chi_{RD_H}(v')$, by Lemma \ref{lem:identical_dist_list}, the list of distances of vertices with color $C$ from $v$ in $G$ is the same as the list of vertices with color $C$ from $v'$ in $H$. As $\{u,v\}$ and $\{u',v'\}$ are cut edges, then $D_v \cup (L_u+1) = D'_{v'} \cup (L'_{u'} + 1)$, where $+1$ is applied to all elements of a multiset.  Specifically,
\begin{align*}
\multiset{
    d_1, \ldots, d_k, \ell_1+1, \ldots, \ell_t+1
} &= \multiset{
    d'_1, \ldots, d'_{k'}, \ell'_1+1, \ldots, \ell'_{t'}+1
} \\
&= \multiset{
    d_1, \ldots, d_p, d'_{p+1}, \ldots d'_{k'}, \ell_1+1, \ldots, \ell_q+1, \ell'_{q+1}+1\ldots \ell'_{t'}+1
}
\end{align*}
Therefore, removing $D_I$ and $L_I$ from these multisets, we conclude
\begin{align*}
\multiset{
    d_{p+1}, \ldots, d_k, \ell_{q+1}+1, \ldots, \ell_t+1
} = \multiset{
    d'_{p+1}, \ldots d'_{k'}, \ell'_{q+1}+1\ldots \ell'_{t'}+1
}
\end{align*}
Equations (\ref{eqn:disjoint_1}) and (\ref{eqn:disjoint_2}) imply
\begin{align*}
\multiset{
    d_{p+1}, \ldots, d_k
} = \multiset{
    \ell'_{q+1}+1\ldots \ell'_{t'}+1
},
\end{align*}
and
\begin{align}
\label{eqn:ellplusone_is_dprim}
\multiset{
    \ell_{q+1}+1, \ldots, \ell_t+1
} = \multiset{
    d'_{p+1}, \ldots d'_{k'}
}.
\end{align}

On the other hand, since the list of distances of vertices with color $C$ from $u$ in $G$ is the same as the list of vertices with color $C$ from $u'$ in $H$, i.e.~$(D_v+1) \cup L_u = (D'_{v'}+1) \cup L'_{u'}$.  Specifically,
\begin{align*}
\multiset{
    d_1+1, \ldots, d_k+1, \ell_1, \ldots, \ell_t
} &= \multiset{
    d'_1+1, \ldots, d'_{k'}+1, \ell'_1, \ldots, \ell'_{t'}
} \\
&= \multiset{
    d_1+1, \ldots, d_p+1, d'_{p+1}+1, \ldots d'_{k'}+1, \ell_1, \ldots, \ell_q, \ell'_{q+1}\ldots \ell'_{t'}
}.
\end{align*}
Therefore,
\begin{align*}
\multiset{
    d_{p+1}+1, \ldots, d_k+1, \ell_{q+1}, \ldots, \ell_t
} = \multiset{
    d'_{p+1}+1, \ldots d'_{k'}+1, \ell'_{q+1}\ldots \ell'_{t'}
}.
\end{align*}
Again, Equations (\ref{eqn:disjoint_1}) and (\ref{eqn:disjoint_2}), imply
\begin{align*}
\multiset{
    d_{p+1}+1, \ldots, d_k+1
} = \multiset{
    \ell'_{q+1}\ldots \ell'_{t'}
},
\end{align*}
and
\begin{align}
\label{eqn:ellminusone_is_dprim}
\multiset{
    \ell_{q+1}, \ldots, \ell_t
} = \multiset{
    d'_{p+1}+1, \ldots d'_{k'}+1
}.
\end{align}

Combining Equations (\ref{eqn:ellplusone_is_dprim}) and (\ref{eqn:ellminusone_is_dprim}), we conclude
\[
\multiset{
    \ell_{q+1}, \ldots, \ell_t
} = \multiset{
    d'_{p+1}+1, \ldots d'_{k'}+1
} = \multiset{
    \ell_{q+1}+2, \ldots, \ell_t+2
}.
\]
Thus,
$
\multiset{
    \ell_{q+1}, \ldots, \ell_t
}
$
is empty, implying $L_u = L'_{u'}$, and $D_v = D'_{v'}$, as desired.
\end{proof}

\begin{lemma}
\label{lem:preserve_cut_edge}
Let $G$ and $H$ be two graphs with the same RD-WL colorings.
Let (i) $\{u,v\}$ be a cut edge in $G$, (ii) $u'\in V_H$ such that $\chi_{RD_G}(u) = \chi_{RD_H}(u')$, (iii)
$v'\in V_H$ such that $\chi_{RD_H}(v') = \chi_{RD_G}(v)$, and $RD_H(u',v') = 1$.  Then, $\{u',v'\}$ must be a cut edge in $H$.
\end{lemma}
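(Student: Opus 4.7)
The overall strategy is to establish that $\{u',v'\}\in E_H$; once that is in place, either Lemma \ref{lem:cut_edge_one_eff} applied to $RD_H(u',v')=1$ together with adjacency, or Theorem \ref{thm:same_color_cut_edge} applied to the matched cut edge $\{u,v\}$ in $G$, immediately yields that $\{u',v'\}$ is a cut edge in $H$. So the entire proof reduces to showing $\{u',v'\}\in E_H$.

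First I would apply Lemma \ref{lem:one_singleton_side} to the cut edge $\{u,v\}$ in $G$: without loss of generality, $C_{uv}^G\cap S_v=\{v\}$, so $v$ is the unique vertex in its side of the cut carrying a color in $C:=\{\chi_{RD_G}(u),\chi_{RD_G}(v)\}$. Every other vertex of these colors lies in $S_u\cup\{u\}$, and because $\{u,v\}$ is a cut edge the series formula gives resistance distance at least $1+RD_G(u,\cdot)$ from $v$, which is strictly greater than $1$ except at $u$ itself. Applying Lemma \ref{lem:identical_dist_list} to the matched vertices $v,v'$ with color class $C$ transfers this profile to $H$: the multiset of resistance distances from $v'$ to vertices of color in $\{\chi_{RD_H}(u'),\chi_{RD_H}(v')\}$ coincides with the one from $v$ in $G$. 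In particular, there is a \emph{unique} vertex at distance exactly $1$ from $v'$ whose color lies in this class, and by the hypothesis $RD_H(u',v')=1$ this unique vertex must be $u'$.

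The crux is now showing $\{u',v'\}\in E_H$. Suppose for contradiction that $\{u',v'\}\notin E_H$. Since the resistance distance is always bounded above by the graph distance, non-adjacency together with $RD_H(u',v')=1$ forces $d_H(u',v')\ge 2$ while the effective resistance is only $1$; consequently $u'$ and $v'$ must be connected by a nontrivial electrical configuration, specifically at least two internally disjoint paths (if every $u'$-$v'$ path passed through a common cut vertex $z$, one could restate the argument with the matched cut-vertex separating structure, but in either case at least two neighbors of $v'$ sit on distinct paths to $u'$). Each such neighbor $w$ of $v'$ satisfies $RD_H(v',w)\le 1$ by Lemma \ref{lem:cut_edge_one_eff}; if equality held, the uniqueness conclusion of the previous paragraph would force $w=u'$, contradicting that $w$ is an internal vertex of a path distinct from the direct link. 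Hence all such $w$ sit at resistance strictly below $1$ from $v'$. This produces a genuinely two-connected local structure at $v'$ whose iterated contribution to $\chi_{RD_H}^{(t)}(v')$ cannot be reproduced at $v$ in $G$, because the only route out of $v$ to a vertex of color in $C$ at distance $\ge 1$ is the single cut edge to $u$. Unwinding the stable coloring via Lemma \ref{lem:stable_coloring_lemma} at a sufficiently deep iteration $t$ yields $\chi_{RD_G}^{(t)}(v)\neq \chi_{RD_H}^{(t)}(v')$, contradicting the hypothesis. Therefore $\{u',v'\}\in E_H$ and the lemma follows.

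The main obstacle is the contradiction in the final paragraph. The internal path vertices $w$ need not carry colors in $C$, so they do not directly violate the distance-$1$ uniqueness argument; instead one must push the comparison to higher iterations of RD-WL, exploiting the fundamental block-decomposition asymmetry between a pendant cut edge at $v$ in $G$ and a two-connected configuration at $v'$ in $H$. Making this block-level distinction quantitative at the level of iterated $(\text{color},\text{distance})$ multisets is where the bulk of the technical work will lie, and it may be cleaner to first establish a standalone counting identity---e.g., that all color-class-$C$ vertices at resistance exactly $1$ from $u$ in $G$ are neighbors of $u$---and then transfer it to $H$ via Lemma \ref{lem:identical_dist_list}.
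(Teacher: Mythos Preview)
Your proposal has a genuine gap: the entire contradiction in the final paragraph is a sketch, not an argument, and you acknowledge this yourself. Saying that a ``two-connected local structure at $v'$'' produces an iterated color that ``cannot be reproduced at $v$'' is precisely the claim that needs proof; appealing to Lemma~\ref{lem:stable_coloring_lemma} ``at a sufficiently deep iteration'' without specifying which multiset entry differs, and why, does not close the gap. The neighbors $w$ of $v'$ that you produce have resistance $<1$ from $v'$, but you have no control over their colors, so nothing you have established about the color class $C$ at $v'$ is violated. Your closing suggestion (show that every color-$C$ vertex at resistance $1$ from $u$ is a neighbor of $u$, then transfer) would require a separate structural lemma you have not proved.

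The paper's argument is both different in structure and concrete where yours is vague. It does \emph{not} reduce to showing adjacency first. Instead it assumes directly that $\{u',v'\}$ is not a cut edge, then picks a neighbor $w'$ of $v'$ lying in the component of $u'$ in $H\setminus\{v'\}$ with $w'\neq u'$ (such a $w'$ exists exactly because $\{u',v'\}$ is not a cut edge). Letting $\ell=RD_H(w',v')<1$, the color of $w'$ is matched to some $w\in V_G$ with $RD_G(v,w)=\ell<1$, forcing $w\in S_v$. Now one looks at the list $L_w$ of distances from $w$ to vertices of color class $C$: since $S_v\cap C_{uv}=\{v\}$, the two smallest entries of $L_w$ are exactly $\ell$ (to $v$) and $\ell+1$ (to $u$, via the cut edge). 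Transferring $L_w=L_{w'}$ by Lemma~\ref{lem:identical_dist_list}, and noting $RD_H(w',v')=\ell$ while $RD_H(w',u')\le \ell+1$ by the triangle inequality, the second-smallest entry forces $RD_H(w',u')=\ell+1=RD_H(w',v')+RD_H(v',u')$. Equality in the resistance-distance triangle inequality means $v'$ is a cut vertex separating $w'$ from $u'$, contradicting the choice of $w'$. The key idea you are missing is to shift attention from $v'$ to a \emph{neighbor} $w'$ of $v'$ and compare the list $L_{w'}$, which gives the precise numerical coincidence needed for the equality case of the triangle inequality.
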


Note that the lemma does not trivially follow from \Cref{lem:cut_edge_one_eff} since $\{u',v'\}$ is not assumed to be an edge a priori.
\begin{proof}
Let $C_{uv}$ be the set of all vertices that are colored $\chi_{RD_G}(u)$ or $\chi_{RD_G}(v)$ in $G$.
By \Cref{lem:one_singleton_side}, $S_u\cap C_{uv} = \{u\}$ or $S_v\cap C_{uv} = \{v\}$.  Assume that $S_v\cap C_{uv} = \{v\}$; the other case is similar.

Suppose, to derive a contradiction, that $\{u',v'\}$ is not a cut edge.
Let $w'$ be a neighbor of $v'$ that resides in the same connected component as $u'$ in $H\backslash\{v'\}$. (In particular, if $v'$ is not a cut vertex $w'$ can be any neighbor of $v')$. Such a $w'$ exists and is distinct from $u'$, otherwise $\{u',v'\}$ would be a cut edge. We know that $RD_H(w', u') < RD_H(w', v')+RD_H(v',u') = RD_H(w', v')+1$; the triangle inequality implies the inequality, and it is a strict inequality because, in the case of equality, $v'$ would a cut vertex separating $w'$ and $u'$, contradicting the choice of $w'$. Let $\ell = RD_H(w', v')$, and note that $\ell < 1$, as otherwise $\{w',v'\}$ is a cut edge between $v'$ and $u'$ implying that $RD_H(u', v') > 1$.

Since $\chi_{RD_G}(v) = \chi_{RD_H}(v')$, there must exist a vertex $w\in V_G$ such that $\chi_{RD_G}(w) = \chi_{RD_H}(w')$ and $RD_G(v, w) = \ell < 1$.  Since all vertices in $S_u$ are at a distance at least one from $v$, we have $w \in S_v$.  Let $L_{w}$ be the list of distances of vertices in $C_{uv}$ from $w$.  Since $S_v\cap C_{uv} = \{v\}$, the smallest distance in $L_{w}$ is $\ell$ and the second smallest distance in $\ell+1$, to $v$ and $u$, respectively.

Let $C_{u'v'}$ be the set of all vertices in $H$ that are colored $\chi_{RD_H}(u')=\chi_{RD_G}(u)$ or $\chi_{RD_H}(v') = \chi_{RD_G}(v)$.  Then, let $L_{w'}$ be the list of all distances of $w'$ to vertices in $C_{u'v'}$.  Since $\chi_{RD_H}(w') = \chi_{RD_G}(w)$, then $L_{w'} = L_w$ by~\Cref{lem:identical_dist_list}.  In particular, the smallest two numbers in $L_{w'}$ are $\ell$ and $\ell+1$.  We know $RD_H(w', v') = \ell$, and by the triangle inequality $RD_H(w', u') \leq RD_H(w', v') + 1 = \ell+ 1$.  Since, the second smallest number in the list is $\ell+1$, and not smaller, the inequality must be tight, i.e. $RD_H(w', u') = RD_H(w', v') + 1 = RD_H(w', v') + RD_H(v', u')$.  Therefore, $v'$ is a cut vertex separating $u'$ and $w'$, contradicting the choice of $w'$.
\end{proof}

\begin{lemma}
\label{lem:same_outgoing_edges}
Let $G$ and $H$ be two graphs with the same RD-WL colorings.
Let $\{u,v\}$ be a cut edge in $G$ and
$\{u',v'\}$ a cut edge in $H$ such that $\chi_{RD_G}(u) = \chi_{RD_H}(u')$ and $\chi_{RD_G}(v) = \chi_{RD_H}(v')$.
Furthermore, let $B_v$ (resp.~$B'_{v'}$) be the biconnected component of $v$ (resp.~$v'$) in $G$ (resp.~$H$). Finally, let $K$ (resp.~$K'$) be the set of all cut edges except $\{u,v\}$ (resp.~$\{u',v'\}$) with one endpoint in $B_v$ (resp.~$\{u',v'\}$). Then there exists a bijection between $f:K\rightarrow K'$, such that for any $\{x,y\}\in K$ and $\{w,z\} = f(\{x,y\})$, if $x$ in on $v$'s side in $G\backslash \{x,y\}$ and $w$ is on $v'$'s side in $H\backslash \{w,z\}$, then $\chi_{RD_G}(x) = \chi_{RD_H}(w)$ and $\chi_{RD_G}(y) = \chi_{RD_H}(z)$.
\end{lemma}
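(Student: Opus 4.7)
The plan is to first establish a color-invariant characterization of membership in $B_v$, then use it to match the multiset of color-pairs at endpoints of cut edges leaving $B_v$, and finally extract the required bijection.

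For the first step, I would show that for any $x \in V_G$ and $x' \in V_H$ with $\chi_{RD_G}(x) = \chi_{RD_H}(x')$ (and using that $\chi_{RD_G}(v) = \chi_{RD_H}(v')$ is already given), we have $x \in B_v$ iff $x' \in B'_{v'}$. The characterization I would use is that $x \in B_v$ iff $x$ and $v$ are not separated by any cut edge of $G$, which by the series-additivity of resistance distance across cut edges can be phrased purely in terms of the resistance distances $RD_G(v,a)$, $RD_G(v,b)$, $RD_G(x,a)$, $RD_G(x,b)$ as $\{a,b\}$ ranges over cut edges. Invariance of this characterization under color-matching follows by combining \Cref{thm:same_color_cut_edge}, which transports cut edges to cut edges between color-matched pairs, with \Cref{lem:stable_coloring_lemma}, which provides the full color-and-distance-preserving correspondence needed to evaluate the distance conditions in $H$.

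For the second step, fix any $x \in B_v$ and $x' \in B'_{v'}$ with $\chi_{RD_G}(x) = \chi_{RD_H}(x')$. \Cref{lem:stable_coloring_lemma} yields a bijection $\sigma_x: V_G \to V_H$ with $\chi_{RD_G}(w) = \chi_{RD_H}(\sigma_x(w))$ and $RD_G(x,w) = RD_H(x',\sigma_x(w))$ for all $w$. If $y$ satisfies $\{x,y\} \in K$, then $y \notin B_v$ and $RD_G(x,y) = 1$, so $\sigma_x(y)$ satisfies $\sigma_x(y) \notin B'_{v'}$ by the color-invariance of $B_v$ and $RD_H(x',\sigma_x(y)) = 1$. \Cref{lem:preserve_cut_edge} then guarantees that $\{x',\sigma_x(y)\}$ is a cut edge in $H$, and because $\sigma_x(y) \notin B'_{v'}$ while $x' \in B'_{v'}$, it belongs to $K' \cup \{\{u',v'\}\}$.

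For the third step, define the color-pair multisets
\[
M_G := \multiset{(\chi_{RD_G}(x),\chi_{RD_G}(y)) : \{x,y\} \in K,\ x \in B_v}
\]
and the analogous $M_H$ for $K'$. Applying the argument of the second step in both directions shows that the unrestricted multisets (including $\{u,v\}$ on one side and $\{u',v'\}$ on the other) agree; using that $\chi_{RD_G}(u) = \chi_{RD_H}(u')$ and $\chi_{RD_G}(v) = \chi_{RD_H}(v')$, the contributions of the distinguished edges cancel symmetrically, yielding $M_G = M_H$. The desired bijection $f$ is then any bijection witnessing this multiset equality. The main obstacle will be the first step: proving that membership in $B_v$ is genuinely color-invariant requires carefully translating the combinatorial condition ``no cut edge separates $v$ from $x$'' into a purely resistance-distance condition that can be transferred via \Cref{lem:stable_coloring_lemma}, and then verifying that \Cref{thm:same_color_cut_edge} matches cut edges on both sides consistently; a subtler but manageable bookkeeping issue is handling multiplicities when several vertices of $B_v$ share a color, which is why I prefer the multiset formulation in step three over attempting to paste together the local bijections $\sigma_x$.
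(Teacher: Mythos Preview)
Your Step~1 is where the argument breaks. The claim that membership in $B_v$ is determined by RD-WL color is false. Take $G=H$ to be two $4$-cycles joined by a single bridge $z\text{--}z'$, with a pendant $u$ on $v$ (the vertex of the first cycle antipodal to $z$) and a pendant $\tilde u$ on $\tilde v$ (the vertex of the second cycle antipodal to $z'$). The reflection swapping the two halves is an automorphism, so $\chi_{RD}(z)=\chi_{RD}(z')$; yet with the cut edge $\{u,v\}$ fixed and $(u',v')=(u,v)$ in $H=G$, we have $B_v=B'_{v'}$ equal to the first cycle, so $z\in B_v$ while $z'\notin B_v$. This directly contradicts your biconditional, and since $z$ is itself a $B_v$-endpoint of a cut edge in $K$, the failure is not confined to irrelevant vertices. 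The mechanism you propose cannot repair this: \Cref{lem:stable_coloring_lemma} yields a bijection preserving distances from \emph{one} basepoint only, so you cannot simultaneously control the four quantities $RD(v,a),RD(v,b),RD(x,a),RD(x,b)$ needed to transfer ``$v$ and $x$ lie on the same side of the cut edge $\{a,b\}$'', nor can you certify that the image of $\{a,b\}$ under such a bijection is itself a cut edge (you would need $RD_H(\sigma(a),\sigma(b))=1$, which the bijection does not control).

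The paper's proof avoids any color-invariance claim for $B_v$. It orders the $B_v$-endpoints $x_1,\dots,x_k$ of $K$ by $RD(v,\cdot)$, uses \Cref{lem:left_right_separation} to locate candidates $x'_1,\dots,x'_k\in S'_{v'}$ with identical colors and distances from $v'$, and then runs an induction on $i$: after peeling off the subtrees beyond $x'_1,\dots,x'_{i-1}$ (whose color/distance-from-$v'$ profiles match those beyond $x_1,\dots,x_{i-1}$ by another application of \Cref{lem:left_right_separation}), the $i$th smallest remaining distance among the relevant colors forces every cut edge at $x'_i$ to point away from $v'$, hence $x'_i\in B'_{v'}$. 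It is this distance-ordering-plus-peeling argument, not a direct color criterion, that identifies $B'_{v'}$ and produces the bijection.
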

\begin{proof}
Let $S_v$ (resp.~$S'_{v'}$) be the vertex set of the connected component of $G\backslash \{u,v\}$ (resp.~$H\backslash \{u',v'\}$) that contains $v$ (resp.~$v'$).
\par
Let $X = \{x_1, \ldots, x_k\}$ be the set of endpoints of $K$ that are in $B_v$ ordered such that $RD(v, x_1) \leq \ldots \leq RD(v, x_k)$. (Note that each $x_i$ may be connected to multiple cut edges of $K$.)
\begin{figure}[h]
    \centering
    \includegraphics[height=1.0in]{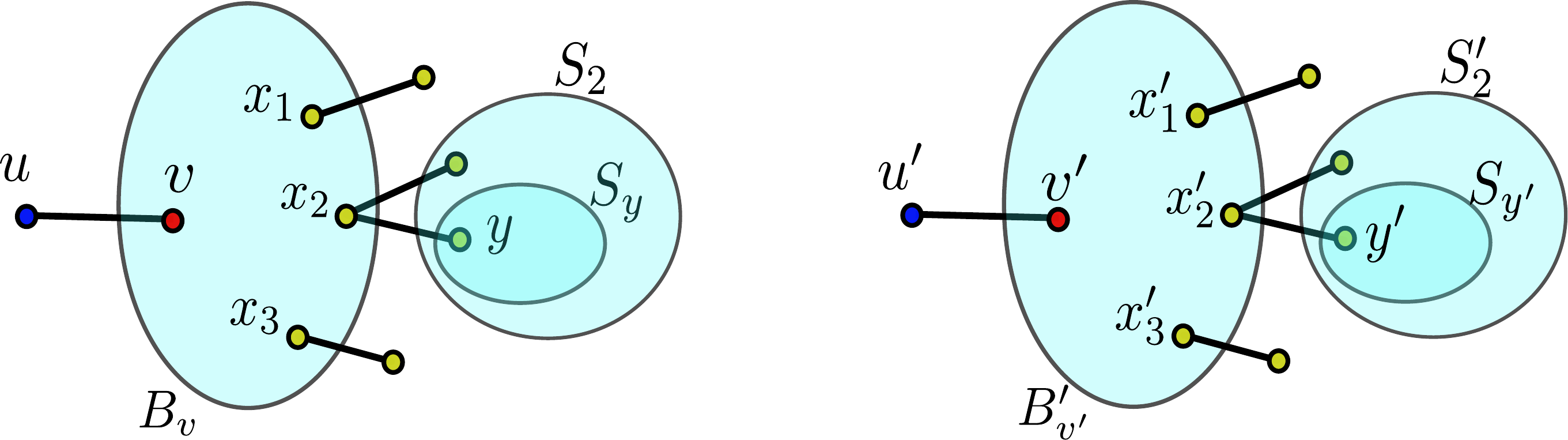}
\end{figure}

By \Cref{lem:left_right_separation}, there exists $X'=\{x'_1, \ldots, x'_k\}\subset S'_{v'}$ such that for every $i\in [k]$
\begin{enumerate}
    \item [(1)] $\chi_{RD_G}(x_i) = \chi_{RD_H}(x'_i)$ and
    \item [(2)] $RD_G(v, x_i) = RD_H(v', x'_i)$.
\end{enumerate}
(Let $C_X$ be the set of all vertices that are colored by the colors $\{\chi_{RD_G}(x_1), \ldots, \chi_{RD_G}(x_k)\}$ in $S_v$. Note, $C_X$ is a (possibly proper) superset of $\{x_1, \ldots, x_k\}$.  Therefore, by \Cref{lem:left_right_separation}, there exists a set $C'_X \subseteq S_{v'}$, such that there is a one-to-one correspondence between $C_X$ and $C'_X$ with corresponding pairs have the same color and distance to $v$ and $v'$. The existence of $X'$ with the aforementioned properties follows as $X\subseteq C_X$.)

For each $i\in [k]$, let $Y_i$ be the set of all vertices that are adjacent to $x_i$ with a cut edge in $K$. Since $\chi_{RD_G}(x_i) = \chi_{RD_H}(x'_i)$, by \Cref{lem:preserve_cut_edge}, for each $i\in [k]$, there exists $Y'_i$ such that (1) for all $y'\in Y'_i$, $\{x'_i,y'\}$ is a cut edge, and (2) the multisets $\multiset{\chi_{RD_G}(y)}_{y\in Y_i} = \multiset{\chi_{RD_H}(y')}_{y'\in Y'_i}$.

Next, we show for every $i\in [k]$ and every $y'\in Y'_i$, $RD_H(v', x'_i) < RD_H(v', y'_i)$, or equivalently, $v'$ is in the connected component of $x'_i$ in $H\backslash \{x'_i ,y'\}$. We prove this by induction on $i$.

For each $i\in[k]$, and each $y\in Y_i$, let $S_y$ be the set of all vertices on the $y$ side in $G\backslash \{x_i,y\}$, and let $S_i =  \bigcup_{y\in Y_i}{S_y}$.
For all $h<i$, and each $y'\in Y'_h$, let $S'_{y'}$ be the set of all vertices on the $y'$ side in $H\backslash \{x'_h,y'\}$, and let $S'_h = \bigcup_{y'\in Y'_h}{S'_{y'}}$.
Note $S_v\backslash\bigcup_{h<i}{S_h}$ contains $v$ and, by the induction hypothesis, $S'_{v'}\backslash\bigcup_{h<i}{S'_h}$ contains $v'$.

By \Cref{lem:left_right_separation} and that $RD_G(v, x_h) = RD_H(v', x'_h)$, $S_h$ and $S'_h$ induce the same set of color/distance pairs from $v$ and $v'$, respectively; i.e.~there is a one-to-one map $f$ from $S_h$ to $S'_h$ such that for each pair $(q, f(q))$, (i) $q$ and $f(q)$ have the same color, and (ii) $RD_G(v, q) = RD_H(v', f(q))$. Thus, $B_{v, i}= S_v\backslash\bigcup_{h<i}{S_h}$ and $B'_{v, i} = S'_{v'}\backslash\bigcup_{h<i}{S'_h}$ induce the same set of color/distance pairs from $v$ and $v'$, respectively.
Let $C$ be the set of colors of vertices that are incident to cut edges.
The $i$th smallest distance with color from $C$ in $B_{v, i}$ corresponds to $x_i$.  We know $x'_i$ has the same color as $x_i$, and $RD_G(v, x_i) = RD_H(v', x'_i)$. If there exists $y'\in Y'_i$ such that the $RD_H(v', y') < RD_H(v', x_i)$ then the $i$th smallest number in $B'_{v, i}$ would be smaller than $RD_G(v, x_i)$ which is a contradiction.  Hence, $RD_H(v', x'_i) < RD_H(v', y')$ for all $y'\in Y'_i$, as desired.

Therefore, $B'_{v'} = S'_{v'}\backslash\bigcup_{i\in[k]}{S'_i}$ is connected, and it has the same set of colors as $B_v = S_v\backslash\bigcup_{i\in[k]}{S_i}$.  So, it must be the biconnected component that contains $v'$.
\end{proof}

\begin{theorem}
\label{thm:same_color_same_cut_edge_tree}
    If two graphs $G$ and $H$ have the same RD-WL colorings, then their block cut-edge trees are isomorphic.
\end{theorem}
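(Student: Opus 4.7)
The plan is to construct a tree isomorphism $\Phi \colon T_G \to T_H$ (where $T_G$ and $T_H$ denote the block cut-edge trees of $G$ and $H$) by rooting both trees at a matched pair of biconnected components and then extending $\Phi$ level by level via BFS, invoking Lemma \ref{lem:same_outgoing_edges} at every step.

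First, I would establish the base case. Pick any biconnected component $B_0$ of $G$, choose a vertex $v_0 \in B_0$, and use the assumption $\chi_{RD}(G) = \chi_{RD}(H)$ to locate $v_0' \in V_H$ with $\chi_{RD_G}(v_0) = \chi_{RD_H}(v_0')$. Let $B_0'$ be the biconnected component of $H$ containing $v_0'$, set $\Phi(B_0) := B_0'$, and root $T_G$ at $B_0$ and $T_H$ at $B_0'$. For the inductive step, suppose a non-root component $B$ has been paired with $B' := \Phi(B)$, and that the cut edge $\{u,v\}$ from $B$ to its parent component has already been paired with a cut edge $\{u',v'\}$ in $H$ satisfying $\chi_{RD_G}(v) = \chi_{RD_H}(v')$ with $v \in B$ and $v' \in B'$. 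Lemma \ref{lem:same_outgoing_edges} then produces a bijection between the cut edges in $B$ other than $\{u,v\}$ and the cut edges in $B'$ other than $\{u',v'\}$, such that each matched pair $\{x,y\} \leftrightarrow \{x',y'\}$ (with $x \in B$, $x' \in B'$) satisfies $\chi_{RD_G}(x) = \chi_{RD_H}(x')$ and $\chi_{RD_G}(y) = \chi_{RD_H}(y')$. I extend $\Phi$ by mapping the child biconnected component containing $y$ to the one containing $y'$, and pass along $(y,y')$ as the distinguished pair for the next recursion. At the root $B_0$ itself, the same argument applies by treating the pair $(v_0, v_0')$ as if it came from a virtual parent edge; the proof of Lemma \ref{lem:same_outgoing_edges} only uses the matching-color hypothesis, and the root case is strictly easier since there is no parent edge to exclude.

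The main obstacle is verifying well-definedness: a single child biconnected component $C$ of $B$ may be attached to $B$ through several cut edges, and all of these edges must be paired with $H$-cut-edges landing in one and the same biconnected component $C'$ of $H$. To see this, I would invoke Lemma \ref{lem:same_outgoing_edges} a second time, now from the perspective of $C$ and one of its candidate images $C'$: the cut edges leaving $C$ are in a color-respecting bijection with those leaving $C'$, which forces every cut edge from $B$ to $C$ to be paired with a cut edge from $B'$ to the same $C'$. Bijectivity of $\Phi$ and preservation of tree adjacency then follow by induction---each level is matched in bijection, and Theorem \ref{thm:same_color_cut_edge} guarantees that cut-edge status is transported correctly. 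Surjectivity is immediate by symmetry: the same construction starting from $H$ and $B_0'$ produces the inverse map, so $\Phi$ is an isomorphism of rooted trees, hence of the underlying unrooted block cut-edge trees.
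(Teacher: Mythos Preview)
Your overall strategy---root both block cut-edge trees and extend an isomorphism level by level via Lemma~\ref{lem:same_outgoing_edges}---is exactly the paper's approach. The inductive step is fine. But there is a real gap at the root. Lemma~\ref{lem:same_outgoing_edges} is stated for a pair of genuine cut edges $\{u,v\}$ and $\{u',v'\}$, and its proof relies on Lemma~\ref{lem:left_right_separation}, which in turn needs those cut edges to split the graphs into $S_v$ and $S'_{v'}$ before any color/distance matching is done. Your claim that ``the root case is strictly easier since there is no parent edge to exclude'' is not justified: without an anchoring cut edge you have no $S_v$, no Lemma~\ref{lem:left_right_separation}, and in particular no guarantee that the biconnected component $B_0'$ containing an arbitrary same-colored $v_0'$ has the same number of outgoing cut edges as $B_0$. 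The paper sidesteps this entirely by choosing $\{u,v\}$ so that $S_u$ is a \emph{leaf} biconnected component; this supplies a parent cut edge from the very start, so Lemma~\ref{lem:same_outgoing_edges} applies already at the root and the induction launches cleanly. Rooting at a leaf is the missing idea.

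A secondary remark: your concern about a child component $C$ being attached to $B$ through several cut edges cannot arise. If two distinct cut edges both joined $C_1$ to $C_2$, then removing either one would leave $C_1$ and $C_2$ connected through the other, contradicting that it was a cut edge. So the block cut-edge tree is automatically simple, and the ``well-definedness'' paragraph is unnecessary.
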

\begin{proof}
    Let $\{u,v\}$ be a cut edge in $G$ such that $S_u$ is a biconnected component.
    Let $\{u',v'\}$ be an edge in $H$ with $\chi_{RD_G}(u) = \chi_{RD_H}(u')$ and $\chi_{RD_G}(v) = \chi_{RD_H}(v')$.
    By \Cref{lem:same_outgoing_edges}, $S'_{u'}$ is a biconnected component of $H$.
    Root the block cut-edge tree of $G$ at $S_u$ and the block cut-edge tree of $H$ at $S'_{u'}$ so all cut edges are directed away from $S_u$ and $S'_{u'}$.
    \par
    For a directed cut edge $u\rightarrow v$, define $T_G[v]$ the be the block cut-edge subtree rooted at the biconnected component of $v$.
    We use induction on the size of $T_G[v]$ to show that for two same colored edges with the same direction $u\rightarrow v$ and $u'\rightarrow v'$, $T_G[v]$ and $T_H[v']$ are isomorphic.
    \par
    Let $B_v$ and $B'_{v'}$ be the biconnected components of $v$ in $G$ and $v'$ in $H$ respectively. For the base case that $B_v=S_v$, then by \Cref{lem:same_outgoing_edges}, $B'_{v'}=S'_{v'}$ and the statement holds. Alternatively, also by \Cref{lem:same_outgoing_edges}, the outgoing directed edges of $B_v$ and $B'_{v'}$ are in one-to-one correspondence with matching head and tail colors.  Let these sets of edges be $K$ and $K'$, respectively, and let $f$ be the bijection between them that preserves colors.  For any $x\rightarrow y\in K$, and its image $f(x)\rightarrow f(y) = w\rightarrow z\in K'$, by the induction hypothesis, the block cut-edge subtrees $T_G[y]$ and $T_H[z]$ are isomorphic. Therefore, $T_G[v]$ and $T_H[v']$ are isomorphic.
\end{proof}


\section{Experiments}\label{sec:experiments}

In this section, we carry out experiments to validate our two main results \Cref{thm:main-APE-to-RPE} and \Cref{thm:main-RPE-to-APE}. Our code is adapted from the GraphGPS module~\cite{rampavsek2022recipe} and subsequent fork from~\citet{muller2023attending}.

\subsection{Graph Isomorphism: CSL}\label{sec:CSL}

\begin{table}[htbp!]
\centering
\caption{Test performance on the CSL dataset of different APEs. Results are shown in the form of mean $\pm$ standard deviation. Experiments are averaged over 5 runs.}
\begin{tabular}{|c | c c|}
    \hline
    \multicolumn{3}{|c|}{CSL Dataset: Classification Accuracy ($\uparrow$)} \\
    \hline
    & HKdiagSE & RWSE \\
    \hline
    APE-GT & $100\pm0.00$ & $100\pm0.00$  \\
    DeepSet RPE-GT & $100\pm0.00$ & $100\pm0.00$ \\
    \hline
\end{tabular}
\label{table:graph_classification_results-ape-to-rpe}
\end{table}

\begin{table}[htbp!]
\centering
\caption{Test performance on the CSL dataset of different RPEs. Results are shown in the form of mean $\pm$ standard deviation. Experiments are averaged over 5 runs. The result for GIN was taken from \citet{muller2023attending}.}
\begin{tabular}{|c | c c c c c|}
    \hline
    \multicolumn{6}{|c|}{CSL Dataset: Classification Accuracy ($\uparrow$)} \\
    \hline
    & RSPE & Powers of Adjacency & RD & SPD & Adjacency \\
    \hline
    RPE-GT & $100\pm0.00$ & $100\pm0.00$ & $100\pm0.00$ & $90\pm0.00$ & $10\pm0.00$\\
    EGN APE-GT & $100\pm0.00$ & $100\pm0.00$ & $100\pm0.00$ & $90\pm0.00$ & $10\pm0.00$\\
    \hline
     GIN & \multicolumn{5}{|c|}{ $10\pm0.00$} \\
    \hline
\end{tabular}
\label{table:graph_classification_results}
\end{table}

We consider the graph classification task on the Circular Skip Links (CSL) dataset \cite{dwivedi2023benchmarking}. The goal of the CSL dataset is to classify graphs according to their isomorphism type. This is classified as a ``Hard'' task by \citet{muller2023attending}.
\par
To validate \Cref{thm:main-APE-to-RPE}, we consider two different APEs: Random Walk Structural Encoding (RWSE) and Heat Kernel Diagonal Structural Encoding (HKdiagSE) from the GraphGPS Library~\cite{rampavsek2022recipe}. For both APEs, we used times $\{1,...,20\}$. We test these APEs with APE-GTs and their corresponding RPEs by applying a DeepSet and the result GTs are denoted DeepSet RPE-GTs.
\par
To validate \Cref{thm:main-RPE-to-APE}, we consider four different RPEs: RSPE~\cite{huang2023stability}, a stack of 20 powers of the adjacency matrix (inspired by RRWP~\cite{ma2023inductive}), the resistance distance (RD), the shortest-path distance (SPD), and the adjacency matrix. We then consider their corresponding APEs by applying a 2-EGN and the result GTs are denoted EGN APE-GTs. We ran the RPE-GT and APE-GT for 1000 epochs each. Our results are averaged over 5 random seeds. See \Cref{table:graph_classification_results-ape-to-rpe} and \Cref{table:graph_classification_results} for the results.
\par
As our theoretical results (\Cref{thm:main-APE-to-RPE,thm:main-RPE-to-APE}) predict, RPE-GTs and EGN APE-GTs have the same classification accuracy when using the same RPE. Likewise, APE-GT and DeepSets RPE-GT have the same classification accuracy when using the same APE. This validates our main theorems.
\par
Moreover, note that the accuracy of Adjacency-RPE-GT and GIN are equal, which agrees with~\Cref{cor:combinatorially-aware-rpe-stronger-than-WL}.

\subsection{Graph Isomorphism: BREC}
\label{sec:experiments_brec}

\begin{table}[htbp!]
\centering
\caption{Performance of different RPEs on the BREC dataset. Basic, Regular, Extension, and CFI are subsets of the BREC dataset. Results are shown in the form of mean $\pm$ standard deviation. Experiments are averaged over 5 runs.}
\begin{tabular}{|c|c c c c|c|}
    \hline
    \multicolumn{6}{|c|}{BREC Dataset: Pair-Distinguishing Accuracy ($\uparrow$)} \\
    \hline
    & Basic & Regular & Extension & CFI & Total \\
    \hline
    \multicolumn{6}{|c|}{RPE: Resistance Distance (RD)} \\
    \hline
    RPE-GT & $100.00 \pm 0.00$ & $35.71\pm 0.00$ & $100.00\pm 0.00$ & $7.60\pm 1.14$ & $54.40 \pm 0.29$ \\
    EGN APE-GT & $100.00 \pm 0.00$ & $35.71\pm 0.00$ & $100.00\pm 0.00$ & $7.00\pm 1.23$ & $54.25 \pm 0.31$  \\
    \hline
    \multicolumn{6}{|c|}{RPE: Shortest-Path Distance (SPD)} \\
    \hline
    RPE-GT & $26.67 \pm 0.00$ & $9.29\pm 0.00$ & $40.8\pm 0.45$ & $6.8\pm 0.11$ & $19.15\pm 0.22$ \\
    EGN APE-GT & $26.67 \pm 0.00$ & $9.42\pm 0.28$ & $41.00\pm 0.00$ & $11.80\pm 0.45$ & $20.50 \pm 0.18$  \\
    \hline
    \multicolumn{6}{|c|}{RPE: Stable Positional Encoding (SPE)} \\
    \hline
    RPE-GT & $97.00\pm 2.17$ & $34.71\pm 0.57$ & $96.00\pm1.87$ & $3.2\pm 4.47$ & $51.5\pm 0.73$ \\
    EGN APE-GT & $0.97\pm 2.17$ & $33.14\pm 0.97$ & $98.80\pm 0.84$ & $8.80\pm1.30$ & $53.05\pm0.54$ \\
    \hline
    \multicolumn{6}{|c|}{RPE: Stack of Powers of Adjacency Matrix} \\
    \hline
    RPE-GT & $100.00\pm 0.00$ & $35.00\pm 00$ & $94.6\pm 0.00$ & $9.80\pm 0.84$ & $53.55\pm 0.28$ \\
    EGN APE-GT & $83.00\pm 1.39$ & $32.86\pm0.00$ & $88.20\pm 1.79$ & $3.00\pm 00$ &  $46.75\pm 0.47$ \\
    \hline
    \multicolumn{6}{|c|}{RPE: Adjacency Matrix} \\
    \hline
    RPE-GT & $0.00\pm 0.00$ & $0.00\pm 0.00$ & $0.00\pm 0.00$ & $0.00\pm 0.00$ & $0.00\pm 0.00$ \\
    EGN APE-GT & $0.00\pm 0.00$ & $0.00\pm 0.00$ & $0.00\pm 0.00$ & $0.00\pm 0.00$ & $0.00\pm 0.00$  \\
    \hline
\end{tabular}
\label{table:brec_results}
\end{table}

\begin{table}[htbp!]
\centering
\caption{Performance of different APEs on the BREC dataset. Basic, Regular, Extension, and CFI are subsets of the BREC dataset. Results are shown in the form of mean $\pm$ standard deviation. Experiments are averaged over 5 runs.}
\begin{tabular}{|c|c c c c|c|}
    \hline
    \multicolumn{6}{|c|}{BREC Dataset: Pair-Distinguishing Accuracy ($\uparrow$)} \\
    \hline
    & Basic & Regular & Extension & CFI & Total \\
    \hline
    \multicolumn{6}{|c|}{APE: Heat Kernel Diagonal Structural Encoding (HKdiagSE)} \\
    \hline
    APE-GT & $56.00\pm 4.50$ & $26.00\pm 0.57$ & $38.00\pm 3.16$ & $3.00\pm 00$ & $27.75\pm 1.43$  \\
    DeepSets RPE-GT & $59.00\pm 4.01$ & $18.86\pm 1.84$  & $59.60\pm 6.11$ & $1.2\pm 1.30$ & $30.65\pm 1.57$ \\
    \hline
    \multicolumn{6}{|c|}{APE: Random Walk Structural Encoding (RWSE)} \\
    \hline
    APE-GT & $84.33\pm 3.65$ & $31.29\pm 0.83$ & $62.40\pm 4.39$ & $1.00\pm 0.00$ & $39.45\pm 1.10$ \\
    DeepSets RPE-GT & $69.67\pm 5.32$ & $21.29\pm 2.73$ & $63.80\pm 2.39$ & $0.4\pm0.55$ & $33.95\pm 1.81$ \\
    \hline
\end{tabular}
\label{table:brec_ape_results}
\end{table}

We consider the graph isomorphism benchmark dataset BREC~\citep{wang2024brec}. The BREC dataset tests a graph neural network architecture's ability to distinguish pairs of non-isomorphic graphs. Unlike the CSL dataset which poses the graph isomorphism problem as a graph classification problem, the BREC dataset considers pairs of non-isomorphic graphs and tests whether a graph neural networks can learn different representations for the two graphs. The graph neural network is trained via contrastive learning on sets of graphs pairs with randomly permuted node indices. The graph neural network is then evaluated using the $T^{2}$ test on the learned features for each of the two graphs. The exact training and evaluation procedures can be found in the original BREC dataset paper~\citep{wang2024brec}.
\par
We compared several RPEs on the BREC dataset. We found that RD, SPE, and a stack of 20 powers of the adjacency matrix all perform similarly well and perform better than SPD in terms of total accuracy. Moreover, the adjacency matrix achieves $0\%$ accuracy, which matches the theoretical results of~\Cref{prop:common matrices} as all pairs of graphs in the BREC dataset are WL indistinguishable.
\par
We also compared two APEs on the BREC dataset: RWSE and HKdiagSE from the GraphGPS library~\cite{rampavsek2022recipe}. For both APEs, we used times $\{1,...,20\}$. We found that RWSE slightly outperformed HKdiagSE.
\par
Finally, for the most part, both RPEs and APEs achieved similar performance for their different architectures, i.e. RPE-GT and EGN APE-GT or APE-GTs and DeepSets RPE-GT. This matches our theoretical results (\Cref{thm:main-APE-to-RPE,thm:main-RPE-to-APE}).
\par
However, one interesting thing we found is that there are several graph pairs in the CFI subset that SPD-GTs were able to learn to distinguish while RD-GTs did not. This suggests that SPD-WL and RD-WL may be incomparable. \citet{zhang2023rethinking} show there are graphs that RD-WL can distinguish that SPD-WL cannot; this suggests the converse is true too, although it is not a definitive proof of this fact and should not be interpreted as such. A question for future research would be to prove that these pairs of graphs are indistinguishable by RD-WL or to find a RD-GT that could distinguish these pairs of graphs.
\par
Hyperparameter search was performed for each pair of PE and architecture.

\subsection{Graph Regression: ZINC}\label{sec:ZINC}

\begin{table}[htbp!]
\centering
\caption{Test performance on the Small ZINC dataset. Results are shown in the form of mean $\pm$ standard deviation. Experiments are averaged over 3 runs.}
\begin{tabular}{|c | c | c  c c|}
    \hline
    \multicolumn{5}{|c|}{ZINC Dataset: MAE ($\downarrow$)} \\
    \hline
    & \# Parameters  & RD & SPD & SPE  \\
    \hline
    RPE-GT & 573922  & $0.096\pm 0.002$ & $0.130\pm0.010$ & $0.132\pm0.007$ \\
    EGN APE-GT & 520514 & $0.196\pm 0.004$  & $0.217\pm0.011$ & $0.200\pm 0.007$ \\
    \hline
    & \# Parameters  & RD+EF & SPD+EF & SPE+EF  \\
    \hline
    RPE-GT & 696226 & $0.069\pm 0.004$ & $0.092\pm 0.010$ & $0.092\pm 0.001$ \\
    EGN APE-GT & 664818 & $0.110\pm 0.003$ & $0.132\pm 0.004$ & $0.126\pm 0.003$ \\
    \hline
\end{tabular}
\label{table:graph_classification_results_ZINC}
\end{table}

Although our theoretical results are for the distinguishing power of graph transformers and apply most directly to graph classification/isomorphism tasks, we would also like to compare RPEs and APEs on real-world tasks beyond those. In this experiment, we compare RPE-GTs and EGN APE-GTS for graph regression on the small ZINC dataset containing 12k graphs \cite{dwivedi2023benchmarking}. We consider the RPEs resistance distance (RD), shortest-path distance (SPD), and SPE~\cite{huang2023stability}. See \Cref{table:graph_classification_results_ZINC} for the results.
\par
We test the models both with and without the use of edge features of graphs in the ZINC dataset. Edge features are distinct from RPEs as they are not dependent on the graph structure and are associated to edges and not a pairs of vertices. For the ZINC dataset, the features are one-hot encodings of the bond type (e.g. single, double, triple). When we include edge features, we concatenate them to the RPE and pass the concatenated tensor to the model in place on the RPE. This use of edge features is not captured by our theoretical results, and it is not clear if this is the best or most natural way of including edge features in a graph transformer. Other papers have proposed different ways of incorporating edge features into a graph transformer~\citep{ma2023inductive, jin2023edgeformers}.
\par
From \Cref{table:graph_classification_results_ZINC}, we see that the performance of EGN APE-GT is worse than the corresponding RPE-GT. This could be partially explained by \Cref{thm:main-RPE-to-APE}, which states that RPE-GTs have stronger distinguishing power  than EGN APE-GTs for graphs with node features (as the graphs in ZINC do). Additionally, we suspect that the training of the extra EGN layer introduces additional difficulty in the training process, leading to worse performance of APE-GT compared to RPE-GT. This result aligns with our expectations, suggesting that instead of converting an RPE into an APE, it may be better to use the RPE directly in a RPE-GT.

\paragraph{Hyperparameters \& experimental setup.}

Our results are averaged over 3 random seeds. We ran the RPE-GT and APE-GT for 1000 epochs each.
\par
Following \citet{zhang2023rethinking}, the functions $f_1$ and $f_2$ in the RPE-GT head are embeddings into Gaussian kernels followed by an MLP.
\par
Hyperparameter search was performed for RPE-GTs and EGN APE-GTs. We chose the final hyperparameters for each architecture to have similar numbers of parameters.
Hyperparameters involved in our models are listed in \Cref{table:graph_classification_results_ZINC-summary}.

\begin{table}[ht!]
\centering
\caption{}
\begin{tabular}{|c | c |}
    \hline
    \multicolumn{2}{|c|}{Hyperparameters for ZINC Experiment} \\
    \hline
    \multicolumn{2}{|c|}{RPE GTs} \\
    \hline
    \# Transformer Layers: & 14 \\
    \# Transformer Heads: & 8 \\
     \# Gaussian Kernels & 16  \\
     \# MLP Layers & 2 \\
     MLP Hidden Dimension (No Edge Features) & 16 \\
     MLP Hidden Dimension (Edge Features) & 16 \\
    \hline
    \multicolumn{2}{|c|}{EGN APE GTs} \\
    \hline
    \# Transformer Layers: & 8 \\
    \# Transformer Heads: & 8 \\
    \# EGN Layers & 6 \\
    EGN Hidden Dim (No Edge Features) & 48 \\
    EGN Hidden Dim (Edge Features) & 64 \\
    APE Type & Add \\
    \hline
    \multicolumn{2}{|c|}{SPE} \\
    \hline
    \# DeepSets Layers & 3 \\
    DeepSets Hidden Dimension & 64 \\
    \# Parameters & 17217 \\
    \hline
\end{tabular}
\label{table:graph_classification_results_ZINC-summary}
\end{table}

\subsection{Code} Code for all experiments can be found at \small{\url{https://github.com/blackmit/comparing_graph_transformers_via_positional_encodings}}

\end{document}